\cbend\end{list}\bigskip}%
\newenvironment{normal}{}{}
\newenvironment{extended}{}{}
\newtheorem{theorem}{Theorem}
\newtheorem{proposition}[theorem]{Proposition}
\newtheorem{lemma}[theorem]{Lemma}
\newtheorem{corollary}[theorem]{Corollary}
\newtheorem{definition}[theorem]{Definition}
\newtheorem{remark}[theorem]{Remark}
\newtheorem{example}[theorem]{Example}
\newenvironment{corollary*}[2][]
	{\medskip \par \noindent \textit{Corollary \ref{#2} (#1)} \par\noindent\it}
	{\medskip}
\newenvironment{proposition*}[2][]
	{\medskip \par \noindent \textit{Proposition \ref{#2} (#1)} \par\noindent\it}
	{\medskip}
\newenvironment{theorem*}[2][]
	{\medskip \par \noindent \textit{Theorem \ref{#2} (#1)} \par\noindent\it}
	{\medskip}
\newcommand{\qedhere}{\hspace*{1em}\hbox{\proofbox}}
\newcommand{\br}[1]{\left( #1 \right)}
\newcommand{\an}[1]{\left\langle #1 \right\rangle}
\newcommand{\mlthen}{\Longrightarrow}
\newcommand{\mlequiv}{\Longleftrightarrow}
\newcommand{\lpre}{\mathbf{P}}
\newcommand{\preds}[1]{\mathsf{pr}(#1)}
\newcommand{\predsP}[1]{\mathsf{pr}\left(#1\right)}
\newcommand{\bigland}{\bigwedge}
\newcommand{\lif}{\subset}
\newcommand{\mymodels}{\mathrel\mid\joinrel=}
\newcommand{\ent}{\mymodels}
\newcommand{\nent}{\not\ent}
\newcommand{\smod}{\mathsf{mod}}
\newcommand{\mk}[1][\,]{\mathbf{K}#1}
\newcommand{\mnot}[1][\,]{\mathbf{not}#1}
\newcommand{\mstr}[1][I, M, N]{\left\langle #1 \right\rangle}
\newcommand{\restr}[2][U]{#2^{\left[#1\right]}}
\newcommand{\restrc}[2][U]{#2^{\left[\lpre \setminus #1\right]}}
\newcommand{\satf}{\sigma}
\newcommand{\sat}[2][U]{\satf(#2, #1)}
\newcommand{\satpar}[2][U]{\satf\left(#2, #1\right)}
\newcommand{\satc}[2][U]{\satf(#2, \lpre \setminus #1)}
\newcommand{\foint}{\mathcal{I}}
\newcommand{\mint}{\mathcal{M}}
\newcommand{\kb}[1][K]{\mathcal{#1}}
\newcommand{\thr}[1][T]{\mathcal{#1}}
\newcommand{\upd}[1][U]{\mathcal{#1}}
\newcommand{\ont}[1][O]{\mathcal{#1}}
\newcommand{\lpnot}[1][\!\!]{\sim#1}
\newcommand{\lpif}{\leftarrow}
\newcommand{\prog}[1][\defprog]{\mathcal{#1}}
\newcommand{\least}[1]{\mathit{least}(#1)}
\newcommand{\dynall}[1][\dynprog]{\rho(#1)}
\newcommand{\dynrej}[2][]{\mathit{Rej}_{#1}(#2)}
\newcommand{\dyndef}[2][]{\mathit{Def}_{#1}(#2)}
\newcommand{\SE}[1]{\textsf{SE}\protect\nobreakdash#1\hspace{0pt}}
\newcommand{\dlfo}[1]{\zeta\br{#1}}
\newcommand{\longline}{\noindent\rule{\textwidth}{.01in}}
\begin{document}

\title[Splitting and Updating Hybrid Knowledge Bases (Extended
Version)]{Splitting and Updating
Hybrid Knowledge Bases (Extended Version)}

\author[M. Slota and J. Leite and T. Swift]
{MARTIN SLOTA$^{\footnotemark[1]}$ and JO{\~A}O LEITE$^{\footnotemark[2]}$ and TERRANCE SWIFT \\
CENTRIA \& Departamento de Inform{\'a}tica \\
Universidade Nova de Lisboa \\
2829-516 Caparica, Portugal}

\renewcommand{\thefootnote}{\fnsymbol{footnote}}
\footnotetext[1]{Supported by FCT Scholarship SFRH~/~BD~/~38214~/~2007}
\footnotetext[2]{Partially supported by FCT Project ASPEN
PTDC~/~EIA-CCO~/~110921~/~2009}
\setcounter{footnote}{0}
\renewcommand{\thefootnote}{\arabic{footnote}}

\maketitle

\begin{abstract}
	Over the years, nonmonotonic rules have proven to be a very expressive and
	useful knowledge representation paradigm. They have recently been used to
	complement the expressive power of Description Logics (DLs), leading to the
	study of integrative formal frameworks, generally referred to as
	\emph{hybrid knowledge bases}, where both DL axioms and rules can be used to
	represent knowledge. The need to use these hybrid knowledge bases in dynamic
	domains has called for the development of update operators, which, given the
	substantially different way Description Logics and rules are usually
	updated, has turned out to be an extremely difficult task.

	In \cite{Slota2010a}, a first step towards addressing this problem was
	taken, and an update operator for hybrid knowledge bases was proposed.
	Despite its significance -- not only for being the first update operator for
	hybrid knowledge bases in the literature, but also because it has some
	applications -- this operator was defined for a restricted class of problems
	where only the ABox was allowed to change, which considerably diminished its
	applicability. Many applications that use hybrid knowledge bases in dynamic
	scenarios require both DL axioms and rules to be updated.

	In this paper, motivated by real world applications, we introduce an update
	operator for a large class of hybrid knowledge bases where both the DL
	component as well as the rule component are allowed to dynamically change.
	We introduce splitting sequences and splitting theorem for hybrid knowledge
	bases, use them to define a modular update semantics, investigate its basic
	properties, and illustrate its use on a realistic example about cargo
	imports.
\end{abstract}

\begin{keywords}
	hybrid knowledge base, update, splitting theorem, ontology, logic program
\end{keywords}

\section{Introduction}

Increasingly many real world applications need to intelligently access and
reason with large amounts of dynamically changing, structured and highly
interconnected information. The family of Description Logics (DLs)
\cite{Baader2003}, generally characterised as decidable fragments of
first-order logic, have become the established standard for representing
\emph{ontologies}, i.e. for specifying concepts relevant to a particular
domain of interest. DLs can be seen as some of the most expressive formalisms
based on Classical Logic for which decidable reasoning procedures still exist.

On the other hand, nonmonotonic rules have also proven to be a very useful
tool for knowledge representation. They complement the expressive power of
DLs, adding the possibility to reason with incomplete information using
default negation, and offering natural ways of expressing exceptions,
integrity constraints and complex queries. Their formal underpinning lies with
declarative, well-understood semantics, the stable model semantics
\cite{Gelfond1988} and its tractable approximation, the well-founded semantics
\cite{Gelder1991}, being the most prominent and widely accepted.

This has led to the need to integrate these distinct knowledge representation
paradigms. Over the last decade, there have been many proposals for
integrating DLs with nonmonotonic rules (see \cite{Hitzler2009} for a survey).
One of the more mature proposals is Hybrid MKNF Knowledge Bases
\cite{Motik2007} that allow predicates to be defined concurrently in both an
ontology and a set of rules, while enjoying several important properties. A
tractable variant of this formalism, based on the well-founded semantics,
allows for a top-down querying procedure \cite{Knorr2011}, making the approach
amenable to practical applications that need to deal with large knowledge
bases.

While such formalisms make it possible to seamlessly combine rules and
ontologies in a single unified framework, they do not take into account the
highly dynamic character of application areas where they are to be used. In
\cite{Slota2010a} we made a first step towards a solution to this problem,
addressing \emph{updates} by defining a change operation on a knowledge base
to record a change that occurred in the modelled world.

Update operators have first been studied in the context of action theories and
relational databases with NULL values \cite{Winslett1988,Winslett1990}. The
basic intuition behind these operators is that the models of a knowledge base
represent possible states of the world and when a change in the world needs to
be recorded, each of these possible worlds should be modified as little as
possible in order to arrive at a representation of the world after the update.
This means that, in each possible world, each propositional atom retains its
truth value as long as there is no update that directly requires it to change.
In other words, \emph{inertia} is applied to the atoms of the underlying
language. Later, these operators were successfully applied to partially
address updates of DL ontologies \cite{Liu2006,Giacomo2006}.

 
But when updates were studied in the context of rules, most authors found atom
inertia unsatisfactory. One of the main reasons for this is the clash between
atom inertia and the property of \emph{support} \cite{Apt1988,Dix1995a}, which
lies at the heart of most logic programming semantics. For instance, when
updating a logic program $\prog = \Set{p \lpif q., q.}$ by $\prog[Q] =
\Set{\lpnot q.}$,\footnote{The symbol $\lpnot[]$ denotes default negation.}
atom inertia dictates that $p$ must stay true after the update because the
update itself does not in any way directly affect the truth value of $p$.
Example 7 in \cite{Giacomo2006}, where a similar update is performed on an
analogical DL ontology, shows that such a behaviour may be desirable. However,
from a logic programming point of view, one expects $p$ to become false after
the update. This is because when $q$ ceases being true, the reason for $p$ to
be true disappears as it is no longer supported by any rule.

These intuitions, together with a battery of intuitive examples
\cite{Leite1997,Alferes2000}, led to the introduction of the \emph{causal
rejection principle} \cite{Leite1997} and subsequently to several approaches
to rule updates \cite{Alferes2000,Eiter2002,Leite2003,Alferes2005} that are
fundamentally different from classical update operators. The basic unit of
information to which inertia is applied is no longer an atom, but a rule. This
means that a rule stays in effect as long as it does not directly contradict a
newer rule. The truth values of atoms are not directly subject to inertia, but
are used to determine the set of rules that are overridden by newer rules, and
are themselves determined by the remaining rules.

However, the dichotomy between classical and rule updates goes far beyond the
different units of information to which inertia is applied. While classical
updates are performed on the models of a knowledge base, which renders them
syntax-independent, the property of support, being syntactic in its essence,
forces rule update methods to refer to the syntactic structure of underlying
programs -- the individual rules they contain and, in many cases, also the
heads and bodies of these rules. As we have shown in \cite{Slota2010b}, even
when classical updates are applied to \SE-models
\cite{Lifschitz2001,Turner2003}, a monotonic semantics for logic programs that
is more expressive than stable models, the property of support is lost. On the
other hand, applying rule updates to DL ontologies leads to a range of
technical difficulties. Some of them are caused by the fact that rule update
methods are specifically tailored towards identifying and resolving conflicts
between pairs of rules. DL axioms do not have a rule-like structure, and a
group of pairwise consistent axioms may enter in a conflict. Other
difficulties stem from the fact that such a syntactic approach can hardly
exhibit behaviour similar to that of classical updates, where reasoning by
cases is inherent in updating each model of a knowledge base independently of
all others. Thus, no single method seems suitable for updating hybrid
knowledge bases. A general update operator for hybrid knowledge must somehow
integrate these apparently irreconcilable approaches to dealing with evolving
knowledge.

In \cite{Slota2010a} we simplified this hard task by keeping rules static and
allowing the ontology component of a hybrid knowledge base to evolve. Despite
the importance of this first step, the applicability of the operator is
considerably diminished since typically all parts of a knowledge base are
subject to change. As an example, consider the following scenario where both
ontologies and rules are needed to assess the risk of imported cargo.

\begin{example}[A Hybrid Knowledge Base for Cargo Imports]
	\label{ex:import:description}
	The Customs service for any developed country assesses imported cargo for a
	variety of risk factors including terrorism, narcotics, food and consumer
	safety, pest infestation, tariff violations, and intellectual property
	rights.\footnote{The system described here is not intended to reflect the
	policies of any country or agency.} Assessing this risk, even at a
	preliminary level, involves extensive knowledge about commodities, business
	entities, trade patterns, government policies and trade agreements. Some of
	this knowledge may be external to a given customs agency: for instance the
	broad classification of commodities according to the international
	Harmonized Tariff System (HTS), or international trade agreements. Other
	knowledge may be internal to a customs agency, such as lists of suspected
	violators or of importers who have a history of good compliance with
	regulations. While some of this knowledge is relatively stable, much of it
	changes rapidly. Changes are made not only at a specific level, such as
	knowledge about the expected arrival date of a shipment; but at a more
	general level as well. For instance, while the broad HTS code for tomatoes
	(0702) does not change, the full classification and tariffs for cherry
	tomatoes for import into the US changes seasonally.

	%

	Figure~\ref{fig:import} shows a simplified fragment $\kb = \an{\ont, \prog}$
	of such a knowledge base. In this fragment, a shipment has several
	attributes: the country of its origination, the commodity it contains, its
	importer, and its producer. The ontology contains a geographic
	classification, along with information about producers who are located in
	various countries. It also contains a classification of commodities based on
	their harmonised tariff information (HTS chapters, headings and codes,
	cf.~\url{http://www.usitc.gov/tata/hts}). Tariff information is also
	present, based on the classification of commodities. Finally, the ontology
	contains (partial) information about three shipments: $s_1$, $s_2$ and
	$s_3$. There is also a set of rules indicating information about importers,
	and about whether to inspect a shipment either to check for compliance of
	tariff information or for food safety issues.
\end{example}

\newcommand{\pr}[1]{\ensuremath{\mathsf{#1}}}
\newcommand{\ct}[1]{\ensuremath{\mathit{#1}}}

\begin{figure}
	\longline

	\begin{center}
	*\ \ *\ \ *\ \ $\ont$ *\ \ *\ \ *
	\end{center}
	\vspace{-.4cm}
	\begin{tabbing}
	foooooooooooooooooooooooooooooooooooooooooo\=\kill
	$\pr{Commodity} \equiv (\exists \pr{HTSCode}.\top)$
		\> $\pr{EdibleVegetable} \equiv (\exists \pr{HTSChapter}.\set{\text{`07'}})$ \\
	$\pr{CherryTomato} \equiv (\exists \pr{HTSCode}.\set{\text{`07020020'}})$
		\> $\pr{Tomato} \equiv (\exists \pr{HTSHeading}.\set{\text{`0702'}})$ \\
	$\pr{GrapeTomato} \equiv (\exists \pr{HTSCode}.\set{\text{`07020010'}})$
		\> $\pr{Tomato} \sqsubseteq \pr{EdibleVegetable}$ \\
	$\pr{CherryTomato} \sqsubseteq \pr{Tomato}$
		\> $\pr{GrapeTomato} \sqsubseteq \pr{Tomato}$ \\
	$\pr{CherryTomato} \sqcap \pr{Bulk}
			\equiv (\exists \pr{TariffCharge}.\set{\text{\$0}})$
		\> $\pr{CherryTomato} \sqcap \pr{GrapeTomato} \sqsubseteq \bot$ \\
	$\pr{GrapeTomato} \sqcap \pr{Bulk}
			\equiv (\exists \pr{TariffCharge}.\set{\text{\$40}})$
		\> $\pr{Bulk} \sqcap \pr{Prepackaged} \sqsubseteq \bot$ \\
	$\pr{CherryTomato} \sqcap \pr{Prepackaged}
			\equiv (\exists \pr{TariffCharge}.\set{\text{\$50}})$ \\
	$\pr{GrapeTomato} \sqcap \pr{Prepackaged}
			\equiv (\exists \pr{TariffCharge}.\set{\text{\$100}})$ \\
	$\pr{EURegisteredProducer}
			\equiv (\exists \pr{RegisteredProducer}.\pr{EUCountry})$ \\
	$\pr{LowRiskEUCommodity}
			\equiv (\exists \pr{ExpeditableImporter}.\top)
			\sqcap (\exists \pr{CommodCountry}.\pr{EUCountry})$ \\
	\\
	$\an{\ct{p_1}, \ct{portugal}} : \pr{RegisteredProducer}$
		\> $\an{\ct{p_2}, \ct{slovakia}} : \pr{RegisteredProducer}$ \\
	$\ct{portugal} : \pr{EUCountry}$
		\> $\ct{slovakia} : \pr{EUCountry}$ \\
	$\an{\ct{s_1}, \ct{c_1}} : \pr{ShpmtCommod}$
		\> $\an{\ct{s_1}, \text{`07020020'}} : \pr{ShpmtDeclHTSCode}$ \\
	$\an{\ct{s_1}, \ct{i_1}} : \pr{ShpmtImporter}$
		\> $\ct{c_1} : \pr{CherryTomato} \quad \ct{c_1} : \pr{Bulk}$ \\
	$\an{\ct{s_2}, \ct{c_2}} : \pr{ShpmtCommod}$
		\> $\an{\ct{s_2}, \text{`07020020'}} : \pr{ShpmtDeclHTSCode}$ \\
	$\an{\ct{s_2}, \ct{i_2}} : \pr{ShpmtImporter}$
		\> $\ct{c_2} : \pr{CherryTomato} \quad \ct{c_2} : \pr{Prepackaged}$ \\
	$\an{\ct{s_2}, portugal} : \pr{ShpmtCountry}$
		\> \\
	$\an{\ct{s_3}, \ct{c_3}} : \pr{ShpmtCommod}$    
		\> $\an{\ct{s_3}, \text{`07020010'}} : \pr{ShpmtDeclHTSCode}$ \\
	$\an{\ct{s_3}, \ct{i_3}} : \pr{ShpmtImporter}$
		\> $\ct{c_3} : \pr{GrapeTomato} \quad\, \ct{c_3} : \pr{Bulk}$ \\
	$\an{\ct{s_3}, portugal} : \pr{ShpmtCountry}$
		\> $\an{\ct{s_3}, \ct{p_1}} : \pr{ShpmtProducer}$
	\end{tabbing}

	\begin{center}
	*\ \ *\ \ *\ \ $\prog$ *\ \ *\ \ *
	\end{center}
	\vspace{-.4cm}
	\begin{tabbing}
	foooooooooooooooooooooooooooooooooooooooooo\=\kill
	$\pr{CommodCountry}(C, \ct{Country})
		\lpif \pr{ShpmtCommod}(S, C), \pr{ShpmtCountry}(S, \ct{Country}).$ \\
	$\pr{AdmissibleImporter}(I) \lpif{} \!\lpnot \pr{SuspectedBadGuy}(I).$ \\
	$\pr{ExpeditableImporter}(C, I)
		\lpif \pr{AdmissibleImporter}(I), \pr{ApprovedImporterOf}(I, C).$ \\
	$\pr{SuspectedBadGuy}(\ct{i_1})$. \\
	$\pr{ApprovedImporterOf}(\ct{i_2}, C) \lpif \pr{EdibleVegetable}(C).$ \\
	$\pr{ApprovedImporterOf}(\ct{i_3}, C) \lpif \pr{GrapeTomato}(C).$ \\
	$\pr{CompliantShpmt}(S)
		\lpif \pr{ShpmtCommod}(S, C), \pr{HTSCode}(C, D), \pr{ShpmtDeclHTSCode}(S, D).$ \\
	$\pr{RandomInspection}(S) \lpif \pr{ShpmtCommod}(S, C), \pr{Random}(C).$ \\
	$\pr{PartialInspection}(S) \lpif \pr{RandomInspection}(S).$ \\
	$\pr{PartialInspection}(S)
		\lpif \pr{ShpmtCommod}(S, C), \lpnot \pr{LowRiskEUCommodity}(C).$ \\
	$\pr{FullInspection}(S) \lpif{} \!\lpnot \pr{CompliantShpmt}(S).$ \\
	$\pr{FullInspection}(S)
		\lpif \pr{ShpmtCommod}(S, C), \pr{Tomato}(C), \pr{ShpmtCountry}(S, \ct{slovakia}).$
	\end{tabbing}

	\longline
	\caption{A Hybrid Knowledge Base for Cargo Imports}
	\label{fig:import}
\end{figure}

In this paper, we define an update semantics for hybrid knowledge bases that
can be used to deal with scenarios such as the one described above. As a
theoretical basis for this operator, we first establish a splitting theorem
for Hybrid MKNF Knowledge Bases, analogical to the splitting theorem for logic
programs \cite{Lifschitz1994a}. The underlying notions then serve us as
theoretical ground for identifying a constrained class of hybrid knowledge
bases for which a plausible update semantics can be defined by modularly
combining a classical and a rule update operator. We then examine basic
properties of this semantics, showing that it
\begin{itemize}
	\item generalises Hybrid MKNF Knowledge Bases \cite{Motik2007}.
	\item generalises the classical minimal change update operator
		\cite{Winslett1990}.
	\item generalises the refined dynamic stable model semantics
		\cite{Alferes2005}.
	\item adheres to the principle of primacy of new information
		\cite{Dalal1988}.
\end{itemize}
Finally, we demonstrate that it properly deals with nontrivial updates in
scenarios such as the one described in Example \ref{ex:import:description}.

The rest of this document is structured as follows: We introduce the necessary
theoretical background in Sect.~\ref{sect:preliminaries}. Then, in
Sect.~\ref{sect:core}, we establish the splitting theorem for Hybrid MKNF
Knowledge Bases, identify a constrained class of such knowledge bases and
define a plausible update operator for it. We also take a closer look at its
properties and show how it can be applied to deal with updates of the
knowledge base introduced in Example~\ref{ex:import:description}. We then
discuss our results in Sect.~\ref{sect:discussion} and point towards desirable
future developments.\footnote{At
\url{http://centria.di.fct.unl.pt/~jleite/iclp11full.pdf} the reader can find
an extended version of this paper with proofs.}

\section{Preliminaries} \label{sect:preliminaries}

In this section we present the formal basis for our investigation. We
introduce the unifying semantic framework of Hybrid MKNF Knowledge Bases
\cite{Motik2007} that gives a semantics to a knowledge base composed of both
DL axioms and rules. Since our hybrid update operator is based on a modular
combination a classical and a rule update operator, we introduce a pair of
such operators known from the literature and briefly discuss the choices we
make.


\noindent \textbf{MKNF.}
The logic of Minimal Knowledge and Negation as Failure (MKNF)
\cite{Lifschitz1991} forms the logical basis of Hybrid MKNF Knowledge Bases.
It is an extension of first-order logic with two modal operators: $\mk[]$ and
$\mnot[]$. We use the variant of this logic introduced in \cite{Motik2007}. We
assume a function-free first-order syntax extended by the mentioned modal
operators in a natural way.
\begin{extended}%
An \emph{atom} is a formula $P(t_1, t_2, \dotsc,
t_n)$ where $P$ is a predicate symbol of arity $n$ and $t_i$ are terms. An
MKNF formula $\phi$ is a \emph{sentence} if it has no free variables; $\phi$
is \emph{ground} if it does not contain variables;
$\phi$ is \emph{subjective} if all atoms in $\phi$ occur within
the scope of a modal operator;
$\phi$ is \emph{first-order} if it does not contain modal operators. By
$\phi[t/x]$ we denote the formula obtained by simultaneously replacing in
$\phi$ all free occurrences of variable $x$ by term $t$. A set of first-order
sentences is a \emph{first-order theory}.
\end{extended}
\begin{normal}%
We denote the set of all predicate symbols by $\lpre$ and the set of all
predicate symbols occurring in a formula $\phi$ by $\preds{\phi}$.
\end{normal}
\begin{extended}%
We denote the set of all predicate symbols by $\lpre$. Given a formula $\phi$,
we inductively define the \emph{set of predicate symbols relevant to $\phi$},
denoted by $\preds{\phi}$, as follows:
\begin{enumerate}
	\renewcommand{\labelenumi}{\arabic{enumi}$^\circ$}
	\item If $\phi$ is an atom $P(t_1, t_2, \dotsc, t_n)$, then $\preds{\phi} =
		\set{P}$;
	\item If $\phi$ is of the form $\lnot \psi$, then $\preds{\phi} =
		\preds{\psi}$;
	\item If $\phi$ is of the form $\phi_1 \land \phi_2$, then $\preds{\phi} =
		\preds{\phi_1} \cup \preds{\phi_2}$;
	\item If $\phi$ is of the form $\exists x : \psi$, then $\preds{\phi} =
		\preds{\psi}$;
	\item If $\phi$ is of the form $\mk \psi$, then $\preds{\phi} =
		\preds{\psi}$;
	\item If $\phi$ is of the form $\mnot \psi$, then $\preds{\phi} =
		\preds{\psi}$.
\end{enumerate}
Given a set of formulae $\thr$, we define the \emph{set of predicate symbols
relevant to $T$}, denoted by $\preds{\thr}$, as follows:
\[
	\preds{T} = \bigcup_{\phi \in T} \preds{\phi} \enspace.
\]
\end{extended}

As in \cite{Motik2007}, we only consider Herbrand interpretations in our
semantics. We adopt the \emph{standard names assumption}, and apart from the
constants used in formulae, we assume our signature to contain a countably
infinite supply of constants. The Herbrand Universe of such a signature is
denoted by $\Delta$. The set of all (Herbrand) interpretations is denoted by
$\foint$. An \emph{MKNF structure} is a triple $\mstr$ where $I$ is an
interpretation and $M, N$ are sets of Herbrand interpretations. The
satisfiability of a ground atom $p$ and of an MKNF sentence $\phi$ in $\mstr$
is defined as follows
\vspace{-1.1em}
\begin{list}{}
	{
		\setlength{\parsep}{0em}
		\setlength{\partopsep}{0em}
		\setlength{\itemsep}{0em}
		\setlength{\topsep}{0em}
	}
	\item \small
	{\allowdisplaybreaks\begin{alignat*}{2} 
		\mstr \ent p
			& \text{ iff } I \ent p \\
		\mstr \ent \lnot \phi
			& \text{ iff } \mstr \nent \phi \\
		\mstr \ent \phi_1 \land \phi_2
			& \text{ iff } \mstr \ent \phi_1 \text{ and } \mstr \ent \phi_2 \\
		\mstr \ent \exists x : \phi
			& \text{ iff } \mstr \ent \phi[c/x] \text{ for some } c \in \Delta \\
		\mstr \ent \mk \phi
			& \text{ iff } \mstr[J, M, N] \ent \phi \text{ for all } J \in M \\
		\mstr \ent \mnot \phi
			& \text{ iff } \mstr[J, M, N] \nent \phi \text{ for some } J \in N
	\end{alignat*}}%
\end{list}
The symbols $\lor$, $\forall$ and $\lif$ are interpreted as usual.
\begin{normal}%
An \emph{MKNF interpretation} $M$ is a nonempty set of interpretations. Given
a set $\thr$ of MKNF sentences, we say $M$ is an \emph{S5 model} of $\thr$,
written $M \ent \thr$, if $\mstr[I, M, M] \ent \phi$ for every $\phi \in \thr$
and all $I \in M$. If there exists the greatest S5 model $M$ of $\thr$, then
it is denoted by $\smod(\thr)$. If $\thr$ has no S5 model, then $\smod(\thr)$
denotes the empty set. For all other sets of formulae, $\smod(\thr)$ stays
undefined. $M$ is an \emph{MKNF model} of $\thr$ if $M$ is an S5 model of
$\thr$ and for every MKNF interpretation $M' \supsetneq M$ there is some $I'
\in M'$ such that $\mstr[I', M', M] \nent \phi$ for some $\phi \in \thr$.
\end{normal}
\begin{extended}%
The semantics of MKNF sentences is summarised in the following definition:

\begin{definition}[MKNF Interpretation and MKNF Model] \label{def:mknf:models}
An \emph{MKNF interpretation} is a nonempty set of Herbrand interpretations.
We also define $\mint = 2^\foint$ to be the set of all MKNF interpretations
together with the empty set.

Let $\thr$ be a set of MKNF sentences and $M \in \mint$. We write $M \ent
\thr$ if $\mstr[I, M, M] \ent \phi$ for every $\phi \in \thr$ and all $I \in
M$.\footnote{Notice that if $M$ is empty, this condition is vacuously
satisfied for any formula $\phi$, so any formula is true in $\emptyset$. For
this reason, $\emptyset$ is not considered an interpretation and only nonempty
subsets of $\mint$ can be models of formulae.} Otherwise we write $M \nent
\thr$.

If there exists the greatest $M \in \mint$ such that $M \ent \thr$, then we
denote it by $\smod(\thr)$. For all other sets of formulae $\smod(\thr)$ stays
undefined.

If $M$ is an MKNF interpretation (i.e. $M$ is nonempty and $M \in \mint$), we
say $M$ is
\begin{itemize}
	\item an \emph{S5 model of $\thr$} if $M \ent \thr$;
	\item an \emph{MKNF model of $\thr$} if $M$ is an S5 model of $\thr$ and for
		every MKNF interpretation $M' \supsetneq M$ there is some $I' \in M'$ and
		some $\phi \in \thr$ such that $\mstr[I', M', M] \nent \phi$.
\end{itemize}
We say a set of MKNF formulae $\thr$ is \emph{MKNF satisfiable} if an MKNF
model of $\thr$ exists; otherwise it is \emph{MKNF unsatisfiable}. The S5
(un)satisfiability is defined analogously by considering S5 models instead of
MKNF models. For a sentence $\phi$, we write $M \ent \phi$ if and only if $M
\ent \set{\phi}$; otherwise we write $M \nent \phi$. Also, $\smod(\phi)$, S5
models of $\phi$, MKNF models of $\phi$, MKNF (un)satisfiability and S5
(un)satisfiability of $\phi$ are defined as $\smod(\set{\phi})$, S5 models of
$\set{\phi}$, MKNF models of $\set{\phi}$, MKNF (un)satisfiability of
$\set{\phi}$ and S5 (un)satisfiability of $\set{\phi}$, respectively.

Let $U \subseteq \lpre$ be a set of predicate symbols, $I, J \in \foint$ and
$M, N \in \mint$. We define the \emph{restriction of $I$ to $U$} as $\restr{I}
= \Set{p \in I | \preds{p} \subseteq U}$ and the \emph{restriction of $M$ to
$U$} as $\restr{M} = \Set{\restr{I} | I \in M}$. We say that \emph{$I$
coincides with $J$ on $U$} if $\restr{I} = \restr{J}$ and that \emph{$M$
coincides with $N$ on $U$} if $\restr{M} = \restr{N}$.
\end{definition}
\end{extended}

\noindent \textbf{Description Logics.}
Description Logics \cite{Baader2003} are (usually) decidable fragments of
first-order logic that are frequently used for knowledge representation and
reasoning in applications. Throughout the paper we assume that some
Description Logic is used to describe an ontology, i.e. it is used to specify
a shared conceptualisation of a domain of interest. Basic building blocks of
such a specification are \emph{constants}, representing objects (or
individuals), \emph{concepts}, representing groups of objects, and
\emph{roles}, representing binary relations between objects and properties of
objects. Typically, an ontology is composed of two distinguishable parts: a
TBox specifying the required terminology, i.e. concept and role definitions,
and an ABox with assertions about constants.

Most Description Logics can be equivalently translated into function-free
first-order logic, with constants represented by constant symbols, atomic
concepts represented by unary predicates and atomic roles represented by
binary predicates. We assume that for any DL axiom $\phi$, $\dlfo{\phi}$
denotes such a translation of $\phi$. We also define $\preds{\phi}$ as
$\preds{\dlfo{\phi}}$.

\noindent \textbf{Generalised Logic Programs.}
We consider ground logic programs for specifying nonmonotonic domain
knowledge. The basic syntactic blocks of such programs are ground atoms. A
\emph{default literal} is a ground atom preceded by $\lpnot[]$. A
\emph{literal} is either a ground atom or a default literal. As a convention,
due to the semantics of rule updates that we adopt in what follows, double
default negation is absorbed, so that $\lpnot[]\lpnot p$ denotes the atom $p$.
A \emph{rule} $r$ is an expression of the form
$
	L_0 \lpif L_1, L_2, \dotsc, L_k
$
where $k$ is a natural number and $L_0, L_1, \dotsc, L_k$ are literals. We say
$H(r) = L_0$ is the \emph{head of $r$} and $B(r) = \set{L_1, L_2, \dotsc,
L_n}$ is the \emph{body of $r$}. A rule $r$ is a \emph{fact} if its body is
empty; $r$ is \emph{positive} if its head is an atom%
. A \emph{generalised logic program} (GLP) $\prog$ is a set of rules. The set
of predicate symbols occurring in a literal $L$, set of literals $B$ and a
rule $r$ is denoted by $\preds{L}$, $\preds{B}$ and $\preds{r}$, respectively.
An interpretation $I$ is a \emph{stable model} of a GLP $\prog$ if
$
	I' = \least{\prog \cup \set{ \lpnot p |
		p \text{ is an atom and } p \notin I } },
$
where $I' = I \cup \set{ \mathit{not}\!\_p | p \text{ is an atom and } p
\notin I }$ and $\least{\cdot}$ denotes the least model of the program
obtained from the argument program by replacing every default literal $\lpnot
p$ by a fresh atom $\mathit{not}\!\_p$.

\noindent \textbf{Hybrid MKNF Knowledge Bases.}
A hybrid knowledge base is a pair $\an{\ont, \prog}$ where $\ont$ is an
ontology and $\prog$ is a generalised logic program. The semantics is assigned
to a hybrid knowledge base using a translation function $\pi$ that translates
both ontology axioms and rules into MKNF sentences. For any ontology $\ont$,
ground atom $p$, set of literals $B$, rule $r$, program $\prog$ and hybrid
knowledge base $\kb = \an{\ont, \prog}$, we define: $\pi(\ont) = \Set{ \mk
\dlfo{\phi} | \phi \in \ont }$, $\pi(p) = \mk p$, $\pi(\lpnot p) = \mnot p$,
$\pi(B) = \set{ \pi(L) | L \in B }$, $\pi(r) = (\pi(H(r)) \lif \bigland
\pi(B(r)))$, $\pi(\prog) = \Set{ \pi(r) | r \in \prog }$ and $\pi(\kb) =
\pi(\ont) \cup \pi(\prog)$. An MKNF interpretation $M$ is an \emph{S5 (MKNF)
model of $\kb$} if $M$ is an S5 (MKNF) model of $\pi(\kb)$.

As was shown in \cite{Lifschitz1991}, the MKNF semantics generalises the
stable model semantics for logic programs \cite{Gelfond1988} -- for every
logic program $\prog$, the stable models of $\prog$ directly correspond to
MKNF models of $\pi(\prog)$.

\noindent \textbf{Classical Updates.}
As a basis for our update operator, we adopt an update semantics called the
\emph{minimal change update semantics} \cite{Winslett1990} for updating
first-order theories. This update semantics offers a simple realisation of
atom inertia, satisfies all Katsuno and Mendelzon's postulates for belief
update \cite{Katsuno1991}, and it has successfully been used to deal with ABox
updates \cite{Liu2006,Giacomo2006}.

A notion of closeness between interpretations w.r.t.~a fixed interpretation
$I$ is used to determine the result of an update. This closeness is based on
the set of ground atoms that are interpreted differently than in $I$. For a
predicate symbol $P$ and an interpretation $I$, we denote the set $\set{p \in
I | \preds{p} = \set{P}}$ by $\restr[P]{I}$. Given interpretations $I$, $J$,
$J'$, the \emph{difference in the interpretation of $P$ between $I$ and $J$},
written $\mathit{diff}(P, I, J)$, is the set $(\restr[P]{I} \setminus
\restr[P]{J}) \cup (\restr[P]{J} \setminus \restr[P]{I})$. We say that $J$ is
at least as close to $I$ as $J'$, denoted by $J \leq_I J'$, if for every
predicate symbol $P$ it holds that $\mathit{diff}(P, I, J)$ is a subset of
$\mathit{diff}(P, I, J')$. We also say that \emph{$J$ is closer to $I$ than
$J'$}, denoted by $J <_I J'$, if $J \leq_I J'$ and not $J' \leq_I J$.

The minimal change update semantics then keeps those models of the updating
theory that are the closest w.r.t.~the relation $\leq_I$ to some model $I$ of
the original theory. Given an interpretation $I$, sets of interpretations $M$,
$N$, and first-order theories $\thr, \upd$, we define: $I \oplus N = \Set{ J
\in N | \lnot (\exists J' \in N) (J' <_I J) }$, $M \oplus N = \bigcup_{I \in
M} (I \oplus N)$, and $\smod(\thr \oplus \upd) = \smod(\thr) \oplus
\smod(\upd)$. If $\smod(\thr \oplus \upd)$ is nonempty, we say it is the
\emph{minimal change update model of $\thr \oplus \upd$}. This notion can be
naturally generalised to allow for sequences of updates. %
Formally, given a finite sequence of first-order theories $\upd =
\an{\upd_i}_{i < n}$, we define
$
	\smod(\upd) = (\dotsb((\smod(\upd_0) \oplus \smod(\upd_1)) \oplus \smod(\upd_2))
	\oplus \dotsb ) \oplus \smod(\upd_{n-1}).
$
If $\smod(\upd)$ is nonempty, we say it is the \emph{minimal change update
model of $\upd$}.

\noindent \textbf{Rule Updates.}
There exists a variety of different approaches to rule change
\cite{Leite1997,Alferes2000,Eiter2002,Sakama2003,Alferes2005,Zhang2006,Osorio2007,Delgrande2007,Delgrande2008,Slota2010b}.
The more recent, purely semantic approaches \cite{Delgrande2008,Slota2010b}
are closely related to classical update operators such as Winslett's operator
presented above. However, as indicated in \cite{Slota2010b}, their main
disadvantage is that they violate the property of support that lies at the
very heart of semantics of logic programs. Out of the approaches that do
respect support, only the rule update semantics presented in
\cite{Alferes2005,Zhang2006} possess another important property: immunity to
tautological and cyclic updates. We henceforth adopt the approach taken in
\cite{Alferes2005} because, unlike in \cite{Zhang2006}, it can be applied to
any initial program, can easily be used to perform iterative updates and has a
lower computational complexity.

A \emph{dynamic logic program} (DLP) is a finite sequence of GLPs. In order to
define the semantics for DLPs, based on \emph{causal rejection of rules}, we
define the notion of a conflict between rules as follows: two rules $r$ and
$r'$ are \emph{conflicting}, written $r \Join r'$, if $H(r) =\,\lpnot\!H(r')$.
Given a DLP $\prog = \an{\prog_i}_{i < n}$ and an interpretation $I$, we use
$\rho(\prog)$ to denote the multiset of all rules appearing in members of
$\prog$ and introduce the following notation:
\begin{align*}
	\dynrej{\prog, I} &= \Set{ r | (\exists i, j, r')
		( r \in \prog_i \land r' \in \prog_j \land i \leq j \land
		r \Join r' \land I \ent B(r') ) } \enspace, \\
	\dyndef{\prog, I} &=
		\Set{\lpnot p | (\lnot \exists r \in \rho(\prog)) (H(r) = p \land I \ent B(r))}
		\enspace.
\end{align*}
An interpretation $I$ is a \emph{dynamic stable model} of a DLP
$\prog$ if
$
	I' = \least{ [ \rho(\prog) \setminus \dynrej{\prog,I} ] \cup
		\dyndef{\prog, I} }
$,
where $I'$ and $\least{\cdot}$ are as in the definition of a stable model.

\section{Splitting and Updating Hybrid Knowledge Bases} \label{sect:core}

Our general objective is to define an update semantics for finite sequences
of hybrid knowledge bases, where each component represents knowledge about
a new state of the world.
\begin{definition}[Dynamic Hybrid Knowledge Base]
	A \emph{dynamic hybrid knowledge base} is a finite sequence of hybrid
	knowledge bases.
\end{definition}
In this paper we develop an update operator for a particular class of
syntactically constrained hybrid knowledge bases. The purpose of the
constraints we impose is to ensure that the ontology and rules can be updated
separately from one another, and the results can then be combined to obtain a
plausible update semantics for the whole hybrid knowledge base. Formally, the
update semantics we introduce generalises and modularly combines a classical
and a rule update operator.

In order to identify these constraints, we introduce the splitting theorem for
Hybrid MKNF Knowledge Bases in Subsect.~\ref{subsect:splitting theorem}. Based
on it, we identify a constrained class of dynamic hybrid knowledge bases and
define an update operator for that class in Subsect.~\ref{subsect:update
operator}. Finally, in Subsect \ref{subsect:scenario} we examine basic
properties of the operator and illustrate how it can deal with updates to the
hybrid knowledge base from Example \ref{ex:import:description}.

\subsection{Splitting Theorem}

\label{subsect:splitting theorem}

The splitting theorem for Logic Programs \cite{Lifschitz1994a} is a
generalisation of the notion of program stratification. Given a logic program
$\prog$, a splitting set for $\prog$ is a set of atoms $U$ such that the
program can be divided in two subprograms, the bottom and the top of $\prog$,
such that rules in the bottom only contain atoms from $U$, and no atom from
$U$ occurs in the head of any rule from the top. As a consequence, rules in
the top of $\prog$ cannot influence the stable models of its bottom.  The
splitting theorem captures this intuition, guaranteeing that each stable model
of $\prog$ is a union of a stable model $X$ of the bottom of $\prog$ and of a
stable model $Y$ of a reduced version of the top of $\prog$ where atoms
belonging to $U$ are interpreted under $X$. This can be further generalised to
sequences of splitting sets that divide a program $\prog$ into a sequence of
layers. The splitting sequence theorem then warrants that stable models of
$\prog$ consist of a union of stable models of each of its layers after
appropriate reductions.

In the following we generalise these notions to Hybrid MKNF Knowledge Bases
\cite{Motik2007}. The first definition establishes the notion of a splitting
set in this context.

\begin{definition}[Splitting Set]
	A splitting set for a hybrid knowledge base $\kb = \an{\ont, \prog}$ is any
	set of predicate symbols $U \subseteq \lpre$ such that
	\begin{enumerate}
		\item For every ontology axiom $\phi \in \ont$, if $\preds{\phi} \cap U
			\neq \emptyset$, then $\preds{\phi} \subseteq U$.
		\item For every rule $r \in \prog$, if $\preds{H(r)} \cap U \neq
			\emptyset$, then $\preds{r} \subseteq U$;
	\end{enumerate}

	The set of ontology axioms $\phi \in \ont$ such that $\preds{\phi} \subseteq
	U$ is called the \emph{bottom of $\ont$ relative to $U$} and denoted by
	$b_U(\ont)$. The set of rules $r \in \prog$ such that $\preds{r} \subseteq U$
	is called the \emph{bottom of $\prog$ relative to $U$} and denoted by
	$b_U(\prog)$. The hybrid knowledge base $b_U(\kb) = \an{b_U(\ont),
	b_U(\prog)}$ is called \emph{bottom of $\kb$ relative to $U$}.

	The set $t_U(\ont) = \ont \setminus b_U(\ont)$ is the \emph{top of $\ont$
	relative to $U$}. The set $t_U(\prog) = \prog \setminus b_U(\prog)$ is the
	\emph{top of $\prog$ relative to $U$}. The hybrid knowledge base $t_U(\kb) =
	\an{t_U(\ont), t_U(\prog)}$ is the \emph{top of $\kb$ relative to $U$}.
\end{definition}

Note that instead of defining a splitting set as a set of atoms, as was done
in the case of propositional logic programs, we define it as a set of
predicate symbols. By doing this, the set of ground atoms with the same
predicate symbol is considered either completely included in a splitting set,
or completely excluded from it. While this makes our approach slightly less
general than it could be if we considered each ground atom individually, we
believe the conceptual simplicity is worth this sacrifice. Also, since all
TBox axioms are universally quantified, in many cases we would end up
adding or excluding the whole set of ground atoms with the same predicate
symbol anyway.

Next, we need to define the reduction that makes it possible to properly
transfer information from an MKNF model of the bottom of $\kb$, and use it to
simplify the top of $\kb$.

\begin{definition}[Splitting Set Reduct]
	Let $U$ be a splitting set for a hybrid knowledge base $\kb = \an{\ont,
	\prog}$ and $X \in \mint$. The \emph{splitting set reduct of $\kb$ relative
	to $U$ and $X$} is a hybrid knowledge base $e_U(\kb, X) = \an{t_U(\ont),
	e_U(\prog, X)}$, where $e_U(\prog, X)$ consists of all rules $r'$ such that
	there exists a rule $r \in t_U(\prog)$ with the following properties: $X
	\ent \pi(\Set{ L \in B(r) | \preds{L} \subseteq U })$, $H(r') = H(r)$, and
	$B(r') = \Set{ L \in B(r) | \preds{L} \subseteq \lpre \setminus U }$.
\end{definition}

This leads us to the notion of a solution to $\kb$ w.r.t.~a splitting set $U$.

\begin{definition}[Solution w.r.t.~a Splitting Set]
	Let $U$ be a splitting set for a hybrid knowledge base $\kb$. A
	\emph{solution to $\kb$ w.r.t.~$U$} is a pair of MKNF interpretations
	$\an{X, Y}$ such that $X$ is an MKNF model of $b_U(\kb)$ and $Y$ is an MKNF
	model of $e_U(\kb, X)$.
\end{definition}

The splitting theorem now ensures that solutions to $\kb$ w.r.t.~any splitting
set $U$ are in one to one correspondence with the MKNF models of $\kb$.

\begin{theorem}[Splitting Theorem for Hybrid MKNF Knowledge Bases]
	\label{thm:splitting}
	Let $U$ be a splitting set for a hybrid knowledge base $\kb$. Then $M$ is an
	MKNF model of $\kb$ if and only if $M = X \cap Y$ for some solution $\an{X,
	Y}$ to $\kb$ w.r.t.~$U$.
\end{theorem}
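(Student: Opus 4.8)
The plan is to prove both directions by first reducing satisfaction of the split formulae to suitable ``projections'' and then exploiting the maximality clause in the definition of an MKNF model. Two purely semantic facts should be isolated at the outset. \emph{Locality:} every formula in $\pi(\kb)$ is subjective, so its truth in a structure $\mstr$ is independent of $I$; moreover, if $\preds{\phi}\subseteq U$ then its truth in $\mstr[I,M,N]$ depends only on $\restr{M}$ and $\restr{N}$, while if $\preds{\phi}\subseteq\lpre\setminus U$ it depends only on $\restrc{M}$ and $\restrc{N}$. \emph{Saturation:} an MKNF model of a theory all of whose relevant predicates lie in $U$ must be closed under arbitrary variation of the atoms outside $U$, i.e. it has the form $\Set{I | \restr{I}\in A}$ with $A=\restr{M}$ (and symmetrically outside $U$); this holds because nothing in such a theory constrains the complementary atoms, so by maximality every complementary variant of a member is again a member. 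Both facts are routine case analyses over the shapes of $\pi(\ont)$ and $\pi(\prog)$, crucially using that the head predicate of every top rule lies outside $U$.

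For the ``if'' direction, given a solution $\an{X,Y}$ I would set $M=X\cap Y$. By saturation $X=\Set{I|\restr{I}\in A}$ and $Y=\Set{I|\restrc{I}\in B}$ with $A=\restr{X}$, $B=\restrc{Y}$, so $M$ is exactly the ``product'' $\Set{I|\restr{I}\in A,\ \restrc{I}\in B}$, which is nonempty and satisfies $\restr{M}=A$ and $\restrc{M}=B$. That $M$ is an S5 model of $\kb$ then follows formula by formula: bottom formulae by locality and $\restr{M}=A=\restr{X}$; top ontology axioms by locality and $\restrc{M}=B=\restrc{Y}$; and for a top rule $r$ the $U$-part of its body evaluates in $M$ exactly as in $X$, so either it fails (and $\pi(r)$ holds vacuously) or $r$ contributes a reduct rule $r'\in e_U(\prog,X)$ whose head and remaining body evaluate in $M$ exactly as in $Y$, where $Y\ent\pi(r')$. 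For maximality I would take $N\supsetneq M$; since $M\subseteq N\subseteq\restr{N}\times\restrc{N}$, either $\restr{N}\supsetneq A$ or ($\restr{N}=A$ and $\restrc{N}\supsetneq B$). In the first case the saturation $\Set{I|\restr{I}\in\restr{N}}$ strictly extends $X$, so maximality of $X$ yields a violated bottom formula, which by locality is also violated in $\mstr[\cdot,N,M]$; in the second case the analogous saturation strictly extends $Y$, and the corresponding original top axiom or rule is violated in $\mstr[\cdot,N,M]$ because the unchanged $U$-part keeps the same reduct rules active.

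For the ``only if'' direction I would set $A=\restr{M}$, $B=\restrc{M}$ and define $X,Y$ as the two saturations above. The key first step is to show $M=X\cap Y$: one always has $M\subseteq X\cap Y$, and if the inclusion were strict then, since $X\cap Y$ shares the $U$- and $(\lpre\setminus U)$-restrictions of $M$, locality gives $\mstr[\cdot,X\cap Y,M]\ent\pi(\kb)$, contradicting maximality of $M$. It then remains to verify that $\an{X,Y}$ is a solution. For $X$, being an S5 model of $b_U(\kb)$ is immediate by locality; for maximality I take $W\supsetneq X$, form $N=W\cap Y\supsetneq M$, and apply maximality of $M$, where the decisive observation is that no top formula can be the witness, since enlarging only the $U$-restriction can at most falsify body conjuncts of top rules (leaving them satisfied) while leaving top ontology axioms and all rule heads untouched, so the witness is a bottom formula and transfers to $X$. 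The argument for $Y$ and $e_U(\kb,X)$ is symmetric, using that the reduct rules are precisely those top rules whose $U$-body holds under $A=\restr{M}$.

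I expect the main obstacle to be the maximality bookkeeping, where one must keep the modal context of $\mnot[]$ fixed at the smaller interpretation while the context of $\mk[]$ ranges over the enlargement. The two delicate points are (i) proving that $M$ is a product in the ``only if'' direction, which is exactly what makes $X\cap Y$ meaningful, and (ii) the faithfulness of the reduct: the condition $X\ent\pi(\Set{L\in B(r)|\preds{L}\subseteq U})$ must coincide with the evaluation of the $U$-body literals of $r$ under the relevant context, so that a violation upstairs corresponds precisely to a violation of $\pi(r)$, and conversely. Once locality and saturation are established, every remaining step reduces to a case split on whether an enlargement occurs inside or outside $U$.
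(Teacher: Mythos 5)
Your overall route coincides with the paper's: what you call \emph{locality} and \emph{saturation} are precisely the paper's auxiliary results on restricted satisfaction of subjective formulae and on saturated MKNF interpretations, the product decomposition with $\restr{M}=\restr{X}$ and $\restrc{M}=\restrc{Y}$ is its notion of semi-saturation, and both your ``if'' direction and your proof that $M=X\cap Y$ in the ``only if'' direction mirror the paper's argument. There is, however, one step that fails as written. To establish maximality of $X$ in the ``only if'' direction you take an arbitrary $W\supsetneq X$ and claim that $N=W\cap Y\supsetneq M$. This is false in general, because $W$ is an arbitrary MKNF interpretation and need not be saturated: pick any interpretation $I_0$ with $\restr{I_0}\notin\restr{X}$ and $\restrc{I_0}\notin\restrc{Y}$, and set $W=X\cup\set{I_0}$; then $W\supsetneq X$, yet $I_0\notin Y$, so $W\cap Y=X\cap Y=M$ and maximality of $M$ yields nothing. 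Worse, even when $W\cap Y$ does strictly contain $M$, its $U$-restriction need not equal $\restr{W}$, so your final locality transfer of the bottom-formula witness from $\mstr[N,M]$ to $\mstr[W,X]$ can also break down. The same defect affects the ``symmetric'' argument for $Y$.

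The repair is available inside your own framework and is exactly what the paper does: instead of intersecting $W$ itself with $Y$, first saturate it, taking $N=\sat{W}\cap Y=\Set{I\in\foint | \restr{I}\in\restr{W} \land \restrc{I}\in\restrc{Y}}$ (the paper phrases this as the greatest MKNF interpretation coinciding with $W$ on $U$ and with $M$ on $\lpre\setminus U$). Since $X$ is saturated relative to $U$, any $W\supsetneq X$ satisfies $\restr{W}\supsetneq\restr{X}$, so this product $N$ does strictly contain $M$; moreover $\restr{N}=\restr{W}$ and $\restrc{N}=\restrc{M}$. With that replacement your subsequent reasoning goes through unchanged: by your ``decisive observation'' the witness of $\mstr[N,M]\nent\pi(\kb)$ must be a bottom formula, and locality transfers its violation to $\mstr[W,X]$, as required.
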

\begin{extended}
	\begin{proof}
		See \ref{app:splitting theorem}, page \pageref{proof:thm:splitting}.
	\end{proof}
\end{extended}

This result makes it possible to characterise an MKNF model of a hybrid
knowledge base in terms of a pair of MKNF models of two layers inside it, such
that, as far as the MKNF semantics is concerned, the first layer is
independent of the second. If instead of a single splitting set we consider a
sequence of such sets, we can divide a hybrid knowledge in a sequence of
layers, keeping similar properties as in the case of a single splitting set.

\begin{definition}[Splitting Sequence]
	A \emph{splitting sequence} for a hybrid knowledge base $\kb$ is a monotone,
	continuous sequence $\an{U_\alpha}_{\alpha < \mu}$ of splitting sets for
	$\kb$ such that $\bigcup_{\alpha < \mu} U_\alpha = \lpre$.
\end{definition}

The first layer of $\kb$ relative to such a splitting sequence is the part of
$\kb$ that only contains predicates from $U_0$. Formally, this is exactly the
hybrid knowledge base $b_{U_0}(\kb)$. Furthermore, for every ordinal $\alpha +
1 < \mu$, the corresponding layer of $\kb$ is the part of $\kb$ that contains
predicates from $U_{\alpha + 1} \setminus U_\alpha$, and, in addition,
predicate symbols from $U_\alpha$ are allowed to appear in rule bodies. Given
our notation this can be written as $t_{U_\alpha}(b_{U_{\alpha + 1}}(\kb))$.
The following definition uses these observations and combines them with
suitable reductions to introduce a solution w.r.t.~a splitting sequence.

\begin{definition}[Solution w.r.t.~a Splitting Sequence]
	Let $U = \an{U_\alpha}_{\alpha < \mu}$ be a splitting sequence for a hybrid
	knowledge base $\kb$. A \emph{solution to $\kb$ w.r.t.~$U$} is a
	sequence $\an{X_\alpha}_{\alpha < \mu}$ of MKNF interpretations such that
	\begin{enumerate}
		\item $X_0$ is an MKNF model of $b_{U_0}(\kb)$;
		\item For any ordinal $\alpha$ such that $\alpha + 1 < \mu$, $X_{\alpha +
			1}$ is an MKNF model of
			\[
				e_{U_\alpha} \left(
					b_{U_{\alpha+1}}(\kb),
					\textstyle \bigcap_{\eta \leq \alpha} X_\eta
				\right) \enspace;
			\]
		\item For any limit ordinal $\alpha < \mu$, $X_\alpha = \foint$.
	\end{enumerate}
\end{definition}

The splitting sequence theorem now guarantees a one to one correspondence
between MKNF models and solutions w.r.t.~a splitting sequence.

\begin{theorem}[Splitting Sequence Theorem for Hybrid MKNF Knowledge Bases]
	\label{thm:seq}
	Let $\an{U_\alpha}_{\alpha < \mu}$ be a splitting sequence for a hybrid
	knowledge base $\kb$. Then $M$ is an MKNF model of $\kb$ if and only if $M =
	\bigcap_{\alpha < \mu} X_\alpha$ for some solution $\an{X_\alpha}_{\alpha <
	\mu}$ to $\kb$ w.r.t.~$\an{U_\alpha}_{\alpha < \mu}$.
\end{theorem}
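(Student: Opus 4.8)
The plan is to reduce the sequence case to the single splitting-set case (Theorem \ref{thm:splitting}) by transfinite induction on the length $\mu$ of the splitting sequence. The natural strategy is to peel off the bottom layer using the single-set theorem and recurse on the remainder. For a successor length, I would apply Theorem \ref{thm:splitting} with $U=U_0$ to split $\kb$ into $b_{U_0}(\kb)$ and the reduct $e_{U_0}(\kb,X_0)$; the sequence $\an{U_\alpha}_{0<\alpha<\mu}$ (suitably reindexed) should then be a splitting sequence for the top part, and one applies the induction hypothesis there. The correspondence $M=\bigcap_{\alpha<\mu}X_\alpha$ would be assembled by matching $X_0$ from the bottom with the intersection $\bigcap_{0<\alpha<\mu}X_\alpha$ coming from the recursive solution.

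First I would verify the bookkeeping that makes the recursion coherent. The key lemma to establish is a \emph{commutation} property: that the reduct and bottom operators compose correctly, i.e. that for $\alpha+1<\mu$ the layer $e_{U_\alpha}(b_{U_{\alpha+1}}(\kb),\bigcap_{\eta\le\alpha}X_\eta)$ arising in the sequence definition coincides with the layer one obtains by first reducing $\kb$ by $U_0$ and $X_0$, and then splitting the resulting knowledge base by the shifted sets. Concretely I would show that $b_{U_{\alpha+1}}$ and $e_{U_\alpha}(\cdot,X)$ interact well with $t_{U_0}$ and that the accumulated reduct w.r.t.\ $\bigcap_{\eta\le\alpha}X_\eta$ factors as a reduct w.r.t.\ $X_0$ followed by a reduct w.r.t.\ $\bigcap_{0<\eta\le\alpha}X_\eta$. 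This uses that each $U_\alpha$ is a splitting set, so predicates of a rule's head determine whether the whole rule lands in a given layer, and that splitting-set reducts only strip body literals whose predicates lie in the relevant lower set.

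Second I would handle the limit ordinals. Here the continuity assumption $U_\lambda=\bigcup_{\alpha<\lambda}U_\alpha$ and the stipulation $X_\lambda=\foint$ (which contributes trivially to the intersection) are what keep the construction going; I would argue that no new layer appears at a limit stage because every axiom and rule already belongs to some earlier layer, so the partial intersections $\bigcap_{\alpha<\lambda}X_\alpha$ stabilise appropriately and remain MKNF models of the knowledge base built from predicates in $U_\lambda$. The monotonicity of $\an{U_\alpha}$ guarantees the layers are pairwise predicate-disjoint in their ``new'' parts, which is what lets the intersection of per-layer MKNF models be an MKNF model of the whole.

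The main obstacle I expect is the limit case together with the interplay between the \emph{intersection} semantics of MKNF models and transfinite iteration. In the propositional logic-programming splitting sequence theorem one takes unions of stable models and the limit argument is comparatively mild; here MKNF models combine by intersection and the condition defining an MKNF model (maximality among S5 models via the $\supsetneq$ quantifier) is global, so verifying that $\bigcap_{\alpha<\mu}X_\alpha$ is genuinely \emph{MKNF-minimal} — and not merely an S5 model — across a transfinite chain of reducts is the delicate point. I would isolate this as a separate lemma showing that MKNF-modelhood is preserved under the layered intersection, relying on Theorem \ref{thm:splitting} at each successor step to transfer minimality and on the coincidence of restrictions $\restr{M}$ on the sets $U_\alpha$ to control the behaviour at limits.
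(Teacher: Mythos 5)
Your reduction strategy---peel off $b_{U_0}(\kb)$ via Theorem~\ref{thm:splitting} and recurse on the reduct $e_{U_0}(\kb,X_0)$ with the shifted sequence---cannot establish the theorem as stated, because the induction on the length $\mu$ makes no progress once $\mu$ is infinite. Splitting sequences here are transfinite (monotone, continuous, of arbitrary ordinal length), and deleting the first term of a sequence of infinite order type $\mu$ leaves a sequence of order type $\mu$ again (the tail of an $\omega$-sequence is still an $\omega$-sequence), so there is no strictly shorter sequence to which an induction hypothesis could be applied; your recursion proves the theorem only for finite $\mu$. In addition, even where the recursion does apply, your commutation lemma has to reconcile the fact that in the definition of a solution the reduct at stage $\alpha+1$ is taken w.r.t.\ $\bigcap_{\eta\le\alpha}X_\eta$, which includes $X_0$, whereas after peeling the recursive solution for $e_{U_0}(\kb,X_0)$ uses reducts w.r.t.\ $\bigcap_{1\le\eta\le\alpha}X_\eta$ only; this identity is true but is part of the work you would owe.

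The step you defer to an unnamed lemma---MKNF-maximality of the transfinite intersection at limit stages---is precisely what carries the theorem, so the proposal is missing its core. The paper does not peel at all. For the solution-to-model direction it fixes the sequence and proves by transfinite induction over \emph{prefixes} (Proposition~\ref{prop:seq:2}) that $\bigcap_{\eta\le\alpha}X_\eta$ is an MKNF model of $b_{U_\alpha}(\kb)$ for every $\alpha<\mu$: successor steps use Theorem~\ref{thm:splitting}, and at a limit $\alpha$, S5-modelhood follows because every formula mentions finitely many predicates and hence lies in some $b_{U_\beta}(\kb)$ with $\beta<\alpha$ (by continuity), while maximality follows by contradiction---any $Y'\supsetneq Y_\alpha$ contains an interpretation absent from some prefix model $Y_\beta$, and a formula of $b_{U_\beta}(\kb)$ violated in $\mstr[Y'\cup Y_\beta,\, Y_\beta]$ transfers to $\mstr[Y',\, Y_\alpha]$ by coincidence of restrictions on $U_\beta$. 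For the model-to-solution direction the paper constructs the solution explicitly by saturation, $X_0=\sat[U_0]{M}$ and $X_{\alpha+1}=\sat[U_{\alpha+1}\setminus U_\alpha]{M}$ (Proposition~\ref{prop:seq:1}), and both directions are closed off by a prolonging lemma (Lemma~\ref{lemma:seq:prolonging}) appending $\lpre$ to the sequence so that the prefix induction reaches $\kb$ itself. To repair your proof you would have to replace induction on length by this induction over prefixes of a fixed sequence (or, equivalently, actually prove the limit lemma you postulate); once that is in place, your successor-step bookkeeping essentially coincides with the paper's.
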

\begin{extended}
	\begin{proof}
		See \ref{app:splitting sequence theorem}, page \pageref{proof:thm:seq}.
	\end{proof}
\end{extended}

A hybrid knowledge base can be split in a number of different ways. For
example, $\emptyset$ and $\lpre$ are splitting sets for any hybrid knowledge
base and sequences such as $\an{\lpre}$, $\an{\emptyset, \lpre}$ are splitting
sequences for any hybrid knowledge base. The following example shows a more
elaborate splitting sequence for the Cargo Import knowledge base.

\begin{example}[Splitting the Cargo Import Knowledge Base]
	\label{ex:import:splitting}

	Consider the hybrid knowledge base $\kb = \an{\ont, \prog}$ presented in
	Fig.~\ref{fig:import}. One of the nontrivial splitting sequences for $\kb$
	is $U = \an{U_0, U_1, U_2, U_3, \lpre}$, where
	\vspace{-.7em}
	\begin{list}{}
		{
			\setlength{\parsep}{0em}
			\setlength{\partopsep}{0em}
			\setlength{\itemsep}{0em}
			\setlength{\topsep}{0em}
		}
		\item \footnotesize
			{\allowdisplaybreaks
		\begin{align*}
			U_0 = \phantom{U_0 \cup{}} \{\,
				& \pr{Commodity}/1, \pr{EdibleVegetable}/1, \pr{Tomato}/1,
					\pr{CherryTomato}/1, \pr{GrapeTomato}/1, \\
				& \pr{HTSCode}/2, \pr{HTSChapter}/2, \pr{HTSHeading}/2,
					\pr{Bulk}/1, \pr{Prepackaged}/1, \pr{TariffCharge}/2, \\
				& \pr{ShpmtCommod}/2, \pr{ShpmtImporter}/2,
					\pr{ShpmtDeclHTSCode}/2, \pr{ShpmtProducer}/2, \\
				& \pr{ShpmtCountry}/2
			\,\} \\
			U_1 = U_0 \cup \{\,
				& \pr{AdmissibleImporter}/1, \pr{SuspectedBadGuy}/1,
					\pr{ApprovedImporterOf}/2
			\,\} \\
			U_2 = U_1 \cup \{\,
				& \pr{RegisteredProducer}/2, \pr{EUCountry}/1,
					\pr{EURegisteredProducer}/1, \pr{CommodCountry}/2, \\
				& \pr{ExpeditableImporter}/2, \pr{LowRiskEUCommodity}/1
			\,\} \\
			U_3 = U_2 \cup \{\,
				& \pr{CompliantShpmt}/1, \pr{Random}/1, \pr{RandomInspection}/1,
					\pr{PartialInspection}/1, \\
				& \pr{FullInspection}/1
			\,\} \enspace.
		\end{align*}
		}
	\end{list}
	This splitting sequence splits $\kb$ in four layers. The first layer
	contains all ontological knowledge regarding commodity types as well as
	information about shipments. The second layer contains rules that use
	information from the first layer together with internal records to classify
	importers. The third layer contains axioms with geographic classification,
	information about registered producers and, based on information about
	commodities and importers from the first two layers, it defines low risk
	commodities coming from the European Union. The final layer contains rules
	for deciding which shipments should be inspected based on information from
	previous layers.
\end{example}

\subsection{Update Operator}

\label{subsect:update operator}

With the concepts and results related to splitting hybrid knowledge bases from
the previous subsection, we are now ready to examine the constraints under
which a plausible modular update semantics for a hybrid knowledge base can be
defined. Obviously, this is the case with hybrid knowledge bases that contain
either only ontology axioms, or only rules. We call such knowledge bases
\emph{basic}, and define the dynamic MKNF model for basic dynamic knowledge
bases by referring to the classical and rule update semantics defined in
Sect.~\ref{sect:preliminaries}.

\begin{definition}[Dynamic MKNF Model of a Basic Dynamic Hybrid Knowledge Base]
	\label{def:basic dynamic hybrid knowledge base}
	We say a hybrid knowledge base $\kb = \an{\ont, \prog}$ is
	\emph{$\ont$-based} if $\prog$ contains only positive facts;
	\emph{$\prog$-based} if $\ont$ is empty; \emph{basic} if it is either
	$\ont$-based or $\prog$-based. A dynamic hybrid knowledge base $\kb =
	\an{\kb_i}_{i < n}$ is \emph{$\ont$-based} if for all $i < n$, $\kb_i$ is
	$\ont$-based; \emph{$\prog$-based} if for all $i < n$, $\kb_i$ is
	$\prog$-based; \emph{basic} if it is either $\ont$-based or $\prog$-based.

	An MKNF interpretation $M$ is a \emph{dynamic MKNF model} of a basic dynamic
	hybrid knowledge base $\kb = \an{\kb_i}_{i < n}$, where $\kb_i = \an{\ont_i,
	\prog_i}$, if either $\kb$ is $\ont$-based and $M$ is the minimal change
	update model of $\an{\dlfo{\ont_i} \cup \prog_i}_{i < n}$, or $\kb$ is
	$\prog$-based and $M = \set{J \in \foint | I \subseteq J}$ for some dynamic
	stable model $I$ of $\an{\prog_i}_{i < n}$.
\end{definition}

As can be seen, our definition is slightly more general than described above,
as in the case of $\ont$-based knowledge bases it allows the program part to
contain positive facts. This amounts to the reasonable assumption that
positive facts in a logic program carry the same meaning as the corresponding
ground first-order atom. As will be seen in the following, this allows us to
extend the class of basic hybrid knowledge bases and define dynamic MKNF
models for it. To this end, we utilise the splitting-related concepts from the
previous subsection. Their natural generalisation for dynamic hybrid knowledge
bases follows.

\begin{definition}[Splitting Set and Splitting Sequence]
	A set of predicate symbols $U$ is a splitting set for a dynamic hybrid
	knowledge base $\kb = \an{\kb_i}_{i < n}$ if for all $i < n$, $U$ is a
	splitting set for $\kb_i$.

	The dynamic hybrid knowledge base $\an{b_U(\kb_i)}_{i < n}$ is called the
	\emph{bottom of $\kb$ relative to $U$} and denoted by $b_U(\kb)$. The
	dynamic hybrid knowledge base $\an{t_U(\kb_i)}_{i < n}$ is called the
	\emph{top of $\kb$ relative to $U$} and denoted by $t_U(\kb)$. Given some $X
	\in \mint$, the dynamic hybrid knowledge base $\an{e_U(\kb_i, X)}_{i < n}$
	is called the \emph{splitting set reduct of $\kb$ relative to $U$ and $X$}
	and denoted by $e_U(\kb, X)$.

	A sequence of sets of predicate symbols $U$ is a splitting sequence for
	$\kb$ if for all $i < n$, $U$ is a splitting sequence for $\kb_i$.
\end{definition}

In the static case, given a splitting set $U$, the splitting set theorem
guarantees that an MKNF model $M$ of a hybrid knowledge base $\kb$ is an
intersection of an MKNF model $X$ of $b_U(\kb)$ and of an MKNF model $Y$ of
$e_U(\kb, X)$. In the dynamic case, we can use this correspondence to
\emph{define} a dynamic MKNF model. More specifically, we can say that $M$ is
a dynamic MKNF model of a dynamic hybrid knowledge base $\kb$ if $M$ is an
intersection of a dynamic MKNF model $X$ of $b_U(\kb)$ and of a dynamic MKNF
model $Y$ of $e_U(\kb, X)$.  For the definition to be sound, we need to
guarantee that $X$ and $Y$ are defined. In other words, $\kb$ has to be such
that both $b_U(\kb)$ and $e_U(\kb, X)$ are basic. When we move to the more
general case of a splitting sequence $U = \an{U_\alpha}_{\alpha < \mu}$, what
we need to ensure is that $b_{U_0}(\kb)$ is basic and for any ordinal $\alpha$
such that $\alpha + 1 < \mu$, $e_{U_\alpha}(b_{U_{\alpha + 1}}(\kb),
\bigcap_{\eta < \alpha} X_\eta)$ is also basic. A class of dynamic hybrid
knowledge bases that satisfies these conditions can be defined as follows:

\begin{definition}[Updatable Dynamic Hybrid Knowledge Base]
	Let $U$ be a set of predicate symbols. We say a hybrid knowledge base $\kb$
	is \emph{$\ont$-reducible relative to $U$} if all rules $r$ from $\prog$ are
	positive and $\preds{B(r)} \subseteq U$; \emph{$\prog$-reducible relative to
	$U$} if $\ont$ is empty; \emph{reducible relative to $U$} if it is either
	$\ont$-reducible or $\prog$-reducible relative to $U$. A dynamic hybrid
	knowledge base $\kb = \an{\kb_i}_{i < n}$ is \emph{$\ont$-reducible relative
	to $U$} if for all $i < n$, $\kb_i$ is $\ont$-reducible relative to $U$;
	\emph{$\prog$-reducible relative to $U$} if for all $i < n$, $\kb_i$ is
	$\prog$-reducible relative to $U$; \emph{reducible relative to $U$} if it is
	either $\ont$-reducible or $\prog$-reducible relative to $U$.

	Let $\kb$ be a (dynamic) hybrid knowledge base and $U =
	\an{U_\alpha}_{\alpha < \mu}$ be a splitting sequence for $\kb$. We say $U$
	is \emph{update-enabling for $\kb$} if $b_{U_0}(\kb)$ is reducible relative
	to $\emptyset$ and for any $\alpha$ such that $\alpha + 1 < \mu$, the hybrid
	knowledge base $t_{U_\alpha}(b_{U_{\alpha + 1}}(\kb))$ is reducible relative
	to $U_\alpha$. We say $\kb$ is \emph{updatable} if some update-enabling
	splitting sequence for $\kb$ exists.
\end{definition}

The following proposition now guarantees the property of updatable dynamic
hybrid knowledge bases that we discussed above.

\begin{proposition}
	[Layers of an Updatable Dynamic Hybrid Knowledge Base are Basic]
	\label{prop:update-enabling implies basic}
	Let $U$ be an update-enabling splitting sequence for a dynamic hybrid
	knowledge base $\kb$ and $X \in \mint$. Then $b_{U_0}(\kb)$ is a basic
	dynamic hybrid knowledge base and for any ordinal $\alpha$ such that $\alpha
	+ 1 < \mu$, $e_{U_\alpha}(b_{U_{\alpha+1}}(\kb), X)$ is also a basic dynamic
	hybrid knowledge base.
\end{proposition}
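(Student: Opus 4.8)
The plan is to prove both parts by simply unwinding the definitions of \emph{reducible} and \emph{basic}, the one genuine ingredient being the observation that $\preds{L} \neq \emptyset$ for every literal $L$: indeed a literal is a ground atom $P(t_1, \dotsc, t_n)$ (possibly default-negated) and $\preds{P(t_1, \dotsc, t_n)} = \set{P}$. Consequently, whenever all predicates of a literal lie inside a splitting set, that literal cannot survive a restriction to the complement of the set. I would isolate this fact first, since it drives both claims.

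For the first claim, the definition of an update-enabling sequence gives that $b_{U_0}(\kb)$ is reducible relative to $\emptyset$, and reducibility of a dynamic knowledge base is uniform: either every component $b_{U_0}(\kb_i)$ is $\ont$-reducible relative to $\emptyset$, or every component is $\prog$-reducible relative to $\emptyset$. In the first case each rule $r$ of the program of $b_{U_0}(\kb_i)$ is positive with $\preds{B(r)} \subseteq \emptyset$; since $\preds{L} \neq \emptyset$ for each $L \in B(r)$, this forces $B(r) = \emptyset$, so every rule is a positive fact and $b_{U_0}(\kb_i)$ is $\ont$-based. In the second case the ontology of $b_{U_0}(\kb_i)$ is empty, so it is $\prog$-based. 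As the dichotomy holds simultaneously for all $i$, $b_{U_0}(\kb)$ is basic.

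For the second claim I would first record the syntactic identity that, writing $\kb'_i = b_{U_{\alpha+1}}(\kb_i) = \an{\ont'_i, \prog'_i}$, the reduct $e_{U_\alpha}(\kb'_i, X)$ and the top $t_{U_\alpha}(\kb'_i)$ share the same ontology component $t_{U_\alpha}(\ont'_i)$; only their programs differ. The update-enabling hypothesis says $t_{U_\alpha}(b_{U_{\alpha+1}}(\kb))$ is reducible relative to $U_\alpha$, again uniformly over the components. If each $t_{U_\alpha}(\kb'_i)$ is $\ont$-reducible relative to $U_\alpha$, then every rule $r$ of $t_{U_\alpha}(\prog'_i)$ is positive with $\preds{B(r)} \subseteq U_\alpha$, and any rule $r'$ contributed to $e_{U_\alpha}(\prog'_i, X)$ has head $H(r') = H(r)$ (an atom, so $r'$ is positive) and body $\Set{ L \in B(r) | \preds{L} \subseteq \lpre \setminus U_\alpha }$, which is empty because each such $\preds{L} \subseteq U_\alpha$ is nonempty and hence not contained in $\lpre \setminus U_\alpha$. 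Thus every $r'$ is a positive fact and the reduct is $\ont$-based. If instead each $t_{U_\alpha}(\kb'_i)$ is $\prog$-reducible relative to $U_\alpha$, then $t_{U_\alpha}(\ont'_i) = \emptyset$, so the reduct's ontology is empty and it is $\prog$-based. Uniformity again yields that $e_{U_\alpha}(b_{U_{\alpha+1}}(\kb), X)$ is basic.

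The only step requiring care — and the one I would flag as the crux — is the nonemptiness of $\preds{L}$ together with the bookkeeping that the $\ont$-/$\prog$-based dichotomy is forced to hold for all components of the sequence at once; the rest is a direct substitution of definitions, and notably the argument is completely insensitive to the choice of $X$, which only prunes which rules $r'$ appear in the reduct without affecting their shape.
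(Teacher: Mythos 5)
Your proof is correct and takes essentially the same route as the paper's: unwind the definition of an update-enabling sequence, split into the $\ont$-reducible and $\prog$-reducible cases, and observe that bodies constrained to $U_\alpha$ vanish in the reduct while the reduct inherits the top's (possibly empty) ontology. The paper's own proof is just a terser version of this; the points you flag --- nonemptiness of $\preds{L}$, uniformity of the dichotomy across the components of the sequence, and insensitivity to the choice of $X$ --- are exactly what it leaves implicit.
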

\begin{extended}%
	\begin{proof}
		See \ref{app:hybrid update operator}, page
		\pageref{proof:prop:update-enabling implies basic}.
	\end{proof}
\end{extended}

This result paves the way to the following definition of a solution to an
updatable dynamic hybrid knowledge base together with the notion of a dynamic
MKNF model w.r.t.~an updatable splitting sequence.

\begin{definition}[Solution to an Updatable Dynamic Hybrid Knowledge Base]
	Let $U = \an{U_\alpha}_{\alpha < \mu}$ be an update-enabling splitting
	sequence for a dynamic hybrid knowledge base $\kb$. A \emph{solution to
	$\kb$ w.r.t.~$U$} is a sequence of MKNF interpretations
	$\an{X_\alpha}_{\alpha < \mu}$ such that 
	\begin{enumerate}
		\item $X_0$ is a dynamic MKNF model of $b_{U_0}(\kb)$;
		\item For any ordinal $\alpha$ such that $\alpha + 1 < \mu$, $X_{\alpha +
			1}$ is a dynamic MKNF model of
			\[
				e_{U_\alpha} \left(
					b_{U_{\alpha+1}}(\kb),
					\textstyle \bigcap_{\eta \leq \alpha} X_\eta
				\right) \enspace;
			\]
		\item For any limit ordinal $\alpha < \mu$, $X_\alpha = \foint$.
	\end{enumerate}
	We say that an MKNF interpretation $M$ is a \emph{dynamic MKNF model of
	$\kb$ w.r.t.~$U$} if $M = \bigcap_{\alpha < \mu} X_\alpha$ for some solution
	$\an{X_\alpha}_{\alpha < \mu}$ to $\kb$ w.r.t.~$U$.
\end{definition}

The last step required to define a dynamic MKNF model of an updatable dynamic
hybrid knowledge base, without the need to refer to a context of a particular
splitting sequence, is to ensure that Def.~\ref{def:basic dynamic hybrid
knowledge base} of a dynamic MKNF model for basic dynamic hybrid knowledge
bases is properly generalised. The following proposition guarantees that the
set of dynamic MKNF models is independent of a particular update-enabling
splitting sequence.

\begin{proposition}
	[Solution Independence]
	\label{prop:solution independence}
	Let $U, V$ be update-enabling splitting sequences for a dynamic hybrid
	knowledge base $\kb$. Then $M$ is a dynamic MKNF model of $\kb$ w.r.t.~$U$
	if and only if $M$ is a dynamic MKNF model of $\kb$ w.r.t.~$V$.
\end{proposition}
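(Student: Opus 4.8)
The starting point is that, by Definition~\ref{def:basic dynamic hybrid knowledge base}, the dynamic MKNF models of a \emph{basic} dynamic hybrid knowledge base are fixed intrinsically --- as the minimal change update model in the $\ont$-based case and through the dynamic stable models of the underlying DLP in the $\prog$-based case --- without any reference to a splitting sequence. Since Proposition~\ref{prop:update-enabling implies basic} guarantees that every layer produced by an update-enabling sequence is basic, the only place where the choice of sequence can influence the outcome is the layered composition $M = \bigcap_{\alpha<\mu} X_\alpha$. The plan is therefore to show that this composite is invariant under two elementary transformations of an update-enabling splitting sequence, and that any two such sequences for $\kb$ are connected by a (possibly transfinite) chain of these transformations inside the lattice of splitting sets. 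A quick preliminary observation is that splitting sets are closed under $\cup$ and $\cap$: both defining conditions only test whether $\preds{\phi}$ or $\preds{r}$ meets the set, which is preserved by union and intersection. This makes the collection of splitting sets a lattice and gives us room to interpolate between sequences.

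The first transformation is \emph{refinement by insertion}: inserting an intermediate splitting set $U'$ with $U_\alpha \subseteq U' \subseteq U_{\alpha+1}$. Fixing a solution $\an{X_\eta}$ and writing $X = \bigcap_{\eta\le\alpha} X_\eta$, the affected layer is the basic dynamic knowledge base $L = e_{U_\alpha}(b_{U_{\alpha+1}}(\kb), X)$, and $U'$ restricts to a splitting set for $L$. I would show that the dynamic MKNF models of $L$ decompose as the two-step definition requires, namely $M_L = X' \cap Y'$ with $X'$ a dynamic MKNF model of $b_{U'}(L)$ and $Y'$ of $e_{U'}(L, X')$. The set-level recombination is exactly the static Splitting Theorem~\ref{thm:splitting}; what remains is that the \emph{concrete} per-layer semantics respects the split. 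In the $\ont$-based case this is a modularity property of the Winslett operator~\cite{Winslett1990}: since the closeness order $\leq_I$ is defined predicate-by-predicate via $\mathit{diff}(P, I, J)$, the minimal models factor over the predicate blocks of $U'$ and recombine along the whole update sequence. In the $\prog$-based case it is a splitting theorem for the dynamic stable models of~\cite{Alferes2005}, stating that $\dynrej{\prog, I}$ and $\dyndef{\prog, I}$ localise to the bottom block $U'$, exactly as the logic-program splitting theorem of~\cite{Lifschitz1994a}.

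Insertion alone cannot connect two sequences whose sets are incomparable, so the second transformation is a \emph{commutation at a diamond}: within a block $t_{U_\alpha}(b_{U_{\alpha+2}}(\kb))$ we replace the intermediate set $U_{\alpha+1}$ by another splitting set $S$ with $U_\alpha \subseteq S \subseteq U_{\alpha+2}$. The key structural fact I would establish is that two incomparable splitting sets induce a predicate-disjoint, mutually independent decomposition of the block: if $A$ and $C$ denote the predicates that the two choices place into the bottom respectively, then no axiom and no rule of the block mentions predicates from both $A$ and $C$, which follows directly from the splitting-set conditions applied to each of the two sets. Consequently the base semantics evaluates the $A$-part and the $C$-part independently, so processing $A$ before $C$ (the $U_{\alpha+1}$-split) yields the same intersection as processing $C$ before $A$ (the $S$-split). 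Combining the two transformations, I would connect $U$ and $V$ through the sublattice generated by their ranges: refine each into a chain that threads through the relevant meets and joins $U_\alpha \cap V_\beta$ and $U_\alpha \cup V_\beta$, and use commutation to pass between the finitely many chains that differ only by the order in which independent blocks are peeled off.

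I expect the principal obstacle to be the $\prog$-based half --- the splitting theorem for dynamic stable models across a \emph{sequence} of programs. One must verify that neither the rejection set $\dynrej{\prog, I}$ nor the default set $\dyndef{\prog, I}$ of the DLP can be affected, on any bottom predicate, by a rule drawn from the top, and that this localisation survives the iterated reduct construction that defines a solution. Two further pieces of bookkeeping need care: showing that each elementary move preserves update-enabledness, so the intermediate sequences remain legitimate and their layers stay basic by Proposition~\ref{prop:update-enabling implies basic}, and handling limit ordinals, where the clause $X_\alpha = \foint$ together with continuity of the sequences must be checked to be compatible with the interpolation. Once the single-move lemmas and this connectivity are in place, a transfinite induction assembles them into the full equivalence of dynamic MKNF models w.r.t.~$U$ and w.r.t.~$V$.
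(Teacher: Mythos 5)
There is a genuine gap, and it sits at the load-bearing step of your plan: the claim that any two update-enabling splitting sequences are connected by a chain of insertions and diamond commutations. For your chain argument even to be \emph{stated}, every intermediate sequence must itself be update-enabling, since otherwise some layer fails to be basic and ``dynamic MKNF model w.r.t.\ that sequence'' is simply undefined. Neither of your moves preserves update-enabledness. Concretely, let $\lpre = \{P, P', Q, Q'\}$ and let $\kb$ consist of the axioms $\phi_P$ (predicate $P$), $\phi_{P'}$ (predicate $P'$) and the rule $Q \lpif \lpnot Q'$. Then $U = \an{\{P,P'\}, \lpre}$ is update-enabling, and $S = \{P, Q, Q'\}$ is a splitting set incomparable with $\{P,P'\}$, but the swapped sequence $\an{S, \lpre}$ has first layer $\{\phi_P,\; Q \lpif \lpnot Q'\}$, which contains an ontology axiom (so it is not $\prog$-reducible) together with a rule whose body predicate $Q'$ lies in the same layer rather than below it (so it is not $\ont$-reducible relative to $\emptyset$): the swapped sequence defines nothing. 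The same phenomenon afflicts sequences that ``thread through the joins $U_\alpha \cup V_\beta$'': whenever a $\prog$-type block of one sequence and an $\ont$-type block of the other fall into a single interpolated layer, that layer is not basic. Repairing this requires restricting to homogeneity-preserving moves and then proving connectivity under that restriction (essentially a sorting argument over linear extensions of the dependency order of components), none of which is in your text; and since splitting sequences here are ordinal-indexed with limit ordinals, a chain of single-set moves cannot in general transform one transfinite sequence into another, so the connectivity claim also needs a fundamentally different formulation in the transfinite case. A further, smaller error: your ``key structural fact'' is too strong --- a rule whose head predicates lie above both $U_{\alpha+1}$ and $S$ may mention predicates of both difference blocks in its body; independence holds only for axioms and for rules whose heads lie in one of the two blocks.

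The paper's proof avoids intermediate sequences entirely, and that is where your (correct) ingredients should be redirected. Given a solution $\an{X_\alpha}_{\alpha < \mu}$ w.r.t.\ $U$, each layer is basic by Proposition~\ref{prop:update-enabling implies basic}, and Proposition~\ref{prop:basic splitting} shows that a \emph{basic} dynamic hybrid knowledge base can be split along an \emph{arbitrary} splitting sequence --- no update-enabledness hypothesis --- because its dynamic MKNF models are fixed intrinsically and the splitting theorems for Winslett's operator and for dynamic stable models (exactly the two results you flagged as the principal obstacle) apply to it. The paper therefore splits each $U$-layer along the second \emph{given} sequence $V$, obtaining a grid $Y_{\alpha,\beta} = \sat[S_\alpha \cap T_\beta]{M}$ (with $S$, $T$ the saturation sequences induced by $U$, $V$), proves via the commutation lemmas (Lemmas~\ref{lemma:bottom repeated}--\ref{lemma:reduct context}) that each grid cell computed as the ``$V$-split of a $U$-layer'' coincides with the ``$U$-split of the corresponding $V$-layer'', and then reassembles the grid column-first, exhibiting $\an{\sat[T_\beta]{M}}_{\beta < \nu}$ as a solution to $\kb$ w.r.t.\ $V$ whose intersection is $M$. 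This direct transposition uses only the two given sequences, handles limit ordinals uniformly, and invokes your per-layer splitting facts exactly where they are available --- on basic layers --- rather than on a walk through the space of all splitting sequences.
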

\begin{extended}%
	\begin{proof}
		See \ref{app:hybrid update operator}, page
		\pageref{proof:prop:solution independence}.
	\end{proof}
\end{extended}

If $\kb$ is a basic dynamic hybrid knowledge base, then it can be verified
easily that dynamic MKNF models of $\kb$, as originally defined in
Def.~\ref{def:basic dynamic hybrid knowledge base}, coincide with dynamic MKNF
models of $\kb$ w.r.t.~the splitting sequence $\an{\lpre}$. We obtain the
following corollary:

\begin{corollary}
	[Compatibility with Def.~\ref{def:basic dynamic hybrid knowledge base}]
	\label{cor:compatibility with basic}
	Let $\kb$ be a basic dynamic hybrid knowledge base and $U$ be a splitting
	sequence for $\kb$. Then $M$ is a dynamic	MKNF model of $\kb$ if and only if
	$M$ is a dynamic MKNF model of $\kb$ w.r.t.~$U$.
\end{corollary}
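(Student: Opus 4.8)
\begin{extended}
	\begin{proof}
The plan is to compare everything to the trivial one-block splitting sequence $\an{\lpre}$ and then appeal to Solution Independence (Prop.~\ref{prop:solution independence}). Before either notion of dynamic MKNF model can be compared, I would first check that the statement is meaningful, i.e.\ that every splitting sequence for a basic $\kb$ --- both the given $U$ and $\an{\lpre}$ --- is update-enabling. This follows by a short case analysis on the two ways $\kb$ can be basic. If $\kb$ is $\prog$-based, every component $\kb_i$ has an empty ontology, hence so do $b_{U_0}(\kb_i)$ and each $t_{U_\alpha}(b_{U_{\alpha+1}}(\kb_i))$; thus these are $\prog$-reducible relative to $\emptyset$ and to $U_\alpha$, respectively. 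If $\kb$ is $\ont$-based, every $\prog_i$ consists of positive facts, and taking bottoms and tops only ever removes facts, so the corresponding programs still contain only positive facts; since a fact has empty body, $\preds{B(r)} = \emptyset$ is a subset of every set of predicates, and these knowledge bases are $\ont$-reducible relative to the required sets. Hence any splitting sequence for $\kb$ is update-enabling, and by Prop.~\ref{prop:update-enabling implies basic} all layers arising below are basic, so all the dynamic MKNF models invoked are well defined.

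Next I would show that the absolute notion of Def.~\ref{def:basic dynamic hybrid knowledge base} coincides with the notion relative to $\an{\lpre}$. Here $\mu = 1$ and $U_0 = \lpre$, so $b_{U_0}(\kb) = \kb$ because every axiom and rule has all its predicates in $\lpre$. A solution w.r.t.~$\an{\lpre}$ is therefore a one-element sequence $\an{X_0}$ in which $X_0$ is a dynamic MKNF model of $\kb$ in the sense of Def.~\ref{def:basic dynamic hybrid knowledge base}, and the associated model is $M = \bigcap_{\alpha < 1} X_\alpha = X_0$; there is no $\alpha$ with $\alpha + 1 < 1$, so the reduct clause is vacuous. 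Consequently $M$ is a dynamic MKNF model of $\kb$ in the absolute sense if and only if $M$ is a dynamic MKNF model of $\kb$ w.r.t.~$\an{\lpre}$.

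Finally, since $\an{\lpre}$ and $U$ are both update-enabling splitting sequences for $\kb$, Solution Independence (Prop.~\ref{prop:solution independence}) gives that $M$ is a dynamic MKNF model of $\kb$ w.r.t.~$U$ if and only if it is one w.r.t.~$\an{\lpre}$, and composing this with the equivalence just established yields the corollary. The only step that demands attention is the first, establishing that basicness forces every splitting sequence to be update-enabling; this is exactly the content of the two case distinctions above. The substantive comparison between different splitting sequences is not reproved here but is carried entirely by Prop.~\ref{prop:solution independence}.
	\end{proof}
\end{extended}
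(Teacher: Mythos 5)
Your proof is correct and follows essentially the same route as the paper's: identify dynamic MKNF models of a basic $\kb$ with dynamic MKNF models w.r.t.~the trivial sequence $\an{\lpre}$, and then transfer to $U$ via Solution Independence (Proposition~\ref{prop:solution independence}). You are in fact slightly more careful than the paper, which only remarks that $\an{\lpre}$ is update-enabling; your case analysis showing that \emph{every} splitting sequence for a basic $\kb$ is update-enabling is precisely what legitimises applying Proposition~\ref{prop:solution independence} to the arbitrary $U$ in the statement.
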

\begin{extended}%
	\begin{proof}
		See \ref{app:hybrid update operator}, page
		\pageref{proof:cor:compatibility with basic}.
	\end{proof}
\end{extended}

We can now safely introduce the dynamic MKNF model for any updatable dynamic
hybrid knowledge base as follows:

\begin{definition}[Dynamic MKNF Model of Updatable Dynamic Hybrid Knowledge Base]
	An MKNF interpretation $M$ is a \emph{dynamic MKNF model} of an updatable
	dynamic hybrid knowledge base $\kb$ if $M$ is a dynamic MKNF model of $\kb$
	w.r.t.~some update-enabling splitting sequence for $\kb$.
\end{definition}

\subsection{Properties and Use} \label{subsect:scenario}

The purpose of this section is to twofold. First, we establish the most basic
properties of the defined update semantics, relating it to the static MKNF
semantics and the adopted classical and rule update semantics and showing that
it respects one of the most widely accepted principles behind update semantics
in general, the principle of primacy of new information. Second, we
illustrate its usefulness by considering updates of the hybrid knowledge base
presented in Example \ref{ex:import:description}.

The first result shows that our update semantics generalises the static MKNF
semantics.

\begin{theorem}
	[Generalisation of MKNF Models]
	\label{thm:generalisation of mknf}
	Let $\kb$ be an updatable hybrid knowledge base and $M$ be an MKNF
	interpretation. Then $M$ is a dynamic MKNF model of $\an{\kb}$ if and only
	if $M$ is an MKNF model of $\kb$.
\end{theorem}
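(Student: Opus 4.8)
The plan is to prove Theorem~\ref{thm:generalisation of mknf} by reducing the ``dynamic'' single-element case to the static splitting machinery already established, and then peeling off the trivial update. Since $\kb$ is updatable, fix an update-enabling splitting sequence $U = \an{U_\alpha}_{\alpha < \mu}$ for $\kb$. The dynamic hybrid knowledge base $\an{\kb}$ has length $n = 1$, so the dynamic stable model semantics and the minimal change update semantics that appear in Def.~\ref{def:basic dynamic hybrid knowledge base} both collapse to their static counterparts on a one-element sequence. Concretely, the first step is to verify this collapse: for a $\prog$-based one-element sequence $\an{\kb}$, the dynamic stable models of $\an{\prog}$ coincide with the stable models of $\prog$ (the rejection set $\dynrej{\cdot}{I}$ is empty because there is only one program index), and for an $\ont$-based one-element sequence, the minimal change update model $\smod(\an{\dlfo{\ont} \cup \prog})$ is simply $\smod(\dlfo{\ont} \cup \prog)$. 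By the correspondence between stable models and MKNF models of $\pi(\prog)$ noted in the preliminaries, this shows that on basic $\kb$ the dynamic MKNF models of $\an{\kb}$ coincide exactly with the (static) MKNF models of $\kb$.

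The second step is to lift this from basic to arbitrary updatable $\kb$ by induction along the splitting sequence, mirroring the inductive structure shared by the Splitting Sequence Theorem (Theorem~\ref{thm:seq}) and the definition of a solution to an updatable dynamic hybrid knowledge base. I would argue that $\an{X_\alpha}_{\alpha < \mu}$ is a solution to $\an{\kb}$ w.r.t.~$U$ in the dynamic sense if and only if it is a solution to $\kb$ w.r.t.~$U$ in the static sense of the Splitting Sequence Theorem. The two definitions have identical shape: $X_0$ ranges over (dynamic) MKNF models of $b_{U_0}(\kb)$; each $X_{\alpha+1}$ ranges over (dynamic) MKNF models of $e_{U_\alpha}(b_{U_{\alpha+1}}(\kb), \bigcap_{\eta \leq \alpha} X_\eta)$; and $X_\alpha = \foint$ at limits. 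By Prop.~\ref{prop:update-enabling implies basic}, each layer $b_{U_0}(\an{\kb})$ and $e_{U_\alpha}(b_{U_{\alpha+1}}(\an{\kb}), X)$ is \emph{basic}, so the collapse established in the first step applies layer by layer: the dynamic MKNF models of each layer equal its static MKNF models. Feeding this equivalence into the matching clauses shows the two notions of solution coincide, and hence $M = \bigcap_{\alpha < \mu} X_\alpha$ is a dynamic MKNF model of $\an{\kb}$ w.r.t.~$U$ exactly when it is an MKNF model of $\kb$ by Theorem~\ref{thm:seq}.

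Finally, Prop.~\ref{prop:solution independence} removes the dependence on the particular choice of $U$: $M$ is a dynamic MKNF model of $\an{\kb}$ (as in the last definition, quantifying over \emph{some} update-enabling splitting sequence) if and only if it is a dynamic MKNF model w.r.t.~the fixed $U$, which by the above is equivalent to $M$ being an MKNF model of $\kb$. I expect the main obstacle to be the first step — pinning down precisely that the rule update and classical update operators degenerate to their static forms on length-one sequences, and in particular checking that $\dynrej{\an{\prog}}{I} = \emptyset$ and that $\dyndef{\an{\prog}}{I}$ reproduces exactly the default-negation completion used in the static stable model definition, so that the dynamic stable models of $\an{\prog}$ are literally the stable models of $\prog$. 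Once that base case is clean, the inductive lift through the splitting sequence is essentially bookkeeping, since the static and dynamic solution definitions were deliberately set up with the same recursive skeleton.
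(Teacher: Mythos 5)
Your high-level plan coincides with the paper's own proof, which derives the theorem from Theorem~\ref{thm:seq} together with Proposition~\ref{prop:dlp generalisation} (dynamic stable models of a single-program sequence are exactly the stable models) and Proposition~\ref{prop:pmasplitting:generalisation} (the minimal change update model of $\an{\thr}$ is the MKNF model of $\thr$): collapse the dynamic semantics to the static one on each basic layer, then reassemble via the splitting machinery. Your closing appeal to Proposition~\ref{prop:solution independence} is consistent with that structure (it discharges the existential quantifier over update-enabling sequences once you have fixed $U$ upfront).

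The genuine gap is in your justification of the rule-side collapse, which you yourself identify as the crux. You propose to check that $\dynrej{\an{\prog}, I} = \emptyset$ ``because there is only one program index''. Under the paper's definition this is false: rejection is defined with $i \leq j$, and $i = j$ is permitted, so two conflicting rules of the \emph{same} program reject each other whenever the body of one holds in $I$. This is deliberate --- intra-program rejection is the defining feature of the refined semantics of \cite{Alferes2005} and is what gives immunity to tautological and cyclic updates. Concretely, for $\prog = \set{p., \lpnot p.}$ we get $\dynrej{\an{\prog}, I} = \set{p., \lpnot p.}$ for every $I$. In addition, $\dyndef{\an{\prog}, I}$ contains $\lpnot p$ only for atoms with no rule whose body holds in $I$, while the stable-model condition adds $\lpnot p$ for \emph{every} $p \notin I$, so the two fixpoint equations are not literally identical. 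The coincidence of the two semantics on single programs is therefore a real theorem (it does hold --- in the example above neither semantics admits a model); it is exactly Proposition~\ref{prop:dlp generalisation}, which the paper does not reprove but imports from \cite{Leite2003}. Your proof is repaired by citing that result (as the paper does) or by actually proving that intra-program rejection and support-based defaults leave the stable models unchanged; as sketched, your verification step would fail.
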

\begin{extended}%
	\begin{proof}
		See \ref{app:hybrid update operator}, page
		\pageref{proof:thm:generalisation of mknf}.
	\end{proof}
\end{extended}

\noindent It also generalises the classical and rule update semantics it is based on.

\begin{theorem}
	[Generalisation of Minimal Change Update Semantics]
	\label{thm:generalisation of pma}
	Let $\an{\kb_i}_{i < n}$, where $\kb_i = \an{\ont_i, \prog_i}$, be a dynamic
	hybrid knowledge base such that $\prog_i$ is empty for all $i < n$. Then $M$
	is a dynamic MKNF model of $\an{\kb_i}_{i < n}$ if and only if $M$ is the
	minimal change update model of $\an{\dlfo{\ont_i}}_{i < n}$.
\end{theorem}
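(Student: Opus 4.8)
The plan is to recognise that the hypothesis $\prog_i = \emptyset$ for all $i < n$ forces $\kb = \an{\kb_i}_{i<n}$ to be a \emph{basic}, $\ont$-based dynamic hybrid knowledge base, and then to reduce the claim to the $\ont$-based clause of Def.~\ref{def:basic dynamic hybrid knowledge base} by routing through the compatibility and solution-independence results already established.

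First I would observe that the empty program vacuously contains only positive facts, so each $\kb_i = \an{\ont_i, \emptyset}$ is $\ont$-based; hence $\kb$ is $\ont$-based and therefore basic. Next I would exhibit a concrete update-enabling splitting sequence to witness that $\kb$ is updatable, so that its dynamic MKNF model is well defined. The natural candidate is $\an{\lpre}$: here $b_{\lpre}(\kb) = \kb$, and since every rule in each $\prog_i$ is a positive fact with empty body, $\kb$ is $\ont$-reducible relative to $\emptyset$, so $\an{\lpre}$ is update-enabling for $\kb$.

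With updatability secured, I would chain the earlier results. By the definition of a dynamic MKNF model of an updatable dynamic hybrid knowledge base together with Proposition~\ref{prop:solution independence} (Solution Independence), $M$ is a dynamic MKNF model of $\kb$ if and only if $M$ is a dynamic MKNF model of $\kb$ w.r.t.~the particular sequence $\an{\lpre}$. Since $\kb$ is basic, Corollary~\ref{cor:compatibility with basic} then identifies this with $M$ being a dynamic MKNF model of $\kb$ in the sense of Def.~\ref{def:basic dynamic hybrid knowledge base}. Finally, the $\ont$-based clause of that definition states precisely that $M$ is the minimal change update model of $\an{\dlfo{\ont_i} \cup \prog_i}_{i<n}$; and because $\prog_i = \emptyset$, we have $\dlfo{\ont_i} \cup \prog_i = \dlfo{\ont_i}$ for each $i$, so this is exactly the minimal change update model of $\an{\dlfo{\ont_i}}_{i<n}$, as required. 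Both directions follow by reading this equivalence chain backwards and forwards.

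I expect no serious obstacle: the genuine semantic content has already been isolated in Corollary~\ref{cor:compatibility with basic} and Proposition~\ref{prop:solution independence}, so the argument is essentially a careful unwinding of definitions. The only points demanding attention are confirming that the empty program makes $\kb$ genuinely $\ont$-based and updatable (via the witness $\an{\lpre}$), and checking the collapse $\dlfo{\ont_i} \cup \prog_i = \dlfo{\ont_i}$, which is immediate once $\prog_i = \emptyset$.
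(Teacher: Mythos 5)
Your skeleton is the same as the paper's: pass from the updatable-level definition to Def.~\ref{def:basic dynamic hybrid knowledge base} via Corollary~\ref{cor:compatibility with basic}, then read off the minimal change update model. Your preliminary observations are all correct: each $\an{\ont_i, \emptyset}$ is vacuously $\ont$-based, $\an{\lpre}$ is an update-enabling splitting sequence for $\kb$, and $\dlfo{\ont_i} \cup \emptyset = \dlfo{\ont_i}$. (The detour through Proposition~\ref{prop:solution independence} is harmless but redundant, since Corollary~\ref{cor:compatibility with basic} is stated for an arbitrary splitting sequence of a basic dynamic hybrid knowledge base.)

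The gap is in your last step, where you treat the $\ont$-based clause of Def.~\ref{def:basic dynamic hybrid knowledge base} as an exact characterisation of dynamic MKNF models. That definition is a disjunction, and its two cases can overlap: nothing in the theorem's hypotheses prevents every $\ont_i$ from being empty as well, in which case $\kb$ is simultaneously $\ont$-based and $\prog$-based. In that situation, in the ``only if'' direction of your final equivalence, an $M$ may a priori be certified as a dynamic MKNF model by the $\prog$-based clause---that is, $M = \set{J \in \foint | I \subseteq J}$ for some dynamic stable model $I$ of $\an{\prog_i}_{i < n}$---and you must still argue that such an $M$ equals the minimal change update model of $\an{\dlfo{\ont_i}}_{i < n}$. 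This reconciliation of the two clauses is exactly what the paper's Lemma~\ref{lemma:basic ambiguity} supplies, and is why the paper's proof cites that lemma alongside Corollary~\ref{cor:compatibility with basic}. The repair is short: the empty dynamic logic program has $\emptyset$ as its unique dynamic stable model, so the $\prog$-based clause yields $M = \foint$, which is also the minimal change update model of a sequence of empty theories. But as written, your chain of equivalences silently skips this degenerate case.
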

\begin{extended}%
	\begin{proof}
		See \ref{app:hybrid update operator}, page
		\pageref{proof:thm:generalisation of pma}.
	\end{proof}
\end{extended}

\begin{theorem}
	[Generalisation of Dynamic Stable Model Semantics]
	\label{thm:generalisation of dlp}
	Let $\kb = \an{\kb_i}_{i < n}$, where $\kb_i = \an{\ont_i, \prog_i}$, be a
	dynamic hybrid knowledge base such that $\ont_i$ is empty for all $i < n$.
	Then $M$ is a dynamic MKNF model of $\kb$ if and only if $M = \set{J \in
	\foint | I \subseteq J}$ for some dynamic stable model $I$ of
	$\an{\prog_i}_{i < n}$.
\end{theorem}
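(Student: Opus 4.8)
The plan is to reduce Theorem~\ref{thm:generalisation of dlp} to
Definition~\ref{def:basic dynamic hybrid knowledge base} by exhibiting a
single convenient update-enabling splitting sequence and then invoking
Proposition~\ref{prop:solution independence} to discharge the dependence on
that choice. Since every $\ont_i$ is empty, each $\kb_i = \an{\emptyset,
\prog_i}$ is $\prog$-based, hence basic, and the whole dynamic knowledge base
$\kb$ is $\prog$-based in the sense of Definition~\ref{def:basic dynamic hybrid
knowledge base}. First I would observe that the trivial splitting sequence
$\an{\lpre}$ is update-enabling for $\kb$: with a single layer we only need
$b_{\lpre}(\kb) = \kb$ to be reducible relative to $\emptyset$, and because
every component has an empty ontology, each $\kb_i$ is $\prog$-reducible
relative to $\emptyset$, so $\kb$ is reducible relative to $\emptyset$ as
required. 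Thus $\kb$ is updatable and $\an{\lpre}$ is a legitimate witness.

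Next I would unwind the definition of a dynamic MKNF model of $\kb$
w.r.t.~$\an{\lpre}$. With $\mu = 1$ the only relevant clause is the first one:
$X_0$ must be a dynamic MKNF model of $b_{\lpre}(\kb) = \kb$, and the solution
is $\an{X_0}$ with $M = \bigcap_{\alpha < 1} X_\alpha = X_0$. So $M$ is a
dynamic MKNF model of $\kb$ w.r.t.~$\an{\lpre}$ if and only if $M$ is a dynamic
MKNF model of the basic dynamic hybrid knowledge base $\kb$ in the sense of
Definition~\ref{def:basic dynamic hybrid knowledge base}. Applying the
$\prog$-based clause of that definition directly, this holds exactly when $M =
\set{J \in \foint | I \subseteq J}$ for some dynamic stable model $I$ of
$\an{\prog_i}_{i < n}$, which is precisely the right-hand side of the theorem.

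Finally I would close the argument by appealing to
Proposition~\ref{prop:solution independence}: since the set of dynamic MKNF
models of an updatable $\kb$ is independent of the chosen update-enabling
splitting sequence, the characterisation obtained via $\an{\lpre}$ is exactly
the set of dynamic MKNF models of $\kb$ as fixed by the final definition. This
yields both directions of the stated equivalence. I do not expect any single
step to be a serious obstacle; the only point requiring care is verifying the
reducibility bookkeeping for the witness sequence, namely checking that the
empty-ontology condition makes each layer $\prog$-reducible and hence basic so
that Proposition~\ref{prop:update-enabling implies basic} applies and the
dynamic MKNF models of each layer are well defined. Once that is in place, the
theorem follows by directly substituting the $\prog$-based clause of
Definition~\ref{def:basic dynamic hybrid knowledge base}.
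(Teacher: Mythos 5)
Your proof skeleton is exactly the paper's: the paper derives Theorem~\ref{thm:generalisation of dlp} from Corollary~\ref{cor:compatibility with basic} --- whose own proof is precisely your three steps (the witness sequence $\an{\lpre}$, unwinding the one-layer solution, and Proposition~\ref{prop:solution independence}) --- together with Lemma~\ref{lemma:basic ambiguity}. What your argument omits is the role of that lemma, and this is a genuine, if small, gap. Definition~\ref{def:basic dynamic hybrid knowledge base} is a disjunction: $M$ is a dynamic MKNF model of a basic $\kb$ if \emph{either} $\kb$ is $\ont$-based and $M$ is the minimal change update model of $\an{\dlfo{\ont_i} \cup \prog_i}_{i < n}$, \emph{or} $\kb$ is $\prog$-based and $M$ is induced by a dynamic stable model. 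Under the theorem's hypothesis $\kb$ is $\prog$-based, but nothing prevents it from being $\ont$-based \emph{as well}: this happens exactly when every $\prog_i$ consists only of positive facts. In that overlap case the definition yields the union of the two model sets, so your claim that membership holds ``exactly when'' the $\prog$-based clause is satisfied is unjustified: a dynamic MKNF model could a priori arise from the $\ont$-based clause, as a minimal change update model of the sequence of fact theories, without you having shown it is of the form $\set{J \in \foint | I \subseteq J}$ for some dynamic stable model $I$.

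Concretely, the ``if'' direction of the theorem is fine as you wrote it; the hole is only in the ``only if'' direction, and only in the degenerate all-positive-facts case. The paper closes it with Lemma~\ref{lemma:basic ambiguity}, which states that for a knowledge base that is simultaneously $\ont$-based and $\prog$-based the two clauses agree: the minimal change update model of $\an{\dlfo{\ont_i} \cup \prog_i}_{i < n}$ coincides with $\set{J \in \foint | I \subseteq J}$, where $I$, the unique dynamic stable model of $\an{\prog_i}_{i < n}$, is just the set of atoms occurring as heads of facts. Adding this observation (or proving it inline --- it is a short computation with Winslett's operator on sequences of positive atomic theories, since no atom ever needs to be removed) makes your argument complete and identical in substance to the paper's.
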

\begin{extended}%
	\begin{proof}
		See \ref{app:hybrid update operator}, page
		\pageref{proof:thm:generalisation of dlp}.
	\end{proof}
\end{extended}

\noindent Besides, the semantics respects the principle of primacy of new information \cite{Dalal1988}.

\begin{theorem}
	[Principle of Primacy of New Information]
	\label{thm:primacy of new information}
	Let $\kb = \an{\kb_i}_{i < n}$ be an updatable dynamic hybrid knowledge base
	with $n > 0$ and $M$ be a dynamic MKNF model of $\kb$. Then $M \ent
	\pi(\kb_{n-1})$.
\end{theorem}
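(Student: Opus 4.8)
The plan is to unfold the definition of a dynamic MKNF model, reduce the claim to a primacy statement for each of the two basic update semantics, and then lift it through the update-enabling splitting sequence. Since $M$ is a dynamic MKNF model of $\kb = \an{\kb_i}_{i<n}$, there exist an update-enabling splitting sequence $U = \an{U_\alpha}_{\alpha<\mu}$ and a solution $\an{X_\alpha}_{\alpha<\mu}$ with $M = \bigcap_{\alpha<\mu} X_\alpha$. Every MKNF sentence in $\pi(\kb_{n-1})$ is either $\mk\dlfo{\phi}$ for some axiom $\phi\in\ont_{n-1}$ or $\pi(r)$ for some rule $r\in\prog_{n-1}$, and because $U$ is a splitting sequence for each $\kb_i$, each such axiom or rule has all its relevant predicates confined to a single layer, determined by the least $\alpha$ with $\preds{\varrho}\subseteq U_\alpha$. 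Hence it suffices to (i) prove primacy for the two basic update semantics of Def.~\ref{def:basic dynamic hybrid knowledge base}, and (ii) show that satisfaction established inside one layer transfers to $M$.

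For the classical case, an $\ont$-based basic layer has as its dynamic MKNF model the minimal change update model $\smod(\upd)$ of $\upd = \an{\dlfo{\ont_i}\cup\prog_i}_{i<n}$. From the definition of $\oplus$ one has $I\oplus N\subseteq N$, hence $M'\oplus N\subseteq N$ for all $M',N$, so the outermost operation yields $\smod(\upd)\subseteq\smod(\upd_{n-1}) = \smod(\dlfo{\ont_{n-1}}\cup\prog_{n-1})$. Thus every interpretation of $\smod(\upd)$ is a first-order model of $\dlfo{\ont_{n-1}}$ and of the positive facts of $\prog_{n-1}$, which is exactly what makes $\smod(\upd)$ an S5 model of the corresponding $\mk$-prefixed sentences, since $\mstr[I,\smod(\upd),\smod(\upd)]\ent\mk\dlfo{\phi}$ holds precisely when all interpretations in $\smod(\upd)$ satisfy $\dlfo{\phi}$. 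For the rule case, a $\prog$-based layer has a dynamic MKNF model of the form $\set{J\in\foint | I\subseteq J}$ for a dynamic stable model $I$ of $\an{\prog_i}_{i<n}$, and a short computation shows that in such an $M$ the subjective sentence $\pi(r)$ holds iff $I$ classically satisfies $r$. It then remains to show that $I$ satisfies every rule $r\in\prog_{n-1}$ whose body it models: such an $r$ can be rejected only by a conflicting rule at level $n-1$, and if $r$ were violated, the complement of its head would be left unjustified in the least model defining $I$ — neither supplied by $\dyndef{\prog,I}$ (blocked by $r$ itself) nor by any surviving conflicting rule — contradicting the stable-model fixpoint condition.

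It remains to lift these base cases. Fix an axiom or rule $\varrho$ of $\kb_{n-1}$ and let $\alpha+1$ be its layer (the bottom layer $b_{U_0}(\kb)$ being the analogous base case). By Prop.~\ref{prop:update-enabling implies basic} the corresponding layer $e_{U_\alpha}(b_{U_{\alpha+1}}(\kb),\bigcap_{\eta\le\alpha}X_\eta)$ is a basic dynamic hybrid knowledge base, its last element contains exactly the reduct of $\varrho$ relative to $U_\alpha$ and $\bigcap_{\eta\le\alpha}X_\eta$, and its dynamic MKNF model is $X_{\alpha+1}$. The base-case primacy just proved gives that $X_{\alpha+1}$ is an S5 model of the translation of this reduct. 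Since $M$ coincides with $X_{\alpha+1}$ on the predicates introduced at this layer and with $\bigcap_{\eta\le\alpha}X_\eta$ on the lower predicates, the coincidence and restriction arguments underlying the Splitting Sequence Theorem (Theorem~\ref{thm:seq}) show $M\ent\pi(\varrho)$: the lower body literals of $\varrho$ are evaluated as in $\bigcap_{\eta\le\alpha}X_\eta$ (which is precisely the reduction condition), while the head and remaining body are evaluated as in $X_{\alpha+1}$.

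I anticipate the main obstacle to lie entirely in the rule machinery. First, the causal-rejection analysis in the base case must exclude that a most-recent rule is silently violated; the delicate point is same-level conflicts, which I expect to dispose of by showing they leave a literal unjustified and thus admit no dynamic stable model, rendering the statement vacuously true there. Second, transferring S5-satisfaction of the reduced rule back to the original $\varrho$ needs care, since the reduct discards the lower body literals under the hypothesis that $\bigcap_{\eta\le\alpha}X_\eta$ satisfies them; making this precise reuses the restriction bookkeeping of the splitting proofs rather than anything genuinely new.
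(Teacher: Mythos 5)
Your proposal is correct and follows essentially the same route as the paper's own proof: primacy for the two basic semantics first (via $I \oplus N \subseteq N$ for the minimal change operator, and for DLPs the observation that a rule of $\prog_{n-1}$ can only be rejected at level $n-1$, mutual rejection precluding any dynamic stable model), then lifting through the update-enabling splitting sequence using the saturation/coincidence machinery, with the same case split on whether the lower-layer body literals are satisfied by $\bigcap_{\eta \le \alpha} X_\eta$ (vacuous satisfaction if not, layer-level primacy applied to the reduct rule if so). No gaps beyond those already present in the paper's own proof sketch.
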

\begin{extended}%
	\begin{proof}
		See \ref{app:hybrid update operator}, page
		\pageref{proof:thm:primacy of new information}.
	\end{proof}
\end{extended}

%
%

The following example illustrates how the semantics can be used in the Cargo
Imports domain to incorporate new, conflicting information into a hybrid
knowledge base.

\begin{extended}
	\begin{figure}
		\longline

		\begin{center}
		*\ \ *\ \ *\ \ $b_{U_0}(\kb)$ *\ \ *\ \ *
		\end{center}
		\vspace{-.4cm}
		\begin{tabbing}
		foooooooooooooooooooooooooooooooooooooooooo\=\kill
		$\pr{Commodity} \equiv (\exists \pr{HTSCode}.\top)$
			\> $\pr{EdibleVegetable} \equiv (\exists \pr{HTSChapter}.\set{\text{`07'}})$ \\
		$\pr{CherryTomato} \equiv (\exists \pr{HTSCode}.\set{\text{`07020020'}})$
			\> $\pr{Tomato} \equiv (\exists \pr{HTSHeading}.\set{\text{`0702'}})$ \\
		$\pr{GrapeTomato} \equiv (\exists \pr{HTSCode}.\set{\text{`07020010'}})$
			\> $\pr{Tomato} \sqsubseteq \pr{EdibleVegetable}$ \\
		$\pr{CherryTomato} \sqsubseteq \pr{Tomato}$
			\> $\pr{GrapeTomato} \sqsubseteq \pr{Tomato}$ \\
		$\pr{CherryTomato} \sqcap \pr{Bulk}
				\equiv (\exists \pr{TariffCharge}.\set{\text{\$0}})$
			\> $\pr{CherryTomato} \sqcap \pr{GrapeTomato} \sqsubseteq \bot$ \\
		$\pr{GrapeTomato} \sqcap \pr{Bulk}
				\equiv (\exists \pr{TariffCharge}.\set{\text{\$40}})$
			\> $\pr{Bulk} \sqcap \pr{Prepackaged} \sqsubseteq \bot$ \\
		$\pr{CherryTomato} \sqcap \pr{Prepackaged}
				\equiv (\exists \pr{TariffCharge}.\set{\text{\$50}})$ \\
		$\pr{GrapeTomato} \sqcap \pr{Prepackaged}
				\equiv (\exists \pr{TariffCharge}.\set{\text{\$100}})$ \\
		\\
		$\an{\ct{s_1}, \ct{c_1}} : \pr{ShpmtCommod}$
			\> $\an{\ct{s_1}, \text{`07020020'}} : \pr{ShpmtDeclHTSCode}$ \\
		$\an{\ct{s_1}, \ct{i_1}} : \pr{ShpmtImporter}$
			\> $\ct{c_1} : \pr{CherryTomato} \quad \ct{c_1} : \pr{Bulk}$ \\
		$\an{\ct{s_2}, \ct{c_2}} : \pr{ShpmtCommod}$
			\> $\an{\ct{s_2}, \text{`07020020'}} : \pr{ShpmtDeclHTSCode}$ \\
		$\an{\ct{s_2}, \ct{i_2}} : \pr{ShpmtImporter}$
			\> $\ct{c_2} : \pr{CherryTomato} \quad \ct{c_2} : \pr{Prepackaged}$ \\
		$\an{\ct{s_2}, portugal} : \pr{ShpmtCountry}$
			\> \\
		$\an{\ct{s_3}, \ct{c_3}} : \pr{ShpmtCommod}$    
			\> $\an{\ct{s_3}, \text{`07020010'}} : \pr{ShpmtDeclHTSCode}$ \\
		$\an{\ct{s_3}, \ct{i_3}} : \pr{ShpmtImporter}$
			\> $\ct{c_3} : \pr{GrapeTomato} \quad\, \ct{c_3} : \pr{Bulk}$ \\
		$\an{\ct{s_3}, portugal} : \pr{ShpmtCountry}$
			\> $\an{\ct{s_3}, \ct{p_1}} : \pr{ShpmtProducer}$
		\end{tabbing}

		\begin{center}
		*\ \ *\ \ *\ \ $t_{U_0}(b_{U_1}(\kb))$ *\ \ *\ \ *
		\end{center}
		\vspace{-.4cm}
		\begin{tabbing}
		fooooooooooooooooooooooooooooooooooooooooo\=fooo\=\kill
		$\pr{AdmissibleImporter}(I) \lpif{} \!\lpnot \pr{SuspectedBadGuy}(I).$ \\
		$\pr{SuspectedBadGuy}(\ct{i_1})$. \\
		$\pr{ApprovedImporterOf}(\ct{i_2}, C) \lpif \pr{EdibleVegetable}(C).$ \\
		$\pr{ApprovedImporterOf}(\ct{i_3}, C) \lpif \pr{GrapeTomato}(C).$
		\end{tabbing}

		\begin{center}
		*\ \ *\ \ *\ \ $t_{U_1}(b_{U_2}(\kb))$ *\ \ *\ \ *
		\end{center}
		\vspace{-.4cm}
		\begin{tabbing}
		fooooooooooooooooooooooooooooooooooooooooo\=fooo\=\kill
		$\pr{CommodCountry}(C, \ct{Country})
			\lpif \pr{ShpmtCommod}(S, C), \pr{ShpmtCountry}(S, \ct{Country}).$ \\
		$\pr{ExpeditableImporter}(C, I)
			\lpif \pr{AdmissibleImporter}(I), \pr{ApprovedImporterOf}(I, C).$ \\
		$\pr{EURegisteredProducer}
				\equiv (\exists \pr{RegisteredProducer}.\pr{EUCountry})$ \\
		$\pr{LowRiskEUCommodity}
				\equiv (\exists \pr{ExpeditableImporter}.\top)
				\sqcap (\exists \pr{CommodCountry}.\pr{EUCountry})$ \\
		\\
		$\an{\ct{p_1}, \ct{portugal}} : \pr{RegisteredProducer}$
			\> $\an{\ct{p_2}, \ct{slovakia}} : \pr{RegisteredProducer}$ \\
		$\ct{portugal} : \pr{EUCountry}$
			\> $\ct{slovakia} : \pr{EUCountry}$
		\end{tabbing}

		\begin{center}
		*\ \ *\ \ *\ \ $t_{U_2}(b_{U_3}(\kb))$ *\ \ *\ \ *
		\end{center}
		\vspace{-.4cm}
		\begin{tabbing}
		foooooooooooooooooooooooooooooooooooooooooooo\=\kill
		$\pr{CompliantShpmt}(S)
			\lpif \pr{ShpmtCommod}(S, C), \pr{HTSCode}(C, D), \pr{ShpmtDeclHTSCode}(S, D).$ \\
		$\pr{RandomInspection}(S) \lpif \pr{ShpmtCommod}(S, C), \pr{Random}(C).$ \\
		$\pr{PartialInspection}(S) \lpif \pr{RandomInspection}(S).$ \\
		$\pr{PartialInspection}(S)
			\lpif \pr{ShpmtCommod}(S, C), \lpnot \pr{LowRiskEUCommodity}(C).$ \\
		$\pr{FullInspection}(S) \lpif{} \!\lpnot \pr{CompliantShpmt}(S).$ \\
		$\pr{FullInspection}(S)
			\lpif \pr{ShpmtCommod}(S, C), \pr{Tomato}(C), \pr{ShpmtCountry}(S, \ct{slovakia}).$
		\end{tabbing}

		\longline
		\caption{Layers of the Hybrid Knowledge Base for Cargo Imports}
		\label{fig:layering}
	\end{figure}

	\begin{figure}
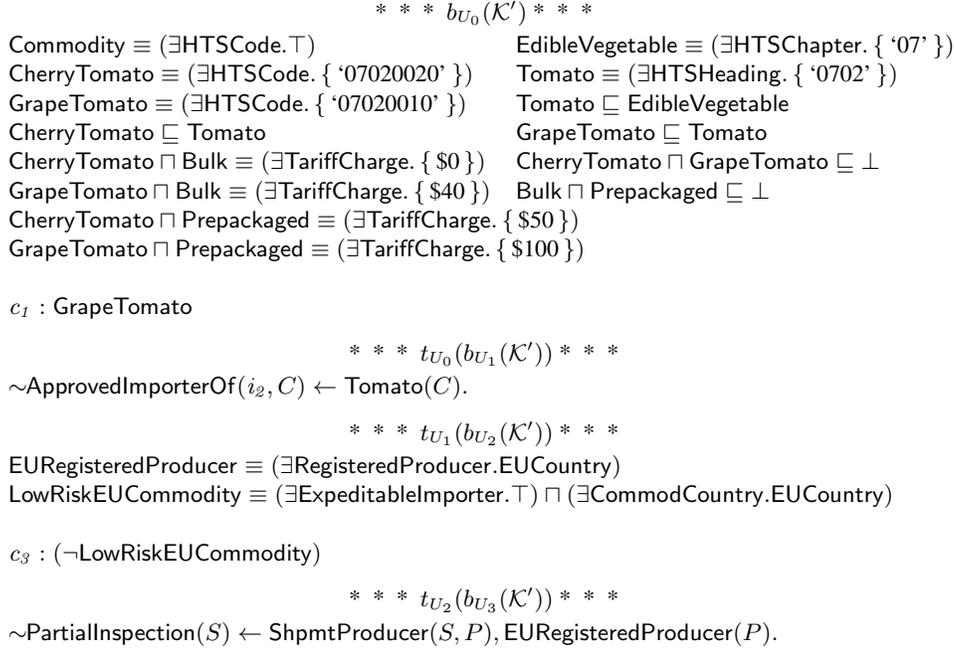

		\longline

		\begin{center}
		*\ \ *\ \ *\ \ $b_{U_0}(\kb')$ *\ \ *\ \ *
		\end{center}
		\vspace{-.4cm}
		\begin{tabbing}
		foooooooooooooooooooooooooooooooooooooooooo\=\kill
		$\pr{Commodity} \equiv (\exists \pr{HTSCode}.\top)$
			\> $\pr{EdibleVegetable} \equiv (\exists \pr{HTSChapter}.\set{\text{`07'}})$ \\
		$\pr{CherryTomato} \equiv (\exists \pr{HTSCode}.\set{\text{`07020020'}})$
			\> $\pr{Tomato} \equiv (\exists \pr{HTSHeading}.\set{\text{`0702'}})$ \\
		$\pr{GrapeTomato} \equiv (\exists \pr{HTSCode}.\set{\text{`07020010'}})$
			\> $\pr{Tomato} \sqsubseteq \pr{EdibleVegetable}$ \\
		$\pr{CherryTomato} \sqsubseteq \pr{Tomato}$
			\> $\pr{GrapeTomato} \sqsubseteq \pr{Tomato}$ \\
		$\pr{CherryTomato} \sqcap \pr{Bulk}
				\equiv (\exists \pr{TariffCharge}.\set{\text{\$0}})$
			\> $\pr{CherryTomato} \sqcap \pr{GrapeTomato} \sqsubseteq \bot$ \\
		$\pr{GrapeTomato} \sqcap \pr{Bulk}
				\equiv (\exists \pr{TariffCharge}.\set{\text{\$40}})$
			\> $\pr{Bulk} \sqcap \pr{Prepackaged} \sqsubseteq \bot$ \\
		$\pr{CherryTomato} \sqcap \pr{Prepackaged}
				\equiv (\exists \pr{TariffCharge}.\set{\text{\$50}})$ \\
		$\pr{GrapeTomato} \sqcap \pr{Prepackaged}
				\equiv (\exists \pr{TariffCharge}.\set{\text{\$100}})$ \\
		\\
		$\ct{c_1} : \pr{GrapeTomato}$
		\end{tabbing}

		\begin{center}
		*\ \ *\ \ *\ \ $t_{U_0}(b_{U_1}(\kb'))$ *\ \ *\ \ *
		\end{center}
		\vspace{-.4cm}
		\begin{tabbing}
		foooooooooooooooooooooooooooooooooooooooooo\=\kill
		$\lpnot \pr{ApprovedImporterOf}(\ct{i_2}, C) \lpif \pr{Tomato}(C).$
		\end{tabbing}

		\begin{center}
		*\ \ *\ \ *\ \ $t_{U_1}(b_{U_2}(\kb'))$ *\ \ *\ \ *
		\end{center}
		\vspace{-.4cm}
		\begin{tabbing}
		foooooooooooooooooooooooooooooooooooooooooo\=\kill
		$\pr{EURegisteredProducer}
				\equiv (\exists \pr{RegisteredProducer}.\pr{EUCountry})$ \\
		$\pr{LowRiskEUCommodity}
				\equiv (\exists \pr{ExpeditableImporter}.\top)
				\sqcap (\exists \pr{CommodCountry}.\pr{EUCountry})$ \\
		\\
		$\ct{c_3} : (\lnot \pr{LowRiskEUCommodity})$
		\end{tabbing}

		\begin{center}
		*\ \ *\ \ *\ \ $t_{U_2}(b_{U_3}(\kb'))$ *\ \ *\ \ *
		\end{center}
		\vspace{-.4cm}
		\begin{tabbing}
		foooooooooooooooooooooooooooooooooooooooooo\=\kill
		$\lpnot \pr{PartialInspection}(S)
			\lpif \pr{ShpmtProducer}(S, P), \pr{EURegisteredProducer}(P).$
		\end{tabbing}

		\longline
		\caption{Layers of the Update to the Hybrid Knowledge Base for Cargo Imports}
		\label{fig:update layering}
	\end{figure}
\end{extended}

\begin{example}[Updating the Cargo Import Knowledge Base]
	The hybrid knowledge base $\kb$ in Fig.~\ref{fig:import} has a single MKNF
	model $M$. We shortly summarise what is entailed by this model. First, since
	the shipments \ct{s_1}, \ct{s_2}, \ct{s_3} differ in the kind of tomatoes
	and their packaging, each of them is associated a different tariff charge.
	The HTS codes of commodities inside all three shipments match the declared
	HTS codes, so \mbox{\pr{CompliantShipment}(\ct{s_i})} is entailed for all
	$i$. The rules for importers imply that while both
	\mbox{$\pr{AdmissibleImporter}(\ct{i_2})$} and
	\mbox{$\pr{AdmissibleImporter}(\ct{i_3})$} are true,
	\mbox{$\pr{AdmissibleImporter}(\ct{i_1})$} is not true because \ct{i_1} is a
	suspected bad guy. It also follows that
	\mbox{$\pr{ApprovedImporterOf}(\ct{i_2}, \ct{c_2})$} and
	\mbox{$\pr{ApprovedImporterOf}(\ct{i_3}, \ct{c_3})$} hold and because of
	that \mbox{$\pr{ExpeditableImporter}(\ct{c_2}, \ct{i_2})$} and
	\mbox{$\pr{ExpeditableImporter}(\ct{c_3}, \ct{i_3})$} are also true. Both of
	these shipments come from a European country, so \ct{c_2} and \ct{c_3}
	belong to \pr{LowRiskEUCommodity}. But this is not true for \ct{c_1} since
	there is no expeditable importer for it. Consequently,
	\mbox{$\pr{PartialInspection}(\ct{s_1})$} holds.

	We now consider an update caused by several independent events in order to
	illustrate different aspects of our hybrid update semantics.

	Suppose that during the partial inspection of \ct{s_1}, grape tomatoes
	are found instead of cherry tomatoes. Second, we suppose that \ct{i_2}
	is no longer an approved importer for any kind of tomatoes due to a history
	of mis-filing. Third, due to rat infestation on the boat with shipment
	\ct{s_3}, \ct{c_3} is no longer considered a low risk commodity.
	Finally, due to workload constraints, partial inspections for shipments with
	commodities from a producer registered in a country of the European Union
	will be waived. These events lead to the following update $\kb' = \an{\ont',
	\prog'}$: where $\ont'$ contains \mbox{$\ct{c_1} : \pr{GrapeTomato}$} and
	\mbox{$\ct{c_3} : (\lnot \pr{LowRiskEUCommodity})$} as well as all TBox
	axioms from $\ont$,\footnote{We reinclude all TBox axioms in $\ont'$ in
	order to keep them static throughout the example.} and $\prog'$ contains the
	following rules:\footnote{We assume that all rule variables are DL-safe and
	rules are grounded prior to applying our theory.}
	\begin{tabbing}
		fooooooooooooooooooooooooooooooooooooooooooooo\=\kill
		$\lpnot \pr{ApprovedImporterOf}(\ct{i_2}, C) \lpif \pr{Tomato}(C).$ \\
		$\lpnot \pr{PartialInspection}(S)
			\lpif \pr{ShpmtProducer}(S, P), \pr{EURegisteredProducer}(P).$
	\end{tabbing}
	\begin{normal}
		Note that the splitting sequence $U$ defined in Example
		\ref{ex:import:splitting} is update-enabling for the dynamic hybrid
		knowledge base $\an{\kb, \kb'}$. This dynamic hybrid knowledge base has a
		single dynamic MKNF model $M'$ that entails the following:\footnote{A more
		complete explanation including technical details can be found at \\
		\url{http://centria.di.fct.unl.pt/~jleite/iclp11full.pdf}.} (1) Commodity
		$\ct{c_1}$ is no longer a member of \pr{CherryTomato} and its HTS code is
		now $\text{`07020010'}$. As a consequence,
		\mbox{$\pr{CompliantShpmt}(\ct{s_1})$} does not hold, so
		\mbox{$\pr{FullInspection}(\ct{s_1})$} holds. (2) Due to the rule update
		$\ct{i_2}$ is no longer an approved importer for $\ct{c_2}$. Hence,
		$\ct{c_2}$ is no longer a member of \pr{LowRiskEUCommodity} and
		\mbox{$\pr{PartialInspection}(\ct{s_2})$} holds. (3) A similar situation
		occurs with $\ct{c_3}$ because the update directly asserts that $\ct{c_3}$
		belongs to the complement of \pr{LowRiskEUCommodity}. But due to the
		update of \pr{PartialInspection}, the fact that $\ct{c_3}$ was produced by
		$\ct{p_1}$, and that \mbox{$\pr{EURegisteredProducer}(\ct{p_1})$} holds,
		\mbox{$\pr{PartialInspection}(\ct{s_3})$} is not true even though both
		\mbox{$\pr{ShpmtCommod}(\ct{s_3}, \ct{c_3})$} and \mbox{$\lpnot
		\pr{LowRiskEUCommodity}(\ct{c_3})$} are true.
	\end{normal}
	\begin{extended}%
		Note that the splitting sequence $U$ defined in Example
		\ref{ex:import:splitting} is update-enabling for the dynamic hybrid
		knowledge base $\an{\kb, \kb'}$. The four nonempty layers of $\kb$ are
		listed in Fig.~\ref{fig:layering}. The first layer, $b_{U_0}(\kb)$
		contains only ontology axioms, and so is $\ont$-reducible relative to
		$\emptyset$. The second and fourth layers ($t_{U_0}(b_{U_1}(\kb))$ and
		$t_{U_2}(b_{U_3}(\kb))$) contain only rules and so are $\prog$-reducible
		relative to $U_0$ and $U_2$, respectively. Finally, the third layer
		$t_{U_1}(b_{U_2}(\kb))$ contains a mixture of rules and ontology axioms,
		but all the rules are positive and all predicate symbols of all rule body
		literals belong to $U_1$, so the layer is $\ont$-reducible relative to
		$U_1$.

		The layers of the updating hybrid knowledge base $\kb'$ are shown in
		Fig.~$\ref{fig:update layering}$. It can be easily verified that they
		satisfy the same reducibility criteria, so $U$ is indeed an
		update-enabling sequence for $\an{\kb, \kb'}$. In order to arrive at a
		dynamic MKNF model of $\an{\kb, \kb'}$ with respect to $U$, a dynamic MKNF
		model of each layer is computed separately and models of previous layers
		serve to ``import'' information to the current layer.

		In our case, we first need to find the minimal change update model $X_0$
		of the first layer of $\kb$ updated by the first layer of $\kb'$. Due to
		the TBox axioms, this results in $\ct{c_1}$ no longer being a member of
		\pr{CherryTomato}. The HTS code of $\ct{c_1}$ also changes to
		$\text{`07020010'}$. Note that the conflict between old and new knowledge
		is properly resolved by the minimal change update semantics.

		Subsequently, the dynamic stable model semantics is used to find the
		dynamic MKNF model $X_1$ of the second layer of $\kb$ updated by the
		second layer of $\kb'$. The rule update results in $\ct{i_2}$ no longer
		being an approved importer for $\ct{c_2}$. As before, the conflict that
		arose is resolved by the rule update semantics.

		Given the dynamic MKNF models of the first two layers, the model of the
		third layer of $\kb$ is now different because $\ct{i_2}$ is no longer an
		expeditable importer of $\ct{c_2}$. As a consequence, $\ct{c_2}$ is no
		longer a member of the concept \pr{LowRiskEUCommodity}. Also, due to the
		update of the third layer, $\ct{c_3}$ is also not a member of
		\pr{LowRiskEUCommodity}. The conflicting situation was again resolved by
		the minimal change update operator and results in the dynamic MKNF model
		$X_2$ of the third layer.

		Finally, due to the changes in all three previous layers, the rules in the
		fourth layer now imply that $\pr{CompliantShpmt}(\ct{s_1})$ does not hold
		and, as a consequence, $\pr{FullInspection}(\ct{s_1})$ holds. Also,
		$\pr{PartialInspection}(\ct{s_2})$ holds because $\ct{c_2}$ is not a low
		risk commodity.  But even though $\ct{c_3}$ is also not a low risk
		commodity, $\pr{PartialInspection}(\ct{s_3})$ does not hold. This is due
		to the rule update of the fourth layer according to which the inspection
		of $\ct{s_3}$ must be waived because $\ct{s_3}$ comes from an EU
		registered producer.
	\end{extended}
\end{example}

\section{Discussion} \label{sect:discussion}

The class of updatable hybrid knowledge bases for which we defined an update
semantics in the previous section is closely related to multi-context systems
\cite{Brewka2007}. Each layer of a hybrid knowledge base relative to a
particular update-enabling splitting sequence can be viewed as a context
together with all its bridge rules. At the same time, the constraints we
impose guarantee that each such context either contains only rules, so the
context logic can be the stable model semantics, or it contains only DL axioms
so that first-order logic can be used as its logic. On the other hand,
different splitting sequences induce different multi-context systems, though
their overall semantics stays the same. We believe that a further study of
this close relationship may bring about new insights.

Another direction in which the proposed framework can be generalised is by
letting the ontology and rule update operators be given as parameters instead
of using a fixed pair.\footnote{We would like to thank the anonymous reviewer
for pointing this out.} This seems to have even more appeal given the fact
that no general consensus has been reached in the community regarding the
``right way'' to perform rule updates, and the situation with ontology update
operators also seems to be similar. Although Winslett's operator has been used
to deal with ABox updates \cite{Liu2006,Giacomo2006}, its use for dealing with
TBox updates has recently been criticised \cite{Calvanese2010,Slota2010a} and
a number of considerably different methods for dealing with TBox evolution
have been proposed \cite{Qi2006b,Qi2009,Yang2009,Calvanese2010,Wang2010}, many
tailored to a specific Description Logic.

To sum up, the contribution of this paper is twofold. First, we generalised
the splitting theorems for Logic Programs \cite{Lifschitz1994a} to the case of
Hybrid MKNF Knowledge Bases \cite{Motik2007}. This makes it possible to divide
a hybrid knowledge base into layers and guarantees that its overall semantics
can be reconstructed from the semantics of layers inside it. Second, we used
the theorem and related notions to identify a class of hybrid knowledge bases
for which we successfully defined an update semantics, based on a modular
combination of a classical and a rule update semantics. We showed that our
semantics properly generalises the semantics it is based on, particularly the
static semantics of Hybrid MKNF Knowledge Bases \cite{Motik2007}, the
classical minimal change update semantics \cite{Winslett1990}, and the refined
dynamic stable model semantics for rule updates \cite{Alferes2005}. We then
illustrated on an example motivated by a real world application how the
defined semantics deals with nontrivial updates, automatically resolving
conflicts and propagating new information across the hybrid knowledge base.

\bibliographystyle{acmtrans}
\bibliography{bibliography}

\begin{thebibliography}{}

\bibitem[\protect\citeauthoryear{Alferes, Banti, Brogi, and Leite}{Alferes
  et~al\mbox{.}}{2005}]{Alferes2005}
{\sc Alferes, J.~J.}, {\sc Banti, F.}, {\sc Brogi, A.}, {\sc and} {\sc Leite,
  J.~A.} 2005.
\newblock The refined extension principle for semantics of dynamic logic
  programming.
\newblock {\em Studia Logica\/}~{\em 79,\/}~1, 7--32.

\bibitem[\protect\citeauthoryear{Alferes, Leite, Pereira, Przymusinska, and
  Przymusinski}{Alferes et~al\mbox{.}}{2000}]{Alferes2000}
{\sc Alferes, J.~J.}, {\sc Leite, J.~A.}, {\sc Pereira, L.~M.}, {\sc
  Przymusinska, H.}, {\sc and} {\sc Przymusinski, T.~C.} 2000.
\newblock Dynamic updates of non-monotonic knowledge bases.
\newblock {\em The Journal of Logic Programming\/}~{\em 45,\/}~1-3
  (September/October), 43--70.

\bibitem[\protect\citeauthoryear{Apt, Blair, and Walker}{Apt
  et~al\mbox{.}}{1988}]{Apt1988}
{\sc Apt, K.~R.}, {\sc Blair, H.~A.}, {\sc and} {\sc Walker, A.} 1988.
\newblock Towards a theory of declarative knowledge.
\newblock In {\em Foundations of Deductive Databases and Logic Programming.}
  Morgan Kaufmann, 89--148.

\bibitem[\protect\citeauthoryear{Baader, Calvanese, McGuinness, Nardi, and
  Patel-Schneider}{Baader et~al\mbox{.}}{2003}]{Baader2003}
{\sc Baader, F.}, {\sc Calvanese, D.}, {\sc McGuinness, D.~L.}, {\sc Nardi,
  D.}, {\sc and} {\sc Patel-Schneider, P.~F.}, Eds. 2003.
\newblock {\em The Description Logic Handbook: Theory, Implementation, and
  Applications}.
\newblock Cambridge University Press.

\bibitem[\protect\citeauthoryear{Brewka and Eiter}{Brewka and
  Eiter}{2007}]{Brewka2007}
{\sc Brewka, G.} {\sc and} {\sc Eiter, T.} 2007.
\newblock Equilibria in heterogeneous nonmonotonic multi-context systems.
\newblock In {\em Proceedings of the 22nd AAAI Conference on Artificial
  Intelligence}. AAAI Press, Vancouver, British Columbia, Canada, 385--390.

\bibitem[\protect\citeauthoryear{Calvanese, Kharlamov, Nutt, and
  Zheleznyakov}{Calvanese et~al\mbox{.}}{2010}]{Calvanese2010}
{\sc Calvanese, D.}, {\sc Kharlamov, E.}, {\sc Nutt, W.}, {\sc and} {\sc
  Zheleznyakov, D.} 2010.
\newblock Evolution of {DL-Lite} knowledge bases.
\newblock In {\em International Semantic Web Conference (1)}, {P.~F.
  Patel-Schneider}, {Y.~Pan}, {P.~Hitzler}, {P.~Mika}, {L.~Zhang}, {J.~Z. Pan},
  {I.~Horrocks}, {and} {B.~Glimm}, Eds. Lecture Notes in Computer Science, vol.
  6496. Springer, Shanghai, China, 112--128.

\bibitem[\protect\citeauthoryear{Dalal}{Dalal}{1988}]{Dalal1988}
{\sc Dalal, M.} 1988.
\newblock Investigations into a theory of knowledge base revision.
\newblock In {\em Proceedings of the 7th National Conference on Artificial
  Intelligence (AAAI 1988)}. AAAI Press / The MIT Press, St. Paul, MN, USA,
  475--479.

\bibitem[\protect\citeauthoryear{Delgrande, Schaub, and Tompits}{Delgrande
  et~al\mbox{.}}{2007}]{Delgrande2007}
{\sc Delgrande, J.~P.}, {\sc Schaub, T.}, {\sc and} {\sc Tompits, H.} 2007.
\newblock A preference-based framework for updating logic programs.
\newblock In {\em Proceedings of the 9th International Conference on Logic
  Programming and Nonmonotonic Reasoning (LPNMR 2007)}, {C.~Baral},
  {G.~Brewka}, {and} {J.~S. Schlipf}, Eds. Lecture Notes in Computer Science,
  vol. 4483. Springer, Tempe, AZ, USA, 71--83.

\bibitem[\protect\citeauthoryear{Delgrande, Schaub, Tompits, and
  Woltran}{Delgrande et~al\mbox{.}}{2008}]{Delgrande2008}
{\sc Delgrande, J.~P.}, {\sc Schaub, T.}, {\sc Tompits, H.}, {\sc and} {\sc
  Woltran, S.} 2008.
\newblock Belief revision of logic programs under answer set semantics.
\newblock In {\em Proceedings of the 11th International Conference on
  Principles of Knowledge Representation and Reasoning (KR 2008)}, {G.~Brewka}
  {and} {J.~Lang}, Eds. AAAI Press, Sydney, Australia, 411--421.

\bibitem[\protect\citeauthoryear{Dix}{Dix}{1995}]{Dix1995a}
{\sc Dix, J.} 1995.
\newblock A classification theory of semantics of normal logic programs: {II}.
  {W}eak properties.
\newblock {\em Fundamenta Informaticae\/}~{\em 22,\/}~3, 257--288.

\bibitem[\protect\citeauthoryear{Eiter, Fink, Sabbatini, and Tompits}{Eiter
  et~al\mbox{.}}{2002}]{Eiter2002}
{\sc Eiter, T.}, {\sc Fink, M.}, {\sc Sabbatini, G.}, {\sc and} {\sc Tompits,
  H.} 2002.
\newblock On properties of update sequences based on causal rejection.
\newblock {\em Theory and Practice of Logic Programming (TPLP)\/}~{\em 2,\/}~6,
  721--777.

\bibitem[\protect\citeauthoryear{Gelder, Ross, and Schlipf}{Gelder
  et~al\mbox{.}}{1991}]{Gelder1991}
{\sc Gelder, A.~V.}, {\sc Ross, K.~A.}, {\sc and} {\sc Schlipf, J.~S.} 1991.
\newblock The well-founded semantics for general logic programs.
\newblock {\em Journal of the ACM\/}~{\em 38,\/}~3, 620--650.

\bibitem[\protect\citeauthoryear{Gelfond and Lifschitz}{Gelfond and
  Lifschitz}{1988}]{Gelfond1988}
{\sc Gelfond, M.} {\sc and} {\sc Lifschitz, V.} 1988.
\newblock The stable model semantics for logic programming.
\newblock In {\em Proceedings of the 5th International Conference and Symposium
  on Logic Programming (ICLP/SLP 1988)}, {R.~A. Kowalski} {and} {K.~A. Bowen},
  Eds. MIT Press, Washington, 1070--1080.

\bibitem[\protect\citeauthoryear{Giacomo, Lenzerini, Poggi, and Rosati}{Giacomo
  et~al\mbox{.}}{2006}]{Giacomo2006}
{\sc Giacomo, G.~D.}, {\sc Lenzerini, M.}, {\sc Poggi, A.}, {\sc and} {\sc
  Rosati, R.} 2006.
\newblock On the update of description logic ontologies at the instance level.
\newblock In {\em Proceedings of the 21st National Conference on Artificial
  Intelligence and the 18th Innovative Applications of Artificial Intelligence
  Conference}. AAAI Press, Boston, Massachusetts, USA.

\bibitem[\protect\citeauthoryear{Hitzler and Parsia}{Hitzler and
  Parsia}{2009}]{Hitzler2009}
{\sc Hitzler, P.} {\sc and} {\sc Parsia, B.} 2009.
\newblock Ontologies and rules.
\newblock In {\em Handbook on Ontologies\/}, second ed., {S.~Staab} {and}
  {R.~Studer}, Eds. International Handbooks on Information Systems. Springer,
  111--132.

\bibitem[\protect\citeauthoryear{Katsuno and Mendelzon}{Katsuno and
  Mendelzon}{1991}]{Katsuno1991}
{\sc Katsuno, H.} {\sc and} {\sc Mendelzon, A.~O.} 1991.
\newblock On the difference between updating a knowledge base and revising it.
\newblock In {\em Proceedings of the 2nd International Conference on Principles
  of Knowledge Representation and Reasoning (KR'91)}, {J.~F. Allen},
  {R.~Fikes}, {and} {E.~Sandewall}, Eds. Morgan Kaufmann Publishers, Cambridge,
  MA, USA, 387--394.

\bibitem[\protect\citeauthoryear{Knorr, Alferes, and Hitzler:}{Knorr
  et~al\mbox{.}}{2011}]{Knorr2011}
{\sc Knorr, M.}, {\sc Alferes, J.~J.}, {\sc and} {\sc Hitzler:, P.} 2011.
\newblock Local closed world reasoning with description logics under the
  well-founded semantics.
\newblock {\em Artificial Intelligence\/}.
\newblock To appear.

\bibitem[\protect\citeauthoryear{Leite}{Leite}{2003}]{Leite2003}
{\sc Leite, J.~A.} 2003.
\newblock {\em Evolving Knowledge Bases}. Frontiers of Artificial Intelligence
  and Applications, xviii + 307 p. Hardcover, vol.~81.
\newblock IOS Press.

\bibitem[\protect\citeauthoryear{Leite and Pereira}{Leite and
  Pereira}{1997}]{Leite1997}
{\sc Leite, J.~A.} {\sc and} {\sc Pereira, L.~M.} 1997.
\newblock Generalizing updates: From models to programs.
\newblock In {\em Proceedings of the 3rd International Workshop on Logic
  Programming and Knowledge Representation (LPKR '97)}, {J.~Dix}, {L.~M.
  Pereira}, {and} {T.~C. Przymusinski}, Eds. Lecture Notes in Computer Science,
  vol. 1471. Springer, Port Jefferson, New York, USA, 224--246.

\bibitem[\protect\citeauthoryear{Lifschitz}{Lifschitz}{1991}]{Lifschitz1991}
{\sc Lifschitz, V.} 1991.
\newblock Nonmonotonic databases and epistemic queries.
\newblock In {\em Proceedings of the 12th International Joint Conference on
  Artificial Intelligence (IJCAI’91)}. 381--386.

\bibitem[\protect\citeauthoryear{Lifschitz, Pearce, and Valverde}{Lifschitz
  et~al\mbox{.}}{2001}]{Lifschitz2001}
{\sc Lifschitz, V.}, {\sc Pearce, D.}, {\sc and} {\sc Valverde, A.} 2001.
\newblock Strongly equivalent logic programs.
\newblock {\em ACM Transactions on Computational Logic (TOCL)\/}~{\em 2,\/}~4,
  526--541.

\bibitem[\protect\citeauthoryear{Lifschitz and Turner}{Lifschitz and
  Turner}{1994}]{Lifschitz1994a}
{\sc Lifschitz, V.} {\sc and} {\sc Turner, H.} 1994.
\newblock Splitting a logic program.
\newblock In {\em Proceedings of the 11th International Conference on Logic
  Programming (ICLP 1994)}, {P.~V. Hentenryck}, Ed. MIT Press, Santa Margherita
  Ligure, Italy, 23--37.

\bibitem[\protect\citeauthoryear{Liu, Lutz, Mili{\v{c}}i{\'c}, and Wolter}{Liu
  et~al\mbox{.}}{2006}]{Liu2006}
{\sc Liu, H.}, {\sc Lutz, C.}, {\sc Mili{\v{c}}i{\'c}, M.}, {\sc and} {\sc
  Wolter, F.} 2006.
\newblock Updating description logic {AB}oxes.
\newblock In {\em Proceedings of the 10th International Conference on
  Principles of Knowledge Representation and Reasoning (KR'06)}, {P.~Doherty},
  {J.~Mylopoulos}, {and} {C.~A. Welty}, Eds. AAAI Press, Lake District of the
  United Kingdom, 46--56.

\bibitem[\protect\citeauthoryear{Motik and Rosati}{Motik and
  Rosati}{2007}]{Motik2007}
{\sc Motik, B.} {\sc and} {\sc Rosati, R.} 2007.
\newblock A faithful integration of description logics with logic programming.
\newblock In {\em Proceedings of the 20th International Joint Conference on
  Artificial Intelligence (IJCAI-07)}, {M.~M. Veloso}, Ed. Hyderabad, India,
  477--482.

\bibitem[\protect\citeauthoryear{Osorio and Cuevas}{Osorio and
  Cuevas}{2007}]{Osorio2007}
{\sc Osorio, M.} {\sc and} {\sc Cuevas, V.} 2007.
\newblock Updates in answer set programming: An approach based on basic
  structural properties.
\newblock {\em Theory and Practice of Logic Programming\/}~{\em 7,\/}~4,
  451--479.

\bibitem[\protect\citeauthoryear{Qi and Du}{Qi and Du}{2009}]{Qi2009}
{\sc Qi, G.} {\sc and} {\sc Du, J.} 2009.
\newblock Model-based revision operators for terminologies in description
  logics.
\newblock In {\em Proceedings of the 21st International Joint Conference on
  Artificial Intelligence, Pasadena (IJCAI 2009)}, {C.~Boutilier}, Ed.
  California, USA, 891--897.

\bibitem[\protect\citeauthoryear{Qi, Liu, and Bell}{Qi
  et~al\mbox{.}}{2006}]{Qi2006b}
{\sc Qi, G.}, {\sc Liu, W.}, {\sc and} {\sc Bell, D.~A.} 2006.
\newblock A revision-based approach to handling inconsistency in description
  logics.
\newblock {\em Journal of Artificial Intelligence Review\/}~{\em 26,\/}~1-2,
  115--128.

\bibitem[\protect\citeauthoryear{Sakama and Inoue}{Sakama and
  Inoue}{2003}]{Sakama2003}
{\sc Sakama, C.} {\sc and} {\sc Inoue, K.} 2003.
\newblock An abductive framework for computing knowledge base updates.
\newblock {\em Theory and Practice of Logic Programming (TPLP)\/}~{\em 3,\/}~6,
  671–713.

\bibitem[\protect\citeauthoryear{Slota and Leite}{Slota and
  Leite}{2010a}]{Slota2010b}
{\sc Slota, M.} {\sc and} {\sc Leite, J.} 2010a.
\newblock On semantic update operators for answer-set programs.
\newblock In {\em Proceedings of the 19th European Conference on Artificial
  Intelligence (ECAI 2010)}, {H.~Coelho}, {R.~Studer}, {and} {M.~Wooldridge},
  Eds. Frontiers in Artificial Intelligence and Applications, vol. 215. IOS
  Press, Lisbon, Portugal, 957--962.

\bibitem[\protect\citeauthoryear{Slota and Leite}{Slota and
  Leite}{2010b}]{Slota2010a}
{\sc Slota, M.} {\sc and} {\sc Leite, J.} 2010b.
\newblock {T}owards {C}losed {W}orld {R}easoning in {D}ynamic {O}pen {W}orlds.
\newblock {\em Theory and Practice of Logic Programming, 26th Int'l. Conference
  on Logic Programming (ICLP'10) Special Issue\/}~{\em 10,\/}~4-6 (July),
  547--564.

\bibitem[\protect\citeauthoryear{Turner}{Turner}{1996}]{Turner1996}
{\sc Turner, H.} 1996.
\newblock Splitting a default theory.
\newblock In {\em Proceedings of the 13th National Conference on Artificial
  Intelligence and 8th Innovative Applications of Artificial Intelligence
  Conference (AAAI 96, IAAI 96)}. Vol.~1. AAAI Press / The MIT Press, Portland,
  Oregon, 645--651.

\bibitem[\protect\citeauthoryear{Turner}{Turner}{2003}]{Turner2003}
{\sc Turner, H.} 2003.
\newblock Strong equivalence made easy: nested expressions and weight
  constraints.
\newblock {\em Theory and Practice of Logic Programming (TPLP)\/}~{\em
  3,\/}~4-5, 609--622.

\bibitem[\protect\citeauthoryear{Wang, Wang, and Topor}{Wang
  et~al\mbox{.}}{2010}]{Wang2010}
{\sc Wang, Z.}, {\sc Wang, K.}, {\sc and} {\sc Topor, R.~W.} 2010.
\newblock A new approach to knowledge base revision in {DL-L}ite.
\newblock In {\em Proceedings of the 24th AAAI Conference on Artificial
  Intelligence (AAAI 2010)}, {M.~Fox} {and} {D.~Poole}, Eds. AAAI Press,
  Atlanta, Georgia, USA.

\bibitem[\protect\citeauthoryear{Winslett}{Winslett}{1988}]{Winslett1988}
{\sc Winslett, M.} 1988.
\newblock Reasoning about action using a possible models approach.
\newblock In {\em Proceedings of the 7th National Conference on Artificial
  Intelligence (AAAI 1988)}. AAAI Press / The MIT Press, Saint Paul, MN, USA,
  89--93.

\bibitem[\protect\citeauthoryear{Winslett}{Winslett}{1990}]{Winslett1990}
{\sc Winslett, M.} 1990.
\newblock {\em Updating Logical Databases}.
\newblock Cambridge University Press, New York, USA.

\bibitem[\protect\citeauthoryear{Yang, Qi, and Huang}{Yang
  et~al\mbox{.}}{2009}]{Yang2009}
{\sc Yang, F.}, {\sc Qi, G.}, {\sc and} {\sc Huang, Z.} 2009.
\newblock A distance-based operator to revising ontologies in {DL}
  $\mathcal{SHOQ}$.
\newblock In {\em Proceedings of the 10th European Conference on Symbolic and
  Quantitative Approaches to Reasoning with Uncertainty (ECSQARU 2009)},
  {C.~Sossai} {and} {G.~Chemello}, Eds. Lecture Notes in Computer Science, vol.
  5590. Springer, Verona, Italy, 434--445.

\bibitem[\protect\citeauthoryear{Zhang}{Zhang}{2006}]{Zhang2006}
{\sc Zhang, Y.} 2006.
\newblock Logic program-based updates.
\newblock {\em ACM Transactions on Computational Logic\/}~{\em 7,\/}~3,
  421--472.

\end{thebibliography}

\newpage
\appendix

\begin{extended}

\section{Proofs of Auxiliary Propositions}

\subsection{Restricted MKNF Interpretations}

\begin{proposition} \label{prop:ent_relevant}
	Let $\phi$ be an MKNF sentence, $U \subseteq \lpre$ be a set of predicate
	symbols such that $U \supseteq \preds{\phi}$ and $\mstr$ be an MKNF structure.
	Then:
	\[
		\mstr \ent \phi \mlequiv \mstr[\restr{I}, \restr{M}, \restr{N}] \ent \phi
		\enspace.
	\]
\end{proposition}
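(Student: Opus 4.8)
The plan is to prove the equivalence by structural induction on the MKNF sentence $\phi$, where for the quantifier case I rely on the fact that substituting a constant for a variable strictly decreases formula size while leaving $\preds{\cdot}$ unchanged, so the induction is well-founded. Crucially, the induction hypothesis must be stated for \emph{every} MKNF structure simultaneously (not merely for $\mstr$), since the modal cases will apply it to structures obtained from $\mstr$ by replacing its first component. For the base case, let $\phi$ be a ground atom $p$; then $\preds{p} = \set{P} \subseteq U$, and since $\restr{I} = \Set{q \in I | \preds{q} \subseteq U}$ agrees with $I$ on every atom whose predicate lies in $U$, we get $p \in I \mlequiv p \in \restr{I}$, which is exactly $\mstr \ent p \mlequiv \mstr[\restr{I}, \restr{M}, \restr{N}] \ent p$.

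For the propositional connectives the argument is immediate. If $\phi = \lnot \psi$ then $\preds{\psi} = \preds{\phi} \subseteq U$ and the claim follows by negating the induction hypothesis; if $\phi = \phi_1 \land \phi_2$ then $\preds{\phi_1}, \preds{\phi_2} \subseteq \preds{\phi} \subseteq U$ and I apply the hypothesis to each conjunct. For $\phi = \exists x : \psi$, every instance $\psi[c/x]$ with $c \in \Delta$ is again a sentence with $\preds{\psi[c/x]} = \preds{\psi} \subseteq U$, so the hypothesis gives $\mstr \ent \psi[c/x] \mlequiv \mstr[\restr{I}, \restr{M}, \restr{N}] \ent \psi[c/x]$ for each $c$, and existential quantification over $c \in \Delta$ preserves the equivalence.

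The modal cases are where the restriction of $M$ and $N$ does real work. Suppose $\phi = \mk \psi$, so that $\preds{\psi} \subseteq U$. By definition $\mstr \ent \mk \psi$ iff $\mstr[J, M, N] \ent \psi$ for all $J \in M$. Applying the induction hypothesis to each structure $\mstr[J, M, N]$ (legitimate precisely because the hypothesis ranges over all structures) turns this into the statement that $\mstr[\restr{J}, \restr{M}, \restr{N}] \ent \psi$ for all $J \in M$. Since by definition $\restr{M} = \Set{\restr{J} | J \in M}$, quantifying over all interpretations of the form $\restr{J}$ with $J \in M$ is the same as quantifying over all $J' \in \restr{M}$, so this is exactly $\mstr[\restr{I}, \restr{M}, \restr{N}] \ent \mk \psi$. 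The case $\phi = \mnot \psi$ is dual, replacing the universal quantifier over $M$ by an existential one over $N$ and using $\restr{N} = \Set{\restr{J} | J \in N}$.

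The one subtlety to get right — and the only place the argument is more than bookkeeping — is the reindexing step in the modal cases: the induction hypothesis must be applied to the structures $\mstr[J, M, N]$ whose first component $J$ need not satisfy $\restr{J} = J$, which is exactly why the hypothesis has to be formulated for arbitrary structures rather than for $\mstr$ alone. Once that is set up, the passage from the quantifier over $M$ (resp.\ $N$) to the quantifier over $\restr{M}$ (resp.\ $\restr{N}$) is justified solely by the defining equality $\restr{M} = \Set{\restr{J} | J \in M}$, and no further relationship between $\restr{J}$ and $\restr{M}$ is required.
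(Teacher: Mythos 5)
Your proof is correct and follows essentially the same route as the paper's: structural induction on $\phi$, with the atom case reducing to $\preds{p} \subseteq U$, the connective and quantifier cases passing through the inductive hypothesis, and the modal cases handled by applying the hypothesis to the structures $\mstr[J, M, N]$ and then reindexing the quantifier via $\restr{M} = \Set{\restr{J} | J \in M}$ (resp.\ $\restr{N}$). Your explicit remark that the induction hypothesis must range over all MKNF structures is exactly the point the paper uses implicitly in those modal cases.
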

\begin{proof*}
	We will prove by structural induction on $\phi$:
	\begin{enumerate}%
		\renewcommand{\labelenumi}{\arabic{enumi}$^\circ$}
		\item If $\phi$ is a ground atom $P(t_1, t_2, \dotsc, t_n)$, then $P \in
			\preds{\phi}$, so $P \in U$. The following chain of equivalences now
			proves the claim:
			\begin{align*}
				\mstr \ent \phi &\mlequiv P(t_1, t_2, \dotsc, t_n) \in I
					\mlequiv P(t_1, t_2, \dotsc, t_n) \in \restr{I} \\
				&\mlequiv \mstr[\restr{I}, \restr{M}, \restr{N}] \ent \phi
					\enspace;
			\end{align*}
		
		\item If $\phi$ is of the form $\lnot \psi$, then $\preds{\phi} =
			\preds{\psi}$, so $U \supseteq \preds{\psi}$. Hence we can use the
			inductive hypothesis for $\psi$ as follows:
			\begin{align*}
				\mstr \ent \phi &\mlequiv \mstr \nent \psi
					\mlequiv \mstr[\restr{I}, \restr{M}, \restr{N}] \nent \psi \\
				& \mlequiv \mstr[\restr{I}, \restr{M}, \restr{N}] \ent \phi \enspace;
			\end{align*}
		
		\item If $\phi$ is of the form $\phi_1 \land \phi_2$, then $\preds{\phi} =
			\preds{\phi_1} \cup \preds{\phi_2}$, so we easily obtain both $U
			\supseteq \preds{\phi_1}$ and $U \supseteq \preds{\phi_2}$. Applying the
			inductive hypothesis to $\phi_1$ and $\phi_2$ now yields the claim:
			\begin{align*}
				\mstr \ent \phi &\mlequiv \mstr \ent \phi_1 \land \mstr \ent \phi_2 \\
				&\mlequiv \mstr[\restr{I}, \restr{M}, \restr{N}] \ent \phi_1
					\land \mstr[\restr{I}, \restr{M}, \restr{N}] \ent \phi_2 \\
				&\mlequiv \mstr[\restr{I}, \restr{M}, \restr{N}] \ent \phi \enspace;
			\end{align*}

		\item If $\phi$ is of the form $\exists x : \psi$, then for any $c \in
			\Delta$, $\preds{\phi} = \preds{\psi} = \preds{\psi[c/x]}$, so $U
			\supseteq \preds{\psi[c/x]}$. Hence we can use the inductive hypothesis
			for the formulae $\psi[c/x]$ as follows:
			\begin{align*}
				\mstr \ent \phi
				&\mlequiv (\exists c \in \Delta)(\mstr \ent \psi[c/x]) \\
				&\mlequiv (\exists c \in \Delta)
					\left( \mstr[\restr{I}, \restr{M}, \restr{N}] \ent \psi[c/x] \right) \\
				&\mlequiv \mstr[\restr{I}, \restr{M}, \restr{N}] \ent \phi \enspace;
			\end{align*}

		\item If $\phi$ is of the form $\mk \psi$, then $\preds{\phi} =
			\preds{\psi}$, so $U \supseteq \preds{\psi}$. The claim now follows from
			the inductive hypothesis for $\psi$:
			\begin{align*}
				\mstr \ent \phi
				&\mlequiv \br{\forall J \in M}\br{\mstr[J, M, N] \ent \psi} \\
				&\mlequiv \br{\forall J \in M}\br{\mstr[\restr{J}, \restr{M}, \restr{N}] \ent \psi} \\
				&\mlequiv \br{\forall J \in \restr{M}}\br{\mstr[J, \restr{M}, \restr{N}] \ent \psi} \\
				&\mlequiv \mstr[\restr{I}, \restr{M}, \restr{N}] \ent \phi \enspace;
			\end{align*}
		
		\item If $\phi$ is of the form $\mnot \psi$, then $\preds{\phi} =
			\preds{\psi}$, so $U \supseteq \preds{\psi}$. The claim follows
			similarly as in the previous case:
			\begin{align*}
				\mstr \ent \phi
				&\mlequiv \br{\exists J \in N}\br{\mstr[J, M, N] \nent \psi} \\
				&\mlequiv \br{\exists J \in N}\br{\mstr[\restr{J}, \restr{M}, \restr{N}] \nent \psi} \\
				&\mlequiv \br{\exists J \in \restr{N}}\br{\mstr[J, \restr{M}, \restr{N}] \nent \psi} \\
				&\mlequiv \mstr[\restr{I}, \restr{M}, \restr{N}] \ent \phi \enspace. \qedhere
			\end{align*}
	\end{enumerate}
\end{proof*}

\begin{corollary} \label{cor:ent_relevant}
	Let $\thr$ be a set of formulae, $U$ be a set of predicate symbols such that
	$U \supseteq \preds{\thr}$ and $M, N \in \mint$ be such that they coincide on
	$U$. Then
	\[
		M \ent \thr \mlequiv N \ent \thr \enspace.
	\]
\end{corollary}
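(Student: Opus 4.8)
The plan is to reduce the corollary to Proposition~\ref{prop:ent_relevant}, which already handles a single sentence, and then manage the two universal quantifiers (over sentences of $\thr$ and over interpretations) that appear in the definition of S5 satisfaction. First I would unfold the definition of $M \ent \thr$: it holds exactly when $\mstr[I, M, M] \ent \phi$ for every $\phi \in \thr$ and every $I \in M$. For each such $\phi$ we have $\preds{\phi} \subseteq \preds{\thr} \subseteq U$, so Proposition~\ref{prop:ent_relevant} applies to $\phi$ with the ambient structure $\mstr[I, M, M]$ (i.e.\ taking the proposition's second and third components both to be $M$), rewriting each satisfaction condition as $\mstr[\restr{I}, \restr{M}, \restr{M}] \ent \phi$.

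The next step is to observe that this rewritten condition depends on $I$ only through its restriction $\restr{I}$, while $M$ enters only through $\restr{M}$. Since $\restr{M} = \Set{\restr{I} | I \in M}$ by definition, as $I$ ranges over all of $M$ the value $\restr{I}$ ranges over all of $\restr{M}$. Consequently the quantifier ``for all $I \in M$'' can be collapsed to ``for all $I' \in \restr{M}$'', yielding that $M \ent \thr$ is equivalent to the condition that $\mstr[I', \restr{M}, \restr{M}] \ent \phi$ for every $\phi \in \thr$ and every $I' \in \restr{M}$. Crucially, this characterisation mentions $M$ \emph{only} through $\restr{M}$.

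Running the identical argument for $N$ shows that $N \ent \thr$ is equivalent to the same condition with $\restr{M}$ replaced throughout by $\restr{N}$. Since $M$ and $N$ coincide on $U$, i.e.\ $\restr{M} = \restr{N}$, the two characterisations are literally the same statement, and the desired equivalence $M \ent \thr \mlequiv N \ent \thr$ follows immediately. The one point requiring care is the quantifier collapse in the second paragraph: because restriction to $U$ need not be injective on $M$, it would be unsound to identify $I$ with $\restr{I}$ in general, but it is legitimate here precisely because the rewritten predicate no longer refers to the full interpretation $I$ at all. Everything else is a routine unfolding of the definitions of $\ent$ and of coincidence on $U$, so I expect no genuine difficulty beyond stating this reduction cleanly.
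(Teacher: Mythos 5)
Your proposal is correct and follows essentially the same route as the paper's proof: both reduce the claim to Proposition~\ref{prop:ent_relevant} (applicable since $\preds{\phi} \subseteq \preds{\thr} \subseteq U$ for each $\phi \in \thr$) and then exploit $\restr{M} = \restr{N}$ via the fact that restrictions of members of $M$ range exactly over $\restr{M}$. The only difference is organizational --- you package the argument as a single characterisation of $M \ent \thr$ depending only on $\restr{M}$, obtaining both directions at once, whereas the paper proves one implication by picking $J \in N$, choosing $I \in M$ with $\restr{I} = \restr{J}$, and applying the proposition twice, with the converse declared analogous; your quantifier-collapse step (including the remark about non-injectivity being harmless) is sound and is exactly the same matching of elements the paper performs.
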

\begin{proof}
	We will prove the equivalence only in one direction, the proof of the second
	direction can be written analogically.

	Suppose that $M \ent \thr$. Then for every $\phi \in \thr$ and all $I \in M$
	we have $\mstr[I, M, M] \ent \phi$. We want to show that $N \ent \thr$.
	Let's pick some $\phi \in \thr$ and some $J \in N$. Since $\restr{M} =
	\restr{N}$, there must be some $I \in M$ such that $\restr{I} = \restr{J}$.
	By assumption, $U$ contains $\preds{\phi}$ and $\mstr[I, M, M] \ent \phi$,
	so Proposition \ref{prop:ent_relevant} yields
	\[
		\mstr[\restr{I}, \restr{M}, \restr{M}] \ent \phi \enspace.
	\]
	As mentioned above, $\restr{I} = \restr{J}$ and $\restr{M} = \restr{N}$, so
	\[
		\mstr[\restr{J}, \restr{N}, \restr{N}] \ent \phi \enspace.
	\]
	Another application of Proposition \ref{prop:ent_relevant} now yields
	$\mstr[J, N, N] \ent \phi$ and since $J$ and $\phi$ were chosen arbitrarily,
	we can conclude that $N \ent \thr$.
\end{proof}

\begin{proposition} \label{prop:restr:basic}
	Let $U$ be a set of predicate symbols and $M, N \in \mint$. Then the
	following implications hold:
	\begin{enumerate}
		\renewcommand{\labelenumi}{(\arabic{enumi})}
		\item If $M \subseteq N$, then $\restr{M} \subseteq \restr{N}$.
			\label{eq:prop:restr:basic:1}
		\item If $M = N$, then $\restr{M} = \restr{N}$.
			\label{eq:prop:restr:basic:2}
		\item If $M \subseteq N$ and $\restr{M} \subsetneq \restr{N}$, then $M
			\subsetneq N$. \label{eq:prop:restr:basic:3}
	\end{enumerate}
\end{proposition}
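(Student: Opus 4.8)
The plan is to prove the three implications in the order stated, since each later part rests on the earlier one. All three follow by directly unfolding the definition of the restriction $\restr{M} = \Set{\restr{I} | I \in M}$, so no deep argument is needed; the only care required is the strictness bookkeeping in part~(3).

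First I would prove~(1). Assuming $M \subseteq N$, I take an arbitrary $K \in \restr{M}$. By definition of $\restr{M}$ there is some $I \in M$ with $K = \restr{I}$, and since $M \subseteq N$ we have $I \in N$, so $K = \restr{I} \in \restr{N}$. As $K$ was arbitrary, $\restr{M} \subseteq \restr{N}$. This establishes that the restriction operation $M \mapsto \restr{M}$ is monotone with respect to set inclusion, which serves as the workhorse for the remaining parts. (The degenerate case $M = \emptyset$ needs no separate treatment, since then $\restr{M} = \emptyset \subseteq \restr{N}$.)

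Part~(2) is then immediate: if $M = N$ then both $M \subseteq N$ and $N \subseteq M$ hold, so applying~(1) in each direction yields $\restr{M} \subseteq \restr{N}$ and $\restr{N} \subseteq \restr{M}$, whence $\restr{M} = \restr{N}$. Equivalently, $\restr{M}$ and $\restr{N}$ are the images of equal sets under the same map and so coincide.

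Finally, I would obtain~(3) as the contrapositive of~(2). Assume $M \subseteq N$ and $\restr{M} \subsetneq \restr{N}$; in particular $\restr{M} \neq \restr{N}$. If $M = N$ held, part~(2) would force $\restr{M} = \restr{N}$, a contradiction. Hence $M \neq N$, and combined with the hypothesis $M \subseteq N$ this gives $M \subsetneq N$. I do not expect any genuine obstacle here, as the whole statement is a routine consequence of the definition of restriction; the only point worth making explicit is that the strict inclusion hypothesis in~(3) is used only through the weaker fact $\restr{M} \neq \restr{N}$.
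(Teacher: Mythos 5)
Your proof is correct and follows essentially the same route as the paper's: part (1) by unfolding the definition of $\restr{M}$, and part (2) by applying (1) in both directions. The only immaterial difference is in part (3), where the paper directly exhibits a witness $J \in N$ with $\restr{J} \notin \restr{M}$ (hence $J \notin M$), whereas you argue by contraposition from (2); both are equally valid one-line arguments.
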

\begin{proof*}
	\vspace{-1.5em}
	\begin{enumerate}
		\renewcommand{\labelenumi}{(\arabic{enumi})}
		\item Follows by definition of $\restr{M}$ and $\restr{N}$.
		\item This is a direct consequence of \eqref{eq:prop:restr:basic:1}.
		\item Suppose $M \subseteq N$ and $\restr{M} \subsetneq \restr{N}$. Then
			there must be some $J \in N$ such that $\restr{J} \notin \restr{M}$.
			Consequently, $J \notin M$ and so $M$ must be a proper subset of $N$.
			\qedhere
	\end{enumerate}
\end{proof*}

\subsection{Saturated MKNF Interpretations}

An interesting class of MKNF interpretations are \emph{saturated} MKNF
interpretations. As we will see, all MKNF models of some formula or set of
formulae are saturated in a certain sense. Furthermore, a strengthened version
of Proposition \ref{prop:restr:basic} can be shown for saturated MKNF
interpretations, with implications replaced by equivalences. In this
subsection we formally define the class of saturated MKNF interpretations and
then we prove some of their properties.
\begin{definition}[Saturated MKNF Interpretation]
	Let $U$ be a set of predicate symbols and $M \in \mint$. We say an $M$ is
	\emph{saturated relative to $U$} if for every interpretation $I \in \foint$ the
	following holds:
	\[
		\text{If } \restr{I} \in \restr{M} \text{, then } I \in M.
	\]
\end{definition}
Interestingly, all MKNF models of a theory $\thr$ are saturated relative to
the set of predicate symbols relevant to $\thr$:

\begin{proposition} \label{prop:saturated:mknf}
	Let $U$ be a set of predicate symbols, $\thr$ be an MKNF theory such that $U
	\supseteq \preds{\thr}$ and $M$ be an MKNF model of $\thr$. Then $M$ is
	saturated relative to $U$.
\end{proposition}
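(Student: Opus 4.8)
The plan is to argue by contradiction, leveraging the maximality clause in the definition of an MKNF model together with Proposition~\ref{prop:ent_relevant}. Suppose $M$ is \emph{not} saturated relative to $U$. Then by definition there is an interpretation $I \in \foint$ with $\restr{I} \in \restr{M}$ but $I \notin M$. I would form the strictly larger MKNF interpretation $M' = M \cup \set{I}$; since $I \notin M$ we have $M' \supsetneq M$. The crucial observation is that $M$ and $M'$ coincide on $U$, that is $\restr{{M'}} = \restr{M}$: passing from $M$ to $M'$ only adds the interpretation $I$, whose restriction $\restr{I}$ already lies in $\restr{M}$ by assumption, so no new element appears in the restriction.

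Next I would invoke the maximality condition enjoyed by the MKNF model $M$. Since $M' \supsetneq M$, there must exist some $I' \in M'$ and some $\phi \in \thr$ with $\mstr[I', M', M] \nent \phi$. I would then show this is impossible. First note that $\restr{{I'}} \in \restr{M}$ in either case that can arise: if $I' \in M$ this is immediate, and if $I' = I$ it holds because $\restr{I} \in \restr{M}$ by the choice of $I$. Hence there is some $J \in M$ with $\restr{J} = \restr{{I'}}$.

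The contradiction then follows by two applications of Proposition~\ref{prop:ent_relevant}, legitimate because $U \supseteq \preds{\thr} \supseteq \preds{\phi}$. Since $M \ent \thr$ and $J \in M$, we have $\mstr[J, M, M] \ent \phi$, and the proposition yields $\mstr[\restr{J}, \restr{M}, \restr{M}] \ent \phi$. Substituting $\restr{J} = \restr{{I'}}$ and $\restr{M} = \restr{{M'}}$ gives $\mstr[\restr{{I'}}, \restr{{M'}}, \restr{M}] \ent \phi$, and a second application of Proposition~\ref{prop:ent_relevant} lifts this back to $\mstr[I', M', M] \ent \phi$. This contradicts the choice of $I'$ and $\phi$, so $M$ must be saturated relative to $U$.

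The main obstacle is the bookkeeping of the modal context rather than any deep difficulty: the MKNF structure $\mstr[I', M', M]$ carries two sets of interpretations, and the maximality clause pins its third component to $M$ while its second component is the enlarged set $M'$. The entire argument hinges on the identity $\restr{{M'}} = \restr{M}$, which allows Proposition~\ref{prop:ent_relevant} to collapse $\mstr[I', M', M]$ onto a structure whose first two components arise from a genuine member $J$ of $M$, for which satisfaction of $\phi$ is already guaranteed by $M \ent \thr$. Some care is needed to confirm that the hypothesis of Proposition~\ref{prop:ent_relevant} ($U \supseteq \preds{\phi}$) holds for the relevant $\phi$ and that the borderline case $I' = I$ is covered by the containment $\restr{{I'}} = \restr{I} \in \restr{M}$ rather than by membership of $I'$ in $M$.
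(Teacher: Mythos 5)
Your proposal is correct and follows essentially the same route as the paper's own proof: the same construction $M' = M \cup \set{I}$, the key identity $\restr{M'} = \restr{M}$, and two applications of Proposition~\ref{prop:ent_relevant} to transfer satisfaction between $\mstr[I', M', M]$ and a structure built from a genuine member of $M$. The only difference is presentational (you derive $\mstr[I', M', M] \ent \phi$ and contradict the maximality witness, while the paper propagates $\nent$ downward to contradict $M \ent \thr$), which is just the contrapositive of the same argument.
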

\begin{proof}
	Suppose $M$ is not saturated relative to $U$. Then there is some $I \in \foint$
	such that $\restr{I} \in \restr{M}$ and $I \notin M$. Let $M' = M \cup
	\set{I}$. $M$ is an MKNF model of $\thr$, so by definition $\mstr[I', M', M]
	\nent \thr$ for some $I' \in M'$. But $\restr{I'} \in \restr{M'} = \restr{M}$,
	so there must be some $I'' \in M$ such that $\restr{I''} = \restr{I'}$. By two
	applications of Proposition \ref{prop:ent_relevant} we now obtain
	\[
		\mstr[I', M', M] \nent \thr
			\mlthen \mstr[\restr{I'}, \restr{M'}, \restr{M}] \nent \thr
			\mlthen \mstr[I'', M, M] \nent \thr \enspace.
	\]
	This is in conflict with the assumption that $M$ is an MKNF model of $\thr$.
\end{proof}

\begin{proposition} \label{prop:restr:sat:1}
	Let $U$ be a set of predicate symbols and $M, N \in \mint$ be such that $M$
	saturated relative to $U$. Then the following equivalences hold:
	\begin{enumerate}
		\renewcommand{\labelenumi}{(\arabic{enumi})}
		\item $M = N$ if and only if $M \subseteq N$ and $\restr{M} \supseteq
			\restr{N}$. \label{eq:prop:restr:sat:1:1}
		\item $M \subsetneq N$ if and only if $M \subseteq N$ and $\restr{M}
			\subsetneq \restr{N}$. \label{eq:prop:restr:sat:1:2}
	\end{enumerate}
\end{proposition}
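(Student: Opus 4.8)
The plan is to derive both equivalences from the basic implications recorded in Proposition~\ref{prop:restr:basic}, invoking the saturation hypothesis on $M$ only at the single point where it is indispensable.

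For the first equivalence, the left-to-right direction is immediate: $M = N$ gives $M \subseteq N$ at once, and the second implication of Proposition~\ref{prop:restr:basic} gives $\restr{M} = \restr{N}$, hence $\restr{M} \supseteq \restr{N}$. The content lies in the converse. Assuming $M \subseteq N$ and $\restr{M} \supseteq \restr{N}$, it suffices to prove $N \subseteq M$. I would take an arbitrary $J \in N$, note that $\restr{J} \in \restr{N} \subseteq \restr{M}$, and then apply the definition of saturation of $M$ relative to $U$ directly to conclude $J \in M$. Since $J$ is arbitrary this yields $N \subseteq M$, and together with $M \subseteq N$ we obtain $M = N$.

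For the second equivalence, the right-to-left direction requires no saturation and follows immediately from the third implication of Proposition~\ref{prop:restr:basic}, which turns $M \subseteq N$ together with $\restr{M} \subsetneq \restr{N}$ into $M \subsetneq N$. For the left-to-right direction, $M \subsetneq N$ gives $M \subseteq N$, and the first implication of Proposition~\ref{prop:restr:basic} gives $\restr{M} \subseteq \restr{N}$; it then remains only to exclude equality of the restrictions. I would argue by contradiction: were $\restr{M} = \restr{N}$, then in particular $\restr{M} \supseteq \restr{N}$, so the first equivalence already established above (whose hypotheses $M \subseteq N$ and $\restr{M} \supseteq \restr{N}$ are now met) would force $M = N$, contradicting $M \subsetneq N$. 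Hence $\restr{M} \subsetneq \restr{N}$.

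I do not anticipate a genuine obstacle here. The only essential use of the stronger hypothesis is the step $\restr{J} \in \restr{M} \Rightarrow J \in M$ in the converse of the first equivalence, which is nothing more than the definition of a saturated interpretation; everything else is routine bookkeeping around Proposition~\ref{prop:restr:basic} together with a reuse of the first equivalence inside the proof of the second.
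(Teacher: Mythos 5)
Your proof is correct and follows essentially the same route as the paper: part (1) is proved identically, applying saturation of $M$ to an arbitrary element of $N$ to obtain $N \subseteq M$, with the forward directions read off from Proposition~\ref{prop:restr:basic}. The only immaterial difference is in the forward direction of part (2), where the paper applies the contrapositive of saturation directly to a witness $I \in N \setminus M$ to conclude $\restr{I} \notin \restr{M}$ and hence $\restr{M} \subsetneq \restr{N}$, whereas you reach the same conclusion by contradiction through part (1).
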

\begin{proof*}
	\vspace{-1.5em}
	\begin{enumerate}
		\renewcommand{\labelenumi}{(\arabic{enumi})}
		\item The direct implication follows from Proposition
			\ref{prop:restr:basic}. We will prove the converse implication. Suppose
			$\restr{M} \supseteq \restr{N}$ and $I \in N$. We immediately obtain
			$\restr{I} \in \restr{M}$ and since $M$ is saturated relative to $U$, we
			can conclude that $I \in M$.
		\item For the direct implication suppose that $M \subsetneq N$. Then there
			is some $I \in N$ such that $I \notin M$. Since $M$ is saturated
			relative to $U$, we obtain that $\restr{I} \notin \restr{M}$.
			Consequently, $\restr{M}$ is a proper subset of $\restr{N}$. The
			converse implication is a cosequence of Proposition
			\ref{prop:restr:basic}\eqref{eq:prop:restr:basic:3}. \qedhere
	\end{enumerate}
\end{proof*}

\begin{proposition} \label{prop:restr:sat:2}
Let $U$ be a set of predicate symbols and $M, N \in \mint$ be such that $N$ is
saturated relative to $U$. Then:
\begin{enumerate}
	\renewcommand{\labelenumi}{(\arabic{enumi})}
	\item $M \subseteq N$ if and only if $\restr{M} \subseteq \restr{N}$.
		\label{eq:prop:restr:sat:2:1}
	\item If $\restr{M} \subsetneq \restr{N}$, then $M \subsetneq N$.
		\label{eq:prop:restr:sat:2:2}
\end{enumerate}
\end{proposition}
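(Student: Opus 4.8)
The plan is to derive both claims from Proposition~\ref{prop:restr:basic} together with the definition of saturation, with the saturation hypothesis on $N$ doing the essential work only in the converse direction of part~(1).

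For part~(1), the forward implication requires no saturation: if $M \subseteq N$ then $\restr{M} \subseteq \restr{N}$ is exactly Proposition~\ref{prop:restr:basic}\eqref{eq:prop:restr:basic:1}. For the converse I would assume $\restr{M} \subseteq \restr{N}$ and take an arbitrary $I \in M$. Then $\restr{I} \in \restr{M}$ by definition of the restriction, hence $\restr{I} \in \restr{N}$. Since $N$ is saturated relative to $U$, the defining property of saturation yields $I \in N$. As $I$ was arbitrary, this gives $M \subseteq N$.

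For part~(2), I would first observe that $\restr{M} \subsetneq \restr{N}$ entails $\restr{M} \subseteq \restr{N}$, so part~(1), already established, gives $M \subseteq N$. Feeding $M \subseteq N$ and $\restr{M} \subsetneq \restr{N}$ into Proposition~\ref{prop:restr:basic}\eqref{eq:prop:restr:basic:3} then delivers the strict inclusion $M \subsetneq N$.

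No step presents a genuine difficulty; the only point worth flagging is that the converse of part~(1) is precisely where saturation is indispensable. Without it, $M$ could contain strictly fewer interpretations than $N$ while having an identical set of restrictions, so that $\restr{M} \subseteq \restr{N}$ would fail to force $M \subseteq N$. The saturation of $N$ rules this out by guaranteeing that every interpretation whose restriction appears in $\restr{N}$ is itself a member of $N$.
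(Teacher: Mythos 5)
Your proposal is correct and follows essentially the same route as the paper's proof: the forward direction of (1) via Proposition~\ref{prop:restr:basic}\eqref{eq:prop:restr:basic:1}, the converse via the saturation of $N$ applied to an arbitrary $I \in M$, and part (2) by combining part (1) with Proposition~\ref{prop:restr:basic}\eqref{eq:prop:restr:basic:3}. Your closing remark about where saturation is indispensable is also accurate.
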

\begin{proof*}
	\vspace{-1.5em}
	\begin{enumerate}
		\renewcommand{\labelenumi}{(\arabic{enumi})}
		\item The direct implication follows from Proposition
			\ref{prop:restr:basic}\eqref{eq:prop:restr:basic:1}. We will prove the
			converse implication. Suppose $\restr{M} \subseteq \restr{N}$ and $I \in
			M$. We immediately obtain $\restr{I} \in \restr{M}$, hence also
			$\restr{I} \in \restr{N}$. Since $N$ is saturated relative to $U$, we
			can conclude that $I \in N$. Consequently, $M \subseteq N$.
		\item This is a consequence of \eqref{eq:prop:restr:sat:2:1} and
			Proposition \ref{prop:restr:basic}\eqref{eq:prop:restr:basic:3}.
			\qedhere
	\end{enumerate}
\end{proof*}

\begin{corollary} \label{cor:restr:sat}
	Let $U$ be a set of predicate symbols and $M, N$ be MKNF interpretations
	that are both saturated relative to $U$. Then the following equivalences
	hold:
	\begin{enumerate}
		\renewcommand{\labelenumi}{(\arabic{enumi})}
		\item $M \subseteq N$ if and only if $\restr{M} \subseteq \restr{N}$.
			\label{eq:cor:restr:sat:1}
		\item $M = N$ if and only if $\restr{M} = \restr{N}$.
			\label{eq:cor:restr:sat:2}
		\item $M \subsetneq N$ if and only if $\restr{M} \subsetneq \restr{N}$.
			\label{eq:cor:restr:sat:3}
	\end{enumerate}
\end{corollary}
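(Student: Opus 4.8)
The plan is to derive each of the three equivalences directly from Propositions \ref{prop:restr:basic}, \ref{prop:restr:sat:1}, and \ref{prop:restr:sat:2}, exploiting the fact that now \emph{both} $M$ and $N$ are saturated relative to $U$ rather than just one of them. Each of those earlier results already supplies an implication under a one-sided saturation hypothesis, so the task reduces to assembling the available one-directional statements into genuine equivalences. The only care required is to apply each lemma with the correct interpretation ($M$ or $N$) playing the saturated role, which is always possible here since both are saturated.

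For part \eqref{eq:cor:restr:sat:1} there is nothing new to prove: the equivalence $M \subseteq N \mlequiv \restr{M} \subseteq \restr{N}$ is exactly Proposition \ref{prop:restr:sat:2}\eqref{eq:prop:restr:sat:2:1}, whose only hypothesis is that $N$ be saturated, which we have. For part \eqref{eq:cor:restr:sat:2}, the forward implication is immediate from Proposition \ref{prop:restr:basic}\eqref{eq:prop:restr:basic:2}. For the converse, I would split the hypothesis $\restr{M} = \restr{N}$ into the two inclusions $\restr{M} \subseteq \restr{N}$ and $\restr{N} \subseteq \restr{M}$ and then apply part \eqref{eq:cor:restr:sat:1} in each orientation: the first application (using $N$ saturated) yields $M \subseteq N$, and the symmetric application with the roles of $M$ and $N$ interchanged (using $M$ saturated) yields $N \subseteq M$, so $M = N$. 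This is precisely the step where saturation of \emph{both} interpretations is essential.

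For part \eqref{eq:cor:restr:sat:3}, each direction has already been isolated above. The forward implication $M \subsetneq N \mlthen \restr{M} \subsetneq \restr{N}$ follows from Proposition \ref{prop:restr:sat:1}\eqref{eq:prop:restr:sat:1:2} (using $M$ saturated), which in particular forces $\restr{M} \subsetneq \restr{N}$ whenever $M \subsetneq N$; the converse is Proposition \ref{prop:restr:sat:2}\eqref{eq:prop:restr:sat:2:2} (using $N$ saturated). Since this corollary is purely a consolidation of facts established under single-sided saturation, I anticipate no genuine obstacle: the entire content is the bookkeeping of which saturation hypothesis each invoked proposition consumes, ensuring that no appeal is ever made to a saturation assumption on the wrong interpretation.
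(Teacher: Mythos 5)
Your proof is correct and takes essentially the same approach as the paper: part \eqref{eq:cor:restr:sat:1} is exactly Proposition \ref{prop:restr:sat:2}\eqref{eq:prop:restr:sat:2:1}, and the other two parts are obtained by bookkeeping over the one-sided saturation results, using that both $M$ and $N$ are saturated. The only cosmetic difference is in part \eqref{eq:cor:restr:sat:3}, where the paper derives the claim from parts \eqref{eq:cor:restr:sat:1} and \eqref{eq:cor:restr:sat:2} of the corollary itself, whereas you cite Propositions \ref{prop:restr:sat:1}\eqref{eq:prop:restr:sat:1:2} and \ref{prop:restr:sat:2}\eqref{eq:prop:restr:sat:2:2} directly; both consolidations are valid.
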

\begin{proof*}
	\vspace{-1.5em}
	\begin{enumerate}
		\renewcommand{\labelenumi}{(\arabic{enumi})}
		\item Follows from Proposition
			\ref{prop:restr:sat:2}\eqref{eq:prop:restr:sat:2:1}.
		\item This is a consequence of \eqref{eq:cor:restr:sat:1}.
		\item This is a consequence of \eqref{eq:cor:restr:sat:1} and
			\eqref{eq:cor:restr:sat:2}. \qedhere
	\end{enumerate}
\end{proof*}

\begin{definition} \label{def:sigma}
	Let $U$ be a set of predicate symbols and $M \in \mint$. Then we introduce
	the following notation:
	\[
		\sat{M} = \Set{ I \in \foint | \restr{I} \in \restr{M} }
	\]
\end{definition}

\begin{proposition} \label{prop:sigma}
	Let $U$ be a set of predicate symbols and $M, N \in \mint$. Then the
	following conditions are equivalent:
	\begin{enumerate}
		\item $N = \sat{M}$;
		\item $N$ coincides with $M$ on $U$ and is saturated relative to $U$.
		\item $N$ is the greatest among all $N' \in \mint$ coinciding with $M$ on
			$U$;
	\end{enumerate}
	Furthermore, if $N$ satisfies one of the conditions above, then $M \subseteq
	N$.
\end{proposition}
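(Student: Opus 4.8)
The plan is to establish the three conditions as equivalent via the cyclic chain $(1) \mlthen (2) \mlthen (3) \mlthen (1)$, after which the closing claim $M \subseteq N$ follows at once. The observation driving the whole argument is that $\sat{M}$ coincides with $M$ on $U$, that is, $\restr{\sat{M}} = \restr{M}$, which I would check first straight from Definition~\ref{def:sigma}: one inclusion holds since each $I \in \sat{M}$ has $\restr{I} \in \restr{M}$ by definition, and the other since each $\restr{J} \in \restr{M}$ (with $J \in M$) forces $J \in \sat{M}$, hence $\restr{J} \in \restr{\sat{M}}$.

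For $(1) \mlthen (2)$, assuming $N = \sat{M}$, coincidence on $U$ is exactly the observation above, and saturation is immediate: if $\restr{I} \in \restr{N} = \restr{M}$ then the defining condition of $\sat{M}$ -- which refers only to $\restr{I}$ -- yields $I \in \sat{M} = N$. For $(2) \mlthen (3)$, assume $N$ coincides with $M$ on $U$ and is saturated relative to $U$; given any $N' \in \mint$ coinciding with $M$ on $U$ and any $I \in N'$, I have $\restr{I} \in \restr{N'} = \restr{M} = \restr{N}$, so saturation of $N$ gives $I \in N$ and thus $N' \subseteq N$. Since $N$ itself coincides with $M$ on $U$, it is the greatest such set.

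For $(3) \mlthen (1)$, assume $N$ is greatest among the MKNF interpretations coinciding with $M$ on $U$. By the initial observation $\sat{M}$ is one of these sets, so maximality gives $\sat{M} \subseteq N$; conversely $N$ itself coincides with $M$ on $U$, so every $I \in N$ satisfies $\restr{I} \in \restr{N} = \restr{M}$ and hence $I \in \sat{M}$, giving $N \subseteq \sat{M}$ and therefore equality. For the furthermore part it is then enough to argue for $N = \sat{M}$: any $I \in M$ has $\restr{I} \in \restr{M}$, so $I \in \sat{M} = N$, whence $M \subseteq N$.

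I do not anticipate a genuine obstacle -- the argument is essentially bookkeeping about which restriction lies in which family. The one point that warrants care is the reading of ``greatest'': it must mean an upper bound that is itself a member of the family of MKNF interpretations coinciding with $M$ on $U$. This is precisely what lets me deduce $\restr{N} = \restr{M}$ in the step $(3) \mlthen (1)$, rather than only the inclusion $\sat{M} \subseteq N$.
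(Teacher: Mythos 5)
Your proof is correct and follows essentially the same route as the paper's: the same cyclic chain $(1) \mlthen (2) \mlthen (3) \mlthen (1)$, with each implication argued via the identity $\restr{\sat{M}} = \restr{M}$ and the saturation property, and the same derivation of $M \subseteq N$ from condition~(1). Your closing remark about reading ``greatest'' as a member of the family (not merely an upper bound) is precisely the point the paper also relies on in the step $(3) \mlthen (1)$.
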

\begin{proof}
	We will prove that 1. implies 2., 2. implies 3. and finally that 3. implies 1.

	Suppose $N = \sat{M}$. Then
	\[
		\restr{N} = \set{ \restr{I} | I \in \foint \land \restr{I} \in \restr{M} }
			= \restr{M} \enspace,
	\]
	so $N$ coincides with $M$ on $U$. Furthermore, any $I \in \foint$ with
	$\restr{I} \in \restr{N}$ must also satisfy $\restr{I} \in \restr{M}$. Thus,
	$I \in N$, so $N$ is saturated relative to $U$. This shows that 1. implies
	2.

	To show that 2. implies 3., suppose $N$ coincides with $M$ on $U$ and is
	saturated relative to $U$. Suppose $N' \in \mint$ coincides with $M$ on $U$
	and $I \in N'$. Then
	\[
		\restr{I} \in \restr{N'} = \restr{M} = \restr{N} \enspace,
	\]
	so, since $N$ is saturated relative to $U$, we can conclude that $I$ belongs
	to $N$. Consequently, $N'$ is contained in $N$, so $N$ is the greatest among
	all $N' \in \mint$ coinciding with $M$ on $U$.

	Finally, suppose $N$ is the greatest among all $N' \in \mint$ coinciding
	with $M$ on $U$. It can be easily seen that $\sat{M}$ coincides with
	$M$ on $U$, so $\sat{M}$ must be a subset of $N$. It remains to show
	that $N$ is a subset of $\sat{M}$. But that is an easy consequence of
	the fact that for any $I \in N$, $\restr{I}$ must belong to $\restr{M}$.

	It still remains to show that $M$ is a subset of $N$ if $N$ satisfies one of
	the above conditions. We already know that the conditions are equivalent, so
	we only need to consider one of them. So suppose $N = \sat{M}$
	(condition 1.). It can be easily seen from the definition of $\sat{M}$
	that every $I \in M$ belongs also to $N$. Hence, $M$ is a subset of $N$.
\end{proof}

\begin{proposition} \label{prop:sigma:repeated}
	Let $U_1, U_2$ be sets of predicate symbols and $M \in \mint$. Then
	\[
		\sat[U_2]{\sat[U_1]{M}} = \sat[U_1 \cap U_2]{M}
		\enspace.
	\]
\end{proposition}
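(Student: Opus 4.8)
The plan is to prove the equality by a direct double inclusion, after first isolating the single elementary fact that drives everything: restrictions compose by intersection. Concretely, for any interpretation $I \in \foint$ and any $V_1, V_2 \subseteq \lpre$ we have $\restr[V_2]{\restr[V_1]{I}} = \restr[V_1 \cap V_2]{I}$, since by definition $\restr[V_1]{I}$ keeps exactly the atoms of $I$ whose predicate lies in $V_1$, and restricting the result to $V_2$ then keeps exactly those whose predicate lies in $V_1 \cap V_2$ (the identity is symmetric in $V_1, V_2$). Lifting it element-wise to MKNF interpretations gives $\restr[V_2]{\restr[V_1]{M}} = \restr[V_1 \cap V_2]{M}$ for every $M \in \mint$. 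I would use these two identities freely.

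Unfolding Definition \ref{def:sigma}, membership in either side reduces to a statement about restrictions, so I argue by two inclusions. For $\sat[U_2]{\sat[U_1]{M}} \subseteq \sat[U_1 \cap U_2]{M}$, take $I \in \sat[U_2]{\sat[U_1]{M}}$; then $\restr[U_2]{I} \in \restr[U_2]{\sat[U_1]{M}}$, so there is $J \in \sat[U_1]{M}$ with $\restr[U_2]{J} = \restr[U_2]{I}$, and in turn some $K \in M$ with $\restr[U_1]{K} = \restr[U_1]{J}$. Applying $\restr[U_1]{\cdot}$ to the first equation and $\restr[U_2]{\cdot}$ to the second, then invoking the composition identity, yields $\restr[U_1 \cap U_2]{I} = \restr[U_1 \cap U_2]{J} = \restr[U_1 \cap U_2]{K}$; since $K \in M$ this places $\restr[U_1 \cap U_2]{I}$ in $\restr[U_1 \cap U_2]{M}$, i.e. $I \in \sat[U_1 \cap U_2]{M}$.

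The converse inclusion is where the real work lies, and I expect it to be the main obstacle: given $I \in \sat[U_1 \cap U_2]{M}$, hence some $K \in M$ with $\restr[U_1 \cap U_2]{K} = \restr[U_1 \cap U_2]{I}$, I must exhibit a \emph{single} interpretation $J \in \sat[U_1]{M}$ that agrees with $K$ on $U_1$ and with $I$ on $U_2$ at once. The natural candidate glues the two pieces together, $J = \restr[U_1]{K} \cup \restr[U_2]{I}$, and the delicate point is that the pieces are consistent on the overlap: this holds precisely because $\restr[U_1 \cap U_2]{K} = \restr[U_1 \cap U_2]{I}$, so no atom with predicate in $U_1 \cap U_2$ is contributed by one piece while being withheld by the other. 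Granting this, a short verification gives $\restr[U_1]{J} = \restr[U_1]{K} \in \restr[U_1]{M}$, whence $J \in \sat[U_1]{M}$, and $\restr[U_2]{J} = \restr[U_2]{I}$, whence $\restr[U_2]{I} \in \restr[U_2]{\sat[U_1]{M}}$ and therefore $I \in \sat[U_2]{\sat[U_1]{M}}$.

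As an alternative I considered routing through Proposition \ref{prop:sigma}: it would suffice to show that $\sat[U_2]{\sat[U_1]{M}}$ coincides with $M$ on $U_1 \cap U_2$ and is saturated relative to $U_1 \cap U_2$. The coincidence clause follows smoothly from the composition identities together with the coincidence property of $\sigma$, but verifying the saturation clause ultimately reduces to the same gluing construction. For this reason I would present the explicit construction above rather than the indirect argument.
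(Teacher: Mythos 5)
Your proof is correct and follows essentially the same route as the paper's: both arguments reduce the claim to the equivalence between agreement of $K$ and $I$ on $U_1 \cap U_2$ and the existence of an interpretation agreeing with $K$ on $U_1$ and with $I$ on $U_2$, and both resolve the nontrivial direction with exactly the same gluing witness $J = \restr[U_1]{K} \cup \restr[U_2]{I}$, whose consistency on the overlap is guaranteed by $\restr[U_1 \cap U_2]{K} = \restr[U_1 \cap U_2]{I}$. The only difference is presentational (double inclusion versus the paper's chain of equivalences), so nothing of substance separates the two.
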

\begin{proof*}
	Consider the following sequence of equivalences:
	\begin{align*}
		I \in \sat[U_2]{\sat[U_1]{M}} &\mlequiv \restr[U_2]{I} \in \restr[U_2]{\sat[U_1]{M}} \\
			&\mlequiv (\exists J \in \sat[U_1]{M})(\restr[U_2]{J} = \restr[U_2]{I}) \\
			&\mlequiv (\exists J \in \foint)((\exists K \in M)
				(\restr[U_1]{K} = \restr[U_1]{J} \land \restr[U_2]{J} = \restr[U_2]{I}) \\
			&\mlequiv (\exists K \in M)(\exists J \in \foint)
				(\restr[U_1]{J} = \restr[U_1]{K} \land \restr[U_2]{J} = \restr[U_2]{I}) \enspace.
	\end{align*}
	Moreover, we also obtain the following:
	\[
		I \in \sat[U_1 \cap U_2]{M}
			\mlequiv (\exists K \in M)(\restr[U_1 \cap U_2]{K} = \restr[U_1 \cap U_2]{I})
			\enspace.
	\]
	So it remains to show that
	\[
		(\exists J \in \foint)
			(\restr[U_1]{J} = \restr[U_1]{K} \land \restr[U_2]{J} = \restr[U_2]{I})
	\]
	holds if and only if
	\[
		\restr[U_1 \cap U_2]{K} = \restr[U_1 \cap U_2]{I} \enspace.
	\]
	Indeed, if such a $J$ exists, then for every ground atoms $p$ the following
	holds:
	\begin{align*}
		p \in \restr[U_1 \cap U_2]{K}
			&\mlequiv p \in K \land \preds{p} \subseteq U_1 \cap U_2 \\
			&\mlequiv (p \in K \land \preds{p} \subseteq U_1) \land \preds{p} \subseteq U_2
				\mlequiv p \in \restr[U_1]{K} \land \preds{p} \subseteq U_2 \\
			&\mlequiv p \in \restr[U_1]{J} \land \preds{p} \subseteq U_2
				\mlequiv (p \in J \land \preds{p} \subseteq U_2) \land \preds{p} \subseteq U_1 \\
			&\mlequiv p \in \restr[U_2]{J} \land \preds{p} \subseteq U_1
				\mlequiv p \in \restr[U_2]{I} \land \preds{p} \subseteq U_1 \\
			&\mlequiv p \in I \land \preds{p} \subseteq U_1 \cap U_2
				\mlequiv p \in \restr[U_1 \cap U_2]{I}
	\end{align*}
	On the other hand, if the other condition holds, then for
	\[
		J = \Set{p \in K | \preds{p} \subseteq U_1}
			\cup \Set{p \in I | \preds{p} \subseteq U_2}
	\]
	and any ground atom $p$ we obtain
	\begin{align*}
		p \in \restr[U_1]{J} 
			&\mlequiv p \in J \land \preds{p} \subseteq U_1 \\
			&\mlequiv (p \in K \land \preds{p} \subseteq U_1)
				\lor (p \in I \land \preds{p} \subseteq U_1 \cap U_2) \\
			&\mlequiv (p \in K \land \preds{p} \subseteq U_1)
				\lor p \in \restr[U_1 \cap U_2]{I} \\
			&\mlequiv (p \in K \land \preds{p} \subseteq U_1)
				\lor p \in \restr[U_1 \cap U_2]{K} \\
			&\mlequiv (p \in K \land \preds{p} \subseteq U_1)
				\lor (p \in K \land \preds{p} \subseteq U_1 \cap U_2) \\
			&\mlequiv p \in K \land \preds{p} \subseteq U_1 \\
			&\mlequiv p \in \restr[U_1]{K}
	\end{align*}
	and also
	\begin{align*}
		p \in \restr[U_2]{J} 
			&\mlequiv p \in J \land \preds{p} \subseteq U_2 \\
			&\mlequiv (p \in K \land \preds{p} \subseteq U_1 \cap U_2)
				\lor (p \in I \land \preds{p} \subseteq U_2) \\
			&\mlequiv p \in \restr[U_1 \cap U_2]{K}
				\lor (p \in I \land \preds{p} \subseteq U_2) \\
			&\mlequiv p \in \restr[U_1 \cap U_2]{I}
				\lor (p \in I \land \preds{p} \subseteq U_2) \\
			&\mlequiv (p \in I \land \preds{p} \subseteq U_1 \cap U_2)
				\lor (p \in I \land \preds{p} \subseteq U_2) \\
			&\mlequiv p \in I \land \preds{p} \subseteq U_2 \\
			&\mlequiv p \in \restr[U_2]{I} \enspace. \qedhere
	\end{align*}
\end{proof*}

\begin{proposition} \label{prop:sigma:restr}
	Let $U_1, U_2 \subseteq \lpre$ be sets of atoms such that $U_1 \subseteq
	U_2$ and $M \in \mint$. Then
	\[
		\restr[U_1]{\sat[U_2]{M}} = \restr[U_1]{M} \enspace.
	\]
\end{proposition}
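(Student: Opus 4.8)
The plan is to prove the two set inclusions separately, in both cases leaning on the characterisation of $\sat[U_2]{M}$ provided by Proposition~\ref{prop:sigma}. The one technical ingredient I expect to need is the \emph{factoring} of restrictions: since $U_1 \subseteq U_2$, every interpretation $I \in \foint$ satisfies $\restr[U_1]{I} = \restr[U_1]{\left(\restr[U_2]{I}\right)}$. This holds because a ground atom $p$ has $\preds{p} \subseteq U_1$ exactly when it has both $\preds{p} \subseteq U_2$ and $\preds{p} \subseteq U_1$, the former being automatic from $U_1 \subseteq U_2$; so filtering $I$ first by $U_2$ and then by $U_1$ discards precisely the atoms that filtering by $U_1$ directly would discard.

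For the inclusion $\restr[U_1]{M} \subseteq \restr[U_1]{\sat[U_2]{M}}$ I would invoke the final sentence of Proposition~\ref{prop:sigma}, which gives $M \subseteq \sat[U_2]{M}$. Monotonicity of the restriction operator, Proposition~\ref{prop:restr:basic}\eqref{eq:prop:restr:basic:1}, then yields $\restr[U_1]{M} \subseteq \restr[U_1]{\sat[U_2]{M}}$ at once.

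For the converse, I would take an arbitrary element of $\restr[U_1]{\sat[U_2]{M}}$, necessarily of the form $\restr[U_1]{I}$ for some $I \in \sat[U_2]{M}$, that is, some $I$ with $\restr[U_2]{I} \in \restr[U_2]{M}$ by Definition~\ref{def:sigma}. Hence there is $K \in M$ with $\restr[U_2]{K} = \restr[U_2]{I}$, and applying the factoring identity to both $K$ and $I$ gives $\restr[U_1]{K} = \restr[U_1]{\left(\restr[U_2]{K}\right)} = \restr[U_1]{\left(\restr[U_2]{I}\right)} = \restr[U_1]{I}$, so $\restr[U_1]{I} \in \restr[U_1]{M}$ is witnessed by $K$. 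Alternatively, one can avoid element-chasing entirely and write the single chain $\restr[U_1]{\sat[U_2]{M}} = \restr[U_1]{\left(\restr[U_2]{\sat[U_2]{M}}\right)} = \restr[U_1]{\left(\restr[U_2]{M}\right)} = \restr[U_1]{M}$, where the middle equality is the coincidence of $\sat[U_2]{M}$ with $M$ on $U_2$ from Proposition~\ref{prop:sigma} and the two outer equalities are the factoring identity lifted pointwise to sets of interpretations.

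Since both constituents are routine, I do not anticipate a genuine obstacle; the only point that demands care is checking that the factoring identity lifts from single interpretations to the sets $M$ and $\sat[U_2]{M}$. That lift amounts to the observation $\restr[U_1]{\left(\restr[U_2]{M}\right)} = \Set{\restr[U_1]{\left(\restr[U_2]{I}\right)} | I \in M}$, which collapses to $\restr[U_1]{M}$ by the identity, so one must simply be careful to apply the restriction notation consistently at the level of interpretation sets rather than individual interpretations.
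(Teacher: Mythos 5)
Your proposal is correct and follows essentially the same route as the paper's proof: the easy inclusion comes from $M \subseteq \sat[U_2]{M}$ (Proposition~\ref{prop:sigma}) plus monotonicity of restriction, and the hard inclusion is the same witness-chasing argument, picking $K \in M$ with $\restr[U_2]{K} = \restr[U_2]{I}$ and concluding $\restr[U_1]{K} = \restr[U_1]{I}$ from $U_1 \subseteq U_2$. Your ``factoring identity'' is just an explicit packaging of the step the paper carries out inline, so there is nothing substantive to distinguish the two arguments.
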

\begin{proof}
	First suppose that $I$ belongs to $\restr[U_1]{\sat[U_2]{M}}$. Then for
	some $J \in \sat[U_2]{M}$ we have $I = \restr[U_1]{J}$, so
	\begin{equation} \label{eq:proof:prop:sigma:restr}
		I = \Set{p \in J | \preds{p} \subseteq U_1} \enspace.
	\end{equation}
	Also, since $J$ belongs to $\sat[U_2]{M}$, there must be some $K \in M$
	such that $\restr[U_2]{K} = \restr[U_2]{J}$, which means that for any ground
	atom $p$ with $\preds{p} \subseteq U_2$ we have $p \in J \mlequiv p \in K$.
	This, together with \eqref{eq:proof:prop:sigma:restr} and the assumption
	that $U_1$ is a subset of $U_2$, implies that
	\[
		I = \Set{p \in K | \preds{p} \subseteq U_1} \enspace.
	\]
	Thus, $I$ belongs to $\restr[U_1]{M}$.
	
	The converse inclusion follows from the fact that $M$ is a subset of
	$\sat[U_2]{M}$ (see Proposition \ref{prop:sigma}).
\end{proof}

\begin{lemma} \label{lemma:saturated:superset}
	Let $U_1, U_2$ be sets of predicate symbols such that $U_1$ is a subset of
	$U_2$ and let $X \in \mint$. If $X$ is saturated relative to $U_1$, then it
	is saturated relative to $U_2$.
\end{lemma}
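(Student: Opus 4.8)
The plan is to unfold the definition of saturation relative to $U_2$ and reduce it to the hypothesis of saturation relative to $U_1$. To show $X$ is saturated relative to $U_2$, I would fix an arbitrary interpretation $I \in \foint$ with $\restr[U_2]{I} \in \restr[U_2]{X}$ and aim to conclude $I \in X$. Since $X$ is by assumption saturated relative to $U_1$, it suffices to establish the weaker membership $\restr[U_1]{I} \in \restr[U_1]{X}$, because saturation relative to $U_1$ then yields $I \in X$ immediately.

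The core step is therefore to show that $\restr[U_2]{I} \in \restr[U_2]{X}$ implies $\restr[U_1]{I} \in \restr[U_1]{X}$. Unfolding the premise, there is some $K \in X$ with $\restr[U_2]{K} = \restr[U_2]{I}$, and I would verify that this same $K$ witnesses the conclusion, i.e. that $\restr[U_1]{K} = \restr[U_1]{I}$. This is exactly where the hypothesis $U_1 \subseteq U_2$ enters: for any ground atom $p$ with $\preds{p} \subseteq U_1$ we also have $\preds{p} \subseteq U_2$, so the two interpretations, already agreeing on all atoms relevant to $U_2$, in particular agree on all atoms relevant to $U_1$. Phrased more compactly, restriction is transitive in the sense that $\restr[U_1]{\restr[U_2]{J}} = \restr[U_1]{J}$ whenever $U_1 \subseteq U_2$ (since $U_1 \cap U_2 = U_1$); applying $\restr[U_1]{\cdot}$ to both sides of $\restr[U_2]{K} = \restr[U_2]{I}$ then gives $\restr[U_1]{K} = \restr[U_1]{I}$ at once.

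A shorter route exploits the machinery already developed. By Proposition \ref{prop:sigma}, applied with $M = N = X$ (condition 2 then reduces to saturation, since $X$ trivially coincides with itself on any set), saturation of $X$ relative to a set $U$ is equivalent to the identity $\sat{X} = X$. Starting from the hypothesis $\sat[U_1]{X} = X$, I would compute
\[
	\sat[U_2]{X} = \sat[U_2]{\sat[U_1]{X}} = \sat[U_1 \cap U_2]{X} = \sat[U_1]{X} = X \enspace,
\]
where the second equality is Proposition \ref{prop:sigma:repeated} and the third uses $U_1 \cap U_2 = U_1$, which holds precisely because $U_1 \subseteq U_2$. This again establishes that $X$ is saturated relative to $U_2$.

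I do not expect a genuine obstacle here; the only subtlety is the \emph{direction} of the inclusion. Restriction to the smaller set $U_1$ discards more atoms than restriction to $U_2$, so one must check that agreement on $U_2$ propagates \emph{down} to agreement on $U_1$ (and not the converse). The direct argument handles this transparently at the level of individual atoms, while the $\sigma$-based argument packages the same fact into the identity $U_1 \cap U_2 = U_1$.
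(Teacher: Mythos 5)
Your direct argument is exactly the paper's proof: it takes the witness $K \in X$ with $\restr[U_2]{K} = \restr[U_2]{I}$, observes that agreement on all atoms with predicates in $U_2$ propagates down to agreement on atoms with predicates in $U_1$ because $U_1 \subseteq U_2$, and then invokes saturation relative to $U_1$ to conclude $I \in X$. Your alternative route via Propositions \ref{prop:sigma} and \ref{prop:sigma:repeated} is also correct and genuinely different in style: it characterises saturation relative to $U$ as the fixed-point identity $\sat{X} = X$ and reduces the lemma to the set-theoretic fact $U_1 \cap U_2 = U_1$, which makes the proof a one-line computation at the cost of leaning on the heavier $\satf$ machinery; the paper's elementary argument keeps the reasoning at the level of individual atoms and needs no prior propositions, which is why it can be (and is) placed among the basic auxiliary lemmas.
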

\begin{proof}
	Suppose $X$ is saturated relative to $U_1$ and $I \in \foint$ is such that
	$\restr[U_2]{I} \in \restr[U_2]{X}$. We need to prove that $I$ belongs to
	$X$. We know $X$ contains some $J$ such that $\restr[U_2]{I} =
	\restr[U_2]{J}$. In other words, for every ground atom $p$ with $\preds{p}
	\subseteq U_2$ the following equivalence holds:
	\[
		p \in I \mlequiv p \in J \enspace.
	\]
	Since $U_1$ is a subset of $U_2$, every ground atom $p$ with $\preds{p}
	\subseteq U_1$ also satisfies the above equivalence. Thus, $\restr[U_1]{I} =
	\restr[U_1]{J}$ and we conclude that $\restr[U_1]{I}$ belongs to
	$\restr[U_1]{X}$. Since $X$ is saturated relative to $U_1$, $I$ must belong
	to $X$.
\end{proof}

\begin{lemma} \label{lemma:pmasplitting:restr sat disjoint}
	Let $U, V$ be disjoint sets of predicate symbols and $M \in \mint$ be
	nonempty. Then
	\[
		\restr[V]{\sat[U]{M}} = \restr[V]{\foint} \enspace.
	\]
\end{lemma}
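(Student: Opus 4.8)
The plan is to prove the set equality $\restr[V]{\sat[U]{M}} = \restr[V]{\foint}$ by establishing the two inclusions separately, with the nontrivial direction relying on an explicit gluing construction that exploits the disjointness of $U$ and $V$.

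The inclusion $\restr[V]{\sat[U]{M}} \subseteq \restr[V]{\foint}$ is immediate, since $\sat[U]{M} \subseteq \foint$ holds by the definition of $\satf$, and hence Proposition \ref{prop:restr:basic}\eqref{eq:prop:restr:basic:1} yields $\restr[V]{\sat[U]{M}} \subseteq \restr[V]{\foint}$ at once.

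For the converse inclusion, I would take an arbitrary element of $\restr[V]{\foint}$, which has the form $\restr[V]{I}$ for some $I \in \foint$, and produce a witness $J \in \sat[U]{M}$ with $\restr[V]{J} = \restr[V]{I}$. This is where the nonemptiness of $M$ enters: I fix some $K \in M$ and set
\[
	J = \Set{p \in K | \preds{p} \subseteq U} \cup \Set{p \in I | \preds{p} \subseteq V} \enspace.
\]
The core of the argument is then the atom-by-atom verification that $\restr[U]{J} = \restr[U]{K}$ and $\restr[V]{J} = \restr[V]{I}$. Both identities follow from the fact that, for a ground atom $p$, the set $\preds{p}$ is a singleton, so the disjointness $U \cap V = \emptyset$ prevents $\preds{p} \subseteq U$ and $\preds{p} \subseteq V$ from holding simultaneously; consequently the two halves of $J$ never contribute to the same restriction. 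From $\restr[U]{J} = \restr[U]{K}$ and $K \in M$ we obtain $\restr[U]{J} \in \restr[U]{M}$, that is, $J \in \sat[U]{M}$, and therefore $\restr[V]{I} = \restr[V]{J} \in \restr[V]{\sat[U]{M}}$.

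As this reduces to a direct verification, there is no serious obstacle; the only delicate point is the gluing construction, where one must confirm that forcing the $U$-part of $J$ to match $K$ (so that $J$ lands inside $\sat[U]{M}$) leaves the $V$-part entirely free to match $I$. It is worth noting that Proposition \ref{prop:sigma:restr} cannot be invoked here, since it addresses the nested case $U_1 \subseteq U_2$, whereas the present lemma concerns disjoint $U$ and $V$.
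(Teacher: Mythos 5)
Your proof is correct and follows essentially the same route as the paper's: the trivial inclusion from $\sat[U]{M} \subseteq \foint$, and for the converse a gluing construction that combines the $U$-part of some $K \in M$ with the $V$-part of an arbitrary interpretation, using disjointness of $U$ and $V$ to check that each restriction picks out exactly one half. The paper's witness $I'' = I \cup \restr[U]{I'}$ is the same object as your $J$, just written with $I$ already taken from $\restr[V]{\foint}$.
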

\begin{proof}
	Since $\sat[U]{M}$ is a subset of $\foint$, left to right inclusion holds.
	Suppose $I$ some interpretation from $\restr[V]{\foint}$, i.e. $I$ contains
	only atoms with predicate symbols from $V$. Furthermore, take some $I' \in
	M$ and put $I'' = I \cup \restr{I'}$. Since $U$ is disjoint from $V$,
	$\restr{I''} = \restr{I'}$ and $\restr[V]{I''} = I$. This implies that $I''$
	belongs to $\sat{M}$, so $I$ belongs to $\restr[V]{\sat{M}}$.
\end{proof}

\subsection{Semi-saturated MKNF Interpretations}

There is also another class of MKNF interpretations for which a slightly
modified version of Proposition \ref{prop:restr:basic} holds. We introduce it
here and then show another result using the newly introduced notion.

\begin{definition}[Semi-saturated MKNF Interpretation]
	Let $U$ be a set of predicate symbols and $M \in \mint$. We say $M$ is
	\emph{semi-saturated relative to $U$} if for every interpretation $I \in
	\foint$ the following holds:
	\[
		\text{If } \restr{I} \in \restr{M}
			\text{ and } \restrc{I} \in \restrc{M}
			\text{, then } I \in M.
	\]
\end{definition}

\begin{proposition} \label{prop:sat_semisat}
	Let $U$ be a set of predicate symbols and $M, N$ be MKNF interpretations
	such that $M$ is saturated relative to $U$ and $N$ is saturated relative to
	$\lpre \setminus U$. Then $M \cap N$ is an MKNF interpretation that is
	semi-saturated relative to $U$, it coincides with $M$ on $U$ and with $N$ on
	$\lpre \setminus U$.
\end{proposition}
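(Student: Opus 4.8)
The plan is to isolate a single ``gluing'' construction and then derive all four assertions --- nonemptiness of $M \cap N$, semi-saturation relative to $U$, and the two coincidence claims --- from it together with the monotonicity of restriction recorded in Proposition~\ref{prop:restr:basic}.

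First I would establish the central fact: for any $I_M \in M$ and any $I_N \in N$, the interpretation $I' = \restr{I_M} \cup \restrc{I_N}$ belongs to $M \cap N$ and satisfies $\restr{I'} = \restr{I_M}$ and $\restrc{I'} = \restrc{I_N}$. The two equalities hold because every ground atom carries a single predicate symbol, lying either in $U$ or in $\lpre \setminus U$, so the $U$-part and the $(\lpre \setminus U)$-part of $I'$ do not interfere under restriction. Membership $I' \in M$ then follows from $\restr{I'} = \restr{I_M} \in \restr{M}$ together with the saturation of $M$ relative to $U$; symmetrically, $I' \in N$ follows from $\restrc{I'} = \restrc{I_N} \in \restrc{N}$ and the saturation of $N$ relative to $\lpre \setminus U$.

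Since $M$ and $N$ are MKNF interpretations they are nonempty, so choosing any $I_M \in M$ and $I_N \in N$ and applying the gluing fact produces an element of $M \cap N$; hence $M \cap N$ is nonempty and is an MKNF interpretation. For the coincidence claims, the inclusions $\restr{M \cap N} \subseteq \restr{M}$ and $\restrc{M \cap N} \subseteq \restrc{N}$ are immediate from $M \cap N \subseteq M$, $M \cap N \subseteq N$ and Proposition~\ref{prop:restr:basic}. For the reverse inclusion $\restr{M} \subseteq \restr{M \cap N}$ I would take $J \in \restr{M}$, write $J = \restr{I_M}$ for some $I_M \in M$, fix any $I_N \in N$, and apply the gluing fact to obtain $I' \in M \cap N$ with $\restr{I'} = J$; the inclusion $\restrc{N} \subseteq \restrc{M \cap N}$ is entirely symmetric.

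Finally, semi-saturation relative to $U$ follows directly from the two saturation hypotheses: given $I$ with $\restr{I} \in \restr{M \cap N}$ and $\restrc{I} \in \restrc{M \cap N}$, monotonicity yields $\restr{I} \in \restr{M}$ and $\restrc{I} \in \restrc{N}$, whence $I \in M$ and $I \in N$ by saturation of $M$ and $N$ respectively, so $I \in M \cap N$. I do not expect a genuine difficulty in this proposition; the only step requiring care is the reverse inclusion of each coincidence claim, where a prescribed $U$-part and a prescribed $(\lpre \setminus U)$-part must be realised \emph{simultaneously} by one interpretation. This is precisely what the gluing construction delivers, and it is exactly the point at which both saturation hypotheses are used in an essential way.
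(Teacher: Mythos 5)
Your proof is correct and follows essentially the same route as the paper's: the same gluing construction $\restr{I_M} \cup \restrc{I_N}$, with membership in $M \cap N$ derived from the two saturation hypotheses, the coincidence claims read off from the gluing fact, and semi-saturation obtained by pushing $\restr{I}$ into $\restr{M}$ and $\restrc{I}$ into $\restrc{N}$ and applying saturation on each side. No gaps.
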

\begin{proof}
	We will first prove the following claim: For evey $I \in M$ and every $J \in
	N$ there exists some $K \in M \cap N$ such that $\restr{K} = \restr{I}$ and
	$\restrc{K} = \restrc{J}$. The reason
	this holds is that the sets $U$ and $(\lpre \setminus U)$ are disjoint.
	Let's take some $I \in M$ and some $J \in N$ and let
	\[
		K = \restr{I} \cup \restrc{J}
			= \Set{p \in I | \preds{p} \subseteq U}
				\cup \Set{p \in J | \preds{p} \subseteq \lpre \setminus U} \enspace.
	\]
	The following can now be derived:
	\begin{align*}
	\restr{K}
		&= \Set{p \in K | \preds{p} \subseteq U} \\
		&= \Set{p \in I | \preds{p} \subseteq U}
			\cup \Set{p \in J | \preds{p} \subseteq U \cap (\lpre \setminus U)} \\
		&= \Set{p \in I | \preds{p} \subseteq U}
		= \restr{I} \\
	\restrc{K}
		&= \Set{p \in K | \preds{p} \subseteq \lpre \setminus U} \\
		&= \Set{p \in I | \preds{p} \subseteq U \cap (\lpre \setminus U)}
			\cup \Set{p \in J | \preds{p} \subseteq \lpre \setminus U} \\
		&= \Set{p \in J | \preds{p} \subseteq \lpre \setminus U}
		= \restrc{J} \enspace.
	\end{align*}
	Consequently, $K$ must belong to $M$ and also to $N$ (because they are
	saturated relative to $U$ and $\lpre \setminus U$, respectively), so $K$
	also belongs to $M \cap N$.

	From the above it easily follows that $M \cap N$ is nonempty and that both
	$\restr{(M \cap N)} = \restr{M}$ and $\restrc{(M \cap N)}
	= \restrc{N}$ hold as well.

	It remains to show that $M \cap N$ is semi-saturated relative to $U$. Let $I
	\in \foint$ be such that $\restr{I}$ belongs to $\restr{(M \cap N)}$ and
	$\restrc{I}$ belongs to $\restrc{(M \cap
	N)}$. We need to prove that $I$ belongs to $M \cap N$. We know that
	$\restr{(M \cap N)}$ is a subset of $\restr{M}$ and since $M$ is saturated
	relative to $U$, we conclude that $I$ belongs to $M$. Similarly,
	$\restrc{(M \cap N)}$ is a subset of $\restr[\lpre
	\setminus U]{N}$ and since $N$ is saturated relative to $\lpre \setminus U$,
	we conclude that $I$ belongs to $N$. Hence, $I$ belongs to $M \cap N$.
\end{proof}

\begin{proposition} \label{prop:restr:semisat:1}
	Let $U$ be a set of predicate symbols and $M, N \in \mint$ be such that $M$
	is semi-saturated relative to $U$. Then the following equivalences hold:
	\begin{enumerate}
		\renewcommand{\labelenumi}{(\arabic{enumi})}
		\item $M = N$ if and only if $M \subseteq N$ and $\restr{M} \supseteq
			\restr{N}$ and $\restrc{M} \supseteq \restr[\lpre
			\setminus U]{N}$. \label{eq:prop:restr:semisat:1:1}
		\item $M \subsetneq N$ if and only if $M \subseteq N$ and either
			$\restr{M} \subsetneq \restr{N}$ or $\restrc{M}
			\subsetneq \restrc{N}$.
			\label{eq:prop:restr:semisat:1:2}
	\end{enumerate}
\end{proposition}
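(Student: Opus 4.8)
The plan is to reduce both equivalences to the elementary facts about restriction collected in Proposition~\ref{prop:restr:basic}, invoking the semi-saturation of $M$ exactly once, in the converse direction of part~\eqref{eq:prop:restr:semisat:1:1}. The overall structure closely mirrors the proof of Proposition~\ref{prop:restr:sat:1} for saturated interpretations, with the single index set $U$ replaced by the pair $U$ and $\lpre \setminus U$.

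For the forward direction of \eqref{eq:prop:restr:semisat:1:1}, if $M = N$ then $M \subseteq N$ is immediate, and Proposition~\ref{prop:restr:basic}\eqref{eq:prop:restr:basic:2}, applied once with $U$ and once with $\lpre \setminus U$, gives $\restr{M} = \restr{N}$ and $\restrc{M} = \restr[\lpre \setminus U]{N}$, whence the two superset conditions follow a fortiori. The genuinely interesting step is the converse: assuming $M \subseteq N$, $\restr{M} \supseteq \restr{N}$ and $\restrc{M} \supseteq \restr[\lpre \setminus U]{N}$, I would prove $N \subseteq M$. Picking any $I \in N$, I get $\restr{I} \in \restr{N} \subseteq \restr{M}$ and $\restrc{I} \in \restr[\lpre \setminus U]{N} \subseteq \restrc{M}$, so by the defining property of semi-saturation of $M$ relative to $U$ we conclude $I \in M$. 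This yields $N \subseteq M$, and together with $M \subseteq N$ we obtain $M = N$.

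For part~\eqref{eq:prop:restr:semisat:1:2}, the converse direction needs no semi-saturation: if $M \subseteq N$ and $\restr{M} \subsetneq \restr{N}$, then Proposition~\ref{prop:restr:basic}\eqref{eq:prop:restr:basic:3} applied with $U$ gives $M \subsetneq N$, and if instead $\restrc{M} \subsetneq \restrc{N}$, the same proposition applied with $\lpre \setminus U$ gives $M \subsetneq N$. For the forward direction I would argue by contradiction: assuming $M \subsetneq N$ but that neither restriction is a proper subset, Proposition~\ref{prop:restr:basic}\eqref{eq:prop:restr:basic:1} (used with $U$ and with $\lpre \setminus U$) forces $\restr{M} = \restr{N}$ and $\restrc{M} = \restr[\lpre \setminus U]{N}$, so the hypotheses of the already-established converse of \eqref{eq:prop:restr:semisat:1:1} are met, giving $M = N$ and contradicting $M \subsetneq N$. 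I do not anticipate a serious obstacle; the only subtlety worth flagging is that Proposition~\ref{prop:restr:basic} is stated for an arbitrary set of predicate symbols and hence may be freely instantiated both at $U$ and at its complement $\lpre \setminus U$, which is exactly what makes the semi-saturation argument go through where the weaker saturation hypothesis would not suffice.
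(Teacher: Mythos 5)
Your proof is correct and takes essentially the same route as the paper's: part~\eqref{eq:prop:restr:semisat:1:1} is argued identically (forward via Proposition~\ref{prop:restr:basic}, converse by applying semi-saturation of $M$ to each $I \in N$), and the converse of part~\eqref{eq:prop:restr:semisat:1:2} is the same appeal to Proposition~\ref{prop:restr:basic}\eqref{eq:prop:restr:basic:3} at $U$ and at $\lpre \setminus U$. The only deviation is the forward direction of~\eqref{eq:prop:restr:semisat:1:2}, where the paper applies the contrapositive of semi-saturation directly to a witness $I \in N \setminus M$ to identify which of $\restr{M} \subsetneq \restr{N}$ or $\restrc{M} \subsetneq \restrc{N}$ holds, whereas you obtain the same disjunction by contradiction from part~\eqref{eq:prop:restr:semisat:1:1}; both arguments are sound and of equal depth.
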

\begin{proof*}
	\vspace{-1.5em}
	\begin{enumerate}
		\renewcommand{\labelenumi}{(\arabic{enumi})}
		\item The direct implication follows from Proposition
			\ref{prop:restr:basic}\eqref{eq:prop:restr:basic:1}. We will prove the
			converse implication. Suppose $\restr{M} \supseteq \restr{N}$,
			$\restrc{M} \supseteq \restrc{N}$
			and $I \in N$. We immediately obtain $\restr{I} \in \restr{M}$ and
			$\restrc{I} \in \restrc{M}$ and
			since $M$ is semi-saturated relative to $U$, we can conclude that $I \in
			M$.
		\item For the direct implication suppose that $M \subsetneq N$. Then there
			is some $I \in N$ such that $I \notin M$. Since $M$ is semi-saturated
			relative to $U$, we obtain that $\restr{I} \notin \restr{M}$ or
			$\restrc{I} \notin \restrc{M}$.
			Consequently, either $\restr{M}$ is a proper subset of $\restr{N}$ or
			$\restrc{M}$ is a proper subset of $\restr[\lpre
			\setminus U]{N}$. The converse implication is a cosequence of
			Proposition \ref{prop:restr:basic}\eqref{eq:prop:restr:basic:3}.
			\qedhere
	\end{enumerate}
\end{proof*}

\begin{proposition} \label{prop:restr:semisat:2}
	Let $U$ be a set of predicate symbols and $M, N \in \mint$ be such that $N$
	is semi-saturated relative to $U$. Then:
	\begin{enumerate}
		\renewcommand{\labelenumi}{(\arabic{enumi})}
		\item $M \subseteq N$ if and only if $\restr{M} \subseteq \restr{N}$ and
			$\restrc{M} \subseteq \restrc{N}$.
			\label{eq:prop:restr:semisat:2:1}
		\item If $\restr{M} \subseteq \restr{N}$ and $\restrc{M}
			\subseteq \restrc{N}$ and at least one of the
			inclusions is proper, then $M \subsetneq N$.
			\label{eq:prop:restr:semisat:2:2}
	\end{enumerate}
\end{proposition}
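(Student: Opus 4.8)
The plan is to mirror the proof of Proposition~\ref{prop:restr:sat:2}, but now to split the membership test for $N$ into two separate restriction conditions, one relative to $U$ and one relative to its complement $\lpre \setminus U$, so that they match exactly the two antecedents appearing in the definition of a semi-saturated interpretation.

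For part~\eqref{eq:prop:restr:semisat:2:1}, the forward implication is immediate: if $M \subseteq N$, then Proposition~\ref{prop:restr:basic}\eqref{eq:prop:restr:basic:1}, applied once with the predicate set $U$ and once with the predicate set $\lpre \setminus U$, yields $\restr{M} \subseteq \restr{N}$ and $\restrc{M} \subseteq \restrc{N}$ respectively. The converse is where the semi-saturation of $N$ is used. Assuming both $\restr{M} \subseteq \restr{N}$ and $\restrc{M} \subseteq \restrc{N}$, I would take an arbitrary $I \in M$ and observe that $\restr{I} \in \restr{M} \subseteq \restr{N}$ and simultaneously $\restrc{I} \in \restrc{M} \subseteq \restrc{N}$. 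These are precisely the two hypotheses in the definition of a semi-saturated MKNF interpretation, so $I \in N$, and since $I$ was chosen arbitrarily, $M \subseteq N$.

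For part~\eqref{eq:prop:restr:semisat:2:2}, I would first invoke part~\eqref{eq:prop:restr:semisat:2:1} to obtain $M \subseteq N$ from the two non-strict inclusions. Then, whichever of the two inclusions is proper, I apply Proposition~\ref{prop:restr:basic}\eqref{eq:prop:restr:basic:3} with the corresponding predicate set ($U$ or $\lpre \setminus U$) to upgrade $M \subseteq N$ to $M \subsetneq N$.

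The argument is essentially routine, so there is no single deep obstacle. The one point requiring care is that the auxiliary Proposition~\ref{prop:restr:basic} is stated for a generic set of predicate symbols, and hence must be instantiated \emph{separately} at $U$ and at $\lpre \setminus U$ rather than at $U$ alone. The genuinely load-bearing step is the converse direction of part~\eqref{eq:prop:restr:semisat:2:1}: unlike the fully saturated case of Proposition~\ref{prop:restr:sat:2}, where a single restriction inclusion already recovers membership in $N$, here both the $U$-part and the complement-part of $I$ must be controlled at once before the semi-saturation condition can be discharged.
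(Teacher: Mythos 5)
Your proposal is correct and follows essentially the same route as the paper's own proof: the forward direction of part~(1) via Proposition~\ref{prop:restr:basic}\eqref{eq:prop:restr:basic:1} instantiated at both $U$ and $\lpre \setminus U$, the converse by feeding both restriction inclusions into the definition of semi-saturation to recover $I \in N$, and part~(2) by combining part~(1) with Proposition~\ref{prop:restr:basic}\eqref{eq:prop:restr:basic:3}. No gaps; the point you flag about needing both restrictions simultaneously is exactly the load-bearing step in the paper's argument as well.
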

\begin{proof*}
	\vspace{-1.5em}
	\begin{enumerate}
		\renewcommand{\labelenumi}{(\arabic{enumi})}
		\item The direct implication follows from Proposition
			\ref{prop:restr:basic}\eqref{eq:prop:restr:basic:1}. We will prove the
			converse implication. Suppose $\restr{M} \subseteq \restr{N}$,
			$\restrc{M} \subseteq \restrc{N}$
			and $I \in M$. We immediately obtain $\restr{I} \in \restr{M}$, hence
			also $\restr{I} \in \restr{N}$. Similarly, $\restrc{I}
			\in \restrc{M}$ and, consequently, $\restr[\lpre
			\setminus U]{I} \in \restrc{N}$. Since $N$ is
			semi-saturated relative to $U$, we can conclude that $I \in N$.
			Consequently, $M \subseteq N$.
		\item This is a consequence of \eqref{eq:prop:restr:semisat:2:1} and
			Proposition \ref{prop:restr:basic}\eqref{eq:prop:restr:basic:3}.
			\qedhere
	\end{enumerate}
\end{proof*}

\begin{corollary} \label{cor:restr:semisat}
	Let $U$ be a set of predicate symbols, $M, N$ be MKNF interpretations
	semi-saturated relative to $U$. Then the following equivalences hold:
	\begin{enumerate}
		\renewcommand{\labelenumi}{(\arabic{enumi})}
		\item $M \subseteq N$ if and only if $\restr{M} \subseteq \restr{N}$ and
			$\restrc{M} \subseteq \restrc{N}$.
			\label{eq:cor:restr:semisat:1}
		\item $M = N$ if and only if $\restr{M} = \restr{N}$ and $\restr[\lpre
			\setminus U]{M} = \restrc{N}$.
			\label{eq:cor:restr:semisat:2}
		\item $M \subsetneq N$ if and only if $\restr{M} \subseteq \restr{N}$ and
			$\restrc{M} \subseteq \restrc{N}$
			and at least one of the inclusions is proper.
			\label{eq:prop:restr:semisat:2:3}
	\end{enumerate}
\end{corollary}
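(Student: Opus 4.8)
The plan is to derive all three equivalences from Proposition~\ref{prop:restr:semisat:2}, mirroring exactly the argument used for the fully saturated case in Corollary~\ref{cor:restr:sat}. The decisive observation is that the hypothesis here supplies semi-saturation of \emph{both} $M$ and $N$ relative to $U$, whereas Proposition~\ref{prop:restr:semisat:2} assumes it of only one interpretation. This two-sided hypothesis is precisely what licenses applying that proposition with the roles of $M$ and $N$ in either order, which is the engine behind items~(2) and~(3).

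First I would establish item~(1). Since $N$ is semi-saturated relative to $U$, the equivalence $M \subseteq N$ if and only if $\restr{M} \subseteq \restr{N}$ and $\restrc{M} \subseteq \restrc{N}$ is literally the statement of Proposition~\ref{prop:restr:semisat:2}\eqref{eq:prop:restr:semisat:2:1}, so nothing further is needed. For item~(2), I would expand $M = N$ into the conjunction of $M \subseteq N$ and $N \subseteq M$. Applying item~(1) to $M \subseteq N$ (using semi-saturation of $N$) and applying it again to $N \subseteq M$ with the roles swapped (using semi-saturation of $M$) replaces each inclusion by the corresponding pair of restriction inclusions. Taking the conjunction and pairing matching inclusions then collapses them into the two equalities $\restr{M} = \restr{N}$ and $\restrc{M} = \restrc{N}$, as required.

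Finally, for item~(3) I would write $M \subsetneq N$ as $M \subseteq N$ together with $M \neq N$. Item~(1) rewrites $M \subseteq N$ as the pair of inclusions $\restr{M} \subseteq \restr{N}$ and $\restrc{M} \subseteq \restrc{N}$, while item~(2), read contrapositively, shows that under these inclusions $M \neq N$ holds exactly when at least one of them is strict. Combining the two yields the stated equivalence.

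I expect no genuine obstacle: once item~(1) is in hand, the remaining steps are purely set-theoretic bookkeeping. The only point demanding care is the symmetric invocation of Proposition~\ref{prop:restr:semisat:2} — applying it with $M$ and $N$ interchanged — and flagging explicitly that this is justified only because the corollary, unlike the proposition, assumes both interpretations are semi-saturated relative to $U$.
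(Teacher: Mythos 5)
Your proposal is correct and follows essentially the same route as the paper: item (1) is exactly Proposition~\ref{prop:restr:semisat:2}\eqref{eq:prop:restr:semisat:2:1} (which the paper re-proves inline via semi-saturation of $N$), and items (2) and (3) are obtained, as in the paper, by symmetric applications of the earlier parts, which is legitimate precisely because both $M$ and $N$ are assumed semi-saturated. Your explicit flagging of that two-sided hypothesis matches the paper's implicit use of it.
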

\begin{proof*}
	\vspace{-1.5em}
	\begin{enumerate}
		\renewcommand{\labelenumi}{(\arabic{enumi})}
		\item The direct implication follows directly from definitions of
			$\restr{M}$, $\restr{N}$, $\restrc{M}$ and
			$\restrc{N}$. For the converse implication let $I \in
			M$. Then $\restr{I} \in \restr{M} \subseteq \restr{N}$ and $\restr[\lpre
			\setminus U]{I} \in \restrc{M} \subseteq \restr[\lpre
			\setminus U]{N}$. Since $N$ is semi-saturated, we conclude $I \in N$.
		\item This is a consequence of \eqref{eq:prop:restr:semisat:2:1}.
		\item This is a consequence of \eqref{eq:prop:restr:semisat:2:1} and
			\eqref{eq:prop:restr:semisat:2:2}. \qedhere
	\end{enumerate}
\end{proof*}

\begin{proposition} \label{prop:semisat_exists}
	Let $U$ be a set of predicate symbols and $M_1, M_2$ be MKNF
	interpretations. Then there exists the greatest MKNF interpretation $N$ that
	coincides with $M_1$ on $U$ and with $M_2$ on $\lpre \setminus U$.
	Furthermore, $N$ is semi-saturated relative to $U$ and $M_1 \cap M_2
	\subseteq N$.
\end{proposition}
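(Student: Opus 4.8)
The plan is to construct $N$ explicitly as the intersection of two saturations and then read off every asserted property from the results already established for the operator $\sigma$ and for semi-saturated interpretations. Concretely, I would set
\[
	N = \sat[U]{M_1} \cap \sat[\lpre \setminus U]{M_2} \enspace,
\]
so that $N$ consists of exactly those $I \in \foint$ with $\restr{I} \in \restr{M_1}$ and $\restrc{I} \in \restrc{M_2}$. This is the natural ``semi-saturated'' analogue of Definition~\ref{def:sigma}, splicing the $U$-behaviour of $M_1$ together with the $(\lpre \setminus U)$-behaviour of $M_2$.

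First I would record the facts about the two factors via Proposition~\ref{prop:sigma}. Applied to $M_1$ and $U$, it gives that $\sat[U]{M_1}$ is saturated relative to $U$, coincides with $M_1$ on $U$, and contains $M_1$. Applied to $M_2$ with $\lpre \setminus U$ taking the role of the set of predicate symbols, it gives that $\sat[\lpre \setminus U]{M_2}$ is saturated relative to $\lpre \setminus U$, coincides with $M_2$ on $\lpre \setminus U$, and contains $M_2$. The bulk of the properties of $N$ then follow from a single application of Proposition~\ref{prop:sat_semisat}, with its ``$M$'' instantiated to $\sat[U]{M_1}$ and its ``$N$'' to $\sat[\lpre \setminus U]{M_2}$. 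Since the former is saturated relative to $U$ and the latter relative to $\lpre \setminus U$, that proposition yields at once that $N$ is a (nonempty) MKNF interpretation, that it is semi-saturated relative to $U$, and that it coincides with $\sat[U]{M_1}$ on $U$ and with $\sat[\lpre \setminus U]{M_2}$ on $\lpre \setminus U$. Composing these coincidences with those from the previous step (coincidence on a fixed set is transitive, being equality of restrictions) shows $N$ coincides with $M_1$ on $U$ and with $M_2$ on $\lpre \setminus U$. The inclusion $M_1 \cap M_2 \subseteq N$ is then immediate from $M_1 \subseteq \sat[U]{M_1}$ and $M_2 \subseteq \sat[\lpre \setminus U]{M_2}$.

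It remains to verify that $N$ is the \emph{greatest} such interpretation, which is where semi-saturatedness pays off. Let $N'$ be any MKNF interpretation coinciding with $M_1$ on $U$ and with $M_2$ on $\lpre \setminus U$; then $\restr{N'} = \restr{M_1} = \restr{N}$ and $\restrc{N'} = \restrc{M_2} = \restrc{N}$. Applying Proposition~\ref{prop:restr:semisat:2}\eqref{eq:prop:restr:semisat:2:1} with the semi-saturated interpretation $N$ in the role of its ``$N$'' and $N'$ in the role of its ``$M$'', the two restriction inclusions give $N' \subseteq N$, as required.

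I do not expect a deep obstacle here: the construction and the semi-saturatedness are handed to us almost verbatim by Propositions~\ref{prop:sigma} and~\ref{prop:sat_semisat}. The only step genuinely needing care is the bookkeeping of the direction in which each earlier result is invoked -- in particular reading Proposition~\ref{prop:sigma} once with $\lpre \setminus U$ as the set of predicate symbols, and instantiating the roles of ``$M$'' and ``$N$'' in Proposition~\ref{prop:restr:semisat:2} correctly so that the semi-saturated interpretation is on the larger side of the inclusion.
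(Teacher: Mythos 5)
Your proposal is correct and follows essentially the same route as the paper's proof, which likewise sets $N = \sat{M_1} \cap \satc{M_2}$ and derives every claim from Propositions \ref{prop:sigma} and \ref{prop:sat_semisat}. The only cosmetic difference is how maximality is closed: you invoke Proposition \ref{prop:restr:semisat:2} using the semi-saturatedness of $N$, while the paper's terse proof leaves this implicit in the ``greatest interpretation coinciding on $U$'' characterisation from Proposition \ref{prop:sigma} (any candidate $N'$ is contained in each factor, hence in their intersection); both finishes are sound.
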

\begin{proof}
	Let $N_1 = \sat{M_1}$, $N_2 = \satc{M_2}$ and $N =
	N_1 \cap N_2$. The claim now follows by Propositions \ref{prop:sigma} and
	\ref{prop:sat_semisat}.
\end{proof}

\subsection{Sequence-saturated MKNF Interpretations}

\begin{definition}[Saturation Sequence, Difference Sequence]
	A \emph{saturation sequence} is a sequence $\an{S_\alpha}_{\alpha < \mu}$ of
	pairwise disjoint sets of predicate symbols such that $\bigcup_{\alpha <
	\mu} S_\alpha = \lpre$.
\end{definition}

\begin{definition}[Sequence Saturated MKNF Interpretation]
	Let $\an{S_\alpha}_{\alpha < \mu}$ be  a saturation sequence and $M \in
	\mint$. We say $M$ is \emph{sequence-saturated relative to $S$} if for every
	interpretation $I \in \foint$ the following holds:
	\[
		\text{If } \restr[S_\alpha]{I} \in \restr[S_\alpha]{M}
			\text{ for every ordinal } \alpha < \mu \text{, then } I \in M.
	\]
\end{definition}

\begin{proposition} \label{prop:seqsat:equivalence}
	Let $S = \an{S_\alpha}_{\alpha < \mu}$ be a saturation sequence and $M \in
	\mint$. Then the following conditions are equivalent:
	\begin{enumerate}
		\item $M$ is sequence-saturated relative to $S$.
		\item $M = \bigcap_{\alpha < \mu} \sat[S_\alpha]{M}$.
		\item $M = \bigcap_{\alpha < \mu} X_\alpha$ and for any $\alpha < \mu$,
			$X_\alpha$ is saturated relative to $S_\alpha$.
	\end{enumerate}
\end{proposition}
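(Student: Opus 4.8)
The plan is to prove the cycle of implications $1 \Rightarrow 2 \Rightarrow 3 \Rightarrow 1$, though in fact conditions 1 and 2 turn out to be so close that I would establish their equivalence outright. First I would observe that, directly from Definition~\ref{def:sigma},
\[
	\bigcap_{\alpha < \mu} \sat[S_\alpha]{M}
		= \Set{ I \in \foint |
			\restr[S_\alpha]{I} \in \restr[S_\alpha]{M}
			\text{ for every } \alpha < \mu } \enspace,
\]
which is exactly the set of interpretations satisfying the antecedent in the definition of sequence-saturation. Since Proposition~\ref{prop:sigma} guarantees $M \subseteq \sat[S_\alpha]{M}$ for each $\alpha$, and hence $M \subseteq \bigcap_{\alpha < \mu} \sat[S_\alpha]{M}$, the reverse inclusion $\bigcap_{\alpha < \mu} \sat[S_\alpha]{M} \subseteq M$ holds if and only if every $I$ satisfying that antecedent lies in $M$. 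This is precisely the statement that $M$ is sequence-saturated relative to $S$, so conditions 1 and 2 are equivalent.

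For $2 \Rightarrow 3$, I would simply take $X_\alpha = \sat[S_\alpha]{M}$. Then $M = \bigcap_{\alpha < \mu} X_\alpha$ by assumption, and Proposition~\ref{prop:sigma} tells us that each $\sat[S_\alpha]{M}$ is saturated relative to $S_\alpha$, so condition 3 holds with this choice of witnesses.

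The remaining step $3 \Rightarrow 1$ is where the small amount of genuine work lies. Assuming $M = \bigcap_{\alpha < \mu} X_\alpha$ with each $X_\alpha$ saturated relative to $S_\alpha$, I would take any $I \in \foint$ with $\restr[S_\alpha]{I} \in \restr[S_\alpha]{M}$ for every $\alpha < \mu$ and show $I \in M$, i.e.~$I \in X_\alpha$ for every $\alpha$. Fixing $\alpha$, from $M \subseteq X_\alpha$ I obtain $\restr[S_\alpha]{M} \subseteq \restr[S_\alpha]{X_\alpha}$ by Proposition~\ref{prop:restr:basic}\eqref{eq:prop:restr:basic:1}, whence $\restr[S_\alpha]{I} \in \restr[S_\alpha]{X_\alpha}$. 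Since $X_\alpha$ is saturated relative to $S_\alpha$, this forces $I \in X_\alpha$, and as $\alpha$ was arbitrary, $I \in \bigcap_{\alpha < \mu} X_\alpha = M$.

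I expect no real obstacle here: the argument is a routine unwinding of definitions together with the characterisation of $\satf$ from Proposition~\ref{prop:sigma} and the monotonicity of restriction from Proposition~\ref{prop:restr:basic}. The only point demanding care is the direction of inclusions in the last step — one must pass from $M \subseteq X_\alpha$ to the corresponding inclusion of restrictions and then invoke saturation of $X_\alpha$ itself (rather than of $M$) to recover membership of $I$. It is also worth noting that neither the pairwise disjointness of the $S_\alpha$ nor the covering condition $\bigcup_{\alpha < \mu} S_\alpha = \lpre$ is actually used in any of the three implications.
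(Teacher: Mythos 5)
Your proof is correct and follows essentially the same route as the paper's: the cycle $1 \Rightarrow 2 \Rightarrow 3 \Rightarrow 1$ with the witnesses $X_\alpha = \sat[S_\alpha]{M}$ in the second step, Proposition~\ref{prop:sigma} supplying both $M \subseteq \sat[S_\alpha]{M}$ and the saturation of $\sat[S_\alpha]{M}$, and the identical argument for $3 \Rightarrow 1$ (restriction monotonicity plus saturation of $X_\alpha$ itself). Your packaging of $1 \Leftrightarrow 2$ as a single equivalence read off from Definition~\ref{def:sigma}, and your observation that neither disjointness nor the covering condition on $S$ is needed, are harmless streamlinings of what the paper does in two separate inclusions.
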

\begin{proof}
	We first prove that 1. implies 2. Suppose $M$ is sequence-saturated relative
	to $S$. It follows from Proposition \ref{prop:sigma} that $M$ is a subset
	of $\sat[S_\alpha]{M}$ for any $\alpha < \mu$. Thus, $M$ is a subset of
	$\bigcap_{\alpha < \mu} \sat[S_\alpha]{M}$. To show that the converse
	inclusion holds as well, take some Herbrand interpretation $I$ from
	$\bigcap_{\alpha < \mu} \sat[S_\alpha]{M}$. For any $\alpha < \mu$,
	Proposition \ref{prop:sigma} implies that $\sat[S_\alpha]{M}$ coincides with
	$M$ on $S_\alpha$. Thus, since $I$ belongs to $\sat[S_\alpha]{M}$,
	$\restr[S_\alpha]{I}$ belongs to $\restr[S_\alpha]{M}$. Since $M$ is
	sequence-saturated relative to $S$, this implies that $I$ belongs to $M$.

	The implication from 2. to 3. is immediate by putting $X_\alpha =
	\sat[S_\alpha]{M}$ and observing that, by Proposition \ref{prop:sigma},
	$X_\alpha$ is saturated relative to $S_\alpha$.

	Finally, suppose that 3. holds and $I$ is a Herbrand interpretation such
	that for any $\alpha < \mu$, $\restr[S_\alpha]{I}$ belongs to
	\[
		\restr[S_\alpha]{M}
			= \restr[S_\alpha]{\left( \bigcap_{\beta < \mu} X_\beta \right)}
			\subseteq \restr[S_\alpha]{X_\alpha}
	\]
	Since $X_\alpha$ is saturated relative to $S_\alpha$, we conclude that $I$
	belongs to $X_\alpha$. The choice of $\alpha < \mu$ was arbitrary so we have
	proven that $I$ belongs to $M$.
\end{proof}

\begin{proposition} \label{prop:seqsat:int sigma}
	Let $S = \an{S_\alpha}_{\alpha < \mu}$ be a saturation sequence,
	$\an{X_\alpha}_{\alpha < \mu}$ be a sequence of MKNF interpretations such
	that for all $\alpha < \mu$, $X_\alpha$ is saturated relative to $S_\alpha$,
	and $M = \bigcap_{\alpha < \mu} X_\alpha$. Then $M$ is nonempty and for all
	$\alpha < \mu$, $X_\alpha = \sat[S_\alpha]{M}$.
\end{proposition}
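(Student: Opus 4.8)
The plan is to isolate a single \emph{combination} construction that exploits the one structural feature of a saturation sequence really needed here---that the sets $S_\alpha$ are pairwise disjoint---together with the saturation of each $X_\alpha$ relative to $S_\alpha$. Both claims of the proposition, namely nonemptiness of $M$ and the termwise identity $X_\alpha = \sat[S_\alpha]{M}$, will then follow as applications of this one construction, so essentially all of the work is to set it up once and verify it behaves well even when $\mu$ is infinite.

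Concretely, I would first establish the following claim: for \emph{any} family $\an{I_\beta}_{\beta < \mu}$ with $I_\beta \in X_\beta$ for each $\beta$, the interpretation
\[
	K = \bigcup_{\beta < \mu} \restr[S_\beta]{I_\beta}
\]
belongs to $M$, and moreover $\restr[S_\beta]{K} = \restr[S_\beta]{I_\beta}$ for every $\beta$. The restriction identity is where disjointness enters: since any ground atom $p$ has a nonempty predicate set $\preds{p}$, the condition $\preds{p} \subseteq S_\beta$ can hold for at most one index, so the only piece contributing to the $S_\beta$-restriction of $K$ is the $\beta$-th one. From $\restr[S_\beta]{K} = \restr[S_\beta]{I_\beta} \in \restr[S_\beta]{X_\beta}$ and the saturation of $X_\beta$ relative to $S_\beta$ we get $K \in X_\beta$; as $\beta$ was arbitrary, $K \in \bigcap_{\beta < \mu} X_\beta = M$. (For infinite $\mu$ this selects the family by the axiom of choice but is otherwise routine.)

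Nonemptiness of $M$ is then immediate, since each $X_\beta$ is an MKNF interpretation and hence nonempty, so a family $\an{I_\beta}_{\beta < \mu}$ exists and the associated $K$ witnesses $M \neq \emptyset$. For the identity $X_\alpha = \sat[S_\alpha]{M}$, I would argue through restrictions. Because $M \subseteq X_\alpha$, Proposition~\ref{prop:restr:basic} gives $\restr[S_\alpha]{M} \subseteq \restr[S_\alpha]{X_\alpha}$. For the reverse inclusion, take $J \in X_\alpha$ and apply the combination claim with $I_\alpha = J$ and arbitrary $I_\beta \in X_\beta$ for $\beta \neq \alpha$; the resulting $K \in M$ satisfies $\restr[S_\alpha]{K} = \restr[S_\alpha]{J}$, so $\restr[S_\alpha]{J} \in \restr[S_\alpha]{M}$. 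Hence $\restr[S_\alpha]{M} = \restr[S_\alpha]{X_\alpha}$.

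Finally, since $\sat[S_\alpha]{\cdot}$ depends only on the $S_\alpha$-restriction of its argument (directly from Definition~\ref{def:sigma}), the equality of restrictions yields $\sat[S_\alpha]{M} = \sat[S_\alpha]{X_\alpha}$; and because $X_\alpha$ trivially coincides with itself on $S_\alpha$ and is saturated relative to $S_\alpha$, Proposition~\ref{prop:sigma} identifies $\sat[S_\alpha]{X_\alpha}$ with $X_\alpha$. Chaining these gives $X_\alpha = \sat[S_\alpha]{M}$, completing the proof. I expect the combination claim to be the only genuine obstacle---specifically, checking that disjointness really decouples the pieces and that the argument is unaffected by an infinite index set $\mu$; once that is secured, both conclusions are short.
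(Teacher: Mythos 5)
Your proposal is correct and is essentially the paper's own proof: your combination interpretation $K = \bigcup_{\beta < \mu} \restr[S_\beta]{I_\beta}$ is exactly the interpretation $J$ the paper constructs (with $I_\alpha$ the distinguished element and the remaining $I_\beta$ chosen arbitrarily from the nonempty $X_\beta$), and both arguments then invoke the saturation of each $X_\beta$ relative to $S_\beta$, together with the pairwise disjointness of the $S_\beta$, in the same way. The remaining differences are purely presentational---you obtain nonemptiness of $M$ directly from the construction and route the identity through $\restr[S_\alpha]{M} = \restr[S_\alpha]{X_\alpha}$ plus Proposition~\ref{prop:sigma}, whereas the paper proves the two inclusions $X_\alpha \subseteq \sat[S_\alpha]{M}$ and $\sat[S_\alpha]{M} \subseteq X_\alpha$ by chasing elements and then reads nonemptiness of $M$ off the final identity.
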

\begin{proof}
	Pick some $\alpha < \mu$ and some $I \in X_\alpha$. We will prove that $I$
	belongs to $\sat[S_\alpha]{M}$. Let $I_\alpha = I$ and for all $\beta <	\mu$
	such that $\beta \neq \alpha$, let $I_\beta$ be any member of $X_\beta$
	(note that $X_\beta$ is nonempty because it is an MKNF interpretation). Now
	put
	\[
		J = \bigcup_{\beta < \mu} \restr[S_\beta]{I_\beta} \enspace.
	\]
	To see that $J$ belongs to $M$, take some $\beta < \mu$ and observe that
	$\restr[S_\beta]{J} = \restr[S_\beta]{I_\beta} \in
	\restr[S_\beta]{X_\beta}$. Since $X_\beta$ is saturated relative to
	$S_\beta$, this implies that $J$ belongs to $X_\beta$. Hence, $J$ belongs to
	$M$. Moreover, $\restr[S_\alpha]{I} = \restr[S_\alpha]{J}$, so
	$\restr[S_\alpha]{I}$ belongs to $\restr[S_\alpha]{M}$. Thus, $I$ belongs to
	$\sat[S_\alpha]{M}$.

	For the converse inclusion, suppose that $I$ belongs to $\sat[S_\alpha]{M}$
	for some $\alpha < \mu$. Then $\restr[S_\alpha]{I}$ belongs to
	$\restr[S_\alpha]{M}$. Also,
	\[
		\restr[S_\alpha]{M}
			= \restr[S_\alpha]{ \left( \bigcap_{\beta < \mu} X_\beta \right) }
			\subseteq \restr[S_\alpha]{X_\alpha} \enspace.
	\]
	Thus, $\restr[S_\alpha]{I}$ belongs to $\restr[S_\alpha]{X_\alpha}$ and
	since $X_\alpha$ is saturated relative to $S_\alpha$, $I$ belongs to
	$X_\alpha$.

	Finally, it follows from $X_\alpha = \sat[S_\alpha]{M}$ and the fact that
	$X_\alpha$ is nonempty that $M$ must also be nonempty.
\end{proof}

\subsection{Properties of Subjective Formulae}

\begin{proposition}[Models of Subjective Formulae] \label{prop:sub_s5}
	Let $\phi$ be a subjective MKNF formula, $I_1, I_2$ be Herbrand
	interpretations and $M, N \in \mint$. Then
	\[
	\mstr[I_1, M, N] \ent \phi \mlequiv \mstr[I_2, M, N] \ent \phi
	\]
\end{proposition}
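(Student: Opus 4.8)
The plan is to proceed by structural induction on the subjective formula $\phi$, mirroring the case analysis used in the proof of Proposition~\ref{prop:ent_relevant}. The governing intuition is that a subjective formula sees the interpretations $I_1$ and $I_2$ only through the modal operators $\mk[]$ and $\mnot[]$, and the semantic clauses for these operators replace the first component of the structure by interpretations drawn from $M$ or $N$. Hence the first component never actually contributes to the truth value.

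The crucial observation, which dictates the shape of the induction, is that the property of being subjective is \emph{not} inherited by the immediate subformula of a modal operator: $\mk \psi$ is subjective while $\psi$ in general is not. Consequently the induction cannot bottom out at atoms; indeed, a bare atom $p$ is never subjective, so that case simply does not arise. Instead the base cases are precisely the modal formulas $\phi = \mk \psi$ and $\phi = \mnot \psi$. For these I would read off independence from the first component directly from the semantics: $\mstr[I_1, M, N] \ent \mk \psi$ holds iff $\mstr[J, M, N] \ent \psi$ for all $J \in M$, a condition in which $I_1$ does not occur, and the identical condition characterises $\mstr[I_2, M, N] \ent \mk \psi$; the case of $\mnot \psi$ is analogous using $N$. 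No appeal to the induction hypothesis is needed in these cases.

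The remaining cases are the connectives $\lnot$, $\land$ and the quantifier $\exists$, and here the subjective property \emph{is} passed to the relevant subformulas, so the induction hypothesis applies. If $\phi = \lnot \psi$, then $\psi$ is subjective and the claim follows by negating both sides of the hypothesis for $\psi$. If $\phi = \phi_1 \land \phi_2$, then both conjuncts are subjective and the claim follows by conjoining the two hypotheses. If $\phi = \exists x : \psi$, then for every $c \in \Delta$ the instance $\psi[c/x]$ is again subjective (substituting a constant for a variable does not move any atom out of modal scope), so applying the hypothesis to each $\psi[c/x]$ gives $\mstr[I_1, M, N] \ent \psi[c/x] \mlequiv \mstr[I_2, M, N] \ent \psi[c/x]$, and existentially quantifying over $c$ yields the claim for $\phi$. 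The derived connectives $\lor$, $\forall$, $\lif$ are handled by the same reasoning or by reduction to the primitives.

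The only point requiring a little care — and the closest thing to an obstacle — is the bookkeeping for the induction itself: since I descend through $\exists$ by passing to $\psi[c/x]$ rather than to a genuine structural subformula, the induction should be framed on a well-founded measure (such as the number of connectives, quantifiers and modal operators) that is preserved under substitution of constants for variables, exactly as in Proposition~\ref{prop:ent_relevant}. With that measure fixed, every case either terminates immediately (the modal base cases) or reduces to strictly smaller subjective formulas, so the induction goes through.
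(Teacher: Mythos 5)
Your proof is correct and rests on the same observation as the paper's: the paper disposes of this proposition in one line, noting that by Def.~\ref{def:mknf:models} the valuation of a subjective formula in a structure $\mstr$ is independent of the first component $I$. Your structural induction --- with the modal formulas $\mk \psi$ and $\mnot \psi$ as base cases (correctly noting that subjectivity is not inherited through $\mk[]$ and $\mnot[]$, so atoms never arise), and with a substitution-invariant measure to push through the $\exists$ case --- is precisely the rigorous elaboration of that one-line remark, not a different route.
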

\begin{proof}
	Follows directly from Def.~\ref{def:mknf:models} and the fact that the
	valuation of a subjective formula in a structure $\mstr$ is independent of
	$I$.
\end{proof}

This property of subjective formulae gives rise to the following shortcut
notation which simplifies many of the following formalizations.

\begin{definition}[Satisfiability for Subjective Formulae] \label{def:mknf:sub_sat}
	Let $\phi$ be a subjective formula, $\thr$ be a set of subjective formulae
	and $M, N \in \mint$. We introduce the following notation:
	\begin{align*}
		\mstr[M, N] \ent \phi &\overset{\textrm{def}}{\mlequiv}
			(\exists I \in \foint)(\mstr \ent \phi) \\
		\mstr[M, N] \ent \thr &\overset{\textrm{def}}{\mlequiv}
			(\forall \phi \in \thr)(\mstr[M, N] \ent \phi)
	\end{align*}
\end{definition}

The following result relates the introduced shortcut notation to the notions
of S5 and MKNF models.

\begin{proposition} \label{prop:sub_shortcut}
	Let $\thr$ be a set of subjective formulae and $M \in \mint$. Then $M \ent
	\thr$ if and only if $\mstr[M, M] \ent \thr$. Furthermore, $M$ is an MKNF
	model of $\thr$ if and only if $M$ is an S5 model of $\thr$ and for every
	MKNF interpretation $M' \supsetneq M$ it holds that $\mstr[M', M] \nent
	\thr$.
\end{proposition}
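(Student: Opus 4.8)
The plan is to derive both equivalences from Proposition~\ref{prop:sub_s5}, which guarantees that for a subjective formula $\phi$ the truth value of $\mstr \ent \phi$ does not depend on the first-order component $I$ of the structure. Throughout I will use that $M$, being an MKNF interpretation, is nonempty, and that $\foint$ is nonempty; these two facts let me freely interchange an existential and a universal quantifier over the first-order interpretation whenever the quantified body is $I$-independent.

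For the first equivalence I would simply unfold the definitions. By definition $M \ent \thr$ means that $\mstr[I, M, M] \ent \phi$ holds for every $\phi \in \thr$ and every $I \in M$, whereas by Def.~\ref{def:mknf:sub_sat} the statement $\mstr[M, M] \ent \thr$ means that for every $\phi \in \thr$ there is \emph{some} $I \in \foint$ with $\mstr[I, M, M] \ent \phi$. Since each $\phi$ is subjective, Proposition~\ref{prop:sub_s5} makes the value of $\mstr[I, M, M] \ent \phi$ constant in $I$; as both $M$ and $\foint$ are nonempty, the condition ``$\mstr[I, M, M] \ent \phi$ for all $I \in M$'' and the condition ``$\mstr[I, M, M] \ent \phi$ for some $I \in \foint$'' coincide formula by formula, giving the claimed biconditional.

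For the second equivalence, observe first that the clause ``$M$ is an S5 model of $\thr$'' occurs verbatim both in the definition of an MKNF model and in the asserted characterization, so it suffices to match the two remaining maximality clauses. Fixing an arbitrary MKNF interpretation $M' \supsetneq M$, I would apply Proposition~\ref{prop:sub_s5} to the structures $\mstr[\,\cdot\,, M', M]$: for each subjective $\phi$ the condition $\mstr[I', M', M] \nent \phi$ is again independent of $I'$. Hence ``$\mstr[I', M', M] \nent \phi$ for some $I' \in M'$'' — the witness used in the definition of an MKNF model — is equivalent, using that $M'$ is nonempty, to ``$\mstr[I, M', M] \nent \phi$ for all $I \in \foint$'', which by Def.~\ref{def:mknf:sub_sat} is exactly $\mstr[M', M] \nent \phi$. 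Existentially quantifying over $\phi \in \thr$ turns the defining clause ``$\exists I' \in M'\ \exists \phi \in \thr : \mstr[I', M', M] \nent \phi$'' into ``$\exists \phi \in \thr : \mstr[M', M] \nent \phi$'', i.e.\ $\mstr[M', M] \nent \thr$; quantifying over all $M' \supsetneq M$ then yields the characterization.

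I expect the only delicate point to be the bookkeeping of these quantifier interchanges together with the nonemptiness assumptions that license them. In particular, the first equivalence genuinely requires $M \neq \emptyset$: for $M = \emptyset$ the left-hand side $M \ent \thr$ is vacuously true while the right-hand side may fail (for instance with $\thr = \set{\mnot p}$), and it is precisely because $M$ is an MKNF interpretation that this edge case does not arise. Everything else is a routine unfolding of Def.~\ref{def:mknf:sub_sat} and a direct appeal to Proposition~\ref{prop:sub_s5}.
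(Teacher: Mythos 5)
Your proof is correct and is exactly the argument the paper intends: its own proof of this proposition is the one-liner ``Straightforward by Definitions \ref{def:mknf:models} and \ref{def:mknf:sub_sat} and Proposition \ref{prop:sub_s5}'', and your quantifier bookkeeping (constancy in the first-order component plus nonemptiness of $M$, $M'$ and $\foint$) is precisely that routine argument written out. The only caveat concerns the hypothesis: the statement assumes $M \in \mint$, which by Definition \ref{def:mknf:models} includes $M = \emptyset$, so the nonemptiness you invoke (``$M$, being an MKNF interpretation, is nonempty'') is not literally granted --- and since your own counterexample $\thr = \set{\mnot p}$ shows the first equivalence genuinely fails for $M = \emptyset$, what you have uncovered is a minor imprecision in the paper's statement (it should hypothesize an MKNF interpretation rather than a member of $\mint$), not a gap in your proof.
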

\begin{proof}
	Straightforward by Definitions~\ref{def:mknf:models} and
	\ref{def:mknf:sub_sat} and Proposition~\ref{prop:sub_s5}.
\end{proof}

\begin{proposition} \label{prop:sub:ent_relevant}
	Let $\phi$ be a subjective formula, $U \subseteq \lpre$ be a set of
	predicate symbols such that $U \supseteq \preds{\phi}$ and $M, N \in \mint$.
	Then:
	\[
		\mstr[M, N] \ent \phi \mlequiv \mstr[\restr{M}, \restr{N}] \ent \phi
		\enspace.
	\]
\end{proposition}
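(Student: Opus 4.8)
The plan is to reduce this to the already-established Proposition~\ref{prop:ent_relevant} by peeling off the existential quantifier hidden in the shortcut notation of Definition~\ref{def:mknf:sub_sat}. By that definition, $\mstr[M, N] \ent \phi$ abbreviates $(\exists I \in \foint)(\mstr[I, M, N] \ent \phi)$, and likewise $\mstr[\restr{M}, \restr{N}] \ent \phi$ abbreviates $(\exists I' \in \foint)(\mstr[I', \restr{M}, \restr{N}] \ent \phi)$. So the task is really just to move the restriction from the interpretation sets $M, N$ through the satisfaction relation, which is exactly what Proposition~\ref{prop:ent_relevant} accomplishes for a single fixed structure.

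First I would fix an arbitrary $I \in \foint$. Since $U \supseteq \preds{\phi}$ by hypothesis, Proposition~\ref{prop:ent_relevant} applies and yields $\mstr[I, M, N] \ent \phi \mlequiv \mstr[\restr{I}, \restr{M}, \restr{N}] \ent \phi$. Because $\phi$ is subjective, Proposition~\ref{prop:sub_s5} tells us the right-hand side does not depend on the first component $\restr{I}$, so it is equivalent to $\mstr[I', \restr{M}, \restr{N}] \ent \phi$ for any fixed $I' \in \foint$. Threading this equivalence back through the existential quantifiers — using that $\foint$ is nonempty, so there is always a witness available — turns $(\exists I)(\mstr[I, M, N] \ent \phi)$ into $(\exists I')(\mstr[I', \restr{M}, \restr{N}] \ent \phi)$, which by Definition~\ref{def:mknf:sub_sat} is precisely $\mstr[\restr{M}, \restr{N}] \ent \phi$. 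Assembling these three links gives the desired biconditional, and I would write it out as a short chain of equivalences.

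The only point requiring any care — and the closest thing to an obstacle here — is the interplay between the existential quantifier in the shortcut notation and the first slot of the structure. One must check that the witness $I$ appearing on the left and the restricted component $\restr{I}$ appearing on the right are handled consistently: subjectivity (via Proposition~\ref{prop:sub_s5}) is precisely what guarantees that the interpretation sitting in the first slot is immaterial, so on each side the existential degenerates into a mere nonemptiness check on $\foint$ rather than a genuine search for a witness. Once this observation is made, the proof collapses to the three-step equivalence described above and no further computation is needed.
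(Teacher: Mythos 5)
Your proposal is correct and follows essentially the same route as the paper's own proof: unfold the shortcut notation of Definition~\ref{def:mknf:sub_sat}, apply Proposition~\ref{prop:ent_relevant} inside the structure, and use Proposition~\ref{prop:sub_s5} (subjectivity) to discard the restricted first component before folding the definition back. Your extra remark that the existential quantifier degenerates into a nonemptiness check is just a more explicit statement of the same step the paper performs silently.
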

\begin{proof*}
	By Definition~\ref{def:mknf:sub_sat} we have
	\[
		\mstr[M, N] \ent \phi \mlequiv (\exists I \in \foint)(\mstr \ent \phi)
		\enspace.
	\]
	By Proposition \ref{prop:ent_relevant} we can equivalently rewrite the right
	hand side into
	\[
		(\exists I \in \foint)
			\left( \mstr[\restr{I}, \restr{M}, \restr{N}] \ent \phi \right) \enspace.
	\]
	Furthermore, since $\phi$ is subjective, we can use Proposition
	\ref{prop:sub_s5} to further rewrite the previous formula into
	\[
		(\exists I \in \foint)
			\left( \mstr[I, \restr{M}, \restr{N}] \ent \phi \right)
	\]
	which is by Definition~\ref{def:mknf:sub_sat} equivalent to
	\[
		\mstr[\restr{M}, \restr{N}] \ent \phi \enspace. \qedhere
	\]
\end{proof*}

\begin{corollary} \label{cor:sub:ent_relevant}
	Let $\phi$ be a subjective formula, $U \subseteq \lpre$ be a set of
	predicate symbols such that $U \supseteq \preds{\phi}$ and $M, M', N, N' \in
	\mint$ be such that $M$ coincides with $M'$ on $U$ and $N$ coincides with
	$N'$ on $U$. Then
	\[
		\mstr[M, N] \ent \phi \mlequiv \mstr[M', N'] \ent \phi \enspace.
	\]
\end{corollary}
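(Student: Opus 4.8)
The plan is to derive this immediately from Proposition~\ref{prop:sub:ent_relevant}, which already isolates the semantic content we need: for a subjective $\phi$ whose relevant predicates lie in $U$, the satisfaction of $\phi$ in a structure depends only on the restrictions of its interpretation sets to $U$. First I would apply that proposition to the structure $\mstr[M, N]$, obtaining the equivalence
\[
	\mstr[M, N] \ent \phi \mlequiv \mstr[\restr{M}, \restr{N}] \ent \phi \enspace,
\]
and then apply it a second time to $\mstr[M', N']$ to get
\[
	\mstr[M', N'] \ent \phi \mlequiv \mstr[\restr{M'}, \restr{N'}] \ent \phi \enspace.
\]

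It then remains to connect the two right-hand sides. By the hypotheses that $M$ coincides with $M'$ on $U$ and $N$ coincides with $N'$ on $U$ — which, by the definition of coincidence in Def.~\ref{def:mknf:models}, means exactly $\restr{M} = \restr{M'}$ and $\restr{N} = \restr{N'}$ — the two reduced structures $\mstr[\restr{M}, \restr{N}]$ and $\mstr[\restr{M'}, \restr{N'}]$ are literally identical. Chaining the two equivalences through this identity yields $\mstr[M, N] \ent \phi \mlequiv \mstr[M', N'] \ent \phi$, as required. I expect no genuine obstacle here: the whole argument is a short transitivity chain built from two invocations of Proposition~\ref{prop:sub:ent_relevant} together with the unfolding of the definition of coincidence. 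The only point demanding a little care is that the coincidence hypotheses are stated separately for the pairs $M,M'$ and $N,N'$, so both restrictions must be matched before the two structures can be equated.
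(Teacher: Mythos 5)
Your proposal is correct and is essentially identical to the paper's own proof: two invocations of Proposition~\ref{prop:sub:ent_relevant} (one for $\mstr[M,N]$, one for $\mstr[M',N']$), linked by the observation that coincidence on $U$ means $\restr{M} = \restr{M'}$ and $\restr{N} = \restr{N'}$, so the restricted structures coincide. No gaps; this is exactly the transitivity chain the authors use.
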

\begin{proof*}
	By assumptions we know that $\restr{M} = \restr{M'}$ and $\restr{N} =
	\restr{N'}$. Proposition \ref{prop:sub:ent_relevant} now yields:
	\begin{align*}
		\mstr[M, N] \ent \phi
			&\mlequiv \mstr[\restr{M}, \restr{N}]
				\ent \phi \mlequiv \mstr[\restr{M'}, \restr{N'}] \ent \phi \\
			&\mlequiv \mstr[M', N'] \ent \phi \enspace. \qedhere
	\end{align*}
\end{proof*}

\section{Proof of Splitting Set Theorem for Hybrid MKNF Knowledge Bases} \label{app:splitting theorem}

\begin{remark}
	Note that whenever $U$ is a splitting set for a hybrid knowledge base $\kb =
	\an{\ont, \prog}$, the following can be easily shown to hold:
	\begin{align*}
		\preds{b_U(\ont)} &\subseteq U \enspace, \\
		\preds{b_U(\prog)} &\subseteq U \enspace, \\
		\preds{\pi(b_U(\kb))} &\subseteq U \enspace, \\
		\preds{t_U(\ont)} &\subseteq \lpre \setminus U \enspace.
	\end{align*}
	Also note that the heads of rules in $t_U(\prog)$ contain only predicate
	symbols from $\lpre \setminus U$ while their bodies may also contain
	predicate symbols from $U$. However, for any $X \in \mint$, the above
	defined reducts $e_U(\prog, X)$ and $e_U(\kb, X)$ can be shown to mention
	only atoms not belonging to $U$:
	\[
		\preds{\pi(e_U(\kb, X))} \subseteq \lpre \setminus U \enspace.
	\]
	Please stay warned that these rather basic observations will be used in the
	following text (especially in the proofs) without further notice or
	reference.
\end{remark}

\begin{definition}[Generalised Splitting Set Reduct]
	Let $U$ be a splitting set for a hybrid knowledge base $\kb = \an{\ont,
	\prog}$ and $X, X' \in \mint$. The \emph{generalised splitting set reduct of
	$\kb$ relative to $U$ and $\mstr[X', X]$} is a hybrid knowledge base
	$e_U(\kb, X', X) = \an{t_U(\ont), e_U(\prog, X', X)}$ where $e_U(\prog, X',
	X)$ consists of all rules $r'$ such that there exists a rule $r \in
	t_U(\prog)$ satisfying the following conditions:
	\begin{align}
		H(r') &= H(r) \enspace, \label{eq:def:generalised splitting reduct:1} \\
		B(r') &= \Set{ L \in B(r) | \preds{L} \subseteq \lpre \setminus U }
			\enspace, \label{eq:def:generalised splitting reduct:2} \\
		\mstr[X', X] &\ent \pi(B(r) \setminus B(r'))
			= \pi(\Set{ L \in B(r) | \preds{L} \subseteq U }) \enspace.
			\label{eq:def:generalised splitting reduct:3}	
	\end{align}
\end{definition}

\begin{remark}
	Note that for every hybrid knowledge base $\kb$ and every $X \in \mint$ the
	following holds: $e_U(\kb, X) = e_U(\kb, X, X)$. This will be heavily used
	in the following proofs.
\end{remark}

\begin{lemma} \label{lemma:splitting:aux:1}
	Let $U$ be a splitting set for a hybrid knowledge base $\kb$ and $D, D', E,
	E', F, F', G, G' \in \mint$ be such that the following conditions are
	satisfied:
	\begin{enumerate}
		\itemsep=0pt
		\item $\restr[U]{E} = \restr[U]{D}$ and $\restr[U]{E'} = \restr[U]{D'}$;
		\item $\restrc{F} = \restrc{D}$ and
			$\restrc{F'} = \restrc{D'}$;
		\item $\restr[U]{G} = \restr[U]{D}$ and $\restr[U]{G'} = \restr[U]{D'}$.
	\end{enumerate}
	Then:
	\[
		\mstr[D', D] \ent \pi(\kb) \mlthen \mstr[E', E] \ent \pi(b_U(\kb))
			\land \mstr[F', F] \ent \pi(e_U(\kb, G', G)) \enspace.
	\]
\end{lemma}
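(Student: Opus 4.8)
The plan is to split the conclusion into its two conjuncts and prove each by transferring satisfaction of subjective formulae between MKNF structures that agree on the relevant predicate symbols. The single workhorse is Corollary~\ref{cor:sub:ent_relevant}: since every formula in $\pi(\kb)$ is subjective, whenever $\preds{\phi} \subseteq W$ and two structures coincide on $W$ in both coordinates, they agree on $\phi$. I will invoke this repeatedly, once with $W = U$ and once with $W = \lpre \setminus U$, relying on the preceding remark that $\preds{\pi(b_U(\kb))} \subseteq U$ and $\preds{\pi(e_U(\kb, G', G))} \subseteq \lpre \setminus U$.

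For the first conjunct, note that $\pi(b_U(\kb)) \subseteq \pi(\kb)$, so the hypothesis gives $\mstr[D', D] \ent \pi(b_U(\kb))$. Every $\phi \in \pi(b_U(\kb))$ has $\preds{\phi} \subseteq U$, and by condition~1 the pair $\mstr[E', E]$ coincides with $\mstr[D', D]$ on $U$ in both coordinates; Corollary~\ref{cor:sub:ent_relevant} then yields $\mstr[E', E] \ent \phi$ for each such $\phi$, hence $\mstr[E', E] \ent \pi(b_U(\kb))$. The ontology part of the second conjunct is analogous: $\pi(t_U(\ont)) \subseteq \pi(\kb)$ gives $\mstr[D', D] \ent \pi(t_U(\ont))$, its predicates lie in $\lpre \setminus U$, and condition~2 says $\mstr[F', F]$ coincides with $\mstr[D', D]$ on $\lpre \setminus U$, so Corollary~\ref{cor:sub:ent_relevant} transfers satisfaction to $\mstr[F', F]$.

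The rule part of the second conjunct, where $\pi(e_U(\kb, G', G)) = \pi(t_U(\ont)) \cup \pi(e_U(\prog, G', G))$, is the crux. Fix $r' \in e_U(\prog, G', G)$ with witnessing rule $r \in t_U(\prog)$. Since each body literal $L$ has singleton $\preds{L}$, the body partitions disjointly as $B(r) = B(r') \cup B_U$, where $B(r') = \Set{L \in B(r) | \preds{L} \subseteq \lpre \setminus U}$ and $B_U = \Set{L \in B(r) | \preds{L} \subseteq U}$, and the reduct condition~\eqref{eq:def:generalised splitting reduct:3} supplies $\mstr[G', G] \ent \pi(B_U)$. Reading $\mstr[M, N] \ent \pi(r)$ as ``if the whole body is satisfied then the head is'' (valid because $\pi(r)$ is subjective, via Proposition~\ref{prop:sub_s5}), I would assume $\mstr[F', F] \ent \pi(B(r'))$ and then argue: transfer this to $\mstr[D', D] \ent \pi(B(r'))$ via condition~2 (predicates in $\lpre \setminus U$); transfer $\mstr[G', G] \ent \pi(B_U)$ to $\mstr[D', D] \ent \pi(B_U)$ via condition~3 (predicates in $U$); combine the two halves to get $\mstr[D', D] \ent \pi(B(r))$; fire $\mstr[D', D] \ent \pi(r)$, available since $r \in \prog$, to obtain $\mstr[D', D] \ent \pi(H(r))$; and finally transfer the head, whose predicate lies in $\lpre \setminus U$ because $r \in t_U(\prog)$, back to $\mstr[F', F] \ent \pi(H(r)) = \pi(H(r'))$ via condition~2. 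This gives $\mstr[F', F] \ent \pi(r')$ for every such $r'$, which together with the ontology part completes the conclusion.

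The only genuine difficulty is the bookkeeping: keeping straight which of the four interpretation pairs carries each sub-formula and which predicate set ($U$ or $\lpre \setminus U$) justifies each application of Corollary~\ref{cor:sub:ent_relevant}. Conditions 1--3 are tailored precisely so that the two halves of the body can be routed through $\mstr[F', F]$ and $\mstr[G', G]$ respectively, rejoined at $\mstr[D', D]$ to trigger the original rule, and the resulting head channelled back to $\mstr[F', F]$; once this routing is fixed, every individual step is a direct appeal to the corollary.
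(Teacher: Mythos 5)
Your proposal is correct and follows essentially the same route as the paper's proof: both conjuncts are obtained by restricting the hypothesis to the relevant sub-theory and transferring satisfaction via Corollary~\ref{cor:sub:ent_relevant}, and the rule case uses exactly the paper's routing (reduct body through $\mstr[G', G]$, residual body through $\mstr[F', F]$, both rejoined at $\mstr[D', D]$ to fire $\pi(r)$, head transferred back to $\mstr[F', F]$). The only cosmetic difference is that you phrase the trivial case as an assumption that the body holds rather than an explicit case split, which is equivalent.
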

\begin{proof}
	Suppose that $\mstr[D', D] \ent \pi(\kb)$. Since $\pi(b_U(\kb)) \subseteq
	\pi(\kb)$, we immediately obtain $\mstr[D', D] \ent \pi(b_U(\kb))$.
	Furthermore, from Corollary \ref{cor:sub:ent_relevant} we now obtain
	$\mstr[E', E] \ent \pi(b_U(\kb))$.

	It remains to show that $\mstr[F', F] \ent \pi(e_U(\kb, G', G))$. Let $\kb =
	\an{\ont, \prog}$. We know that $\pi(e_U(\kb, G', G))$ consists of two sets:
	$\pi(t_U(\ont))$ and $\pi(e_U(\prog, G', G))$. Since $\pi(t_U(\ont))
	\subseteq \pi(\kb)$, we conclude that $\mstr[D', D] \ent \pi(t_U(\ont))$.
	Moreover, Corollary \ref{cor:sub:ent_relevant} now implies that $\mstr[F',
	F] \ent \pi(t_U(\ont))$. Now take some rule $r' \in e_U(\kb, G', G)$. If
	$\mstr[F', F] \nent \pi(B(r'))$, then $\mstr[F', F] \ent \pi(r')$. On the
	other hand, if $\mstr[F', F] \ent \pi(B(r'))$, then Corollary
	\ref{cor:sub:ent_relevant} implies that $\mstr[D', D] \ent \pi(B(r'))$.
	Moreover, by the definition of $e_U(\kb, G', G)$, there must be some rule $r
	\in \prog$ such that $H(r') = H(r)$ and $\mstr[G', G] \ent \pi(B(r)
	\setminus B(r'))$. From the last property and Corollary
	\ref{cor:sub:ent_relevant} we obtain $\mstr[D', D] \ent \pi(B(r) \setminus
	B(r'))$.  So $\mstr[D', D] \ent \pi(B(r))$ and since $\mstr[D', D] \ent
	\pi(\kb)$ and $\pi(\kb)$ contains $\pi(r)$, we conclude that $\mstr[D', D]
	\ent \pi(H(r))$.  Consequently, since $H(r') = H(r)$, by Corollary
	\ref{cor:sub:ent_relevant} we obtain $\mstr[F', F] \ent \pi(H(r'))$ and so
	$\mstr[F', F] \ent \pi(r')$. The choice of $r'$ was arbitrary, so we have
	also proven that $\mstr[F', F] \ent \pi(e_U(\prog, G', G))$.
\end{proof}

\begin{lemma} \label{lemma:splitting:aux:2}
	Let $U$ be a splitting set for a hybrid knowledge base $\kb$ and $D, D', E,
	E', F, F', G, G' \in \mint$ be such that the following conditions are
	satisfied:
	\begin{enumerate}
		\itemsep=0pt
		\item $\restr[U]{E} = \restr[U]{D}$ and $\restr[U]{E'} = \restr[U]{D'}$;
		\item $\restrc{F} = \restrc{D}$ and
			$\restrc{F'} = \restrc{D'}$;
		\item $\restr[U]{G} = \restr[U]{D}$ and $\restr[U]{G'} = \restr[U]{D'}$.
	\end{enumerate}
	Then:
	\[
		\mstr[E', E] \ent \pi(b_U(\kb)) \land \mstr[F', F] \ent \pi(e_U(\kb, G', G))
			\mlthen \mstr[D', D] \ent \pi(\kb) \enspace.
	\]
\end{lemma}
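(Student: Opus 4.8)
The plan is to decompose $\pi(\kb)$ into its four syntactic pieces and verify that $\mstr[D', D]$ satisfies each of them. Since $\ont = b_U(\ont) \cup t_U(\ont)$ and $\prog = b_U(\prog) \cup t_U(\prog)$, we have
\[
	\pi(\kb) = \pi(b_U(\kb)) \cup \pi(t_U(\ont)) \cup \pi(t_U(\prog)),
\]
where $\pi(b_U(\kb)) = \pi(b_U(\ont)) \cup \pi(b_U(\prog))$. The first two pieces will be immediate. For $\pi(b_U(\kb))$, whose relevant predicates lie in $U$, I would invoke the hypothesis $\mstr[E', E] \ent \pi(b_U(\kb))$ together with the fact that $E, E'$ coincide with $D, D'$ on $U$ (condition~1) and apply Corollary~\ref{cor:sub:ent_relevant} to transfer satisfaction to $\mstr[D', D]$. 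For $\pi(t_U(\ont))$, which is a subset of $\pi(e_U(\kb, G', G))$ and whose relevant predicates lie in $\lpre \setminus U$, I would start from $\mstr[F', F] \ent \pi(e_U(\kb, G', G))$, use condition~2 ($F, F'$ coincide with $D, D'$ on $\lpre \setminus U$), and again apply Corollary~\ref{cor:sub:ent_relevant}.

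The only nontrivial piece is $\pi(t_U(\prog))$, so the main work is to show $\mstr[D', D] \ent \pi(r)$ for each $r \in t_U(\prog)$. Since $\pi(r) = (\pi(H(r)) \lif \bigland \pi(B(r)))$, it suffices to assume $\mstr[D', D] \ent \pi(B(r))$ and derive $\mstr[D', D] \ent \pi(H(r))$. I would split the body along $U$ by setting $B_U = \Set{ L \in B(r) | \preds{L} \subseteq U }$ and $B_{\bar U} = \Set{ L \in B(r) | \preds{L} \subseteq \lpre \setminus U }$; because each literal mentions a single predicate, $B(r)$ is the disjoint union of these two sets, and the assumption yields $\mstr[D', D] \ent \pi(B_U)$ and $\mstr[D', D] \ent \pi(B_{\bar U})$ separately.

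The key step is then to feed the $U$-part of the body into the generalised reduct. Since $\preds{B_U} \subseteq U$ and $G, G'$ coincide with $D, D'$ on $U$ (condition~3), Corollary~\ref{cor:sub:ent_relevant} gives $\mstr[G', G] \ent \pi(B_U)$, which is exactly condition~\eqref{eq:def:generalised splitting reduct:3} in the definition of $e_U(\prog, G', G)$. Hence there is a rule $r' \in e_U(\prog, G', G)$ with $H(r') = H(r)$ and $B(r') = B_{\bar U}$. Because $\preds{B(r')} \subseteq \lpre \setminus U$ and $F, F'$ coincide with $D, D'$ on $\lpre \setminus U$, transferring $\mstr[D', D] \ent \pi(B_{\bar U})$ via Corollary~\ref{cor:sub:ent_relevant} gives $\mstr[F', F] \ent \pi(B(r'))$; combined with the hypothesis $\mstr[F', F] \ent \pi(r')$ this forces $\mstr[F', F] \ent \pi(H(r'))$. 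Finally, since $r \in t_U(\prog)$ its head predicate lies in $\lpre \setminus U$, so $H(r') = H(r)$ has its predicate there too, and one last application of Corollary~\ref{cor:sub:ent_relevant} transfers $\mstr[F', F] \ent \pi(H(r'))$ back to $\mstr[D', D] \ent \pi(H(r))$, as required.

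The main obstacle — and the only step requiring care — is this body-splitting argument for the top rules: one must correctly match the $U$-part of the body to the existential condition in the definition of the generalised reduct (so that the reduced rule $r'$ is guaranteed to exist) while routing the $\lpre \setminus U$-part, and the head, through the reduct structure $\mstr[F', F]$. Everything else is bookkeeping with Corollary~\ref{cor:sub:ent_relevant}, which repeatedly lets us move between $\mstr[D', D]$ and the auxiliary structures $\mstr[E', E]$, $\mstr[F', F]$, $\mstr[G', G]$ on the appropriate sets of predicates. This lemma is the converse of Lemma~\ref{lemma:splitting:aux:1}, and the proof essentially mirrors it in reverse.
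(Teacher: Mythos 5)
Your proof is correct and follows essentially the same route as the paper's: the same three-way split of $\pi(\kb)$ into $\pi(b_U(\kb))$, $\pi(t_U(\ont))$ and $\pi(t_U(\prog))$, the same repeated use of Corollary~\ref{cor:sub:ent_relevant} to move between structures, and the same key step of matching the $U$-part of a top rule's body against condition~\eqref{eq:def:generalised splitting reduct:3} to obtain the reduced rule $r'$. The only (immaterial) difference is that you apply modus ponens inside $\mstr[F', F]$ and transfer just the head back to $\mstr[D', D]$, whereas the paper transfers the whole rule $\pi(r')$ to $\mstr[D', D]$ first and concludes there.
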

\begin{proof*}
	Take some $\phi \in \pi(\kb)$. We consider three cases depending on which
	part of $\kb$ this formula originates from:
	\begin{enumerate}
		\renewcommand{\labelenumi}{\alph{enumi})}
		\item If $\phi$ belongs to $\pi(b_U(\kb))$, then $\preds{\phi} \subseteq
			U$, so we can use Corollary \ref{cor:sub:ent_relevant} to infer
			$\mstr[D', D] \ent \phi$ from $\mstr[E', E] \ent \phi$.

		\item If $\phi$ belongs to $\pi(t_U(\ont))$, then $\phi$ also belongs to
			$\pi(e_U(\kb, G', G))$ and $\preds{\phi} \subseteq \lpre \setminus U$,
			so by Corollary \ref{cor:sub:ent_relevant} we can infer $\mstr[D', D]
			\ent \phi$ from $\mstr[F', F] \ent \phi$.

		\item If $\phi$ belongs to $\pi(t_U(\prog))$, then $\phi = \pi(r)$ for
			some rule $r$. If $\mstr[D', D] \nent \pi(B(r))$, then $\mstr[M', M]
			\ent \pi(r)$ and we are finished. On the other hand, if $\mstr[D', D]
			\ent \pi(B(r))$, then let $B = \set{ L \in B(r) | \preds{L} \subseteq U
			}$. We have $\mstr[D', D] \ent \pi(B)$ and by Corollary
			\ref{cor:sub:ent_relevant} we can conclude that $\mstr[G', G] \ent
			\pi(B)$. Consequently, $e_U(\prog, G', G)$ contains a rule $r'$ such
			that $H(r') = H(r)$ and $B(r') = B(r) \setminus B$. We know that
			$\mstr[F', F] \ent \pi(r')$ and since $\preds{r'} \subseteq \lpre
			\setminus U$, we can use Corollary \ref{cor:sub:ent_relevant} to infer
			$\mstr[D', D] \ent \pi(r')$. Furthermore, $\mstr[D', D] \ent
			\pi(B(r'))$, so $\mstr[D', D] \ent \pi(H(r'))$. Consequently, $\mstr[D',
			D] \ent \pi(r)$, which is the desired result. \qedhere
	\end{enumerate}
\end{proof*}

\begin{proposition} \label{prop:splitting:aux}
	Let $U$ be a splitting set for a hybrid knowledge base $\kb$ and $D, D', E,
	E', F, F', G, G' \in \mint$ be such that the following conditions are
	satisfied:
	\begin{enumerate}
		\itemsep=0pt
		\item $\restr[U]{E} = \restr[U]{D}$ and $\restr[U]{E'} = \restr[U]{D'}$;
		\item $\restrc{F} = \restrc{D}$ and
			$\restrc{F'} = \restrc{D'}$;
		\item $\restr[U]{G} = \restr[U]{D}$ and $\restr[U]{G'} = \restr[U]{D'}$.
	\end{enumerate}
	Then:
	\[
		\mstr[D', D] \ent \pi(\kb)
			\mlequiv \mstr[E', E] \ent \pi(b_U(\kb)) \land \mstr[F', F] \ent \pi(e_U(\kb, G', G))
			\enspace.
	\]
\end{proposition}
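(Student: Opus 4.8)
The plan is to observe that this biconditional is nothing more than the conjunction of the two implications already isolated as Lemmas~\ref{lemma:splitting:aux:1} and~\ref{lemma:splitting:aux:2}. Both lemmas are stated under exactly the same three hypotheses relating $D, D', E, E', F, F', G, G'$ through their restrictions to $U$ and to $\lpre \setminus U$, and these are precisely the hypotheses of the present proposition. Hence no new hypothesis needs to be verified and no fresh construction is required; the proposition is obtained purely by assembling results already in hand.

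Concretely, the forward direction $\mstr[D', D] \ent \pi(\kb) \mlthen \mstr[E', E] \ent \pi(b_U(\kb)) \land \mstr[F', F] \ent \pi(e_U(\kb, G', G))$ is exactly the conclusion of Lemma~\ref{lemma:splitting:aux:1}, while the converse direction $\mstr[E', E] \ent \pi(b_U(\kb)) \land \mstr[F', F] \ent \pi(e_U(\kb, G', G)) \mlthen \mstr[D', D] \ent \pi(\kb)$ is exactly the conclusion of Lemma~\ref{lemma:splitting:aux:2}. I would therefore apply Lemma~\ref{lemma:splitting:aux:1} to get one implication, apply Lemma~\ref{lemma:splitting:aux:2} to get the other, and combine them into the stated equivalence $\mlequiv$.

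There is effectively no obstacle at this level: the substantive work lies entirely inside the two lemmas. The delicate points there, which are already handled, are the two case analyses on whether the reduced body $\pi(B(r'))$ is satisfied in $\mstr[F', F]$ (in the forward lemma, which then forces the head of the original rule $r$ to hold in $\mstr[D', D]$) and the dual reconstruction of a reduct rule $r'$ from a rule $r \in t_U(\prog)$ together with the transfer of body satisfaction through condition~3 (in the converse lemma). Both rely on Corollary~\ref{cor:sub:ent_relevant} to move satisfaction of subjective formulae between interpretations coinciding on the relevant predicate symbols. Since those lemmas are assumed proved, the present proposition follows immediately.
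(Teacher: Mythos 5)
Your proposal is correct and matches the paper's own proof exactly: the proposition is proved by combining Lemma~\ref{lemma:splitting:aux:1} (forward implication) and Lemma~\ref{lemma:splitting:aux:2} (converse implication), both of which share the proposition's hypotheses verbatim. Your additional remarks about where the substantive work lies (the case analyses and the uses of Corollary~\ref{cor:sub:ent_relevant} inside the lemmas) are accurate but not needed for this step.
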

\begin{proof}
	Follows by Lemmas \ref{lemma:splitting:aux:1} and
	\ref{lemma:splitting:aux:2}.
\end{proof}

\begin{corollary} \label{cor:splitting:aux:1}
	Let $U$ be a splitting set for a hybrid knowledge base $\kb$ and $M, X \in
	\mint$ be such that $\restr[U]{X} = \restr[U]{M}$. Then:
	\[
		M \ent \pi(\kb)
			\mlequiv M \ent \pi(b_U(\kb)) \land M \ent \pi(e_U(\kb, X)) \enspace.
	\]
\end{corollary}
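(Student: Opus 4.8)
The plan is to obtain this corollary as a direct specialisation of Proposition~\ref{prop:splitting:aux}, with the passage between the two notations mediated by the shortcut notation for subjective formulae. The first observation I would record is that every formula in $\pi(\kb)$, $\pi(b_U(\kb))$ and $\pi(e_U(\kb, X))$ is \emph{subjective}: the translation $\pi$ always places each atom inside the scope of $\mk[]$ or $\mnot[]$, so all three sets consist of subjective MKNF sentences. Consequently Proposition~\ref{prop:sub_shortcut} applies, and for each of these three theories it lets me pass freely between the S5-satisfaction $M \ent \pi(\cdot)$ and the structural satisfaction $\mstr[M, M] \ent \pi(\cdot)$.

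Next I would invoke Proposition~\ref{prop:splitting:aux} under the instantiation $D = D' = E = E' = F = F' = M$ and $G = G' = X$, and check its three hypotheses against this choice. Conditions~(1) and~(2) hold trivially, since the interpretations being compared are literally equal ($\restr[U]{M} = \restr[U]{M}$ and $\restrc{M} = \restrc{M}$), while condition~(3) is exactly the assumption $\restr[U]{X} = \restr[U]{M}$ of the corollary, used for both the $G$ and the $G'$ slot. With these substitutions the conclusion of the proposition becomes
\[
	\mstr[M, M] \ent \pi(\kb) \mlequiv \mstr[M, M] \ent \pi(b_U(\kb)) \land \mstr[M, M] \ent \pi(e_U(\kb, X, X)) \enspace.
\]

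Finally I would rewrite $e_U(\kb, X, X)$ as $e_U(\kb, X)$, using the definitional identity $e_U(\kb, X) = e_U(\kb, X, X)$ noted in the remark following the generalised splitting set reduct, and then translate each of the three $\mstr[M, M] \ent \pi(\cdot)$ statements back into $M \ent \pi(\cdot)$ via Proposition~\ref{prop:sub_shortcut}, which yields precisely the claimed equivalence. There is no genuine obstacle in this argument; the only points requiring care are confirming that $\pi$ produces subjective formulae (so that the shortcut notation is legitimately available on both sides of the equivalence) and correctly matching the eight interpretation parameters of Proposition~\ref{prop:splitting:aux} against the two interpretations $M$ and $X$ supplied by the corollary.
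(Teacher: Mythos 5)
Your proof is correct and takes essentially the same route as the paper's: the identical instantiation $D = D' = E = E' = F = F' = M$, $G = G' = X$ of Proposition~\ref{prop:splitting:aux}, followed by Proposition~\ref{prop:sub_shortcut} to pass between $\mstr[M, M] \ent \pi(\cdot)$ and $M \ent \pi(\cdot)$. The paper's proof is merely terser, leaving implicit the verification of the three hypotheses, the subjectivity of the translated formulae, and the identity $e_U(\kb, X, X) = e_U(\kb, X)$ that you spell out.
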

\begin{proof}
	Proposition \ref{prop:splitting:aux} for $D = D' = E = E' = F = F' = M$ and
	$G = G' = X$ implies that
	\[
		\mstr[M, M] \ent \pi(\kb)
			\mlequiv \mstr[M, M] \ent \pi(b_U(\kb)) \land \mstr[M, M] \ent \pi(e_U(\kb, X, X))
			\enspace.
	\]
	The claim of this corollary now follows from Proposition
	\ref{prop:sub_shortcut}.
\end{proof}

\begin{corollary} \label{cor:splitting:aux:2}
	Let $U$ be a splitting set for a hybrid knowledge base $\kb$ and $M, X, Y$
	be MKNF interpretations such that $\restr[U]{X} = \restr[U]{M}$ and
	$\restrc{Y} = \restrc{M}$. Then:
	\[
		M \ent \pi(\kb)
			\mlequiv X \ent \pi(b_U(\kb)) \land Y \ent \pi(e_U(\kb, X)) \enspace.
	\]
\end{corollary}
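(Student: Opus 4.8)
The plan is to build on Corollary~\ref{cor:splitting:aux:1} and then peel off each conjunct independently, replacing $M$ by $X$ in the bottom part and by $Y$ in the reduct part. Since $\restr[U]{X} = \restr[U]{M}$ by hypothesis, Corollary~\ref{cor:splitting:aux:1} applies directly to $M$ and $X$, yielding
\[
	M \ent \pi(\kb)
		\mlequiv M \ent \pi(b_U(\kb)) \land M \ent \pi(e_U(\kb, X)) \enspace.
\]
It therefore remains to rewrite each conjunct on the right-hand side in terms of $X$ and $Y$ respectively.

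For the first conjunct, I would invoke the observation recorded in the Remark that $\preds{\pi(b_U(\kb))} \subseteq U$. Since $\restr[U]{X} = \restr[U]{M}$ says precisely that $M$ and $X$ coincide on $U$, Corollary~\ref{cor:ent_relevant} (applied with the set $U$ and the theory $\pi(b_U(\kb))$) gives $M \ent \pi(b_U(\kb)) \mlequiv X \ent \pi(b_U(\kb))$. For the second conjunct, the Remark supplies $\preds{\pi(e_U(\kb, X))} \subseteq \lpre \setminus U$; and $\restrc{Y} = \restrc{M}$ means $M$ and $Y$ coincide on $\lpre \setminus U$, so a second application of Corollary~\ref{cor:ent_relevant}, this time with the set $\lpre \setminus U$ and the theory $\pi(e_U(\kb, X))$, yields $M \ent \pi(e_U(\kb, X)) \mlequiv Y \ent \pi(e_U(\kb, X))$. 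Substituting both equivalences into the display above delivers the desired statement.

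This argument is essentially mechanical once the two ingredients are lined up, so there is no genuine obstacle; the only point requiring care is making sure the predicate-relevance bounds from the Remark are in force before applying Corollary~\ref{cor:ent_relevant}. In particular, the bound $\preds{\pi(e_U(\kb, X))} \subseteq \lpre \setminus U$ is what licenses using the complement set $\lpre \setminus U$ rather than $U$ in the second application, which is exactly the asymmetry between $X$ and $Y$ that the corollary is designed to exploit: $X$ carries the information on $U$ while $Y$ carries it on $\lpre \setminus U$, and each half of the split is sensitive only to its own side.
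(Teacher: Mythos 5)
Your proof is correct, but it takes a somewhat different route from the paper's. The paper proves this corollary in a single step by instantiating the eight-parameter Proposition~\ref{prop:splitting:aux} with $D = D' = M$, $E = E' = X$, $F = F' = Y$ and $G = G' = X$ -- so the replacement of $M$ by $X$ on the bottom and by $Y$ on the reduct is absorbed directly into the parameters of that proposition -- and then converting the resulting structure-level statement into plain S5 entailment via Proposition~\ref{prop:sub_shortcut}. You instead compose two already-established facts: Corollary~\ref{cor:splitting:aux:1} (the special case in which $M$ itself entails both parts), followed by two applications of the coincidence-transfer Corollary~\ref{cor:ent_relevant}, licensed by the predicate bounds $\preds{\pi(b_U(\kb))} \subseteq U$ and $\preds{\pi(e_U(\kb, X))} \subseteq \lpre \setminus U$ recorded in the Remark. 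Your route is more modular and never leaves the level of entailment $M \ent \thr$, at the cost of invoking three lemmas instead of two; the paper's is shorter because Proposition~\ref{prop:splitting:aux} was engineered precisely so that every corollary in this family follows by choosing parameters. Both arguments ultimately rest on the same fact -- satisfaction of a theory depends only on the restriction of the interpretations to the theory's relevant predicate symbols -- and your proof is complete as written.
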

\begin{proof}
	Proposition \ref{prop:splitting:aux} for $D = D' = M$, $E = E' = X$, $F = F'
	= Y$ and $G = G' = X$ implies that
	\[
		\mstr[M, M] \ent \pi(\kb)
			\mlequiv \mstr[X, X] \ent \pi(b_U(\kb)) \land \mstr[Y, Y] \ent \pi(e_U(\kb, X, X))
			\enspace.
	\]
	The claim of this corollary now follows from Proposition
	\ref{prop:sub_shortcut}.
\end{proof}

\begin{corollary} \label{cor:splitting:aux:3}
	Let $U$ be a splitting set for a hybrid knowledge base $\kb$ and $M, M', X,
	X'$ be MKNF interpretations such that $M \ent \pi(\kb)$, $\restr[\lpre
	\setminus U]{M'} = \restrc{M}$, $\restr[U]{X} =
	\restr[U]{M}$ and $\restr[U]{X'} = \restr[U]{M'}$. Then:
	\[
		\mstr[M', M] \nent \pi(\kb) \mlthen \mstr[X', X] \nent \pi(b_U(\kb))
		\enspace.
	\]
\end{corollary}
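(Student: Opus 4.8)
The plan is to package the statement through Proposition~\ref{prop:splitting:aux} so that the interpretations $X, X'$ and the bottom $b_U(\kb)$ appear on the right-hand side for free, and then reduce everything to one auxiliary satisfaction fact. Concretely, I would instantiate Proposition~\ref{prop:splitting:aux} with $D = F = F' = M$, $D' = M'$, $E = X$, $E' = X'$, $G = M$ and $G' = M'$. Its three coincidence hypotheses are then immediate: condition~1 is exactly the assumptions $\restr[U]{X} = \restr[U]{M}$ and $\restr[U]{X'} = \restr[U]{M'}$; condition~3 is trivial since $G = M$ and $G' = M'$; and condition~2 holds because $\restrc{F} = \restrc{M} = \restrc{D}$ and, using the hypothesis $\restr[\lpre \setminus U]{M'} = \restrc{M}$, also $\restrc{F'} = \restrc{M} = \restrc{M'} = \restrc{D'}$. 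The proposition, read through Proposition~\ref{prop:sub_shortcut}, then yields
\[
	\mstr[M', M] \ent \pi(\kb) \mlequiv \mstr[X', X] \ent \pi(b_U(\kb)) \land M \ent \pi(e_U(\kb, M', M)) \enspace.
\]
Hence, once the single fact $M \ent \pi(e_U(\kb, M', M))$ is established, the right conjunct drops out and the displayed equivalence collapses to $\mstr[M', M] \ent \pi(\kb) \mlequiv \mstr[X', X] \ent \pi(b_U(\kb))$, whose contrapositive is precisely the asserted implication.

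It therefore remains to prove $M \ent \pi(e_U(\kb, M', M))$. Since $\restr[U]{M} = \restr[U]{M}$, Corollary~\ref{cor:splitting:aux:1} applied with its parameter set to $M$ turns $M \ent \pi(\kb)$ into $M \ent \pi(e_U(\kb, M))$, and $e_U(\kb, M) = e_U(\kb, M, M)$. The two reducts $e_U(\kb, M', M)$ and $e_U(\kb, M, M)$ share the same ontology part $t_U(\ont)$, which $M$ therefore satisfies; they can differ only in which rules of $t_U(\prog)$ survive, the survival test for a rule $r$ being $\mstr[M', M] \ent \pi(\Set{ L \in B(r) | \preds{L} \subseteq U })$ in the former and the analogous test with $\mstr[M, M]$ in the latter. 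For a surviving rule $r'$ of $e_U(\kb, M', M)$ arising from $r \in t_U(\prog)$, I would assume $M \ent \pi(B(r'))$ (otherwise $M \ent \pi(r')$ holds vacuously), observe that $\preds{B(r')} \subseteq \lpre \setminus U$, and aim to reconstruct $M \ent \pi(B(r))$ so that $M \ent \pi(\kb)$ forces $M \ent \pi(H(r)) = \pi(H(r'))$; as the head lies over $\lpre \setminus U$, its value under $M$ is exactly the one needed.

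The delicate step — and the one I expect to be the main obstacle — is this reconstruction of body satisfaction for the $U$-part $\Set{ L \in B(r) | \preds{L} \subseteq U }$: I only know it is satisfied by the mixed structure $\mstr[M', M]$, whereas I need it under $\mstr[M, M]$. The $\mnot[]$-literals cause no difficulty, since $\mnot p$ is evaluated solely against the third component, which is $M$ in both structures, so these literals transfer verbatim. The positive $\mk[]$-literals are the real issue: $\mk p$ is evaluated against the second component, so passing from $\mstr[M', M]$ to $\mstr[M, M]$ hinges on the monotonic behaviour of $\mk[]$ along the inclusion relating $M$ and $M'$ supplied by the context in which this corollary is invoked. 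Once that transfer is justified I obtain $\mstr[M, M] \ent \pi(\Set{ L \in B(r) | \preds{L} \subseteq U })$, which combined with the $(\lpre \setminus U)$-part of the body (transferred by Corollary~\ref{cor:sub:ent_relevant} using $\restr[\lpre \setminus U]{M'} = \restrc{M}$) gives $M \ent \pi(B(r))$, and the argument closes as sketched above.
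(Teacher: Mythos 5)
Your reduction to Proposition~\ref{prop:splitting:aux} is set up correctly: with $D = M$, $D' = M'$, $E = X$, $E' = X'$, $F = F' = M$, $G = M$, $G' = M'$ all three coincidence conditions really do follow from the stated hypotheses (your instantiation is in fact cleaner than the paper's own, which takes $G = G' = X$ and therefore needs $\restr[U]{X} = \restr[U]{M'}$ for its condition~3, an equality the hypotheses do not provide). The problem is the residual obligation $M \ent \pi(e_U(\kb, M', M))$, which you correctly single out as the crux but never discharge. Discharging it requires transferring literals $\mk p$ with $\preds{p} \subseteq U$ from the structure $\mstr[M', M]$ to $\mstr[M, M]$, i.e.\ from ``$p$ holds in every member of $M'$'' to ``$p$ holds in every member of $M$'', and that needs something like $\restr[U]{M} \subseteq \restr[U]{M'}$. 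You appeal to ``the inclusion relating $M$ and $M'$ supplied by the context in which this corollary is invoked'', but the corollary has to be proved from its own hypotheses, and those only force $M$ and $M'$ to coincide on $\lpre \setminus U$; on $U$ they are mutually unconstrained. So this is a genuine gap, not a deferred routine check.

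Moreover, the gap cannot be filled, because without such an inclusion the statement is false. Take $\kb = \an{\emptyset, \set{q \lpif p}}$ with $\preds{p} = \set{P}$, $\preds{q} = \set{Q}$ and $U = \set{P}$; this $U$ is a splitting set (the head predicate $Q$ lies outside $U$) and $b_U(\kb)$ is empty. Put $M = X = \Set{J \in \foint | p \notin J}$ and $M' = X' = \Set{J \in \foint | p \in J}$. All hypotheses hold: $M \ent \pi(\kb)$ because $\mk p$ fails w.r.t.~$M$; $\restrc{M'} = \restrc{M}$ since both are all interpretations over $\lpre \setminus U$; and the conditions on $X, X'$ are trivial. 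The antecedent holds, since in $\mstr[M', M]$ the literal $\mk p$ is true while $\mk q$ is not, so $\mstr[M', M] \nent \pi(\kb)$; yet $\pi(b_U(\kb)) = \emptyset$ is satisfied by every structure, so the conclusion fails. The same example falsifies your missing step: $e_U(\prog, M', M)$ contains the fact $q$, and $M \nent \mk q$. The repair is to add the hypothesis $\restr[U]{M} \subseteq \restr[U]{M'}$ --- it does hold in the corollary's only application, inside the proof of Proposition~\ref{prop:splitting:X}, where $M \subseteq M'$ by construction --- and under it your argument closes exactly as you sketched: $\mk[]$-literals over $U$ transfer from $\mstr[M', M]$ to $\mstr[M, M]$, hence every rule of $e_U(\prog, M', M)$ is a rule of $e_U(\prog, M, M)$, and $M \ent \pi(e_U(\kb, M))$ is supplied by Corollary~\ref{cor:splitting:aux:1}. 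In that repaired form your route goes through, while the paper's own instantiation remains inadmissible even then; as written, both your proof and the paper's are incomplete.
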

\begin{proof}
	Proposition \ref{prop:splitting:aux} for $D = M$, $D' = M'$, $E = X$, $E' =
	X'$, $F = F' = M$, $G = G' = X$ implies that
	\[
		\mstr[M', M] \nent \pi(\kb)
			\mlequiv \mstr[X', X] \nent \pi(b_U(\kb)) \lor \mstr[M, M] \nent \pi(e_U(\kb, X, X))
			\enspace.
	\]
	Furthermore, from Corollary \ref{cor:splitting:aux:1} we know that $M \ent
	\pi(e_U(\kb, X))$ is always satisfied because $M \ent \pi(\kb)$. Hence, by
	Corollary \ref{cor:sub:ent_relevant}, the second disjunct on the right hand
	side of the above equivalence can be safely omitted and we obtain the claim
	of this corollary.
\end{proof}

\begin{corollary} \label{cor:splitting:aux:4}
	Let $U$ be a splitting set for a hybrid knowledge base $\kb$ and $M, M', X,
	Y, Y'$ be MKNF interpretations such that $M \ent \pi(\kb)$, $\restr[U]{M'} =
	\restr[U]{X} = \restr[U]{M}$, $\restrc{Y} = \restr[\lpre
	\setminus U]{M}$ and $\restrc{Y'} = \restr[\lpre \setminus
	U]{M'}$. Then:
	\[
		\mstr[M', M] \nent \pi(\kb) \mlthen \mstr[Y', Y] \nent \pi(e_U(\kb, X))
		\enspace.
	\]
\end{corollary}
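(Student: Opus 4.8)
The plan is to reduce the claim to a single application of Proposition~\ref{prop:splitting:aux}, mirroring the proof of Corollary~\ref{cor:splitting:aux:3} but with the roles of the bottom and the reduct exchanged. Since the conclusion concerns $\pi(e_U(\kb, X))$ evaluated in $\mstr[Y', Y]$, and since $e_U(\kb, X) = e_U(\kb, X, X)$, I would instantiate the proposition with $D = M$, $D' = M'$, $F = Y$, $F' = Y'$, and $G = G' = X$, so that the reduct conjunct becomes exactly $\mstr[Y', Y] \ent \pi(e_U(\kb, X))$. For the bottom part, the crucial choice is $E = E' = M$, which is what makes that conjunct independent of $M'$ and hence forced to hold.

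First I would check the three coincidence conditions of Proposition~\ref{prop:splitting:aux}. Condition~(3) holds because $\restr[U]{X} = \restr[U]{M} = \restr[U]{M'}$ by hypothesis, so both $\restr[U]{G} = \restr[U]{D}$ and $\restr[U]{G'} = \restr[U]{D'}$ are satisfied. Condition~(2) is immediate from the two hypotheses $\restrc{Y} = \restrc{M}$ and $\restrc{Y'} = \restrc{M'}$. Condition~(1) with $E = E' = M$ requires $\restr[U]{M} = \restr[U]{M}$ (trivial) together with $\restr[U]{M} = \restr[U]{M'}$; the latter is precisely one of the assumed equalities. With these verified, Proposition~\ref{prop:splitting:aux} gives
\[
	\mstr[M', M] \ent \pi(\kb)
		\mlequiv \mstr[M, M] \ent \pi(b_U(\kb)) \land \mstr[Y', Y] \ent \pi(e_U(\kb, X)) \enspace.
\]

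The key step is then to discharge the first conjunct. From the hypothesis $M \ent \pi(\kb)$ and Proposition~\ref{prop:sub_shortcut} we have $\mstr[M, M] \ent \pi(\kb)$, and since $\pi(b_U(\kb)) \subseteq \pi(\kb)$ this yields $\mstr[M, M] \ent \pi(b_U(\kb))$. Reading the displayed equivalence contrapositively, $\mstr[M', M] \nent \pi(\kb)$ forces $\mstr[M, M] \nent \pi(b_U(\kb))$ or $\mstr[Y', Y] \nent \pi(e_U(\kb, X))$; the first alternative is excluded, so the second must hold, which is exactly the desired conclusion.

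I do not expect a genuine obstacle here, as the argument is essentially bookkeeping once the correct substitution is identified. The only point demanding care is the choice $E = E' = M$: its admissibility hinges entirely on the hypothesis $\restr[U]{M'} = \restr[U]{M}$, without which condition~(1) of Proposition~\ref{prop:splitting:aux} would fail and the bottom conjunct could no longer be guaranteed to hold independently of $M'$.
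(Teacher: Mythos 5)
Your proposal is correct and follows essentially the same route as the paper: the identical instantiation of Proposition~\ref{prop:splitting:aux} with $D = M$, $D' = M'$, $E = E' = M$, $F = Y$, $F' = Y'$, $G = G' = X$, followed by discharging the bottom conjunct using $M \ent \pi(\kb)$. The only (immaterial) difference is that you justify $\mstr[M, M] \ent \pi(b_U(\kb))$ directly from $\pi(b_U(\kb)) \subseteq \pi(\kb)$, whereas the paper cites Corollary~\ref{cor:splitting:aux:1}; both are valid.
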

\begin{proof}
	Proposition \ref{prop:splitting:aux} for $D = M$, $D' = M'$, $E = E' = M$,
	$F = Y$, $F' = Y'$, $G = G' = X$ implies that
	\[
		\mstr[M', M] \nent \pi(\kb)
			\mlequiv \mstr[M, M] \nent \pi(b_U(\kb)) \lor \mstr[Y', Y] \nent \pi(e_U(\kb, X, X))
			\enspace.
	\]
	Furthermore, from Corollary \ref{cor:splitting:aux:1} we know that $M \ent
	\pi(b_U(\kb))$ is always satisfied because $M \ent \pi(\kb)$. Hence, by
	Corollary \ref{cor:sub:ent_relevant}, the first disjunct in the right hand
	side of the above equivalence can be safely omitted and we obtain the claim
	of this corollary.
\end{proof}

\begin{corollary} \label{cor:splitting:aux:5}
	Let $U$ be a splitting set for a hybrid knowledge base $\kb$ and $M, M', X,
	X'$ be MKNF interpretations such that $\restr[U]{X} = \restr[U]{M}$ and
	$\restr[U]{X'} = \restr[U]{M'}$. Then:
	\[
		\mstr[X', X] \nent \pi(b_U(\kb)) \mlthen \mstr[M', M] \nent \pi(\kb)
		\enspace.
	\]
\end{corollary}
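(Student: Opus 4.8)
The plan is to read off the claim from Proposition~\ref{prop:splitting:aux}, in the same spirit as Corollaries~\ref{cor:splitting:aux:3} and~\ref{cor:splitting:aux:4}, but using the equivalence in the contrapositive direction: here I assume only that the bottom is violated and want to conclude that the whole knowledge base is violated, so instead of discarding a vacuously satisfied conjunct I exploit that the failure of a single conjunct on the right-hand side already forces the failure of the left-hand side.

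Concretely, I would apply Proposition~\ref{prop:splitting:aux} with the substitution $D = M$, $D' = M'$, $E = X$, $E' = X'$, $F = G = M$ and $F' = G' = M'$. Its three families of side conditions are then immediate. Condition~1 is literally the two hypotheses $\restr[U]{X} = \restr[U]{M}$ and $\restr[U]{X'} = \restr[U]{M'}$; conditions~2 and~3 hold by reflexivity, since $F, F'$ and $G, G'$ are taken equal to $D, D'$, so the required restrictions coincide. With these substitutions the proposition yields
\[
	\mstr[M', M] \ent \pi(\kb)
		\mlequiv \mstr[X', X] \ent \pi(b_U(\kb))
			\land \mstr[M', M] \ent \pi(e_U(\kb, M', M)) \enspace.
\]

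Negating both sides turns the right-hand conjunction into a disjunction, after which it suffices to observe that the antecedent $\mstr[X', X] \nent \pi(b_U(\kb))$ makes the first disjunct true, hence $\mstr[M', M] \nent \pi(\kb)$, as required. There is essentially no obstacle beyond bookkeeping: all the real work is already packaged in Proposition~\ref{prop:splitting:aux}, and the only point requiring care is the choice of the auxiliary interpretations. In particular, $F$ and $F'$ must be taken on the $M$/$M'$ side rather than the $X$/$X'$ side, because the hypotheses constrain only the $U$-restrictions of $X$ and $X'$ and say nothing about their $\lpre \setminus U$-restrictions, so condition~2 would not be available for $F = X$, $F' = X'$.
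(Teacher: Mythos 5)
Your proof is correct and matches the paper's own argument exactly: the paper applies Proposition~\ref{prop:splitting:aux} with precisely the same substitution ($D = M$, $D' = M'$, $E = X$, $E' = X'$, $F = G = M$, $F' = G' = M'$) and then reads off the claim from the negated equivalence, just as you do. The only difference is that you spell out the verification of the side conditions and the reason $F, F'$ must sit on the $M$/$M'$ side, which the paper leaves implicit.
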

\begin{proof}
	Proposition \ref{prop:splitting:aux} for $D = M$, $D' = M'$, $E = X$, $E' =
	X'$, $F = G = M$, $F' = G' = M'$ implies that
	\[
		\mstr[M', M] \nent \pi(\kb)
			\mlequiv \mstr[X', X] \nent \pi(b_U(\kb)) \lor \mstr[M', M] \nent \pi(e_U(\kb, M', M))
			\enspace.
	\]
	The claim of this corollary follows directly from this equivalence.
\end{proof}

\begin{corollary} \label{cor:splitting:aux:6}
	Let $U$ be a splitting set for a hybrid knowledge base $\kb$ and $M, M', X,
	Y, Y'$ be MKNF interpretations such that $\restr[U]{M'} =
	\restr[U]{X} = \restr[U]{M}$, $\restrc{Y} = \restr[\lpre
	\setminus U]{M}$ and $\restrc{Y'} = \restr[\lpre \setminus
	U]{M'}$. Then:
	\[
		\mstr[Y', Y] \nent \pi(e_U(\kb, X)) \mlthen \mstr[M', M] \nent \pi(\kb)
		\enspace.
	\]
\end{corollary}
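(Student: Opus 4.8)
The plan is to instantiate Proposition~\ref{prop:splitting:aux} with a suitable choice of the eight interpretations so that the full structure $\mstr[M', M]$ lands on the $\pi(\kb)$ side, the bottom $\pi(b_U(\kb))$ is evaluated under $\mstr[M, M]$, and the reduct $\pi(e_U(\kb, X, X)) = \pi(e_U(\kb, X))$ is evaluated exactly under $\mstr[Y', Y]$. Concretely, I would set $D = M$, $D' = M'$, $E = E' = M$, $F = Y$, $F' = Y'$, and $G = G' = X$. This is precisely the assignment already used for the companion Corollary~\ref{cor:splitting:aux:4}.

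First I would check that the three coincidence hypotheses of Proposition~\ref{prop:splitting:aux} hold under this assignment. Condition~(1) asks for $\restr[U]{M} = \restr[U]{M}$, which is trivial, together with $\restr[U]{M} = \restr[U]{M'}$, which is part of the corollary's assumption $\restr[U]{M'} = \restr[U]{X} = \restr[U]{M}$. Condition~(2) asks for $\restrc{Y} = \restrc{M}$ and $\restrc{Y'} = \restrc{M'}$, both of which are given. Condition~(3) asks for $\restr[U]{X} = \restr[U]{M}$ and $\restr[U]{X} = \restr[U]{M'}$, again both contained in the assumed chain of equalities.

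With the hypotheses verified, Proposition~\ref{prop:splitting:aux} yields the equivalence
\[
	\mstr[M', M] \ent \pi(\kb)
		\mlequiv \mstr[M, M] \ent \pi(b_U(\kb)) \land \mstr[Y', Y] \ent \pi(e_U(\kb, X, X))
	\enspace.
\]
Recalling from the remark that $e_U(\kb, X, X) = e_U(\kb, X)$ and negating both sides, this becomes
\[
	\mstr[M', M] \nent \pi(\kb)
		\mlequiv \mstr[M, M] \nent \pi(b_U(\kb)) \lor \mstr[Y', Y] \nent \pi(e_U(\kb, X))
	\enspace.
\]

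The claim then follows immediately: assuming $\mstr[Y', Y] \nent \pi(e_U(\kb, X))$ makes the second disjunct on the right true, so the disjunction holds and hence $\mstr[M', M] \nent \pi(\kb)$. Unlike Corollary~\ref{cor:splitting:aux:4}, which needs the extra hypothesis $M \ent \pi(\kb)$ together with Corollary~\ref{cor:splitting:aux:1} to \emph{discard} the bottom disjunct, here no disjunct must be eliminated, since only one direction of the implication is claimed. The only real care needed is thus bookkeeping --- ensuring the substitution is the one whose induced equivalence has $\mstr[Y', Y] \nent \pi(e_U(\kb, X))$ as a disjunct --- so I anticipate no genuine obstacle beyond matching the instantiation correctly.
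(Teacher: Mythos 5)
Your proposal is correct and matches the paper's own proof exactly: the paper uses Proposition~\ref{prop:splitting:aux} with the identical instantiation $D = M$, $D' = M'$, $E = E' = M$, $F = Y$, $F' = Y'$, $G = G' = X$, and then reads the claim off the resulting (negated) equivalence, just as you do. Your additional verification of the coincidence hypotheses and the remark contrasting this with Corollary~\ref{cor:splitting:aux:4} (where the extra hypothesis $M \ent \pi(\kb)$ is needed to discard a disjunct) are accurate elaborations of the same argument.
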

\begin{proof}
	Proposition \ref{prop:splitting:aux} for $D = M$, $D' = M'$, $E = E' = M$,
	$F = Y$, $F' = Y'$, $G = G' = X$ implies that
	\[
		\mstr[M', M] \nent \pi(\kb)
			\mlequiv \mstr[M, M] \nent \pi(b_U(\kb)) \lor \mstr[Y', Y] \nent \pi(e_U(\kb, X, X))
			\enspace.
	\]
	The claim of this corollary follows directly from this equivalence.
\end{proof}

\begin{proposition} \label{prop:splitting:X}
	Let $U$ be a splitting set for a hybrid knowledge base $\kb$, $M$ be an MKNF
	model of $\kb$ and $X = \sat{M}$. Then $X$ is an MKNF model of
	$b_U(\kb)$.
\end{proposition}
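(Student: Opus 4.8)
The plan is to verify directly the two defining conditions of an MKNF model of $b_U(\kb)$ for the candidate $X = \sat{M}$: that $X$ is an S5 model of $b_U(\kb)$, and that it is minimal in the sense required by the definition. Throughout I would exploit that every formula in $\pi(b_U(\kb))$ is subjective, so that I may pass to the shortcut notation $\mstr[\cdot,\cdot]$ justified by Proposition~\ref{prop:sub_shortcut}, and that $\preds{\pi(b_U(\kb))} \subseteq U$. By Proposition~\ref{prop:sigma}, $X$ coincides with $M$ on $U$, is saturated relative to $U$, and satisfies $M \subseteq X$; in particular $X$ is nonempty and hence a genuine MKNF interpretation.

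For the S5 condition I would start from $M \ent \pi(\kb)$, which holds because $M$ is an MKNF model of $\kb$. Since $\pi(b_U(\kb)) \subseteq \pi(\kb)$, this immediately gives $M \ent \pi(b_U(\kb))$. As $X$ and $M$ coincide on $U \supseteq \preds{\pi(b_U(\kb))}$, Corollary~\ref{cor:ent_relevant} transfers this to $X \ent \pi(b_U(\kb))$.

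For minimality I would argue by contradiction, assuming some MKNF interpretation $X' \supsetneq X$ satisfies $\mstr[X', X] \ent \pi(b_U(\kb))$. Because $X$ is saturated relative to $U$, Proposition~\ref{prop:restr:sat:1} converts $X \subsetneq X'$ into the strict restriction inequality $\restr{X} \subsetneq \restr{X'}$, hence $\restr{M} \subsetneq \restr{X'}$. I would then apply Proposition~\ref{prop:semisat_exists} to the pair $X', M$ to obtain the greatest MKNF interpretation $M'$ coinciding with $X'$ on $U$ and with $M$ on $\lpre \setminus U$; this $M'$ is semi-saturated relative to $U$. From $\restr{M} = \restr{X} \subsetneq \restr{X'} = \restr{M'}$ (proper) together with $\restrc{M} = \restrc{M'}$ (equality), Proposition~\ref{prop:restr:semisat:2} yields $M \subsetneq M'$.

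It then remains to contradict the minimality of $M$ as an MKNF model of $\kb$. The quadruple $M, M', X, X'$ meets the hypotheses of Corollary~\ref{cor:splitting:aux:3}: $M \ent \pi(\kb)$, $\restrc{M'} = \restrc{M}$, $\restr{X} = \restr{M}$ and $\restr{X'} = \restr{M'}$. Its contrapositive, applied to the standing assumption $\mstr[X', X] \ent \pi(b_U(\kb))$, delivers $\mstr[M', M] \ent \pi(\kb)$. Since $M' \supsetneq M$, this contradicts the minimality clause in the definition of $M$ being an MKNF model of $\kb$. Hence no $X' \supsetneq X$ can satisfy $\mstr[X', X] \ent \pi(b_U(\kb))$, and by Proposition~\ref{prop:sub_shortcut} this, combined with the S5 condition, shows $X$ is an MKNF model of $b_U(\kb)$. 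I expect the main obstacle to be the construction of the companion interpretation $M'$ that simultaneously extends $M$ strictly and agrees with $X'$ on $U$ while remaining faithful to $M$ off $U$: it is the semi-saturation machinery (Propositions~\ref{prop:semisat_exists} and~\ref{prop:restr:semisat:2}) that makes this splicing both possible and guaranteed to be an MKNF interpretation, and getting these restriction (in)equalities to line up exactly with the hypotheses of Corollary~\ref{cor:splitting:aux:3} is the delicate bookkeeping step.
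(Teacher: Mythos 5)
Your proposal is correct and follows essentially the same route as the paper's own proof: the same appeal to Proposition~\ref{prop:sigma}, the same construction of the companion interpretation $M'$ via Proposition~\ref{prop:semisat_exists}, the same strict restriction inequality from Proposition~\ref{prop:restr:sat:1}, and the same transfer through Corollary~\ref{cor:splitting:aux:3} (which you merely use in contrapositive form, arguing by contradiction where the paper argues directly). The remaining differences are cosmetic: you establish the S5 condition via monotonicity and Corollary~\ref{cor:ent_relevant} where the paper cites Corollary~\ref{cor:splitting:aux:2}, and you get $M \subsetneq M'$ from Proposition~\ref{prop:restr:semisat:2} where the paper uses the chain $M \subseteq X' \cap M \subseteq M'$ with Proposition~\ref{prop:restr:basic}; both variants are sound.
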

\begin{proof}
	By Proposition \ref{prop:sigma} we know that $M \subseteq X$ and that $X$ is
	saturated relative to $U$. We need to show that $X$ is an MKNF model of
	$b_U(\kb)$. By Proposition \ref{prop:sub_shortcut}, this holds if and only
	if $X$ is an S5 model of $\pi(b_U(\kb))$ and for every $X' \supsetneq X$ it
	holds that $\mstr[X', X] \nent \pi(b_U(\kb))$. The former follows directly
	from Corollary \ref{cor:splitting:aux:2}, so we will focus on the latter
	condition.

	Let's pick some $X' \supsetneq X$. By Proposition \ref{prop:semisat_exists}
	there exists the greatest MKNF interpretation $M'$ that coincides with $X'$
	on $U$ (i.e. $\restr[U]{M'} = \restr[U]{X'}$) and with $M$ on $\lpre
	\setminus U$ (i.e. $\restrc{M'} = \restr[\lpre \setminus
	U]{M}$) and which contains $X' \cap M$. Hence,
	\begin{equation}
		M \subseteq X \cap M \subseteq X' \cap M \subseteq M' \enspace.
		\label{eq:prop:splitting:X:1}
	\end{equation}
	Furthermore, we know that $X$ is saturated relative to $U$, so we can use
	Proposition \ref{prop:restr:sat:1}\eqref{eq:prop:restr:sat:1:2} to conclude
	that
	\begin{equation}
		\restr[U]{M} = \restr[U]{X} \subsetneq \restr[U]{X'} = \restr[U]{M'}
		\enspace. \label{eq:prop:splitting:X:2}
	\end{equation}
	Consequently, by \eqref{eq:prop:splitting:X:1},
	\eqref{eq:prop:splitting:X:2} and Proposition
	\ref{prop:restr:basic}\eqref{eq:prop:restr:basic:3}, we obtain $M \subsetneq
	M'$. This, together with the assumption that $M$ is an MKNF model of $\kb$,
	implies that $\mstr[M', M] \nent \pi(\kb)$. We can now apply Corollary
	\ref{cor:splitting:aux:3} to conclude that $\mstr[X', X] \nent
	\pi(b_U(\kb))$, which is also the desired conclusion.
\end{proof}

\begin{proposition} \label{prop:splitting:Y}
	Let $U$ be a splitting set for a hybrid knowledge base $\kb$, $M$ be an MKNF
	model of $\kb$, $X$ be the greatest MKNF interpretation that coincides with
	$M$ on $U$ and $Y$ be the greatest MKNF interpretation that coincides with
	$M$ on $\lpre \setminus U$. Then $Y$ is an MKNF model of $e_U(\kb, X)$.
\end{proposition}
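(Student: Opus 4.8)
The plan is to verify, via the shortcut characterisation of Proposition~\ref{prop:sub_shortcut}, the two defining conditions of an MKNF model for $Y$ against $e_U(\kb, X)$: that $Y$ is an S5 model of $\pi(e_U(\kb, X))$, and that $\mstr[Y', Y] \nent \pi(e_U(\kb, X))$ for every MKNF interpretation $Y' \supsetneq Y$. The whole argument is dual to Proposition~\ref{prop:splitting:X}, with the roles of $U$ and $\lpre \setminus U$ interchanged and the ``top'' corollaries used in place of the ``bottom'' ones. First I would record, using Proposition~\ref{prop:sigma}, that the greatest MKNF interpretations coinciding with $M$ on $U$ and on $\lpre \setminus U$ are $X = \sat[U]{M}$ and $Y = \sat[\lpre \setminus U]{M}$ respectively; in particular $M \subseteq Y$, $X$ coincides with $M$ on $U$, $Y$ coincides with $M$ on $\lpre \setminus U$, and $Y$ is saturated relative to $\lpre \setminus U$.

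The S5 condition is immediate. Since $M$ is an MKNF model of $\kb$ we have $M \ent \pi(\kb)$, and Corollary~\ref{cor:splitting:aux:2} applied to $M$, $X$, $Y$ (whose hypotheses $\restr[U]{X} = \restr[U]{M}$ and $\restrc{Y} = \restrc{M}$ hold by the previous paragraph) yields at once $Y \ent \pi(e_U(\kb, X))$.

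For the maximality condition, fix an MKNF interpretation $Y' \supsetneq Y$. Applying Proposition~\ref{prop:semisat_exists} with $M_1 = M$ and $M_2 = Y'$, let $M'$ be the greatest MKNF interpretation coinciding with $M$ on $U$ and with $Y'$ on $\lpre \setminus U$; it satisfies $M \cap Y' \subseteq M'$. I would then establish $M \subsetneq M'$: from $M \subseteq Y \subsetneq Y'$ we get $M = M \cap Y' \subseteq M'$, and since $Y$ is saturated relative to $\lpre \setminus U$, Proposition~\ref{prop:restr:sat:1}\eqref{eq:prop:restr:sat:1:2} turns $Y \subsetneq Y'$ into $\restrc{Y} \subsetneq \restrc{Y'}$, hence $\restrc{M} = \restrc{Y} \subsetneq \restrc{Y'} = \restrc{M'}$, so Proposition~\ref{prop:restr:basic}\eqref{eq:prop:restr:basic:3} gives $M \subsetneq M'$. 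As $M$ is an MKNF model of $\kb$, this forces $\mstr[M', M] \nent \pi(\kb)$. Finally, Corollary~\ref{cor:splitting:aux:4}, whose hypotheses $\restr[U]{M'} = \restr[U]{X} = \restr[U]{M}$, $\restrc{Y} = \restrc{M}$ and $\restrc{Y'} = \restrc{M'}$ are all met by the construction, transfers this to $\mstr[Y', Y] \nent \pi(e_U(\kb, X))$, which is exactly what the maximality condition requires.

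I expect no serious obstacle, the argument being a mirror image of Proposition~\ref{prop:splitting:X}. The one point demanding care is the choice of the auxiliary interpretation $M'$: it must coincide with $M$ (rather than with $Y'$) on $U$, so that the relevant predicates of $e_U(\kb, X)$, which lie entirely outside $U$, are governed by the $\lpre \setminus U$ part where $M'$ agrees with $Y'$; and one must then check that this choice makes every coincidence hypothesis of Corollary~\ref{cor:splitting:aux:4} line up correctly.
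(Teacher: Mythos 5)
Your proposal is correct and follows essentially the same route as the paper's own proof: Corollary~\ref{cor:splitting:aux:2} for the S5 condition, then for maximality the construction of $M'$ via Proposition~\ref{prop:semisat_exists} (coinciding with $M$ on $U$ and with $Y'$ on $\lpre \setminus U$), the derivation of $M \subsetneq M'$ from Propositions~\ref{prop:restr:sat:1} and \ref{prop:restr:basic}, and the transfer to $\mstr[Y', Y] \nent \pi(e_U(\kb, X))$ by Corollary~\ref{cor:splitting:aux:4}. In fact your write-up states the inclusion chain $\restrc{M} = \restrc{Y} \subsetneq \restrc{Y'} = \restrc{M'}$ correctly, where the paper's displayed equation contains a typo ($Y'$ written in place of $Y$ on the left).
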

\begin{proof}
	By Proposition \ref{prop:sigma} we know that $M \subseteq Y$ and that $Y$ is
	saturated relative to $\lpre \setminus U$. We need to show that $Y$ is an
	MKNF model of $e_U(\kb, X)$. By Proposition \ref{prop:sub_shortcut}, this
	holds if and only if $Y$ is and S5 model of $\pi(e_U(\kb, X))$ and for every
	$Y' \supsetneq Y$ it holds that $\mstr[Y', Y] \nent \pi(e_U(\kb, X))$. The
	former follows directly from Corollary \ref{cor:splitting:aux:2}, so we will
	focus on the latter condition.

	Let's pick some $Y' \supsetneq Y$. By Proposition \ref{prop:semisat_exists}
	there exists the greatest MKNF interpretation $M'$ that coincides with $M$
	on $U$ (i.e. $\restr[U]{M'} = \restr[U]{M}$) and with $Y'$ on $\lpre
	\setminus U$ (i.e. $\restrc{M'} = \restr[\lpre \setminus
	U]{Y'}$) and which contains $M \cap Y'$. Hence,
	\begin{equation}
		M \subseteq M \cap Y \subseteq M \cap Y' \subseteq M' \enspace.
		\label{eq:prop:splitting:Y:1}
	\end{equation}
	Furthermore, we know that $Y$ is saturated relative to $\lpre \setminus U$,
	so we can use Proposition
	\ref{prop:restr:sat:1}\eqref{eq:prop:restr:sat:1:2} to conclude that
	\begin{equation}
		\restrc{M} = \restrc{Y'} \subsetneq
		\restrc{Y'} = \restrc{M'} \enspace.
		\label{eq:prop:splitting:Y:2}
	\end{equation}
	Consequently, by \eqref{eq:prop:splitting:Y:1},
	\eqref{eq:prop:splitting:Y:2} and Proposition
	\ref{prop:restr:basic}\eqref{eq:prop:restr:basic:3}, we obtain $M \subsetneq
	M'$. This, together with the assumption that $M$ is an MKNF model of $\kb$,
	implies that $\mstr[M', M] \nent \pi(\kb)$. We can now apply Corollary
	\ref{cor:splitting:aux:4} to conclude that $\mstr[Y', Y] \nent \pi(e_U(\kb,
	X))$, which is also the desired conclusion.
\end{proof}

\begin{proposition} \label{prop:splitting:XcapY}
	Let $U$ be a splitting set for a hybrid knowledge base $\kb$ and $\an{X, Y}$
	be a solution to $\kb$ with respect to $U$. Then $X \cap Y$ is an MKNF model
	of $\kb$.
\end{proposition}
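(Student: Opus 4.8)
The plan is to verify the two defining conditions of an MKNF model of $\kb$ for $M = X \cap Y$, namely that $M \ent \pi(\kb)$ and that $\mstr[M', M] \nent \pi(\kb)$ for every $M' \supsetneq M$, using the subjective-formula characterisation of MKNF models from Proposition~\ref{prop:sub_shortcut} (all formulae in $\pi(\kb)$ are subjective). First I would record the saturation structure of $X$ and $Y$. Since $X$ is an MKNF model of $b_U(\kb)$ and $\preds{\pi(b_U(\kb))} \subseteq U$, Proposition~\ref{prop:saturated:mknf} yields that $X$ is saturated relative to $U$; likewise, since $Y$ is an MKNF model of $e_U(\kb, X)$ and $\preds{\pi(e_U(\kb, X))} \subseteq \lpre \setminus U$, the same proposition yields that $Y$ is saturated relative to $\lpre \setminus U$. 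Proposition~\ref{prop:sat_semisat} then gives that $M = X \cap Y$ is an MKNF interpretation that is semi-saturated relative to $U$ and coincides with $X$ on $U$ and with $Y$ on $\lpre \setminus U$, i.e. $\restr[U]{M} = \restr[U]{X}$ and $\restrc{M} = \restrc{Y}$.

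For the S5 part I would apply Corollary~\ref{cor:splitting:aux:2} to this $M$, $X$ and $Y$: its hypotheses $\restr[U]{X} = \restr[U]{M}$ and $\restrc{Y} = \restrc{M}$ hold by the previous paragraph, and its right-hand side $X \ent \pi(b_U(\kb)) \land Y \ent \pi(e_U(\kb, X))$ holds because $X$ and $Y$ are in particular S5 models of the respective knowledge bases. Hence $M \ent \pi(\kb)$.

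The minimality condition is where the real work lies. Given $M' \supsetneq M$, I would exploit that $M$ is semi-saturated relative to $U$: by Proposition~\ref{prop:restr:semisat:1}(2) this yields $\restr[U]{M} \subsetneq \restr[U]{M'}$ or $\restrc{M} \subsetneq \restrc{M'}$, and I split accordingly. In the first case ($\restr[U]{M} \subsetneq \restr[U]{M'}$) I set $X' = \sat[U]{M'}$; then $X'$ coincides with $M'$ on $U$ and is saturated relative to $U$ by Proposition~\ref{prop:sigma}, so from $\restr[U]{X} = \restr[U]{M} \subsetneq \restr[U]{M'} = \restr[U]{X'}$ and Corollary~\ref{cor:restr:sat}(3) I obtain $X \subsetneq X'$. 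Because $X$ is an \emph{MKNF} model of $b_U(\kb)$, this strict extension forces $\mstr[X', X] \nent \pi(b_U(\kb))$, and Corollary~\ref{cor:splitting:aux:5} (whose hypotheses $\restr[U]{X} = \restr[U]{M}$ and $\restr[U]{X'} = \restr[U]{M'}$ hold) transfers this into $\mstr[M', M] \nent \pi(\kb)$. In the remaining case I may assume $\restr[U]{M} = \restr[U]{M'}$ and $\restrc{M} \subsetneq \restrc{M'}$; symmetrically I set $Y' = \satc{M'}$, derive $Y \subsetneq Y'$ from Corollary~\ref{cor:restr:sat}(3) applied to the interpretations $Y$ and $Y'$, both saturated relative to $\lpre \setminus U$, then use that $Y$ is an MKNF model of $e_U(\kb, X)$ to get $\mstr[Y', Y] \nent \pi(e_U(\kb, X))$, and finally invoke Corollary~\ref{cor:splitting:aux:6} to conclude $\mstr[M', M] \nent \pi(\kb)$.

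I expect the main obstacle to be the bookkeeping in the minimality step: one must choose the auxiliary interpretations $X'$ and $Y'$ so that they strictly extend $X$ and $Y$ (so that the \emph{MKNF}-model property of $X$ and $Y$, not merely their S5-model property, can be invoked) while still matching $M'$ on the relevant predicate set, and one must check that the two cases arising from semi-saturation of $M$ are exhaustive under $M' \supsetneq M$. Once both conditions are in hand, Proposition~\ref{prop:sub_shortcut} packages them into the statement that $M = X \cap Y$ is an MKNF model of $\kb$.
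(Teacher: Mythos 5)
Your proof is correct and follows essentially the same route as the paper's: the same saturation setup via Propositions~\ref{prop:saturated:mknf} and~\ref{prop:sat_semisat}, the same S5 step via Corollary~\ref{cor:splitting:aux:2}, the same case split from semi-saturation of $M$ via Proposition~\ref{prop:restr:semisat:1}, and the same transfer back via Corollaries~\ref{cor:splitting:aux:5} and~\ref{cor:splitting:aux:6}. The only difference is cosmetic: you take $X' = \sat[U]{M'}$ and $Y' = \satc{M'}$ where the paper takes $X' = X \cup M'$ and $Y' = Y \cup M'$; both choices produce the strict extensions of $X$ and $Y$ needed to invoke their MKNF-model (rather than merely S5-model) property.
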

\begin{proof*}
	Let $\kb = \an{\ont, \prog}$ and $M = X \cap Y$. In order to show that $M$
	is an MKNF model of $\kb$, we need to prove that $M \ent \pi(\kb)$ and that
	for every $M' \supsetneq M$ it holds that $\mstr[M', M] \nent \pi(\kb)$. We
	will verify the two conditions separately.

	Since $\an{X, Y}$ is a solution to $\kb$ with respect to $U$, $X$ must an
	MKNF model of $b_U(\kb)$ and $Y$ an MKNF model of $e_U(\kb, X)$. So $X \ent
	\pi(b_U(\kb))$ and $Y \ent \pi(e_U(\kb, X))$. Consequently, by Corollary
	\ref{cor:splitting:aux:2}, $M \ent \pi(\kb)$.

	We know that $X$ is an MKNF model of $b_U(\kb)$, so, by Proposition
	\ref{prop:saturated:mknf}, $X$ is saturated relative to $U$. Similarly,
	since $Y$ is an MKNF model of $e_U(\kb, X)$, it must be saturated relative
	to $\lpre \setminus U$. Hence, by Proposition \ref{prop:sat_semisat}, $M$ is
	semi-saturated relative to $U$, $\restr[U]{M} = \restr[U]{X}$ and
	$\restrc{M} = \restrc{Y}$.

	Now take some MKNF interpretation $M' \supsetneq M$ and let $X' = X \cup M'$
	and $Y' = Y \cup M'$. We already inferred that $M$ is semi-saturated
	relative to $U$, which means that by Proposition
	\ref{prop:restr:semisat:1}\eqref{eq:prop:restr:semisat:1:2} one of the
	following cases must occur:
	\begin{enumerate}
		\renewcommand{\labelenumi}{\alph{enumi})}
		\item If $\restr[U]{M'} \supsetneq \restr[U]{M}$, then
			\[
				\restr[U]{X'} = \restr[U]{X} \cup \restr[U]{M'} = \restr[U]{M'}
				\supsetneq \restr[U]{M} = \restr[U]{X} \enspace,
			\]
			and so Proposition \ref{prop:restr:basic}\eqref{eq:prop:restr:basic:3}
			implies that $X' \supsetneq X$. Hence, since $X$ is an MKNF model of
			$b_U(\kb)$, we infer that $\mstr[X', X] \nent \pi(b_U(\kb))$ and by
			Corollary \ref{cor:splitting:aux:5} we obtain $\mstr[M', M] \nent
			\pi(\kb)$, which is what we wanted to prove.

		\item If $\restr[U]{M'} = \restr[U]{M}$ and $\restrc{M'}
			\supsetneq \restrc{M}$, then $\restr[U]{X'} =
			\restr[U]{M'} = \restr[U]{M}$ and 
			\[
				\restrc{Y'} = \restrc{Y} \cup
				\restrc{M'} = \restrc{M'}
				\supsetneq \restrc{M} = \restrc{Y}
				\enspace,
			\]
			and so Proposition \ref{prop:restr:basic}\eqref{eq:prop:restr:basic:3}
			implies that $Y' \supsetneq Y$. Hence, since $Y$ is an MKNF model of
			$e_U(\kb, X)$, we infer that $\mstr[Y', Y] \nent \pi(e_U(\kb, X))$ and
			by Corollary \ref{cor:splitting:aux:6} we obtain $\mstr[M', M] \nent
			\pi(\kb)$, which is what we wanted to prove. \qedhere
	\end{enumerate}
\end{proof*}

\begin{theorem*}
	[Splitting Theorem for Hybrid MKNF Knowledge Bases]
	{thm:splitting}%
	Let $U$ be a splitting set for a hybrid knowledge base $\kb$. An MKNF
	interpretation $M$ is an MKNF model of $\kb$ if and only if $M = X \cap Y$
	for some solution $\an{X, Y}$ to $\kb$ with respect to $U$.
\end{theorem*}
\begin{proof}[Proof of Theorem \ref{thm:splitting}]
	\label{proof:thm:splitting}%
	First suppose that $M$ is an MKNF model of $\kb$. By Proposition
	\ref{prop:splitting:X} we know that $X = \sat{M}$ is an MKNF model of
	$b_U(\kb)$ and by Proposition \ref{prop:splitting:Y} that the $Y = \satc{M}$
	is an MKNF model of $e_U(\kb, X)$. Furthermore, by Proposition
	\ref{prop:sigma}, $\restr{X} = \restr{M}$ and $X$ is saturated relative to
	$U$, $\restrc{Y} = \restr[\lpre \setminus U]{M}$ and $Y$ is saturated
	relative to $\lpre \setminus U$, and $M \subseteq X \cap Y$. Finally, by
	Proposition \ref{prop:sat_semisat} we obtain the following:
	\begin{align*}
		\restr{(X \cap Y)} &= \restr{X} = \restr{M}  \enspace, \\
		\restrc{(X \cap Y)} &= \restrc{Y} =
			\restrc{M} \enspace.
	\end{align*}
	It remains to show that $M \supseteq X \cap Y$. Suppose this is not the
	case, so $M \subsetneq X \cap Y$. Then, since $M$ is an MKNF model of $\kb$,
	$\mstr[X \cap Y, M] \nent \pi(\kb)$ and by Proposition
	\ref{prop:splitting:aux} for $D = M$, $D' = X \cap Y$, $E = E' = F = F' =
	M$, $G = G' = X$, we obtain
	\[
		\mstr[M, M] \nent \pi(b_U(\kb)) \lor \mstr[M, M] \nent \pi(e_U(\kb, X, X))
		\enspace.
	\]
	However, Corollary \ref{cor:splitting:aux:1} now entails $M \nent \pi(\kb)$,
	a conflict with the assumption that $M$ is an MKNF model of $\kb$.
	Consequently, $M = X \cap Y$.

	The converse implication follows directly from Proposition
	\ref{prop:splitting:XcapY}.
\end{proof}

\begin{corollary} \label{cor:splitting:1}
	Let $U$ be a splitting set for a hybrid knowledge base $\kb$ and $M \in
	\mint$. If $M$ is an MKNF model of $\kb$, then the pair
	\[
		\an{\sat{M}, \satc{M}}
	\]
	is a solution to $\kb$ with respect to $U$, $M = \sat{M} \cap \satc{M}$ and
	$M$ is semi-saturated relative to $U$.
\end{corollary}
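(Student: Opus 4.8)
The plan is to treat this corollary as a canonical repackaging of the forward
direction of the proof of Theorem~\ref{thm:splitting}: there one shows that an
arbitrary MKNF model of $\kb$ decomposes as $X \cap Y$ for \emph{some} solution,
and here we simply pin down that solution to be the concrete pair
$\an{\sat{M}, \satc{M}}$ and read off the three claimed consequences. I would
therefore reuse Propositions~\ref{prop:splitting:X} and~\ref{prop:splitting:Y}
as the main engines, with Proposition~\ref{prop:sigma} doing the bookkeeping
needed to match their hypotheses.

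First I would verify that $\an{\sat{M}, \satc{M}}$ is a solution. By
Proposition~\ref{prop:sigma} (applied once with $U$ and once with $\lpre
\setminus U$), $\sat{M}$ is exactly the greatest MKNF interpretation coinciding
with $M$ on $U$ and is saturated relative to $U$, while $\satc{M}$ is the
greatest MKNF interpretation coinciding with $M$ on $\lpre \setminus U$ and is
saturated relative to $\lpre \setminus U$. Setting $X = \sat{M}$,
Proposition~\ref{prop:splitting:X} gives that $X$ is an MKNF model of
$b_U(\kb)$; then, taking $Y = \satc{M}$ and using the identifications just made,
Proposition~\ref{prop:splitting:Y} gives that $Y$ is an MKNF model of
$e_U(\kb, X)$. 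By the definition of a solution w.r.t.~a splitting set, the pair
is a solution to $\kb$ w.r.t.~$U$.

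Next I would prove the equality $M = \sat{M} \cap \satc{M}$, which I expect to
be the main obstacle, since it is the only assertion that genuinely uses the
maximality built into being an MKNF model. The inclusion $M \subseteq \sat{M}
\cap \satc{M}$ is immediate from Proposition~\ref{prop:sigma}, which yields both
$M \subseteq \sat{M}$ and $M \subseteq \satc{M}$. For the reverse inclusion I
would argue exactly as in the proof of Theorem~\ref{thm:splitting}: assuming
$M \subsetneq \sat{M} \cap \satc{M}$, the MKNF-model maximality of $M$ forces
$\mstr[\sat{M} \cap \satc{M}, M] \nent \pi(\kb)$; applying
Proposition~\ref{prop:splitting:aux} with $D = M$, $D' = \sat{M} \cap \satc{M}$,
$E = E' = F = F' = M$, $G = G' = \sat{M}$ (whose hypotheses hold because, by
Proposition~\ref{prop:sat_semisat}, $\sat{M} \cap \satc{M}$ coincides with $M$
on $U$ and on $\lpre \setminus U$), followed by
Corollary~\ref{cor:splitting:aux:1}, then yields $M \nent \pi(\kb)$, a
contradiction. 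Hence $M = \sat{M} \cap \satc{M}$.

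Finally, semi-saturatedness of $M$ itself follows once the equality is in hand.
Since $\sat{M}$ is saturated relative to $U$ and $\satc{M}$ is saturated
relative to $\lpre \setminus U$, Proposition~\ref{prop:sat_semisat} tells us
that $\sat{M} \cap \satc{M}$ is semi-saturated relative to $U$; as this
intersection is precisely $M$, we conclude that $M$ is semi-saturated relative
to $U$, completing the proof.
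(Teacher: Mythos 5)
Your proposal is correct and follows essentially the same route as the paper: the paper proves this corollary by pointing to the forward direction of its proof of Theorem~\ref{thm:splitting} together with Proposition~\ref{prop:sat_semisat}, and your argument reconstructs exactly that — Propositions~\ref{prop:splitting:X} and~\ref{prop:splitting:Y} (matched via Proposition~\ref{prop:sigma}) for the solution property, the identical contradiction argument via Proposition~\ref{prop:splitting:aux} (with the same parameter instantiation) and Corollary~\ref{cor:splitting:aux:1} for the equality $M = \sat{M} \cap \satc{M}$, and Proposition~\ref{prop:sat_semisat} for semi-saturation.
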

\begin{proof}
	This is a consequence of the proof of Theorem \ref{thm:splitting} and of
	Proposition \ref{prop:sat_semisat}.
\end{proof}

\begin{corollary} \label{cor:splitting:2}
	Let $U$ be a splitting set for a hybrid knowledge base $\kb$ such that there
	exists at least one solution to $\kb$ relative to $U$. Then $\kb$ is MKNF
	satisfiable and an MKNF interpretation $M$ is an MKNF model of $\kb$ if and
	only if $M = X \cap Y$ for some solution $\an{X, Y}$ to $\kb$ with respect
	to $U$.
\end{corollary}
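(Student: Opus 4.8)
The plan is to recognise that the biconditional in the statement is, word for word, the content of Theorem~\ref{thm:splitting}, which holds for any splitting set $U$ with no extra hypotheses. Hence the only genuinely new claim is the MKNF satisfiability of $\kb$, and the role of the added assumption (``at least one solution exists'') is precisely to supply a witness for it.

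First I would take the solution whose existence is assumed, say $\an{X, Y}$, and feed it into Proposition~\ref{prop:splitting:XcapY}, which yields that $X \cap Y$ is an MKNF model of $\kb$. Because every MKNF model is by definition a nonempty MKNF interpretation, the mere existence of $X \cap Y$ as an MKNF model already establishes that an MKNF model of $\kb$ exists, i.e.\ that $\kb$ is MKNF satisfiable. For the characterisation of the models themselves I would simply invoke Theorem~\ref{thm:splitting}: $M$ is an MKNF model of $\kb$ if and only if $M = X' \cap Y'$ for some solution $\an{X', Y'}$ to $\kb$ with respect to $U$. No separate argument is required for the two directions, since they are already discharged by the theorem.

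The only point deserving care is the implicit nonemptiness of $X \cap Y$: satisfiability requires a genuine (nonempty) MKNF interpretation, so it is not enough to know that $X$ and $Y$ are separately nonempty. This is exactly what Proposition~\ref{prop:splitting:XcapY} already guarantees through the semi-saturation machinery. Since $X$ is an MKNF model of $b_U(\kb)$ it is saturated relative to $U$, and $Y$, being an MKNF model of $e_U(\kb, X)$, is saturated relative to $\lpre \setminus U$ (Proposition~\ref{prop:saturated:mknf}); Proposition~\ref{prop:sat_semisat} then glues any $I \in X$ with any $J \in Y$ into an interpretation $\restr[U]{I} \cup \restrc{J}$ lying in $X \cap Y$, so the intersection is nonempty. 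Thus the corollary is a direct packaging of Proposition~\ref{prop:splitting:XcapY} (for satisfiability) together with Theorem~\ref{thm:splitting} (for the characterisation), and I expect essentially no obstacle beyond citing these two already-proved results.
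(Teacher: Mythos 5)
Your proposal is correct and follows essentially the same route as the paper, whose proof of this corollary is simply ``Follows from Theorem~\ref{thm:splitting}'': the biconditional is the theorem verbatim, and satisfiability comes from applying its right-to-left direction (equivalently, Proposition~\ref{prop:splitting:XcapY}, which is exactly that direction) to the assumed solution, since the resulting $X \cap Y$ is by definition a nonempty MKNF interpretation. Your extra care about the nonemptiness of $X \cap Y$ via Propositions~\ref{prop:saturated:mknf} and~\ref{prop:sat_semisat} is a valid unpacking of machinery already discharged inside Proposition~\ref{prop:splitting:XcapY}, not a new ingredient.
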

\begin{proof}
	Follows from Theorem \ref{thm:splitting}.
\end{proof}

\begin{corollary} \label{cor:splitting:3}
	Let $U$ be a splitting set for a hybrid knowledge base $\kb$. An MKNF
	formula $\phi$ is MKNF entailed by $\kb$ if and only if, for every solution
	$\an{X, Y}$ of $\kb$ with respect to $U$, $X \cap Y \ent \phi$.
\end{corollary}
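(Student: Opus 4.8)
The plan is to reduce the statement directly to the Splitting Theorem (Theorem \ref{thm:splitting}). Recall that, by definition, an MKNF formula $\phi$ is MKNF entailed by $\kb$ precisely when $M \ent \phi$ holds for every MKNF model $M$ of $\kb$. Theorem \ref{thm:splitting} characterises exactly these models: $M$ is an MKNF model of $\kb$ if and only if $M = X \cap Y$ for some solution $\an{X, Y}$ to $\kb$ with respect to $U$. Thus the MKNF models of $\kb$ are exactly the interpretations of the form $X \cap Y$ where $\an{X, Y}$ ranges over all solutions to $\kb$ with respect to $U$, and the corollary is obtained by rewriting the quantification over models as a quantification over solutions.

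Concretely, I would argue both directions. For the forward direction, assume $\phi$ is MKNF entailed by $\kb$ and let $\an{X, Y}$ be an arbitrary solution to $\kb$ with respect to $U$; by Theorem \ref{thm:splitting} (or, more directly, by Proposition \ref{prop:splitting:XcapY}) the interpretation $X \cap Y$ is an MKNF model of $\kb$, so $X \cap Y \ent \phi$ by the entailment hypothesis. For the converse, assume $X \cap Y \ent \phi$ for every solution $\an{X, Y}$ to $\kb$ with respect to $U$ and let $M$ be an arbitrary MKNF model of $\kb$; Theorem \ref{thm:splitting} supplies a solution $\an{X, Y}$ with $M = X \cap Y$, whence $M \ent \phi$. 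Since $M$ was arbitrary, $\phi$ is MKNF entailed by $\kb$.

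There is no real obstacle here: the content is entirely carried by Theorem \ref{thm:splitting}, and the corollary merely re-expresses its model characterisation at the level of entailment. The only point requiring a moment's care is the direction of the correspondence. The forward direction uses that \emph{every} $X \cap Y$ arising from a solution is a model, so that no solution can escape the entailment test, while the converse uses that \emph{every} model arises as some $X \cap Y$, so that no model is overlooked. Both facts are exactly the two implications of Theorem \ref{thm:splitting}, so no additional lemmas are needed.
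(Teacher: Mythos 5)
Your proposal is correct and follows exactly the paper's route: the paper proves this corollary simply by invoking Theorem \ref{thm:splitting}, and your two directions are precisely the two implications of that theorem (every solution's intersection $X \cap Y$ is an MKNF model, and every MKNF model arises as such an intersection), combined with the definition of MKNF entailment as truth in all MKNF models. Your write-up merely makes explicit what the paper leaves implicit, including the harmless vacuous case where $\kb$ has no MKNF models and hence no solutions.
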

\begin{proof}
	Follows from Theorem \ref{thm:splitting}.
\end{proof}

\section{Proof of Splitting Sequence Theorem for Hybrid MKNF Knowledge Bases}

\label{app:splitting sequence theorem}

Many parts of the proofs in this section are adapted from \cite{Turner1996}.

\begin{remark}
	It is easy to see that solutions to $\kb$ with respect to a splitting
	sequence $\an{U, \lpre}$ are the same as the solutions to $\kb$ with respect
	to the splitting set $U$.

	Let $U = \an{U_\alpha}_{\alpha < \mu}$ be a splitting sequence for a hybrid
	knowledge base $\kb$, and let $\an{X_\alpha}_{\alpha < \mu}$ be a sequence
	of MKNF interpretations. Then:
	\begin{align*}
		\preds{b_{U_0}(\kb)} &\subseteq U_0 \\
		\predsP{e_{U_\alpha} \left( b_{U_{\alpha+1}}(\kb),
			\textstyle \bigcap_{\eta \leq \alpha} X_\eta \right)}
			&\subseteq U_{\alpha + 1} \setminus U_\alpha
			\text{ whenever } \alpha + 1 < \mu
	\end{align*}

	Furthermore, when $X$ is a solution to $\kb$ with respect to $U$, then $X_0$
	is saturated relative to $U_0$ and for every $\alpha$ such that $\alpha + 1
	< \mu$, $X_{\alpha + 1}$ is saturated relative to $U_{\alpha + 1} \setminus
	U_\alpha$. Also note that for any limit ordinal $\alpha$, $X_\alpha =
	\mint$, so $X_\alpha$ is saturated relative to any set of predicate symbols.
\end{remark}

\begin{lemma} \label{lemma:seq:sat}
	Let $\an{U_\alpha}_{\alpha < \mu}$ be a sequence of sets of atoms and
	$\an{X_\alpha}_{\alpha < \mu}$ be a sequence of members of $\mint$ such that
	for all $\alpha < \mu$, $X_\alpha$ is saturated relative to $U_\alpha$. Then
	$\bigcap_{\alpha < \mu} X_\alpha$ is saturated relative to $\bigcup_{\alpha
	< \mu} U_\alpha$.
\end{lemma}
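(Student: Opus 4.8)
The plan is to unwind the definition of saturation and reduce everything to the single-set case already established. Write $U = \bigcup_{\alpha < \mu} U_\alpha$ and $M = \bigcap_{\alpha < \mu} X_\alpha$. To show that $M$ is saturated relative to $U$, I fix an arbitrary Herbrand interpretation $I \in \foint$ with $\restr[U]{I} \in \restr[U]{M}$ and aim to prove $I \in M$, that is, $I \in X_\alpha$ for every $\alpha < \mu$.

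From $\restr[U]{I} \in \restr[U]{M}$ I first extract a witness: there is some $J \in M$ with $\restr[U]{J} = \restr[U]{I}$. Since $M = \bigcap_{\alpha < \mu} X_\alpha$, this \emph{single} $J$ lies in $X_\alpha$ for every $\alpha$, which is exactly what lets the argument go through uniformly across coordinates. Now fix $\alpha < \mu$. The key step is to transfer the agreement of $I$ and $J$ from $U$ down to $U_\alpha$: because $U_\alpha \subseteq U$, every ground atom $p$ with $\preds{p} \subseteq U_\alpha$ also satisfies $\preds{p} \subseteq U$, so the equivalence $p \in I \mlequiv p \in J$, valid for all atoms with predicate symbols in $U$, holds in particular for all atoms with predicate symbols in $U_\alpha$. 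Hence $\restr[U_\alpha]{I} = \restr[U_\alpha]{J}$.

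Since $J \in X_\alpha$, this gives $\restr[U_\alpha]{I} = \restr[U_\alpha]{J} \in \restr[U_\alpha]{X_\alpha}$, and saturation of $X_\alpha$ relative to $U_\alpha$ then yields $I \in X_\alpha$. As $\alpha < \mu$ was arbitrary, $I$ belongs to every $X_\alpha$ and therefore to $M$, which completes the argument.

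I do not expect any genuine obstacle here: the statement is essentially a sequence-indexed generalisation of Lemma~\ref{lemma:saturated:superset}, and the only slightly delicate point -- the passage from agreement on $U$ to agreement on each individual $U_\alpha$ -- is precisely the restriction argument used in that lemma's proof, now invoked once for each index $\alpha$. The crucial structural feature being exploited is that a single witness $J$ drawn from the intersection $M$ simultaneously serves all the coordinates $X_\alpha$, so that no compatibility issue between different witnesses can arise.
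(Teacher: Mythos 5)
Your proof is correct and follows essentially the same argument as the paper's: extract a single witness $J \in \bigcap_{\alpha < \mu} X_\alpha$ agreeing with $I$ on $\bigcup_{\alpha < \mu} U_\alpha$, restrict that agreement to each $U_\alpha$ using $U_\alpha \subseteq U$, and invoke saturation of each $X_\alpha$ to place $I$ in every $X_\alpha$. No gaps; the observation that one witness serves all coordinates is exactly the point the paper's proof also relies on.
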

\begin{proof}
	Let $U = \bigcup_{\alpha < \mu} U_\alpha$ and $X = \bigcap_{\alpha < \mu}
	X_\alpha$ and suppose $\restr{I}$ belongs to $\restr{X}$. Then there is some
	$J \in X$ such that $\restr{I} = \restr{J}$. This means that for every
	ground atom $p$ with $\preds{p} \subseteq U$,
	\[
		p \in I \mlequiv p \in J \enspace.
	\]
	We need to show that $I$ belongs to $X$. Take some $\beta < \mu$ and any
	atom $p$ such that $\preds{p} \subseteq U_\beta$. Since $U_\beta$ is a
	subset of $U$, we immediately obtain
	\[
		p \in I \mlequiv p \in J \enspace.
	\]
	Furthermore, since $J \in X \subseteq X_\beta$, it follows that
	$\restr[U_\beta]{I}$ belongs to $\restr[U_\beta]{X_\beta}$. Moreover,
	$X_\beta$ is saturated relative to $U_\beta$, so we conclude that $I$
	belongs to $X_\beta$. Since the choice of $\beta$ was arbitrary, $I$ belongs
	to $X_\beta$ for all $\beta < \mu$. Thus, $I$ belongs to $X$ as well.
\end{proof}

\begin{lemma} \label{lemma:seq:sat_split}
	Let $U = \an{U_\alpha}_{\alpha < \mu}$ be a splitting sequence for a hybrid
	knowledge base $\kb$ and $X = \an{X_\alpha}_{\alpha < \mu}$ be a sequence of
	members of $\mint$ such that $X_0$ is saturated relative to $U_0$, for each
	$\alpha$ such that $\alpha + 1 < \mu$, $X_{\alpha + 1}$ is saturated
	relative to $U_{\alpha + 1} \setminus U_\alpha$, and for every limit ordinal
	$\alpha < \mu$, $X_\alpha = \foint$.
	Then for all ordinals ordinal $\beta < \alpha < \mu$ the following holds:
	\begin{itemize}
		\item $\bigcap_{\eta \leq \beta} X_\eta$ is saturated relative to
			$U_\beta$;
		\item $\bigcap_{\beta < \eta \leq \alpha} X_\eta$ is saturated relative to
			$U_\alpha \setminus U_\beta$;
		\item $\bigcap_{\alpha < \eta < \mu} X_\eta$ is saturated relative to
			$\lpre \setminus U_\alpha$.
	\end{itemize}
\end{lemma}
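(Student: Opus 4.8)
The plan is to reduce all three statements to a single application of Lemma~\ref{lemma:seq:sat}, exploiting that the splitting sequence $\an{U_\alpha}_{\alpha < \mu}$ is monotone and continuous and that $\foint$ is saturated relative to every set of predicate symbols. For each index $\eta$ I would attach a set of predicate symbols $V_\eta$ with respect to which $X_\eta$ is already known to be saturated: put $V_0 = U_0$, put $V_{\gamma+1} = U_{\gamma+1} \setminus U_\gamma$ at successor stages, and put $V_\lambda = \emptyset$ at limit stages. By the hypotheses, $X_\eta$ is saturated relative to $V_\eta$ in each case ($X_0$ relative to $U_0$, $X_{\gamma+1}$ relative to $U_{\gamma+1}\setminus U_\gamma$, and $X_\lambda = \foint$ relative to $\emptyset$ trivially, by Lemma~\ref{lemma:saturated:superset} or directly). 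Lemma~\ref{lemma:seq:sat} then says that the intersection of the $X_\eta$ over any of the three index ranges is saturated relative to the union of the corresponding $V_\eta$, so it only remains to compute these unions. I would first note that the proof of Lemma~\ref{lemma:seq:sat} does not use that the index set is an initial segment of the ordinals, so it applies verbatim to the ranges $\eta \leq \beta$, $\beta < \eta \leq \alpha$ and $\alpha < \eta < \mu$.

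The core computation is a telescoping-union argument carried out by transfinite induction. For the first statement I would show $\bigcup_{\eta \leq \delta} V_\eta = U_\delta$ for all $\delta \leq \beta$: the base and successor cases use only monotonicity ($U_\gamma \subseteq U_{\gamma+1}$, so $U_\gamma \cup (U_{\gamma+1}\setminus U_\gamma) = U_{\gamma+1}$), while the limit case uses continuity ($U_\lambda = \bigcup_{\eta < \lambda} U_\eta$) together with the inductive hypothesis and $V_\lambda = \emptyset$. Taking $\delta = \beta$ gives $\bigcup_{\eta \leq \beta} V_\eta = U_\beta$. For the second statement the same induction, now proving $\bigcup_{\beta < \eta \leq \delta} V_\eta = U_\delta \setminus U_\beta$ for $\beta < \delta \leq \alpha$, yields $U_\alpha \setminus U_\beta$ at $\delta = \alpha$; the only new ingredient is that $U_\beta \subseteq U_\gamma \subseteq U_{\gamma+1}$ lets the relative complements telescope. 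For the third statement I would take the union of $\bigcup_{\alpha < \eta \leq \delta} V_\eta = U_\delta \setminus U_\alpha$ over all $\delta$ with $\alpha < \delta < \mu$, obtaining $(\bigcup_{\alpha < \delta < \mu} U_\delta) \setminus U_\alpha$; since the $U_\delta$ are increasing and $\bigcup_{\delta < \mu} U_\delta = \lpre$ by the definition of a splitting sequence, this equals $\lpre \setminus U_\alpha$. In each case Lemma~\ref{lemma:seq:sat} then delivers exactly the asserted saturation.

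The steps I expect to be the most delicate are the limit-ordinal case of the telescoping induction and the degenerate ranges. At a limit $\lambda$ one must be careful that $V_\lambda = \emptyset$ contributes nothing and that it is continuity, not mere monotonicity, that recovers $U_\lambda$ (resp.\ $U_\lambda \setminus U_\beta$) from the earlier stages. The genuinely degenerate case is the third statement when $\alpha$ is the largest index, i.e.\ $\mu = \alpha + 1$: then the range $\alpha < \eta < \mu$ is empty, so the intersection is the empty intersection $\foint$, while $U_\alpha = \bigcup_{\eta < \mu} U_\eta = \lpre$ forces $\lpre \setminus U_\alpha = \emptyset$, and $\foint$ is indeed saturated relative to $\emptyset$. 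I would dispose of this case separately before running the telescoping argument, which otherwise presupposes a nonempty range.
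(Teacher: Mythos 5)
Your proposal is correct and takes essentially the same route as the paper's proof: both define the auxiliary sequence $V_0 = U_0$, $V_{\gamma+1} = U_{\gamma+1} \setminus U_\gamma$, $V_\lambda = \emptyset$ at limits, invoke Lemma~\ref{lemma:seq:sat} on each of the three index ranges, and then identify the unions $\bigcup V_\eta$ with $U_\beta$, $U_\alpha \setminus U_\beta$ and $\lpre \setminus U_\alpha$ respectively. The only difference is one of rigour: the paper carries out these union computations as terse chained equalities, whereas you make explicit the transfinite telescoping induction (monotonicity at successors, continuity at limits), the re-indexing needed to apply Lemma~\ref{lemma:seq:sat} to non-initial segments, and the degenerate case $\mu = \alpha + 1$, all of which the paper glosses over.
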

\begin{proof*}
	Let $\an{V_\alpha}_{\alpha < \mu}$ be a sequence of sets of atoms defined as
	follows: $V_0 = U_0$, for every $\alpha$ such that $\alpha + 1 < \mu$,
	$V_{\alpha + 1} = U_{\alpha + 1} \setminus U_\alpha$, and for every limit
	ordinal $\alpha$, $V_\alpha = \emptyset$. By the definition of $X$, it must
	hold for every ordinal $\alpha < \mu$ that $X_\alpha$ is saturated relative
	to $V_\alpha$. Furthermore, by Lemma \ref{lemma:seq:sat} we obtain that for
	every $\alpha < \mu$, $\bigcap_{\eta \leq \alpha} X_\eta$ is saturated
	relative to
	\[
		\bigcup_{\eta \leq \alpha} V_\eta = U_0 \cup \bigcup_{\eta < \alpha}
			U_{\eta + 1} \setminus U_\eta = U_\alpha \enspace.
	\]
	The same lemma implies that $\bigcap_{\beta < \eta \leq \alpha} X_\eta$ must be
	saturated relative to
	\[
		\bigcup_{\beta < \eta \leq \alpha} V_\eta
			= \bigcup_{\beta \leq \eta < \eta + 1 \leq \alpha}
			U_{\eta + 1} \setminus U_\eta = U_\alpha \setminus U_\beta
	\]
	and that $\bigcap_{\alpha < \eta < \mu} X_\eta$ must be saturated relative
	to
	\[
		\bigcup_{\alpha < \eta < \mu} V_\eta
			= \bigcup_{\alpha \leq \eta < \eta + 1 < \mu}
			U_{\eta + 1} \setminus U_\eta = \lpre \setminus U_\alpha \enspace.
			\qedhere
	\]
\end{proof*}

\begin{lemma} \label{lemma:seq:sigma}
	Let $U = \an{U_\alpha}_{\alpha < \mu}$ be a splitting sequence for a hybrid
	knowledge base $\kb$, $M$ be an MKNF interpretation and $X =
	\an{X_\alpha}_{\alpha < \mu}$ be a sequence of MKNF interpretations such
	that
	\begin{itemize}
		\item $X_0 = \sat[U_0]{M}$;
		\item for all $\alpha$ such that $\alpha + 1 < \mu$, $X_{\alpha + 1} =
			\sat[U_{\alpha + 1} \setminus U_\alpha]{M}$;
		\item for any limit ordinal $\alpha < \mu$, $X_\alpha = \foint$.
	\end{itemize}
	If $M$ is an MKNF model of $\kb$, then for every ordinal $\alpha < \mu$,
	\[
		\bigcap_{\eta \leq \alpha} X_\eta = \sat[U_\alpha]{M} \enspace.
	\]
\end{lemma}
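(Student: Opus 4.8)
The plan is to prove $\bigcap_{\eta\le\alpha}X_\eta=\sat[U_\alpha]{M}$ by transfinite induction on $\alpha$, after first reducing it to a single ``gluing'' inclusion. Write $V_0=U_0$, $V_{\beta+1}=U_{\beta+1}\setminus U_\beta$ and $V_\beta=\emptyset$ for limit $\beta$, so that each $X_\eta=\sat[V_\eta]{M}$ and, exactly as computed in the proof of Lemma \ref{lemma:seq:sat_split}, $\bigcup_{\eta\le\alpha}V_\eta=U_\alpha$. By Proposition \ref{prop:sigma} every $X_\eta$ is saturated relative to $V_\eta$, so Lemma \ref{lemma:seq:sat} shows $\bigcap_{\eta\le\alpha}X_\eta$ is saturated relative to $U_\alpha$, and $\sat[U_\alpha]{M}$ is saturated relative to $U_\alpha$ as well. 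Hence, by Corollary \ref{cor:restr:sat}, the claimed identity is equivalent to $\restr[U_\alpha]{\bigl(\bigcap_{\eta\le\alpha}X_\eta\bigr)}=\restr[U_\alpha]{M}$. One inclusion is immediate: since $V_\eta\subseteq U_\alpha$, Propositions \ref{prop:sigma:repeated} and \ref{prop:sigma} give $\sat[U_\alpha]{M}\subseteq\sat[V_\eta]{\sat[U_\alpha]{M}}=\sat[V_\eta]{M}=X_\eta$, whence $\sat[U_\alpha]{M}\subseteq\bigcap_{\eta\le\alpha}X_\eta$ and, using $M\subseteq\sat[U_\alpha]{M}$ together with Proposition \ref{prop:restr:basic}, $\restr[U_\alpha]{M}\subseteq\restr[U_\alpha]{\bigl(\bigcap_{\eta\le\alpha}X_\eta\bigr)}$. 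The whole content is therefore the reverse inclusion: for every $J\in\bigcap_{\eta\le\alpha}X_\eta$ one must produce some $K\in M$ with $\restr[U_\alpha]{K}=\restr[U_\alpha]{J}$.

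The base case $\alpha=0$ is trivial, since $Y_0=X_0=\sat[U_0]{M}$. For a successor $\alpha=\beta+1$, take $J\in X_{\beta+1}\cap\bigcap_{\eta\le\beta}X_\eta$. The induction hypothesis yields $K_1\in M$ with $\restr[U_\beta]{K_1}=\restr[U_\beta]{J}$, while $J\in\sat[V_{\beta+1}]{M}$ yields $K_2\in M$ with $\restr[V_{\beta+1}]{K_2}=\restr[V_{\beta+1}]{J}$. Because $U_\beta$ is a splitting set for $\kb$, Corollary \ref{cor:splitting:1} tells us $M$ is semi-saturated relative to $U_\beta$; applying this to $K:=\restr[U_\beta]{K_1}\cup\restr[\lpre\setminus U_\beta]{K_2}$, whose $U_\beta$-part lies in $\restr[U_\beta]{M}$ and whose complement-part lies in $\restr[\lpre\setminus U_\beta]{M}$, gives $K\in M$. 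Since $K$ agrees with $J$ on $U_\beta$ and on $V_{\beta+1}$, it agrees with $J$ on $U_{\beta+1}=U_\beta\cup V_{\beta+1}$, which is what is needed.

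The limit case is where I expect the real difficulty. Two-block semi-saturation is now useless, because gluing on $U_\alpha$ via the decomposition $M=\sat[U_\alpha]{M}\cap\satc[U_\alpha]{M}$ already presupposes $\restr[U_\alpha]{J}\in\restr[U_\alpha]{M}$, which is exactly the goal. Instead I would descend to the bottom: $U_\alpha$ is a splitting set, so by Proposition \ref{prop:splitting:X} the interpretation $X:=\sat[U_\alpha]{M}$ is an MKNF model of $b_{U_\alpha}(\kb)$, a knowledge base all of whose relevant predicates lie in $U_\alpha$. As each $V_\eta\subseteq U_\alpha$ for $\eta<\alpha$, Proposition \ref{prop:sigma:repeated} gives $\sat[V_\eta]{X}=\sat[V_\eta]{M}=X_\eta$, so $\bigcap_{\eta<\alpha}X_\eta$, which equals $\bigcap_{\eta\le\alpha}X_\eta$ because $X_\alpha=\foint$, is precisely the sequence-intersection attached to $X$ and to the splitting sequence $\an{U_\beta}_{\beta<\alpha}$ of $b_{U_\alpha}(\kb)$. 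The result, applied to this shorter sequence together with continuity ($U_\alpha=\bigcup_{\beta<\alpha}U_\beta$), then identifies it with $\sat[U_\alpha]{X}=X=\sat[U_\alpha]{M}$.

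The step I expect to cost the most care is exactly this limit bookkeeping: organising the argument as an induction in which the limit stage may legitimately invoke the statement for the shorter splitting sequence, re-indexing $\an{U_\beta}_{\beta<\alpha}$ into a genuine splitting sequence terminating at $\lpre$ without breaking continuity, and verifying that the reduced pieces $\sat[V_\eta]{X}$ really coincide with the original $X_\eta$; this is the portion that follows the pattern of \cite{Turner1996}. A possible alternative to the descent is a direct minimality argument in the spirit of Proposition \ref{prop:saturated:mknf}: if the inclusion failed, the greatest interpretation coinciding with $\bigcap_{\eta<\alpha}X_\eta$ on $U_\alpha$ and with $M$ outside $U_\alpha$ would be a proper superset of $M$ still satisfying every (subjective, bounded-support) formula of $\pi(\kb)$ — the induction hypothesis handling formulas supported inside $U_\alpha$ and the coincidence off $U_\alpha$ handling the rest — contradicting the minimality of $M$. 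However, rules whose bodies straddle $U_\alpha$ make this route more delicate than the descent, so I would favour the reduction to $b_{U_\alpha}(\kb)$.
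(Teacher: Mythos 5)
Your reduction of the identity to equality of restrictions on $U_\alpha$ (via Lemma \ref{lemma:seq:sat} and Corollary \ref{cor:restr:sat}), the easy inclusion, the base case, and the successor case are all correct. In the successor step, gluing $K_1$ and $K_2$ through the semi-saturation of $M$ relative to $U_\beta$ (Corollary \ref{cor:splitting:1}) is a valid and arguably more direct variant of the paper's argument, which instead applies Corollary \ref{cor:splitting:1} twice --- to $\kb$ at $U_{\beta+1}$ and then to $b_{U_{\beta+1}}(\kb)$ at $U_\beta$ --- and collapses the resulting saturations with Proposition \ref{prop:sigma:repeated}.

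The limit case, however, contains a genuine gap that re-indexing cannot repair. The truncated sequence $\an{U_\beta}_{\beta<\alpha}$ is not a splitting sequence, since $\bigcup_{\beta<\alpha}U_\beta = U_\alpha \neq \lpre$ in general, and the only completion that preserves continuity inserts $U_\alpha$ itself at the limit position $\alpha$ (with $\lpre$ afterwards). But then the identity you need from this ``shorter'' instance, $\bigcap_{\eta<\alpha}\sat[V_\eta]{X} = \sat[U_\alpha]{X}$ with $X = \sat[U_\alpha]{M}$, collapses by Proposition \ref{prop:sigma:repeated} (using $\sat[V_\eta]{X} = X_\eta$ and $\sat[U_\alpha]{X} = \sat[U_\alpha]{M}$) into exactly the claim you are trying to prove, at the same limit ordinal $\alpha$, merely for $b_{U_\alpha}(\kb)$ in place of $\kb$. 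No well-founded measure decreases, so the descent is a relabelling rather than an induction step, and iterating it regresses indefinitely. Moreover, the statement the descent would have to deliver --- given the induction hypothesis, $\bigcap_{\beta<\alpha}\sat[U_\beta]{M} = \sat[U_\alpha]{M}$ --- is false for arbitrary $M \in \mint$: take $\lpre = \{P_n \mid n<\omega\}$ over a single constant, $U_n = \{P_0,\dotsc,P_{n-1}\}$, and let $M$ be the set of all finite Herbrand interpretations; the interpretation containing every atom belongs to $\bigcap_{n<\omega}\sat[U_n]{M}$ but not to $\sat[\lpre]{M} = M$. So the MKNF-model hypothesis must enter the limit stage itself, and the paper's proof shows how: assuming $\sat[U_\alpha]{M} \subsetneq Y = \bigcap_{\eta\le\alpha}X_\eta$, the maximality of the MKNF model $\sat[U_\alpha]{M}$ of $b_{U_\alpha}(\kb)$ (Corollary \ref{cor:splitting:1}) yields a witness $\phi \in b_{U_\alpha}(\kb)$ with $\an{Y, \sat[U_\alpha]{M}} \nent \phi$; continuity together with the finiteness of $\preds{\phi}$ places $\preds{\phi}$ inside some $U_\beta$ with $\beta < \alpha$; and restriction arguments plus the induction hypothesis at $\beta$ then contradict Corollary \ref{cor:splitting:1} at level $\beta$. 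Your discarded fallback --- the direct minimality argument --- is in fact the right idea, and carrying it out inside $b_{U_\alpha}(\kb)$, as the paper does, eliminates your concern about rules straddling $U_\alpha$; what your sketch of it is missing is precisely this finiteness/push-down step, which is the real content of the limit case.
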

\begin{proof*}
	We will prove by induction on $\alpha$:
	\begin{enumerate}
		\renewcommand{\labelenumi}{\arabic{enumi}$^\circ$}

		\item Suppose $\alpha = 0$. We need to show that $X_0 = \sat[U_0]{M}$,
			which follows directly from the definition of $X_0$.

		\item Suppose $\alpha$ such that $\alpha + 1 < \mu$ and by the inductive
			assumption, $\bigcap_{\eta \leq \alpha} X_\eta = \sat[U_\alpha]{M}$.
			We immediately obtain:
			\begin{align*}
				\bigcap_{\eta \leq \alpha + 1} X_\eta
					&= X_{\alpha + 1} \cap \bigcap_{\eta \leq \alpha} X_\eta
						= X_{\alpha + 1} \cap \sat[U_\alpha]{M} \\
					&= \sat[U_{\alpha + 1} \setminus U_\alpha]{M}
						\cap \sat[U_\alpha]{M} \enspace.
			\end{align*}
		It remains to show that
		\[
			\sat[U_{\alpha + 1}]{M}
				= \sat[U_{\alpha + 1} \setminus U_\alpha]{M}
				\cap \sat[U_\alpha]{M} \enspace.
		\]
		We know that $U_{\alpha + 1}$ is a splitting set for $\kb$ and that $M$ is
		an MKNF model of $\kb$, so by Corollary \ref{cor:splitting:1} it follows
		that $N = \sat[U_{\alpha + 1}]{M}$ is an MKNF model of $b_{U_{\alpha +
		1}}(\kb)$. Furthermore, it can be easily verified that $U_\alpha$ is a
		splitting set for $b_{U_{\alpha + 1}}(\kb)$, so by another application of
		Corollary \ref{cor:splitting:1} we obtain that
		\begin{equation}
			\sat[U_{\alpha + 1}]{M} = N = \sat[U_\alpha]{N}
				\cap \satc[U_\alpha]{N} \enspace.
				\label{eq:proof:seq:sigma:1}
		\end{equation}
		Moreover, Proposition \ref{prop:sigma:repeated} yields
		\begin{align}
			\begin{split}
				\sat[U_\alpha]{N}
					&= \sat[U_\alpha]{\sat[U_{\alpha + 1}]{M}}
						= \sat[U_{\alpha + 1} \cap U_\alpha]{M} \\
					&= \sat[U_\alpha]{M}
			\end{split} \label{eq:proof:seq:sigma:2} \\
			\intertext{and}
			\begin{split}
				\satc[U_\alpha]{N}
					&= \satc[U_\alpha]{\sat[U_{\alpha + 1}]{M}}
						= \sat[U_{\alpha + 1} \cap (\lpre \setminus U_\alpha]{M}) \\
					&= \sat[U_{\alpha + 1} \setminus U_\alpha]{M} \enspace.
			\end{split} \label{eq:proof:seq:sigma:3}
		\end{align}
		The desired conclusion follows from \eqref{eq:proof:seq:sigma:1},
		\eqref{eq:proof:seq:sigma:2} and \eqref{eq:proof:seq:sigma:3}.
		
		\item Suppose $\alpha < \mu$ is a limit ordinal and for all $\eta <
			\alpha$ it holds that $\bigcap_{\beta \leq \eta} X_\beta =
			\sat[U_\eta]{M}$. First note that
			\begin{align*}
				\bigcap_{\eta \leq \alpha} X_\eta
					&= X_\alpha \cap \bigcap_{\eta < \alpha} X_{\eta}
					= \foint \cap \bigcap_{\eta < \alpha}
						\bigcap_{\beta \leq \eta} X_{\beta}
					= \bigcap_{\eta < \alpha} \sat[U_{\eta}]{M} \\
					&= \bigcap_{\eta < \alpha} \Set{ I \in \foint | (\exists J \in M)
						\left( \restr[U_\eta]{J} = \restr[U_\eta]{I} \right) } \\
					&= \Set{I \in \foint | (\forall \eta < \alpha)
						(\exists J \in M)(\restr[U_\eta]{J} = \restr[U_\eta]{I}) }
			\end{align*}
			and also that
			\begin{align*}
				\sat[U_\alpha]{M}
					&= \satpar[\bigcup_{\eta < \alpha} U_\eta]{M} \\
					&= \Set{I \in \foint | (\exists J \in M)
						\left( \restr[\bigcup_{\eta < \alpha} U_\eta]{J}
						= \restr[\bigcup_{\eta < \alpha} U_\eta]{I}  \right) }
					\enspace.
			\end{align*}
			From these two identities it can be inferred that $\sat[U_\alpha]{M}$ is
			a subset of $\bigcap_{\eta \leq \alpha} X_\eta$. Indeed, if $I$ belongs
			to $\sat[U_\alpha]{M}$, then for some $J \in M$ we have
			$\restr[\bigcup_{\eta < \alpha} U_\eta]{J} = \restr[\bigcup_{\eta <
			\alpha} U_\eta]{I}$, hence for any $\eta_0 < \alpha$ and any atom $p$
			such that $\preds{p} \subseteq U_{\eta_0} \subseteq \bigcup_{\eta <
			\alpha} U_\eta$ we obtain
			\begin{align*}
				p \in J \mlequiv p \in I
			\end{align*}
			To prove that the converse inclusion holds as well, we let $Y =
			\bigcap_{\eta \leq \alpha} X_\eta$ and proceed by contradiction,
			assuming that $\sat[U_\alpha]{M}$ is a proper subset of $Y$. By
			Corollary \ref{cor:splitting:1} we know that $\sat[U_\alpha]{M}$ is an
			MKNF model of $b_{U_\alpha}(\kb)$, so there must be some formula $\phi
			\in b_{U_\alpha}(\kb)$ such that
			\[
				\an{ Y, \sat[U_\alpha]{M} } \nent \phi \enspace.
			\]
			Furthermore, since $U_\alpha = \bigcup_{\eta < \alpha} U_\eta$ and
			$\preds{\phi}$ is a finite set of predicate symbols, there must be some
			$\beta < \alpha$ such that $\preds{\phi}$ is a subset of $U_\beta$.
			Consequently, by Corollary \ref{cor:sub:ent_relevant}, we obtain
			\[
				\an{ \restr[U_\beta]{Y}, \restr[U_\beta]{\sat[U_\alpha]{M}} }
					\nent \phi \enspace.
			\]
			Let $Y_1 = \bigcap_{\eta \leq \beta} X_\eta$ and $Y_2 = \bigcap_{\beta <
			\eta \leq \alpha} X_\eta$. By Lemma \ref{lemma:seq:sat_split}, $Y_1$ is
			saturated relative to $U_\beta$ and $Y_2$ is saturated relative to
			$U_\alpha \setminus U_\beta$ and thus by Lemma
			\ref{lemma:saturated:superset} also relative to $\lpre \setminus
			U_\beta$. Furthermore, $Y = Y_1 \cap Y_2$, so by Proposition
			\ref{prop:sat_semisat}, $\restr[U_\beta]{Y} = \restr[U_\beta]{Y_1}$.
			Hence, 
			\[
				\an{ \restr[U_\beta]{Y_1}, \restr[U_\beta]{\sat[U_\alpha]{M}} }
					\nent \phi
			\]
			and the inductive assumption for $\beta$ yields
			\[
				\an{ \restr[U_\beta]{\sat[U_\beta]{M}},
					\restr[U_\beta]{\sat[U_\alpha]{M}} } \nent \phi \enspace.
			\]
			Finally, since $U_\beta$ is a subset of $U_\alpha$, Proposition
			\ref{prop:sigma:restr} implies that
			\[
				\restr[U_\beta]{\sat[U_\alpha]{M}}
					= \restr[U_\beta]{M} = \restr[U_\beta]{\sat[U_\beta]{M}}
					\enspace,
			\]
			so
			\[
				\an{ \restr[U_\beta]{\sat[U_\beta]{M}},
					\restr[U_\beta]{\sat[U_\beta]{M}} } \nent \phi \enspace.
			\]
			Corollary \ref{cor:sub:ent_relevant} now yields $\an{ \sat[U_\beta]{M},
			\sat[U_\beta]{M} } \nent \phi$. But at the same time,
			$U_\beta$ is a splitting set for $\kb$, so, by Corollary
			\ref{cor:splitting:1}, $\sat[U_\beta]{M}$ is an MKNF model of
			$b_{U_\beta}(\kb)$. However, $\phi$ belongs to $b_{U_\beta}(\kb)$, so we
			reached the desired contradiction. \qedhere
	\end{enumerate}
\end{proof*}

\begin{proposition} \label{prop:seq:1}
	Let $U = \an{U_\alpha}_{\alpha < \mu}$ be a splitting sequence for a hybrid
	knowledge base $\kb$, $M$ be an MKNF interpretation and $X =
	\an{X_\alpha}_{\alpha < \mu}$ be a sequence of MKNF interpretations such
	that
	\begin{itemize}
		\item $X_0 = \sat[U_0]{M}$;
		\item for all $\alpha$ such that $\alpha + 1 < \mu$, $X_{\alpha + 1} =
			\sat[U_{\alpha + 1} \setminus U_\alpha]{M}$;
		\item for any limit ordinal $\alpha < \mu$, $X_\alpha = \foint$.
	\end{itemize}
	If $M$ is an MKNF model of $\kb$, then $X$ is a solution to $\kb$ with
	respect to $U$.
\end{proposition}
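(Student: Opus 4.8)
The plan is to verify directly the three clauses in the Definition of a solution w.r.t.\ a splitting sequence. The third clause, that $X_\alpha = \foint$ for every limit ordinal $\alpha < \mu$, holds immediately by the definition of the sequence $X$, so no work is needed there. For the first clause I would observe that $X_0 = \sat[U_0]{M}$ and that $U_0$ is a splitting set for $\kb$. Since $M$ is an MKNF model of $\kb$, Corollary~\ref{cor:splitting:1} guarantees that $\an{\sat[U_0]{M}, \satc[U_0]{M}}$ is a solution to $\kb$ with respect to the single splitting set $U_0$; by the definition of such a solution its first component is an MKNF model of $b_{U_0}(\kb)$, which is exactly what the first clause demands.

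The substance of the argument is the successor case. Fixing an ordinal $\alpha$ with $\alpha + 1 < \mu$, I would set $N = \sat[U_{\alpha+1}]{M}$. Because $U_{\alpha+1}$ is a splitting set for $\kb$ and $M$ is an MKNF model of $\kb$, Corollary~\ref{cor:splitting:1} yields that $N$ is an MKNF model of $b_{U_{\alpha+1}}(\kb)$. The crucial move is to notice that $U_\alpha$ is itself a splitting set for the smaller knowledge base $b_{U_{\alpha+1}}(\kb)$, which licenses a \emph{second} application of Corollary~\ref{cor:splitting:1}, this time to $b_{U_{\alpha+1}}(\kb)$, the splitting set $U_\alpha$, and the MKNF model $N$. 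That application delivers the statement that $\satc[U_\alpha]{N}$ is an MKNF model of $e_{U_\alpha}(b_{U_{\alpha+1}}(\kb), \sat[U_\alpha]{N})$.

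It then remains to match both saturations with the quantities appearing in the statement. Using Proposition~\ref{prop:sigma:repeated} exactly as in equations \eqref{eq:proof:seq:sigma:2} and \eqref{eq:proof:seq:sigma:3} of the proof of Lemma~\ref{lemma:seq:sigma}, the composition of saturations collapses: since $U_{\alpha+1} \cap U_\alpha = U_\alpha$ one gets $\sat[U_\alpha]{N} = \sat[U_\alpha]{M}$, and since $U_{\alpha+1} \cap (\lpre \setminus U_\alpha) = U_{\alpha+1} \setminus U_\alpha$ one gets $\satc[U_\alpha]{N} = \sat[U_{\alpha+1} \setminus U_\alpha]{M} = X_{\alpha+1}$. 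Finally, Lemma~\ref{lemma:seq:sigma} identifies $\sat[U_\alpha]{M}$ with $\bigcap_{\eta \leq \alpha} X_\eta$, so the reduct above is literally $e_{U_\alpha}(b_{U_{\alpha+1}}(\kb), \bigcap_{\eta \leq \alpha} X_\eta)$ and $X_{\alpha+1}$ is one of its MKNF models, establishing the second clause and hence the proposition.

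The proof is largely bookkeeping, as the genuine content has been offloaded to the static Splitting Theorem (through Corollary~\ref{cor:splitting:1}) and to Lemma~\ref{lemma:seq:sigma}. The step that requires the most care, and which I expect to be the main obstacle, is the double invocation of Corollary~\ref{cor:splitting:1}: one must confirm that $U_\alpha$ really splits $b_{U_{\alpha+1}}(\kb)$ and then track the two nested saturations through Proposition~\ref{prop:sigma:repeated} so that the interpretation fed into the reduct is precisely $\bigcap_{\eta \leq \alpha} X_\eta$ and not some other combination of the $X_\eta$. Getting this identification right is what ties the successor layer of the sequence back to the single–splitting–set theorem.
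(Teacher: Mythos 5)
Your proof is correct and takes essentially the same approach as the paper's own: the identical double application of Corollary~\ref{cor:splitting:1} (first to $\kb$ with splitting set $U_{\alpha+1}$, then to $b_{U_{\alpha+1}}(\kb)$ with splitting set $U_\alpha$), with the nested saturations collapsed by Proposition~\ref{prop:sigma:repeated} and identified with $\bigcap_{\eta \leq \alpha} X_\eta$ via Lemma~\ref{lemma:seq:sigma}. The only difference is cosmetic: the paper additionally records the trivial observation that $\emptyset \neq M \subseteq \bigcap_{\alpha < \mu} X_\alpha$, which your write-up omits.
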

\begin{proof}
	There are four conditions to verify.

	First, $X_0$ must be an MKNF model of $b_{U_0}(\kb)$. Since $U_0$ is a
	splitting set for $\kb$, Corollary \ref{cor:splitting:1} yields that
	$\sat[U_0]{M}$ is an MKNF model of $b_{U_0}(\kb)$. By definition, $X_0 =
	\sat[U_0]{M}$, thus this part of the proof is finished.

	Second, for any ordinal $\alpha$ such that $\alpha + 1 < \mu$ it should hold
	that $X_{\alpha + 1}$ is an MKNF model of 
	\[
		e_{U_\alpha} \left( b_{U_{\alpha+1}}(\kb),
			\textstyle \bigcap_{\eta \leq \alpha} X_\eta \right) \enspace.
	\]
	By Corollary \ref{cor:splitting:1}, $N = \sat[U_{\alpha + 1}]{M}$ is an MKNF
	model of $b_{U_{\alpha + 1}}(\kb)$. Furthermore, it can be easily seen that
	$U_\alpha$ is a splitting set for $b_{U_{\alpha + 1}}$, so by another
	application of Corollary \ref{cor:splitting:1}, we obtain that
	$\satc[U_\alpha]{N}$ is an MKNF model of $e_{U_\alpha}( b_{U_{\alpha +
	1}}(\kb), \sat[U_\alpha]{N})$. Moreover, by Proposition
	\ref{prop:sigma:repeated},
	\begin{align*}
		\sat[U_\alpha]{N}
			&= \sat[U_\alpha]{\sat[U_{\alpha + 1}]{M}}
			= \sat[U_{\alpha + 1} \cap U_\alpha]{M} = \sat[U_\alpha]{M}
		\intertext{and also}
		\begin{split}
			\satc[U_\alpha]{N}
				&= \satc[U_\alpha]{\sat[U_{\alpha + 1}]{M}}
				= \sat[U_{\alpha + 1} \cap (\lpre \setminus U_\alpha]{M}) \\
				&= \sat[U_{\alpha + 1} \setminus U_\alpha]{M} \enspace.
		\end{split}
	\end{align*}
	Since we know from Lemma \ref{lemma:seq:sigma} that $\sat[U_\alpha]{M} =
	\bigcap_{\eta \leq \alpha} X_\eta$ and by definition $\sat[U_{\alpha + 1}
	\setminus U_\alpha]{M} = X_{\alpha + 1}$, we have shown that $X_{\alpha +
	1}$ is an MKNF model of
	\[
		e_{U_\alpha} \left( b_{U_{\alpha+1}}(\kb),
			\textstyle \bigcap_{\eta \leq \alpha} X_\eta \right) \enspace.
	\]

	Third, for every limit ordinal $\alpha < \mu$, $X_\alpha = \foint$ holds by
	definition.

	Fourth, by definition of $X$, $M$ is a subset of $X_\alpha$ for every
	$\alpha < \mu$. Hence,
	\[
		\emptyset \neq M \subseteq \bigcap_{\alpha < \mu} X_\alpha \enspace,
	\]
	which finishes our proof.
\end{proof}

\begin{proposition} \label{prop:seq:2}
	Let $U = \an{U_\alpha}_{\alpha < \mu}$ be a splitting sequence for a hybrid
	knowledge base $\kb$. If $X = \an{X_\alpha}_{\alpha < \mu}$ is a solution to
	$\kb$ with respect to $U$, then for all $\alpha < \mu$, $\bigcap_{\eta \leq
	\alpha} X_\eta$ is an MKNF model of $b_{U_\alpha}(\kb)$.
\end{proposition}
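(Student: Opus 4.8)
The plan is to prove the statement by transfinite induction on $\alpha < \mu$, writing $Y_\beta = \bigcap_{\eta \leq \beta} X_\eta$ throughout. The base case $\alpha = 0$ is immediate, since $Y_0 = X_0$ and the first condition of a solution states that $X_0$ is an MKNF model of $b_{U_0}(\kb)$. For a successor $\alpha + 1 < \mu$, I would invoke the Splitting Theorem (Theorem~\ref{thm:splitting}) applied to $b_{U_{\alpha+1}}(\kb)$ with the splitting set $U_\alpha$. One first checks that $U_\alpha$ is a splitting set for $b_{U_{\alpha+1}}(\kb)$ whose bottom is exactly $b_{U_\alpha}(\kb)$, because $U_\alpha \subseteq U_{\alpha+1}$. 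The inductive hypothesis gives that $Y_\alpha$ is an MKNF model of $b_{U_\alpha}(\kb)$, and the second condition of a solution gives that $X_{\alpha+1}$ is an MKNF model of $e_{U_\alpha}(b_{U_{\alpha+1}}(\kb), Y_\alpha)$. Hence $\an{Y_\alpha, X_{\alpha+1}}$ is a solution to $b_{U_{\alpha+1}}(\kb)$ with respect to $U_\alpha$, and Theorem~\ref{thm:splitting} yields that $Y_\alpha \cap X_{\alpha+1} = Y_{\alpha+1}$ is an MKNF model of $b_{U_{\alpha+1}}(\kb)$.

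The limit case, where $\alpha$ is a limit ordinal and $M := Y_\alpha = \bigcap_{\eta < \alpha} X_\eta$ (using $X_\alpha = \foint$), is the main obstacle. By Proposition~\ref{prop:sub_shortcut} it suffices to show (a) $M \ent \pi(b_{U_\alpha}(\kb))$ and (b) $\mstr[M', M] \nent \pi(b_{U_\alpha}(\kb))$ for every $M' \supsetneq M$. The whole argument rests on the saturation bookkeeping: by the remark preceding Lemma~\ref{lemma:seq:sat} each $X_\eta$ is saturated relative to its "difference set," so Lemma~\ref{lemma:seq:sat} makes $M$ saturated relative to $U_\alpha = \bigcup_{\eta<\alpha} U_\eta$, and for each $\beta < \alpha$ the decomposition $M = Y_\beta \cap \bigcap_{\beta < \eta < \alpha} X_\eta$ splits into a factor saturated relative to $U_\beta$ and a factor saturated relative to $\lpre \setminus U_\beta$ (Lemmas~\ref{lemma:seq:sat_split} and~\ref{lemma:saturated:superset}), so Proposition~\ref{prop:sat_semisat} gives $\restr[U_\beta]{M} = \restr[U_\beta]{Y_\beta}$. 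For (a), any $\phi \in \pi(b_{U_\alpha}(\kb))$ has a finite predicate set $\preds{\phi} \subseteq U_\beta$ for some $\beta < \alpha$, hence $\phi \in \pi(b_{U_\beta}(\kb))$; the inductive hypothesis gives $Y_\beta \ent \phi$, and since $M$ and $Y_\beta$ coincide on $U_\beta \supseteq \preds{\phi}$, Corollary~\ref{cor:ent_relevant} transfers this to $M \ent \phi$.

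The maximality claim (b) is where the real work lies, and I would argue its contrapositive: assume $M' \supseteq M$ with $\mstr[M', M] \ent \pi(b_{U_\alpha}(\kb))$ and conclude $M' = M$. The key intermediate claim is that $\restr[U_\beta]{M'} = \restr[U_\beta]{M}$ for every $\beta < \alpha$. To prove it, suppose instead $\restr[U_\beta]{Y_\beta} = \restr[U_\beta]{M} \subsetneq \restr[U_\beta]{M'}$; using Proposition~\ref{prop:semisat_exists} form the greatest interpretation $N$ coinciding with $M'$ on $U_\beta$ and with $Y_\beta$ on $\lpre \setminus U_\beta$, which is semi-saturated relative to $U_\beta$, and Proposition~\ref{prop:restr:semisat:2} then gives $N \supsetneq Y_\beta$. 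Since $Y_\beta$ is an MKNF model of $b_{U_\beta}(\kb)$ by the inductive hypothesis, $\mstr[N, Y_\beta] \nent \pi(b_{U_\beta}(\kb))$, so some $\phi \in \pi(b_{U_\beta}(\kb))$ fails; as $N, Y_\beta$ coincide with $M', M$ on $U_\beta$, Corollary~\ref{cor:sub:ent_relevant} yields $\mstr[M', M] \nent \phi$ with $\phi \in \pi(b_{U_\alpha}(\kb))$, contradicting the assumption. Finally, from $\restr[U_\eta]{M'} = \restr[U_\eta]{Y_\eta} \subseteq \restr[U_\eta]{X_\eta}$ for all $\eta < \alpha$, together with the fact that each $X_\eta$ is saturated relative to $U_\eta$ (Lemma~\ref{lemma:saturated:superset}), every $I \in M'$ lies in every $X_\eta$, so $M' \subseteq \bigcap_{\eta<\alpha} X_\eta = M$ and thus $M' = M$. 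The step I expect to be most delicate is precisely this limit bookkeeping—correctly passing between the single restriction to $U_\alpha$ and the levelwise restrictions to the $U_\beta$—which is exactly what Lemmas~\ref{lemma:seq:sat} and~\ref{lemma:seq:sat_split} and Proposition~\ref{prop:sat_semisat} are built to support.
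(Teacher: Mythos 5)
Your proof is correct and follows the paper's overall plan for Proposition \ref{prop:seq:2}: the same transfinite induction, the same base case, the same successor step via Theorem \ref{thm:splitting} applied to $b_{U_{\alpha+1}}(\kb)$ with splitting set $U_\alpha$, and the same S5-entailment half of the limit case (Lemmas \ref{lemma:seq:sat_split} and \ref{lemma:saturated:superset}, Proposition \ref{prop:sat_semisat}, Corollary \ref{cor:ent_relevant}). You genuinely diverge only in the maximality half of the limit case. The paper handles it directly: given $Y' \supsetneq Y_\alpha$, it picks $I \in Y' \setminus Y_\alpha$, chooses $\beta < \alpha$ with $I \notin Y_\beta$, and forms the union $Y'' = Y' \cup Y_\beta \supsetneq Y_\beta$; the MKNF-maximality of $Y_\beta$ (inductive hypothesis) yields a violated $\phi \in \pi(b_{U_\beta}(\kb))$, and since $\restr[U_\beta]{Y''} = \restr[U_\beta]{Y'}$ and $\restr[U_\beta]{Y_\beta} = \restr[U_\beta]{Y_\alpha}$, Corollary \ref{cor:sub:ent_relevant} transfers this to $\mstr[Y', Y_\alpha] \nent \phi$. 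You instead argue the contrapositive (your $M$ is $Y_\alpha$): you first establish the levelwise coincidence $\restr[U_\beta]{M'} = \restr[U_\beta]{M}$ for all $\beta < \alpha$, refuting a proper inclusion by constructing, via Propositions \ref{prop:semisat_exists} and \ref{prop:restr:semisat:2}, a semi-saturated proper superset $N \supsetneq Y_\beta$ coinciding with $M'$ on $U_\beta$, and you then conclude $M' \subseteq M$ from the saturation of each $X_\eta$ relative to $U_\eta$. Both arguments rest on the same two ingredients -- the inductive hypothesis at some level $\beta < \alpha$ plus restriction-transfer for subjective formulae -- but the paper's union construction is more economical: it needs neither the semi-saturation machinery nor the closing saturation step, and it produces the violating formula in one move. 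What your route buys is an explicit intermediate invariant (any superset of $Y_\alpha$ that, paired with $Y_\alpha$, satisfies $\pi(b_{U_\alpha}(\kb))$ must already agree with $Y_\alpha$ on every $U_\beta$), which is the same style of saturation bookkeeping the paper itself deploys later, e.g.\ in Lemma \ref{lemma:seq:sigma} and Proposition \ref{prop:solution independence}.
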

\begin{proof*}
	Let $Y_\alpha = \bigcap_{\eta \leq \alpha} X_\eta$ for every $\alpha < \mu$.
	We will proceed by induction on $\alpha$:
	\begin{enumerate}
		\renewcommand{\labelenumi}{\arabic{enumi}$^\circ$}

		\item For $\alpha = 0$ we need to show that $Y_0 = X_0$ is an MKNF model
			of $b_{U_0}(\kb)$. This follows directly from the assumptions.

		\item For $\alpha$ such that $\alpha + 1 < \mu$ we need to show that
			$Y_{\alpha + 1}$ is an MKNF model of $b_{U_{\alpha + 1}}(\kb)$. By the
			inductive assumption, $Y_\alpha$ is an MKNF model of
			$b_{U_\alpha}(\kb)$. Furthermore,
			\[
				b_{U_\alpha}(b_{U_{\alpha + 1}}(\kb)) = b_{U_\alpha}(\kb)
			\]
			and since $X$ is a solution to $\kb$ with respect to $U$, $X_{\alpha +
			1}$ must an MKNF model of
			\[
				e_{U_\alpha} \left( b_{U_{\alpha+1}}(\kb), Y_\alpha \right) \enspace.
			\]
			Moreover, since $U_\alpha$ is a splitting set for $\kb$, it must also be
			a splitting set for $b_{U_{\alpha + 1}}(\kb)$. Consequently, by Theorem
			\ref{thm:splitting}, $Y_\alpha \cap X_{\alpha + 1} = Y_{\alpha + 1}$
			must be an MKNF model of $b_{U_{\alpha + 1}}(\kb)$.

		\item For a limit ordinal $\alpha < \mu$ we need to show that $Y_\alpha$
			is an MKNF model of $b_{U_\alpha}(\kb)$. First we will show that
			$Y_\alpha \ent b_{U_\alpha}(\kb)$ and then that for every $Y' \supsetneq
			Y_\alpha$ it holds that $\mstr[Y', Y_\alpha] \nent b_{U_\alpha}(\kb)$.

			Take some $\phi \in b_{U_\alpha}(\kb)$ and suppose $\beta < \alpha$ is
			some ordinal such that $\preds{\phi}$ is a subset of $U_\beta$. We know
			that $Y_\beta$ is an MKNF model of $b_{U_\beta}(\kb)$, so $Y_\beta \ent
			\phi$. Furthermore, for every $\eta$ such that $\eta < \alpha$, $X_{\eta
			+ 1}$ is an MKNF model of $e_{U_\eta}(b_{U_{\eta + 1}}(\kb), Y_\eta)$,
			so by Proposition \ref{prop:saturated:mknf}, $X_{\eta + 1}$ is saturated
			relative to $U_{\eta + 1} \setminus U_\eta$. Consequently, by Lemma
			\ref{lemma:seq:sat_split}, $Y_\beta = \bigcap_{\eta \leq \beta} X_\eta$
			is saturated relative to $U_\beta$ and $\bigcap_{\beta < \eta \leq
			\alpha} X_\eta$ is saturated relative to $U_\alpha \setminus U_\beta$
			and thus by Lemma \ref{lemma:saturated:superset} it is also saturated
			relative to $\lpre \setminus U_\beta$. Hence, by Proposition
			\ref{prop:sat_semisat}, for $Y_\alpha = Y_\beta \cap \bigcap_{\beta <
			\eta \leq \alpha} X_\eta$ it holds that $\restr[U_\beta]{Y_\alpha} =
			\restr[U_\beta]{Y_\beta}$, and so $Y_\alpha \ent \phi$ follows from
			Corollary \ref{cor:ent_relevant}.

			Now suppose $Y' \supsetneq Y_\alpha$. Then there must be some $I \in Y'
			\setminus Y_\alpha$. Take some $\beta < \alpha$ such that $I \notin
			Y_\beta$ (there must be such $\beta$, otherwise $I \in Y_\alpha$). Let
			$Y'' = Y' \cup Y_\beta$. By the inductive assumption, $Y_\beta$ is an
			MKNF model of $b_{U_\beta}(\kb)$, so there must be some $\phi \in
			b_{U_\beta}(\kb)$ such that
			\[
				\mstr[Y'', Y_\beta] \nent \phi
			\]
			Furthermore, $\restr[U_\beta]{Y_\beta} = \restr[U_\beta]{Y_\alpha}$ and
			\[
				\restr[U_\beta]{Y''} = \restr[U_\beta]{Y'} \cup \restr[U_\beta]{Y_\beta}
					= \restr[U_\beta]{Y'} \cup \restr[U_\beta]{Y_\alpha}
					= \restr[U_\beta]{(Y' \cup Y_\alpha)} = \restr[U_\beta]{Y'}
					\enspace.
			\]
			Consequently, by Corollary \ref{cor:sub:ent_relevant}, $\mstr[Y',
			Y_\alpha] \nent \phi$. \qedhere
	\end{enumerate}
\end{proof*}

\begin{lemma} \label{lemma:seq:prolonging}
	Let $U = \an{U_\alpha}_{\alpha < \mu}$ be a splitting sequence for a hybrid
	knowledge base $\kb$ and let $V = \an{V_\alpha}_{\alpha < \mu + 1}$ be a
	sequence of sets of atoms such that for every $\alpha < \mu$, $V_\alpha =
	U_\alpha$ and $V_\mu = \lpre$. Then $V$ is a splitting sequence for $\kb$. 

	Moreover, if $X = \an{X_\alpha}_{\alpha < \mu}$ is a solution to $\kb$ with
	respect to $U$, then $Y = \an{Y_\alpha}_{\alpha < \mu + 1}$, where for all
	$\alpha < \mu$, $Y_\alpha = X_\alpha$, and $Y_\mu = \foint$, is a solution to
	$\kb$ with respect to $V$. 
\end{lemma}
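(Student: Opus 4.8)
The plan is to verify the two assertions separately: first that $V$ satisfies the four conditions defining a splitting sequence, and then that the prolonged sequence $Y$ satisfies the three conditions defining a solution. The only delicate point is the behaviour at the newly added top index $\mu$, and I would organise the whole argument around whether $\mu$ is a limit or a successor ordinal.

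For the first assertion, each $V_\alpha$ with $\alpha < \mu$ is a splitting set for $\kb$ by hypothesis, and $V_\mu = \lpre$ is a splitting set for any hybrid knowledge base, so every member of $V$ is a splitting set. Monotonicity is inherited from $U$ for indices below $\mu$, and holds at $\mu$ because $U_\alpha \subseteq \lpre = V_\mu$. For continuity, at every limit ordinal $\lambda < \mu$ we have $V_\lambda = U_\lambda = \bigcup_{\eta < \lambda} U_\eta = \bigcup_{\eta < \lambda} V_\eta$ by continuity of $U$; and if $\mu$ is itself a limit ordinal, continuity at $\mu$ holds because $V_\mu = \lpre = \bigcup_{\alpha < \mu} U_\alpha = \bigcup_{\alpha < \mu} V_\alpha$ by the union condition on $U$. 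Finally $\bigcup_{\alpha < \mu+1} V_\alpha = \lpre$ since already $V_\mu = \lpre$, so $V$ is a splitting sequence.

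For the second assertion, conditions $1$ and $3$ are immediate: $V_0 = U_0$ and $Y_0 = X_0$ give condition $1$ from that for $X$; for condition $3$, every limit ordinal $\lambda < \mu$ has $Y_\lambda = X_\lambda = \foint$, and if $\mu$ is a limit ordinal then $Y_\mu = \foint$ by the definition of $Y$. Condition $2$ is required for all $\alpha < \mu$; whenever $\alpha + 1 < \mu$ the relevant data agree with those for $U$ (namely $V_\alpha = U_\alpha$, $V_{\alpha+1} = U_{\alpha+1}$, and $\bigcap_{\eta \leq \alpha} Y_\eta = \bigcap_{\eta \leq \alpha} X_\eta$), so the statement reduces to condition $2$ for the solution $X$.

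The sole remaining case of condition $2$ is $\alpha + 1 = \mu$, which can occur only when $\mu$ is a successor ordinal $\nu + 1$. Here the pivotal observation is that monotonicity of $U$ together with $\bigcup_{\alpha < \mu} U_\alpha = \lpre$ forces $U_\nu = \lpre$; hence $V_\nu = V_\mu = \lpre$, $b_{V_\mu}(\kb) = b_{\lpre}(\kb) = \kb$, and the reduct $e_{V_\nu}(\kb, \bigcap_{\eta \leq \nu} Y_\eta) = e_{\lpre}(\kb, \cdots)$. Since $t_{\lpre}(\ont)$ and $t_{\lpre}(\prog)$ are empty, this reduct is the empty hybrid knowledge base, whose unique MKNF model is $\foint = Y_\mu$: the maximality clause is vacuous for the empty theory, so the greatest MKNF interpretation is forced to be its only model. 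This discharges condition $2$ at the new index and completes the proof. I expect this successor case to be the main obstacle, though the difficulty is conceptual rather than technical --- the substantive work is recognising that the appended layer is empty, via the two bookkeeping facts that $U_\nu = \lpre$ and that $\foint$ is the MKNF model of an empty knowledge base; once these are in hand the verification is routine.
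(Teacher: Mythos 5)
Your proposal is correct and follows essentially the same route as the paper's own proof: checking that the splitting-sequence conditions transfer to $V$, noting that all solution conditions below $\mu$ carry over verbatim, splitting on whether $\mu$ is a limit or a successor ordinal, and in the successor case $\mu = \nu + 1$ using monotonicity plus $\bigcup_{\alpha<\mu} U_\alpha = \lpre$ to get $U_\nu = \lpre$, so that the appended layer's reduct is empty and $\foint$ is its MKNF model. The only quibble is wording: the maximality clause for the empty theory is not ``vacuous'' but rather unsatisfiable for any $M$ with a proper superset (no formula of $\emptyset$ can be falsified), which is precisely what forces $\foint$ to be the unique MKNF model --- the conclusion you draw is the right one.
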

\begin{proof}
	It is easy to see that $V$ is monotone, continuous, that every $V_\alpha$ is
	a splitting set for $\kb$ and that $\bigcup_{\alpha < \mu + 1} V_\alpha =
	\lpre$.

	Now suppose that $X$ is a solution to $\kb$ with respect to $U$. All the
	properties of $X$ propagate to $Y$, so one only needs to check that $\mu$ is
	handled correctly. In case $\mu$ is a limit ordinal, we need to show that
	$Y_\mu = \foint$, which it does. On the other hand, if $\mu$ is a nonlimit
	ordinal, then let $\gamma$ be such that $\gamma + 1 = \mu$. From
	$\bigcup_{\alpha < \mu} U_\alpha = \lpre$ it follows that $U_\gamma =
	\lpre$, so the set
	\[
		e_{U_\gamma} \left( b_{U_\mu}(\kb),
			\textstyle \bigcap_{\eta \leq \gamma} Y_\eta \right)
	\]
	is empty. Consequently, $Y_\mu = \foint$ is its MKNF model.
\end{proof}

\begin{theorem*}[Splitting Sequence Theorem for Hybrid MKNF Knowledge Bases]{thm:seq}
	Let $U = \an{U_\alpha}_{\alpha < \mu}$ be a splitting sequence for a hybrid
	knowledge base $\kb$. Then $M$ is an MKNF model of $\kb$ if and only if $M =
	\bigcap_{\alpha < \mu} X_\alpha$ for some solution $\an{X_\alpha}_{\alpha <
	\mu}$ to $\kb$ with respect to $U$.
\end{theorem*}
\begin{proof}[Proof of Theorem \ref{thm:seq}]
	\label{proof:thm:seq}%
	If $M$ is an MKNF model of $\kb$, then it follows by Proposition
	\ref{prop:seq:1} that there is a solution $X$ to $\kb$ with respect to $U$.
	Furthermore, by Lemma \ref{lemma:seq:prolonging}, there is also a solution
	$Y$ to $\kb$ with respect to $V = \an{V_\alpha}_{\alpha < \mu + 1}$ such
	that for all $\alpha < \mu$, $V_\alpha = U_\alpha$ and $V_\mu = \lpre$.
	Consequently, by Lemma \ref{lemma:seq:sigma},
	\[
		\bigcap_{\alpha < \mu} X_\alpha
			= \bigcap_{\alpha < \mu} Y_\alpha
			= \foint \cap \bigcap_{\alpha < \mu} Y_\alpha
			= \bigcap_{\alpha < \mu + 1} Y_\alpha
			= \sat[V_\mu]{M}
			= \sat[\lpre]{M}
			= M \enspace.
	\]

	To prove the converse implication, suppose $X$ is a solution to $\kb$ with
	respect to $U$. Then, by Lemma \ref{lemma:seq:prolonging}, there is also a
	solution $Y$ to $\kb$ with respect to $V = \an{V_\alpha}_{\alpha < \mu + 1}$
	such that for all $\alpha < \mu$, $V_\alpha = U_\alpha$ and $V_\mu = \lpre$.
	Furthermore,
	\[
		\bigcap_{\eta < \mu} X_\eta = \bigcap_{\eta \leq \mu} Y_\eta
	\]
	and by Proposition \ref{prop:seq:2}, $\bigcap_{\eta \leq \mu} Y_\eta$ is an
	MKNF model of $b_{V_\mu}(\kb) = b_{\lpre}(\kb) = \kb$.
\end{proof}

\begin{corollary} \label{cor:seq:1}
	Let $U = \an{U_\alpha}_{\alpha < \mu}$ be a splitting sequence for a hybrid
	knowledge base $\kb$ and $M \in \mint$. If $M$ is an MKNF model of $\kb$,
	then the sequence $X = \an{X_\alpha}_{\alpha < \mu}$ is a solution to $\kb$
	with respect to $U$ where
	\begin{itemize}
		\item $X_0 = \sat[U_0]{M}$;
		\item for all $\alpha$ such that $\alpha + 1 < \mu$, $X_{\alpha + 1} =
			\sat[U_{\alpha + 1} \setminus U_\alpha]{M}$;
		\item for any limit ordinal $\alpha < \mu$, $X_\alpha = \foint$.
	\end{itemize}
	Furthermore, $M = \bigcap_{\alpha < \mu} X_\alpha$.
\end{corollary}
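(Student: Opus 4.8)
The plan is to read both assertions off the forward direction of the proof of Theorem~\ref{thm:seq}, of which this corollary is essentially a constructive restatement. First observe that the sequence $X = \an{X_\alpha}_{\alpha < \mu}$ specified here is defined by exactly the three clauses appearing in the hypotheses of Proposition~\ref{prop:seq:1}. Hence, under the assumption that $M$ is an MKNF model of $\kb$, Proposition~\ref{prop:seq:1} applies verbatim and already yields the first assertion, namely that $X$ is a solution to $\kb$ with respect to $U$. So no new work is needed there; I would simply cite that proposition.

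It remains to establish the displayed equality $M = \bigcap_{\alpha < \mu} X_\alpha$. One half is immediate: since $M \subseteq \sat[W]{M}$ for every set of predicate symbols $W$ (Proposition~\ref{prop:sigma}), each $X_\alpha$ contains $M$, whence $M \subseteq \bigcap_{\alpha < \mu} X_\alpha$. The substance is the reverse inclusion, and for this I would reuse the computation carried out in the forward direction of the proof of Theorem~\ref{thm:seq} rather than re-deriving it.

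Concretely, I would prolong the splitting sequence by one step using Lemma~\ref{lemma:seq:prolonging}: let $V = \an{V_\alpha}_{\alpha < \mu+1}$ with $V_\alpha = U_\alpha$ for $\alpha < \mu$ and $V_\mu = \lpre$, and extend $X$ to $Y = \an{Y_\alpha}_{\alpha < \mu+1}$ by setting $Y_\alpha = X_\alpha$ for $\alpha < \mu$ and $Y_\mu = \foint$. Lemma~\ref{lemma:seq:prolonging} guarantees that $V$ is again a splitting sequence for $\kb$ and that $Y$ is a solution to $\kb$ with respect to $V$. A short check (using that the sequence covers $\lpre$, so that the last set in the successor case already equals $\lpre$, whence the appended difference set is empty and $\sat[\emptyset]{M} = \foint$) confirms that $Y$ still satisfies the three defining clauses of Lemma~\ref{lemma:seq:sigma} relative to $V$. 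Applying that lemma at the top index then gives
\[
	\bigcap_{\alpha < \mu} X_\alpha
		= \bigcap_{\eta \leq \mu} Y_\eta
		= \sat[V_\mu]{M}
		= \sat[\lpre]{M}
		= M \enspace,
\]
where the first equality holds because the factor $Y_\mu = \foint$ contributes nothing to the intersection, and the last holds because $\restr[\lpre]{I} = I$ for every interpretation, so $\sat[\lpre]{M} = M$.

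The only genuinely delicate point is the case in which $\mu$ is a limit ordinal: there is then no maximal index at which $U_\alpha = \lpre$, so Lemma~\ref{lemma:seq:sigma} by itself only provides the partial-intersection identities $\bigcap_{\eta \leq \alpha} X_\eta = \sat[U_\alpha]{M}$ for each $\alpha < \mu$ and never reaches $\lpre$. The prolonging step of Lemma~\ref{lemma:seq:prolonging} is precisely what repairs this, by manufacturing a genuine top element $V_\mu = \lpre$ (with the neutral factor $Y_\mu = \foint$) at which the lemma can be evaluated; this is the one place where I expect to need care rather than routine bookkeeping.
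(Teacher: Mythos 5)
Your proposal is correct and takes essentially the same route as the paper: the paper's own proof of this corollary just cites Proposition~\ref{prop:seq:1} together with the forward direction of the proof of Theorem~\ref{thm:seq}, which is exactly the argument you reconstruct (Proposition~\ref{prop:seq:1} for the solution claim, then prolonging via Lemma~\ref{lemma:seq:prolonging} and evaluating Lemma~\ref{lemma:seq:sigma} at the appended top index $V_\mu = \lpre$ to get $\bigcap_{\alpha < \mu} X_\alpha = \sat[\lpre]{M} = M$). Your observation that the prolonging step is what makes the limit-ordinal case of $\mu$ go through is accurate and matches the role that lemma plays in the paper.
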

\begin{proof}
	Follows from the proof of Theorem \ref{thm:seq} and from Proposition
	\ref{prop:seq:1}.
\end{proof}

\begin{corollary} \label{cor:seq:2}
	Let $U = \an{U_\alpha}_{\alpha < \mu}$ be a splitting sequence for a hybrid
	knowledge base $\kb$, such that there exists at least one solution to $\kb$
	with respect to $U$. Then $\kb$ is MKNF satisfiable, and $M$ is an MKNF
	model of $\kb$ if and only if $M = \bigcap_{\alpha < \mu} X_\alpha$ for some
	solution $\an{X_\alpha}_{\alpha < \mu}$ to $\kb$ with respect to $U$
\end{corollary}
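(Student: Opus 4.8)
The plan is to obtain both assertions directly from the Splitting Sequence Theorem (Theorem~\ref{thm:seq}), which already establishes the unconditional biconditional between MKNF models of $\kb$ and intersections of solutions with respect to $U$. The second assertion of the corollary---that $M$ is an MKNF model of $\kb$ exactly when $M = \bigcap_{\alpha < \mu} X_\alpha$ for some solution $\an{X_\alpha}_{\alpha < \mu}$ to $\kb$ with respect to $U$---is literally the content of Theorem~\ref{thm:seq}, so I would simply invoke that theorem and record the equivalence without further argument.

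For the first assertion, MKNF satisfiability of $\kb$, I would exploit the right-to-left direction of Theorem~\ref{thm:seq}. By hypothesis there is at least one solution $\an{X_\alpha}_{\alpha < \mu}$ to $\kb$ with respect to $U$. Putting $M = \bigcap_{\alpha < \mu} X_\alpha$, the converse direction of Theorem~\ref{thm:seq} yields that $M$ is an MKNF model of $\kb$. Since every MKNF model is by definition a nonempty MKNF interpretation (Definition~\ref{def:mknf:models}), the mere existence of such an $M$ witnesses that $\kb$ possesses an MKNF model, i.e.\ that $\kb$ is MKNF satisfiable.

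There is no genuinely hard step here: the corollary is essentially a repackaging of Theorem~\ref{thm:seq}, trading the hypothesis ``$M$ is an MKNF model'' for the more readily checkable hypothesis ``a solution exists'' and making the resulting satisfiability explicit, in exact parallel with how Corollary~\ref{cor:splitting:2} follows from Theorem~\ref{thm:splitting} in the single-splitting-set case. The only point worth keeping in mind---and the closest thing to an obstacle---is that the intersection $\bigcap_{\alpha < \mu} X_\alpha$ must be confirmed to be a genuine, nonempty MKNF interpretation rather than the empty set; but this is precisely what Theorem~\ref{thm:seq} already guarantees when it asserts that the intersection is an MKNF model (the underlying nonemptiness being delivered by Proposition~\ref{prop:seq:1}). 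Thus nonemptiness comes for free, and no separate verification is required.
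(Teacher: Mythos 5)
Your proposal is correct and takes essentially the same route as the paper, whose entire proof of this corollary reads ``Follows by Theorem~\ref{thm:seq}'': the equivalence is the theorem verbatim, and satisfiability follows by applying its right-to-left direction to the assumed solution, since an MKNF model is by definition a nonempty MKNF interpretation. One cosmetic slip: the nonemptiness of $\bigcap_{\alpha < \mu} X_\alpha$ for a \emph{given} solution is secured by Proposition~\ref{prop:seq:2} (together with Lemma~\ref{lemma:seq:prolonging}) inside the converse direction of the theorem's proof, not by Proposition~\ref{prop:seq:1}, which runs in the opposite direction (from an MKNF model to a solution).
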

\begin{proof}
	Follows by Theorem \ref{thm:seq}.
\end{proof}

\begin{corollary} \label{cor:seq:3}
	Let $U = \an{U_\alpha}_{\alpha < \mu}$ be a splitting sequence for a hybrid
	knowledge base $\kb$. A formula $\phi$ is an MKNF consequence of $\kb$ if
	and only if, for every solution $\an{X_\alpha}_{\alpha < \mu}$ to $\kb$ with
	respect to $U$, $\bigcap_{\alpha < \mu} X_\alpha \ent \phi$.
\end{corollary}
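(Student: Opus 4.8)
The plan is to obtain this corollary as an immediate consequence of the Splitting Sequence Theorem (Theorem~\ref{thm:seq}), by unfolding the definition of MKNF consequence and then replacing the quantification over MKNF models of $\kb$ with a quantification over solutions w.r.t.\ $U$. Recall that a formula $\phi$ is an MKNF consequence of $\kb$ precisely when $M \ent \phi$ holds for every MKNF model $M$ of $\kb$; so the entire argument hinges on the correspondence that Theorem~\ref{thm:seq} establishes between MKNF models and intersections of solutions.

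First I would prove the forward direction. Assuming $\phi$ is an MKNF consequence of $\kb$, I would take an arbitrary solution $\an{X_\alpha}_{\alpha < \mu}$ to $\kb$ w.r.t.\ $U$ and set $M = \bigcap_{\alpha < \mu} X_\alpha$. Theorem~\ref{thm:seq} guarantees that this $M$ is an MKNF model of $\kb$, so $M \ent \phi$ follows from the assumed consequence relation. Since the solution was arbitrary, this yields the right-hand condition.

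For the converse, I would assume that $\bigcap_{\alpha < \mu} X_\alpha \ent \phi$ holds for every solution $\an{X_\alpha}_{\alpha < \mu}$ to $\kb$ w.r.t.\ $U$, and then pick an arbitrary MKNF model $M$ of $\kb$. By the other direction of Theorem~\ref{thm:seq}, there is a solution $\an{X_\alpha}_{\alpha < \mu}$ with $M = \bigcap_{\alpha < \mu} X_\alpha$, so the hypothesis gives $M \ent \phi$. As $M$ was an arbitrary MKNF model, $\phi$ is an MKNF consequence of $\kb$.

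There is no genuine obstacle here: Theorem~\ref{thm:seq} already performs all the technical work by pinning down the exact correspondence between MKNF models and intersections of solutions, and the rest is a routine re-reading of the definition of consequence. The only point I would flag is the degenerate case in which $\kb$ is MKNF unsatisfiable: then both sides are vacuous, since there are no MKNF models and, by Theorem~\ref{thm:seq}, no solutions either (any solution would yield a model as its intersection), so the equivalence still holds. I would mention this in passing rather than break it out as a separate case.
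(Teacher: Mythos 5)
Your proof is correct and matches the paper's approach: the paper's own proof is simply ``Follows by Theorem \ref{thm:seq}'', and your argument spells out precisely that application, using both directions of the splitting sequence theorem to translate between MKNF models and intersections of solutions. Your remark about the unsatisfiable case is a reasonable extra precaution but adds nothing essential.
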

\begin{proof}
	Follows by Theorem \ref{thm:seq}.
\end{proof}

\section{Splitting Theorem for Minimal Change Update Operator}

\subsection{Basic Properties of Minimal Change Update Operator}

\begin{lemma} \label{lemma:pmasplitting:pma subset}
	Let $M, N \in \mint$. If $M$ is a subset of $N$, then $M \oplus N = M$.
	Also, if $M$ is a superset of $N$, then $M \oplus N = N$.
\end{lemma}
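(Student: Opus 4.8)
The plan is to isolate a single observation and then apply it twice. The observation is that for any interpretation $I \in N$ one has $I \oplus N = \set{I}$. To see why this should hold, note that $\mathit{diff}(P, I, I) = \emptyset$ for every predicate symbol $P$, so $I \leq_I J$ holds for every $J \in N$, and moreover $I <_I J$ whenever $J \neq I$: any $J$ differing from $I$ does so on some ground atom, which yields a predicate $P$ with $\mathit{diff}(P, I, J) \neq \emptyset = \mathit{diff}(P, I, I)$, so that $J \not\leq_I I$ while $I \leq_I J$. Thus $I$ is the unique $\leq_I$-minimal element of $N$, and unfolding the definition $I \oplus N = \Set{J \in N | \lnot(\exists J' \in N)(J' <_I J)}$ gives $I \oplus N = \set{I}$.

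With this in hand, the first claim is immediate. If $M$ is a subset of $N$, then every $I \in M$ also lies in $N$, so $I \oplus N = \set{I}$, and therefore
\[
	M \oplus N = \bigcup_{I \in M}(I \oplus N) = \bigcup_{I \in M} \set{I} = M \enspace.
\]

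For the second claim, suppose $N$ is a subset of $M$. The inclusion $M \oplus N \subseteq N$ is immediate, since by definition each $I \oplus N$ is a subset of $N$. For the reverse inclusion, take any $J \in N$. Because $J \in N$, the observation yields $J \oplus N = \set{J}$, and because $N \subseteq M$ we have $J \in M$, so $\set{J} = J \oplus N \subseteq \bigcup_{I \in M}(I \oplus N) = M \oplus N$. Hence $N \subseteq M \oplus N$, and combining the two inclusions gives $M \oplus N = N$.

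There is no genuine difficulty to overcome; the only point requiring care is the verification that an interpretation is \emph{strictly} closer to itself than any other interpretation, which rests on reading off $\mathit{diff}(P, I, I) = \emptyset$ and correctly expanding the strict relation $<_I$ in terms of $\leq_I$. Once that fact is in place, everything else is routine bookkeeping with the union that defines $M \oplus N$.
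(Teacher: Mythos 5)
Your proof is correct and follows essentially the same route as the paper's: both rest on the observation that any $I \in N$ satisfies $I <_I J$ for every $J \neq I$, hence $I \oplus N = \set{I}$, and then derive the subset case by taking the union over $M$ and the superset case by combining $M \oplus N \subseteq N$ with $N \subseteq M \oplus N$. You additionally spell out the verification of the strict-closeness fact that the paper dismisses as easily verified, which is fine.
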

\begin{proof}
	Suppose $M$ is a subset of $N$. It can be verified easily that for every
	Herbrand interpretation $I$, $I <_I J$ for every $J \neq I$. Thus, for all
	$I \in M$, $I \oplus N = \set{I}$ and, consequently, $M \oplus N = M$.

	Now suppose that $M$ is a superset of $N$. Then for every $I \in N$ we have
	$I \oplus N = \set{I}$, so $M \oplus N$ is a superset of $N$. At the same
	time, $M \oplus N$ is a subset of $N$ by construction. Thus, $M \oplus N =
	N$.
\end{proof}

\begin{corollary} \label{cor:pmasplitting:pma idempotent}
	Let $M \in \mint$. Then $M \oplus M = M$.
\end{corollary}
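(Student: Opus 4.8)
The plan is to derive this immediately from Lemma~\ref{lemma:pmasplitting:pma subset}, which has just been established. That lemma states that whenever $M \subseteq N$ we have $M \oplus N = M$. Since the subset relation is reflexive, $M \subseteq M$ holds trivially for any $M \in \mint$, so instantiating the lemma with $N = M$ yields $M \oplus M = M$ at once. There is essentially no obstacle here: the corollary is a one-line specialisation of the preceding lemma.

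For completeness I would also note the direct argument, which does not even require the lemma as a black box. By definition $M \oplus M = \bigcup_{I \in M}(I \oplus M)$, where $I \oplus M = \Set{ J \in M | \lnot(\exists J' \in M)(J' <_I J) }$. The key fact, already observed in the proof of Lemma~\ref{lemma:pmasplitting:pma subset}, is that for every Herbrand interpretation $I$ one has $I <_I J$ for all $J \neq I$, since $\mathit{diff}(P, I, I) = \emptyset$ for every predicate symbol $P$ while $\mathit{diff}(P, I, J)$ is nonempty for some $P$ as soon as $J \neq I$. Consequently $I$ is the unique $\leq_I$-minimal element of any set containing it, so for each $I \in M$ we get $I \oplus M = \set{I}$. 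Taking the union over all $I \in M$ gives $M \oplus M = \bigcup_{I \in M} \set{I} = M$.

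I expect the only thing worth stating explicitly is the reflexivity step, so the proof reduces to a single appeal to the first clause of Lemma~\ref{lemma:pmasplitting:pma subset}.

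\begin{proof}
	Since $M \subseteq M$, the claim follows immediately from the first part of
	Lemma~\ref{lemma:pmasplitting:pma subset} with $N = M$.
\end{proof}
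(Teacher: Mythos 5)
Your proof is correct and matches the paper's: the paper likewise derives the corollary directly from Lemma~\ref{lemma:pmasplitting:pma subset}, and your instantiation with $N = M$ via reflexivity is exactly that argument made explicit. The supplementary direct computation is also sound but unnecessary.
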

\begin{proof}
	Follows from Lemma \ref{lemma:pmasplitting:pma subset}.
\end{proof}

\begin{lemma} \label{lemma:pmasplitting:update emptiness}
	Let $M, N \in \mint$. If $M$ is empty or $N$ is empty, then $M \oplus N$ is
	empty.
\end{lemma}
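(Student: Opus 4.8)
The plan is to unfold the definition $M \oplus N = \bigcup_{I \in M} (I \oplus N)$, where $I \oplus N = \Set{ J \in N | \lnot(\exists J' \in N)(J' <_I J)}$, and to observe that in each of the two hypothesised cases the resulting union is empty, either because its index set is empty or because every term in it is empty. The argument is a direct appeal to the set-theoretic shape of the definition of $\oplus$, so no auxiliary result is needed.

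First I would treat the case where $M$ is empty. Here the union $\bigcup_{I \in M}(I \oplus N)$ is indexed by the empty set, and an empty union is the empty set, so $M \oplus N = \emptyset$ follows immediately. Next I would treat the case where $N$ is empty. For an arbitrary $I \in M$, the set $I \oplus N$ is by construction a subset of $N$; since $N = \emptyset$, this forces $I \oplus N = \emptyset$. Consequently $M \oplus N = \bigcup_{I \in M} \emptyset = \emptyset$.

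I do not expect any genuine obstacle: the claim is an immediate consequence of the two structural facts that $I \oplus N \subseteq N$ for every $I$ and that $M \oplus N$ is a union indexed by $M$. The only point worth stating explicitly is the convention that an empty union evaluates to $\emptyset$, which covers the first case.
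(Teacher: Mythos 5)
Your proof is correct, but it takes a more elementary route than the paper. The paper dispatches this lemma in one line by invoking Lemma~\ref{lemma:pmasplitting:pma subset}: if $M = \emptyset$ then $M \subseteq N$, so $M \oplus N = M = \emptyset$; if $N = \emptyset$ then $N \subseteq M$, so $M \oplus N = N = \emptyset$. You instead unfold the definition of $\oplus$ directly and use the two structural facts that $M \oplus N$ is a union indexed by $M$ (hence empty when $M$ is) and that $I \oplus N \subseteq N$ for every $I$ (hence every term of the union is empty when $N$ is). The two arguments rest on the same underlying observations --- the paper's subset/superset lemma is itself proved from exactly these facts --- so the difference is one of packaging: the paper's version is shorter given that the auxiliary lemma is already in place, while yours is self-contained, avoids any dependence on that lemma, and makes explicit the convention that an empty union is the empty set. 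Either is acceptable; no gap.
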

\begin{proof}
	Follows from Lemma \ref{lemma:pmasplitting:pma subset}.
\end{proof}

\subsection{Splitting and Updating a Sequence of First-Order Theories}

We will be using the following, inductive definition of a minimal change
update model of a sequence of first-order theories. It is equivalent to the
definition from Sect.~\ref{sect:preliminaries}, just more precise.

\begin{definition}[Minimal Change Update Model] \label{def:seq pma}
	Let $\upd = \an{\upd_i}_{i < n}$ be a finite sequence of first-order
	theories. We define:
	\begin{align*}
		\smod(\upd) &= \bigoplus \an{\smod(\upd_i)}_{i < n} \enspace \text{, where} \\
		\bigoplus \an{~} &= \foint \enspace \text{and} \\
		\bigoplus \an{M_i}_{i < k + 1}
			&= \left( \bigoplus \an{M_i}_{i < k} \right) \oplus M_k
			\enspace \text{for any $k \geq 0$.}
	\end{align*}
	If $\smod(\upd)$ is nonempty, we say it is the \emph{minimal
	change update model of $\upd$}.
\end{definition}

\begin{remark}
	Note that the above definition is compatible with the definition of a
	minimal change update of $\thr \oplus \upd$ in the following sense:
	$M$ is a minimal change update model of $\thr \oplus \upd$ if and only if
	$M$ is a minimal change update model of $\an{\thr, \upd}$. This follows from
	the above definition and from the fact that $\foint \oplus X = X$ for any $X
	\in \mint$ (by Lemma \ref{lemma:pmasplitting:pma subset}).
\end{remark}

\begin{lemma} \label{lemma:pmasplitting:fo theory model}
	Let $\thr$ be a first-order theory. Then either $\thr$ has no MKNF model and
	$\smod(\thr) = \emptyset$, or $\thr$ has a unique MKNF model that coincides
	with $\smod(\thr)$. Moreover,
	\[
		\smod(\thr) = \bigcap_{\phi \in \thr} \smod(\phi)
	\]
	Additionally, the S5 and MKNF models of $\thr$ coincide with S5 and MKNF
	models of $\set{\mk \phi | \phi \in \thr}$.
\end{lemma}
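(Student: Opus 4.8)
The plan is to exploit the fact that a first-order (i.e.\ modal-operator-free) sentence is insensitive to the modal parameters of an MKNF structure. First I would record the observation that for any first-order $\phi$ and any MKNF structure $\mstr[I, M, N]$ we have $\mstr[I, M, N] \ent \phi$ if and only if $I \ent \phi$ in the classical sense; this follows by a straightforward structural induction on $\phi$, since the clauses for $\mk$ and $\mnot$ never arise. Let $W$ denote the set of all Herbrand interpretations that classically satisfy every sentence of $\thr$, and note that $W = \bigcap_{\phi \in \thr} \smod(\phi)$, since, by the observation applied to singletons, $\smod(\phi)$ is exactly the set of classical models of $\phi$.

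Using the observation, $M \ent \thr$ holds exactly when $I \ent \phi$ for every $I \in M$ and every $\phi \in \thr$, i.e.\ when $M \subseteq W$. Hence $\Set{M \in \mint | M \ent \thr}$ is precisely the collection of all subsets of $W$ (the empty set included, consistently with the convention that $\emptyset$ satisfies everything), whose greatest element is $W$ itself; therefore $\smod(\thr) = W$ and is always defined. Combined with the previous paragraph, this already yields the displayed identity $\smod(\thr) = \bigcap_{\phi \in \thr} \smod(\phi)$.

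Next I would settle the dichotomy for MKNF models. If $W = \emptyset$ then no nonempty $M$ can satisfy $M \subseteq W$, so $\thr$ has no S5 model and hence no MKNF model, while $\smod(\thr) = \emptyset$. If $W \neq \emptyset$, I claim $W$ is the unique MKNF model. It is an S5 model; for maximality, any $M' \supsetneq W$ contains some $I' \notin W$, so $I' \nent \phi$ for some $\phi \in \thr$, and by the observation $\mstr[I', M', W] \nent \phi$, which is the required witness. Conversely, any MKNF model $M$ is an S5 model, hence $M \subseteq W$; were $M \subsetneq W$, taking $M' = W$ in the maximality clause would force some $I' \in W$ and $\phi \in \thr$ with $\mstr[I', W, M] \nent \phi$, i.e.\ $I' \nent \phi$, contradicting $I' \in W$. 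Thus $M = W$.

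Finally, for the correspondence with $\set{\mk \phi | \phi \in \thr}$, the key computation is that $\mstr[I, M, M] \ent \mk \phi$ holds iff $J \ent \phi$ for all $J \in M$, i.e.\ iff $M \subseteq \smod(\phi)$; being independent of $I$, this gives $M \ent \set{\mk \phi | \phi \in \thr}$ iff $M \subseteq W$ iff $M \ent \thr$, so the S5 models agree. The same computation shows $\mstr[I', M', M] \nent \mk \phi$ iff some $J \in M'$ has $J \nent \phi$, so the maximality clause for $\set{\mk \phi | \phi \in \thr}$ reduces to exactly the condition $M' \not\subseteq W$ that the clause for $\thr$ reduces to; hence the MKNF models agree as well. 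I do not anticipate a genuine obstacle: once the insensitivity observation is in place, every assertion unwinds mechanically, and the only points needing care are the bookkeeping around $\emptyset \in \mint$ and the exact placement of the modal parameters in the maximality clause.
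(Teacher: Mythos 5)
Your proof is correct and follows the only natural route, which is also the paper's: the paper's own proof of this lemma is just the remark that it ``follows from the definition of an S5 and MKNF models,'' and your argument is precisely that unwinding --- the insensitivity of first-order formulae to the modal components of a structure, giving $\smod(\thr) = \bigcap_{\phi \in \thr}\smod(\phi)$, the dichotomy on whether this set is empty, and the reduction of both maximality clauses to the same condition. Nothing in your proposal deviates from or adds assumptions beyond what the paper's definitions support.
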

\begin{proof}
	Follows from the definition of an S5 and MKNF models.
\end{proof}

\begin{proposition} \label{prop:pmasplitting:generalisation}
	Let $\thr$ be a first-order theory. Then $M$ is the MKNF model of $\thr$ if
	and only if $M$ is the minimal change update model of $\an{\thr}$.
\end{proposition}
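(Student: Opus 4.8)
The plan is to show that both ``the MKNF model of $\thr$'' and ``the minimal change update model of $\an{\thr}$'' reduce to the single object $\smod(\thr)$, so that the two notions coincide exactly when $\smod(\thr)$ is nonempty. First I would unfold the definition of the minimal change update model for a one-element sequence. By Definition~\ref{def:seq pma}, $\smod(\an{\thr}) = \bigoplus \an{\smod(\thr)} = \left(\bigoplus \an{~}\right) \oplus \smod(\thr) = \foint \oplus \smod(\thr)$. Since $\foint$ is a superset of any $\smod(\thr) \in \mint$, Lemma~\ref{lemma:pmasplitting:pma subset} gives $\foint \oplus \smod(\thr) = \smod(\thr)$, so I obtain the key identity $\smod(\an{\thr}) = \smod(\thr)$.

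Next I would invoke Lemma~\ref{lemma:pmasplitting:fo theory model}, which already establishes that $\thr$ has a unique MKNF model coinciding with $\smod(\thr)$ whenever $\smod(\thr)$ is nonempty, and otherwise has no MKNF model with $\smod(\thr) = \emptyset$. Combining this with the identity above, the proof splits into the two directions of the equivalence, each settled by a short nonemptiness argument. For the forward direction, if $M$ is the MKNF model of $\thr$ then $M = \smod(\thr)$ is nonempty, whence $\smod(\an{\thr}) = \smod(\thr) = M$ is nonempty and $M$ is by definition the minimal change update model of $\an{\thr}$. For the converse, if $M$ is the minimal change update model of $\an{\thr}$ then $M = \smod(\an{\thr}) = \smod(\thr)$ is nonempty, and Lemma~\ref{lemma:pmasplitting:fo theory model} identifies $M$ as the (unique) MKNF model of $\thr$.

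The only point requiring care -- and the closest thing to an obstacle -- is the partiality of the two notions: both ``the MKNF model'' and ``the minimal change update model'' are defined only when the corresponding set is nonempty, so I must avoid asserting existence on one side from existence on the other without justification. The key identity $\smod(\an{\thr}) = \smod(\thr)$ is precisely what guarantees that the two existence conditions are one and the same, so the directions match with no gap. Everything else is a direct unfolding of the definitions together with the two cited lemmas, and I expect no substantial difficulty beyond bookkeeping.
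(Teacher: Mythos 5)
Your proof is correct and follows essentially the same route as the paper's: the paper's own (one-line) proof cites exactly the same two ingredients, namely Lemma~\ref{lemma:pmasplitting:fo theory model} and the identity $\foint \oplus M = M$ (an instance of Lemma~\ref{lemma:pmasplitting:pma subset}). Your write-up merely makes explicit the unfolding of Definition~\ref{def:seq pma} and the nonemptiness bookkeeping that the paper leaves implicit.
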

\begin{proof}
	Follows by Lemma \ref{lemma:pmasplitting:fo theory model} and the fact that
	$\foint \oplus M = M$ for any $M \in \mint$.
\end{proof}

\begin{proposition} \label{prop:pmasplitting:primacy}
	Let $\upd = \an{\upd_i}_{i < n}$ be a sequence of first-order theories with
	$n > 0$ and $M$ be the minimal change update model of $\upd$.  Then $M \ent
	\upd_{n-1}$.
\end{proposition}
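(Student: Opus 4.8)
The plan is to peel the last theory off the sequence and exploit the elementary fact that the minimal change operator $\oplus$ never creates interpretations lying outside its right-hand argument. Since $n > 0$, the inductive Definition~\ref{def:seq pma} lets me write
\[
	M = \smod(\upd)
		= \left( \bigoplus \an{\smod(\upd_i)}_{i < n-1} \right) \oplus \smod(\upd_{n-1}) \enspace.
\]
Abbreviating the left factor by $M'$ and setting $N = \smod(\upd_{n-1})$, I recall from the preliminaries that $M' \oplus N = \bigcup_{I \in M'} (I \oplus N)$ and that each set $I \oplus N = \Set{ J \in N | \lnot(\exists J' \in N)(J' <_I J) }$ is a subset of $N$. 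Hence $M = M' \oplus N \subseteq N = \smod(\upd_{n-1})$. Note that all the $\smod(\upd_i)$ are defined because each $\upd_i$ is a first-order theory (Lemma~\ref{lemma:pmasplitting:fo theory model}), so the iterated update is well defined.

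Next I would make $N$ concrete. Since $\upd_{n-1}$ is a first-order theory, Lemma~\ref{lemma:pmasplitting:fo theory model} gives $\smod(\upd_{n-1}) = \bigcap_{\phi \in \upd_{n-1}} \smod(\phi)$, and for a single first-order sentence $\phi$ the greatest S5 model $\smod(\phi)$ is exactly the set of Herbrand interpretations that classically satisfy $\phi$. Consequently $N$ is precisely the set of interpretations satisfying every sentence of $\upd_{n-1}$, so from $M \subseteq N$ I conclude that each $I \in M$ satisfies each $\phi \in \upd_{n-1}$. The final step translates this back into MKNF satisfaction: because the sentences of $\upd_{n-1}$ contain no modal operators, the truth of $\phi$ in a structure $\mstr$ depends only on $I$, whence $\mstr[I, M, M] \ent \phi$ for every $I \in M$ and every $\phi \in \upd_{n-1}$. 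As $M$, being the minimal change update model, is nonempty, this is exactly the assertion $M \ent \upd_{n-1}$.

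The argument is genuinely short, and the only step demanding care is the identification of $N$: I must invoke Lemma~\ref{lemma:pmasplitting:fo theory model} to know that $\smod(\upd_{n-1})$ collects \emph{all and only} the models of $\upd_{n-1}$ rather than merely being some S5 model, and to handle the degenerate possibility that $N = \emptyset$ (which is excluded here, since $M \subseteq N$ and $M \neq \emptyset$). The main obstacle, such as it is, lies in being precise about the passage from the set inclusion $M \subseteq \smod(\upd_{n-1})$ to the entailment $M \ent \upd_{n-1}$, which rests entirely on the modal-operator-free character of the updating theory and the fact that first-order satisfaction in $\mstr$ is insensitive to the modal components $M$ and $N$.
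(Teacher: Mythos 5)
Your proof is correct and takes essentially the same route as the paper's own argument, which is just a one-line sketch invoking induction on $n$ together with the fact that $M \oplus N$ is a subset of $N$ for any $M, N \in \mint$. Your unfolding of Definition~\ref{def:seq pma} to peel off $\smod(\upd_{n-1})$, and your careful passage from the inclusion $M \subseteq \smod(\upd_{n-1})$ to the entailment $M \ent \upd_{n-1}$ via Lemma~\ref{lemma:pmasplitting:fo theory model} and the modal-operator-free character of $\upd_{n-1}$, merely fill in the details the paper's sketch leaves implicit.
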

\begin{proof}[Proof (sketch)]
	This follows by induction on $n$, by the fact that $M \oplus N$ is a subset
	of $N$ for any $M, N \in \mint$.
\end{proof}

\begin{proposition} \label{prop:pmasplitting:model iteration}
	Let $\upd = \an{\upd_i}_{i < n + 1}$ be a nonempty sequence of first-order
	theories and $M$ be an MKNF interpretation. Then $M$ is a minimal change
	update model of $\upd$ if and only if $M = N_1 \oplus N_2$ where $N_1$ is
	the minimal change update model of $\an{\upd_i}_{i < n}$ and $N_2$ is the
	MKNF model of $\upd_n$.
\end{proposition}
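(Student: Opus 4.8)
The plan is to unfold Definition~\ref{def:seq pma} and then reduce the statement to elementary bookkeeping about (non)emptiness, using that every MKNF interpretation is by convention nonempty. The central observation is that, applying the inductive clause of Definition~\ref{def:seq pma} with $k = n$,
\[
	\smod(\upd)
		= \bigoplus \an{\smod(\upd_i)}_{i < n+1}
		= \left( \bigoplus \an{\smod(\upd_i)}_{i < n} \right) \oplus \smod(\upd_n)
		= \smod(\an{\upd_i}_{i < n}) \oplus \smod(\upd_n) \enspace.
\]
By Lemma~\ref{lemma:pmasplitting:fo theory model}, $\smod(\upd_n)$ is either empty or the unique MKNF model of $\upd_n$, so whenever it is nonempty it coincides with the MKNF model of $\upd_n$; and $\smod(\an{\upd_i}_{i<n})$ is, by definition, the minimal change update model of $\an{\upd_i}_{i<n}$ exactly when it is nonempty.

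For the forward direction I would assume $M$ is a minimal change update model of $\upd$, i.e. $M = \smod(\upd)$ and, being an MKNF interpretation, $M \neq \emptyset$. By the displayed identity $M = \smod(\an{\upd_i}_{i<n}) \oplus \smod(\upd_n)$, so the contrapositive of Lemma~\ref{lemma:pmasplitting:update emptiness} forces both $\smod(\an{\upd_i}_{i<n})$ and $\smod(\upd_n)$ to be nonempty. Setting $N_1 = \smod(\an{\upd_i}_{i<n})$ and $N_2 = \smod(\upd_n)$, nonemptiness makes $N_1$ the minimal change update model of $\an{\upd_i}_{i<n}$ and, via Lemma~\ref{lemma:pmasplitting:fo theory model}, $N_2$ the MKNF model of $\upd_n$, while the identity gives $M = N_1 \oplus N_2$, as required.

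For the converse I would assume $M = N_1 \oplus N_2$ with $N_1$ the minimal change update model of $\an{\upd_i}_{i<n}$ and $N_2$ the MKNF model of $\upd_n$. Then $N_1 = \smod(\an{\upd_i}_{i<n})$ and, by Lemma~\ref{lemma:pmasplitting:fo theory model}, $N_2 = \smod(\upd_n)$, so the displayed identity yields $M = \smod(\upd)$. Since $M$ is an MKNF interpretation it is nonempty, and therefore $M = \smod(\upd)$ is precisely the minimal change update model of $\upd$.

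The argument is essentially definitional; the only point that needs care is the definedness bookkeeping. Because the minimal change update model and the MKNF model of $\upd_n$ are defined only when nonempty, the crux is to transport nonemptiness back and forth across $\oplus$, and this is exactly what Lemma~\ref{lemma:pmasplitting:update emptiness}, together with the nonemptiness built into the notion of an MKNF interpretation, supplies. No well-foundedness or minimality argument about $\oplus$ itself is needed, since $M$ is assumed to be an MKNF interpretation from the outset.
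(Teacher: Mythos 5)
Your proof is correct and follows essentially the same route as the paper's: unfold Definition~\ref{def:seq pma} at the last step, use Lemma~\ref{lemma:pmasplitting:update emptiness} to transfer nonemptiness to both operands of $\oplus$, and invoke Lemma~\ref{lemma:pmasplitting:fo theory model} to identify $\smod(\upd_n)$ with the MKNF model of $\upd_n$. The only difference is presentational — you separate the two directions explicitly, while the paper argues both at once via a chain of equivalences.
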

\begin{proof}
	By Def.~\ref{def:seq pma}, $M$ is the minimal change update model of $\upd$
	if and only if
	\[
		M = \smod(\upd) = N_1 \oplus N_2 \enspace,
	\]
	where $N_1 = \bigoplus \an{\smod(\upd_i)}_{i < n}$ and $N_2 =
	\smod(\upd_n)$. Furthermore, since $M$ is nonempty, it follows by Lemma
	\ref{lemma:pmasplitting:update emptiness} that both $N_1$ and $N_2$ are
	nonempty. So $N_1$ is the minimal change update model of $\an{\upd_i}_{i <
	n}$ by Def.~\ref{def:seq pma} and $N_2$ is the MKNF model of $\upd_n$ by
	Lemma \ref{lemma:pmasplitting:fo theory model}.
\end{proof}

\begin{proposition} \label{prop:pmasplitting:intupdate}
	Let $S = \an{S_\alpha}_{\alpha < \mu}$ be a saturation sequence, $N \in
	\mint$ be sequence-saturated relative to $S$ and $J$ be a Herbrand
	interpretation. Then
	\[
		J \in I \oplus N \mlequiv (\forall \alpha < \mu)
			(\restr[S_\alpha]{J} \in \restr[S_\alpha]{I} \oplus \restr[S_\alpha]{N})
			\enspace.
	\]
\end{proposition}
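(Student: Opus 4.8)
The plan is to reduce the whole statement to a single structural observation about how the closeness relation decomposes along the saturation sequence. Since the sets $S_\alpha$ are pairwise disjoint and cover $\lpre$, each predicate symbol $P$ lies in exactly one $S_\alpha$, and the $P$-atoms of any interpretation $I$ survive unchanged under restriction, so $\restr[P]{\restr[S_\alpha]{I}} = \restr[P]{I}$ whenever $P \in S_\alpha$ and $\restr[P]{\restr[S_\alpha]{I}} = \emptyset$ otherwise. This gives $\mathit{diff}(P, I, J) = \mathit{diff}(P, \restr[S_\alpha]{I}, \restr[S_\alpha]{J})$ for $P \in S_\alpha$ and an empty difference for $P$ outside $S_\alpha$. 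Grouping the predicate-wise comparisons defining $\leq_I$ by the layer of each predicate, I would first establish the decomposition lemma
\[
	J \leq_I J' \mlequiv (\forall \alpha < \mu)
		\left( \restr[S_\alpha]{J} \leq_{\restr[S_\alpha]{I}} \restr[S_\alpha]{J'} \right)
		\enspace.
\]
From this the strict relation decomposes as well: $J' <_I J$ holds iff $\restr[S_\beta]{J'} \leq_{\restr[S_\beta]{I}} \restr[S_\beta]{J}$ for every $\beta$ while $\restr[S_\alpha]{J'} <_{\restr[S_\alpha]{I}} \restr[S_\alpha]{J}$ for at least one $\alpha$.

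With the lemma in hand, I would unfold both sides of the claimed equivalence using $I \oplus N = \Set{ J \in N | \lnot (\exists J' \in N)(J' <_I J) }$. The membership parts — $J \in N$ on the left against $\restr[S_\alpha]{J} \in \restr[S_\alpha]{N}$ for every $\alpha$ on the right — are bridged exactly by sequence-saturation of $N$: the forward direction is immediate from the definition of restriction, and the converse is precisely the defining property of a sequence-saturated interpretation.

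For the forward direction of the main equivalence, assuming $J \in I \oplus N$ and fixing a layer $\alpha$, I would argue by contradiction: if some $K \in \restr[S_\alpha]{N}$ satisfied $K <_{\restr[S_\alpha]{I}} \restr[S_\alpha]{J}$, I would splice $K$ into $J$, defining $J'$ to agree with $J$ on every layer $\beta \neq \alpha$ and with $K$ on $S_\alpha$, i.e. $J' = \left( \bigcup_{\beta \neq \alpha} \restr[S_\beta]{J} \right) \cup K$. Sequence-saturation guarantees $J' \in N$, since each of its layers lies in the corresponding $\restr[S_\beta]{N}$, and the decomposition of $<_I$ yields $J' <_I J$ because only the $\alpha$-layer has been strictly improved — contradicting $J \in I \oplus N$. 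For the backward direction, any strict dominator $J' \in N$ of $J$ would, by the same decomposition, produce a layer $\alpha$ with $\restr[S_\alpha]{J'} <_{\restr[S_\alpha]{I}} \restr[S_\alpha]{J}$ and $\restr[S_\alpha]{J'} \in \restr[S_\alpha]{N}$, contradicting the minimality of $\restr[S_\alpha]{J}$ in that layer.

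The main obstacle is the decomposition lemma itself: getting the bookkeeping right so that the predicate-wise comparisons regroup cleanly into per-layer comparisons, and in particular verifying that predicates outside $S_\alpha$ contribute only the trivial empty difference and hence never obstruct $\leq_{\restr[S_\alpha]{I}}$. Once that is settled, both directions are short contradiction arguments whose only nontrivial ingredient is the splicing construction, which rests entirely on the pairwise disjointness of $S$ together with the sequence-saturation of $N$.
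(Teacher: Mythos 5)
Your proposal is correct and is essentially the paper's own proof: the same localization of $\mathit{diff}(P,\cdot,\cdot)$ to the unique layer $S_\alpha$ containing $P$, the same projection of a global dominator onto the layer witnessing strict improvement (used for the right-to-left direction), and the same splicing of a layer dominator with the remaining layers of $J$, with sequence-saturation of $N$ guaranteeing the spliced interpretation lies in $N$ (used for the left-to-right direction). The only cosmetic differences are organizational: you argue each direction directly by contradiction where the paper argues contrapositively, and you factor the diff bookkeeping into an explicit decomposition lemma for $\leq_I$ (and hence $<_I$), which the paper carries out inline on the diff sets rather than stating separately.
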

\begin{proof}
	Suppose $J$ does not belong to $I \oplus N$. If $J$ does not belong to $N$,
	then since $N$ is sequence-saturated relative to $S$, there is some $\alpha
	< \mu$ such that $\restr[S_\alpha]{J}$ does not belong to
	$\restr[S_\alpha]{N}$.  But then $\restr[S_\alpha]{J}$ also does not belong
	to $\restr[S_\alpha]{I} \oplus \restr[S_\alpha]{N}$, so we reached the
	desired conclusion.
	
	In the principal case, when $J$ belongs to $N$, we know there exists some
	$J' \in N$ such that $J' <_I J$. This means that for every predicate symbol
	$P \in \lpre$,
	\begin{equation}
		\label{eq:proof:pmasplitting:intupdate:1}
		\mathit{diff}(P, I, J') \subseteq \mathit{diff}(P, I, J)
	\end{equation}
	and for some predicate symbol $P_0 \in \lpre$,
	\begin{equation}
		\label{eq:proof:pmasplitting:intupdate:2}
		\mathit{diff}(P_0, I, J') \subsetneq \mathit{diff}(P_0, I, J) \enspace.
	\end{equation}
	Since $S$ is a saturation sequence, there is a unique ordinal $\alpha$ such
	that $P_0$ belongs to $S_\alpha$. It follows from
	\eqref{eq:proof:pmasplitting:intupdate:2} that
	\[
		\mathit{diff}(P_0, \restr[S_\alpha]{I}, \restr[S_\alpha]{J'})
			= \mathit{diff}(P_0, I, J')
			\subsetneq \mathit{diff}(P_0, I, J)
			= \mathit{diff}(P_0, \restr[S_\alpha]{I}, \restr[S_\alpha]{J})
	\]
	Furthermore, for any predicate symbol $P \in S_\alpha$ it follows from
	\eqref{eq:proof:pmasplitting:intupdate:1} that
	\[
		\mathit{diff}(P, \restr[S_\alpha]{I}, \restr[S_\alpha]{J'})
			= \mathit{diff}(P, I, J')
			\subseteq \mathit{diff}(P, I, J)
			= \mathit{diff}(P, \restr[S_\alpha]{I}, \restr[S_\alpha]{J}) \enspace.
	\]
	Finally, for any predicate symbol $P$ that does not belong to $S_\alpha$,
	$\restr[S_\alpha]{I}$, $\restr[S_\alpha]{J}$ and $\restr[S_\alpha]{J'}$ do
	not contain any atoms with the predicate symbol $P$, so
	\[
		\mathit{diff}(P, \restr[S_\alpha]{I}, \restr[S_\alpha]{J'}) = \emptyset
			= \mathit{diff}(P, \restr[S_\alpha]{I}, \restr[S_\alpha]{J}) \enspace.
	\]
	Thus, we can conclude that
	\[
		\restr[S_\alpha]{J'} <_{\restr[S_\alpha]{I}} \restr[S_\alpha]{J} \enspace,
	\]
	so $\restr[S_\alpha]{J}$ does not belong to $\restr[S_\alpha]{I} \oplus
	\restr[S_\alpha]{N}$, which finishes this part of the proof.

	For the converse implication, suppose that for some ordinal $\alpha < \mu$,
	$\restr[S_\alpha]{J}$ does not belong to $\restr[S_\alpha]{I} \oplus
	\restr[S_\alpha]{N}$. If $\restr[S_\alpha]{J}$ does not belong to
	$\restr[S_\alpha]{N}$, we immediately obtain that $J$ does not belong to
	$N$. Consequently, $J$ cannot belong to $I \oplus N$.

	It remains to consider the principal case when $\restr[S_\alpha]{J}$ belongs
	to $\restr[S_\alpha]{N}$. Then there must be some interpretation $J' \in
	\restr[S_\alpha]{N}$ such that $J' <_{\restr[S_\alpha]{I}} \restr[S_\alpha]{J}$.
	Thus, for all predicate symbols $P \in \lpre$ we know that
	\begin{equation}
		\label{eq:proof:pmasplitting:intupdate:3}
		\mathit{diff}(P, \restr[S_\alpha]{I}, J')
			\subseteq \mathit{diff}(P, \restr[S_\alpha]{I}, \restr[S_\alpha]{J})
			\enspace.
	\end{equation}
	We also know that there is some predicate symbol $P_0 \in \lpre$ such that
	\begin{equation}
		\label{eq:proof:pmasplitting:intupdate:4}
		\mathit{diff}(P_0, \restr[S_\alpha]{I}, J')
			\subsetneq \mathit{diff}(P_0, \restr[S_\alpha]{I}, \restr[S_\alpha]{J})
			\enspace.
	\end{equation}
	Additionally, for every predicate symbol $P$ from $\lpre \setminus S_\alpha$
	it holds that $\restr[S_\alpha]{I}$, $\restr[S_\alpha]{J}$ and $J'$ contain
	no atoms with the predicate symbol $P$, so that
	\[
		\mathit{diff}(P, \restr[S_\alpha]{I}, J') = \emptyset
			= \mathit{diff}(P, \restr[S_\alpha]{I}, \restr[S_\alpha]{J}) \enspace.
	\]
	Consequently, $P_0$ must belong to $S_\alpha$. Now let $J'' = J' \cup
	\restrc[S_\alpha]{J}$. It is easy to see that $\restr[S_\alpha]{J''} = J'
	\in \restr[S_\alpha]{N}$ and for every ordinal $\beta < \mu$ such that
	$\beta \neq \alpha$, $\restr[S_\beta]{J''} = \restr[S_\beta]{J} \in
	\restr[S_\beta]{N}$, so since $N$ is sequence-saturated relative to $S$,
	$J''$ belongs to $N$. Now take some predicate symbol $P \in \lpre$ and
	consider the following two cases:
	\begin{enumerate}
		\renewcommand{\labelenumi}{\alph{enumi})}
		\item If $P$ belongs to $S_\alpha$, then from
			\eqref{eq:proof:pmasplitting:intupdate:3} we obtain
			\[
				\mathit{diff}(P, I, J'')
					= \mathit{diff}(P, \restr[S_\alpha]{I}, J')
					\subseteq \mathit{diff}(P, \restr[S_\alpha]{I}, \restr[S_\alpha]{J})
					\enspace.
			\]
		\item If $P$ belongs to $\lpre \setminus S_\alpha$, then since
			$\restrc[S_\alpha]{J''} = \restrc[S_\alpha]{J}$,
			\[
				\mathit{diff}(P, I, J'') = \mathit{diff}(P, I, J) \enspace.
			\]
	\end{enumerate}
	In both cases we see that $\mathit{diff}(P, I, J'')$ is a subset of
	$\mathit{diff}(P, I, J)$. Moreover, from
	\eqref{eq:proof:pmasplitting:intupdate:4} we obtain
	\[
		\mathit{diff}(P_0, I, J'')
			= \mathit{diff}(P_0, \restr[S_\alpha]{I}, J')
			\subsetneq \mathit{diff}(P_0, \restr[S_\alpha]{I}, \restr[S_\alpha]{J})
			\enspace.
	\]
	It follows from the above considerations that $J'' <_I J$. Consequently, $J$
	does not belong to $I \oplus N$.
\end{proof}

\begin{proposition} \label{prop:pmasplitting:mintupdate}
	Let $S = \an{S_\alpha}_{\alpha < \mu}$ be a saturation sequence, $M, N \in
	\mint$ be both sequence-saturated relative to $S$ and $J$ be a Herbrand
	interpretation. Then
	\[
		J \in M \oplus N \mlequiv (\forall \alpha < \mu)
			(\restr[S_\alpha]{J} \in \restr[S_\alpha]{M} \oplus \restr[S_\alpha]{N})
			\enspace.
	\]
\end{proposition}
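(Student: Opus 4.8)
The plan is to reduce the claim to its single-interpretation counterpart, Proposition~\ref{prop:pmasplitting:intupdate}, and then lift it from one fixed $I$ to the whole of $M$ by exploiting that $M$ is sequence-saturated. Recall that by definition $M \oplus N = \bigcup_{I \in M}(I \oplus N)$, so $J \in M \oplus N$ holds exactly when $J \in I \oplus N$ for some $I \in M$, and likewise $\restr[S_\alpha]{J} \in \restr[S_\alpha]{M} \oplus \restr[S_\alpha]{N}$ holds exactly when $\restr[S_\alpha]{J} \in K \oplus \restr[S_\alpha]{N}$ for some $K \in \restr[S_\alpha]{M}$.

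For the left-to-right direction I would take $I \in M$ with $J \in I \oplus N$. Since $N$ is sequence-saturated relative to $S$, Proposition~\ref{prop:pmasplitting:intupdate} gives $\restr[S_\alpha]{J} \in \restr[S_\alpha]{I} \oplus \restr[S_\alpha]{N}$ for every $\alpha < \mu$. As $\restr[S_\alpha]{I} \in \restr[S_\alpha]{M}$, each set $\restr[S_\alpha]{I} \oplus \restr[S_\alpha]{N}$ is contained in $\restr[S_\alpha]{M} \oplus \restr[S_\alpha]{N}$, so the required conclusion follows at once.

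The interesting direction is right-to-left, and here the sequence-saturation of $M$ does the real work. Assuming $\restr[S_\alpha]{J} \in \restr[S_\alpha]{M} \oplus \restr[S_\alpha]{N}$ for every $\alpha$, I would, for each $\alpha$, choose a witness $I_\alpha \in M$ with $\restr[S_\alpha]{J} \in \restr[S_\alpha]{I_\alpha} \oplus \restr[S_\alpha]{N}$, and then glue these witnesses into a single interpretation
\[
	I = \bigcup_{\alpha < \mu} \restr[S_\alpha]{I_\alpha} \enspace.
\]
Because the sets $S_\alpha$ are pairwise disjoint and cover $\lpre$, a routine check gives $\restr[S_\beta]{I} = \restr[S_\beta]{I_\beta}$ for every $\beta < \mu$; in particular $\restr[S_\beta]{I} \in \restr[S_\beta]{M}$ for all $\beta$, so sequence-saturation of $M$ yields $I \in M$. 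Since $\restr[S_\alpha]{I} = \restr[S_\alpha]{I_\alpha}$, the witness condition becomes $\restr[S_\alpha]{J} \in \restr[S_\alpha]{I} \oplus \restr[S_\alpha]{N}$ for every $\alpha$, and a final application of Proposition~\ref{prop:pmasplitting:intupdate} (again using that $N$ is sequence-saturated) delivers $J \in I \oplus N \subseteq M \oplus N$.

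The main obstacle is precisely this gluing step: the witnesses $I_\alpha$ may differ from layer to layer, so one cannot expect a single member of $M$ to witness all layers at once, and it is only the sequence-saturation hypothesis on $M$ that guarantees the glued interpretation lies back in $M$. The one computation worth carrying out carefully is $\restr[S_\beta]{I} = \restr[S_\beta]{I_\beta}$, which rests on each ground atom having a single predicate symbol belonging to exactly one $S_\alpha$; everything else is bookkeeping on top of the already-established single-interpretation result.
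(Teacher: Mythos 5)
Your proof is correct and is essentially the paper's own argument: both directions reduce the claim to Proposition~\ref{prop:pmasplitting:intupdate}, and the right-to-left direction uses exactly the same gluing $I = \bigcup_{\alpha < \mu} \restr[S_\alpha]{I_\alpha}$ of per-layer witnesses, with sequence-saturation of $M$ ensuring $I \in M$ and sequence-saturation of $N$ feeding the applications of that proposition.
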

\begin{proof}
	We know that $J$ belongs to $M \oplus N$ if and only if for some $I \in M$,
	$J$ belongs to $I \oplus N$. By Proposition
	\ref{prop:pmasplitting:intupdate}, this holds if and only if
	\begin{equation}
		\label{eq:proof:pmasplitting:mintupdate:1}
		(\forall \alpha < \mu)
			(\restr[S_\alpha]{J} \in \restr[S_\alpha]{I} \oplus \restr[S_\alpha]{N})
			\enspace.
	\end{equation}
	At the same time, the right hand side of our equivalence is true if and only
	for some sequence of Herbrand interpretations $\an{I_\alpha}_{\alpha <
	\mu}$, the following holds:
	\begin{equation}
		\label{eq:proof:pmasplitting:mintupdate:2}
		(\forall \alpha < \mu)
			(\restr[S_\alpha]{J} \in
			\restr[S_\alpha]{I_\alpha} \oplus \restr[S_\alpha]{N})
			\enspace.
	\end{equation}
	It remains to show that \eqref{eq:proof:pmasplitting:mintupdate:1} is
	equivalent to \eqref{eq:proof:pmasplitting:mintupdate:2}. Indeed, it is
	easy to see that \eqref{eq:proof:pmasplitting:mintupdate:1} implies
	\eqref{eq:proof:pmasplitting:mintupdate:2} by putting $I_\alpha = I$ for
	all $\alpha < \mu$. Now suppose that
	\eqref{eq:proof:pmasplitting:mintupdate:2} holds and put
	\[
		I = \bigcup_{\alpha < \mu} \restr[S_\alpha]{I_\alpha} \enspace.
	\]
	Then it holds for every $\alpha < \mu$ that $\restr[S_\alpha]{I} =
	\restr[S_\alpha]{I_\alpha} \in \restr[S_\alpha]{M}$. Since $M$ is
	sequence-saturated relative to $S$, this implies that $I$ belongs to $M$.
	Moreover, we can also conclude that $\restr[S_\alpha]{J}$ belongs to
	$\restr[S_\alpha]{I} \oplus \restr[S_\alpha]{N}$. As a consequence,
	\eqref{eq:proof:pmasplitting:mintupdate:1} is satisfied and our proof is
	finished.
\end{proof}

\begin{proposition} \label{prop:pmasplitting:equivalence1}
	Let $S = \an{S_\alpha}_{\alpha < \mu}$ be a saturation sequence, $M, N \in
	\mint$ be both sequence-saturated relative to $S$ and $J$ be a Herbrand
	interpretation. If $M \oplus N$ is nonempty, then 
	\[
		\restr[S_\alpha]{(M \oplus N)}
			= \restr[S_\alpha]{M} \oplus \restr[S_\alpha]{N} \enspace.
	\]
\end{proposition}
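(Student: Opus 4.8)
The plan is to establish the set equality $\restr[S_\alpha]{(M \oplus N)} = \restr[S_\alpha]{M} \oplus \restr[S_\alpha]{N}$ for an arbitrary fixed ordinal $\alpha < \mu$ by proving the two inclusions separately, using throughout the pointwise characterization of the update supplied by Proposition \ref{prop:pmasplitting:mintupdate}, which reduces membership in $M \oplus N$ to membership of every $S_\gamma$-restriction in the corresponding restricted update.

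For the left-to-right inclusion I would take any $K \in \restr[S_\alpha]{(M \oplus N)}$, so that $K = \restr[S_\alpha]{J}$ for some $J \in M \oplus N$. Applying Proposition \ref{prop:pmasplitting:mintupdate} to $J$ yields $\restr[S_\gamma]{J} \in \restr[S_\gamma]{M} \oplus \restr[S_\gamma]{N}$ for every $\gamma < \mu$, and instantiating $\gamma = \alpha$ gives exactly $K \in \restr[S_\alpha]{M} \oplus \restr[S_\alpha]{N}$. This direction is immediate and does not require nonemptiness.

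The converse inclusion is where the real work lies and is the main obstacle. Given $K \in \restr[S_\alpha]{M} \oplus \restr[S_\alpha]{N}$, I first note that the update always selects a subset of its second argument, so $K \in \restr[S_\alpha]{N}$ and hence $K$ contains only atoms whose predicate symbols lie in $S_\alpha$. Since $M \oplus N$ is nonempty by hypothesis, I can fix a witness $J_0 \in M \oplus N$ and use it to supply the coordinates outside $S_\alpha$. Concretely, I would set $J = K \cup \restrc[S_\alpha]{J_0}$. Because $K$ consists solely of $S_\alpha$-atoms while $\restrc[S_\alpha]{J_0}$ contains none, we obtain $\restr[S_\alpha]{J} = K$; and because the members of a saturation sequence are pairwise disjoint, for every $\gamma \neq \alpha$ we have $S_\gamma \subseteq \lpre \setminus S_\alpha$, whence $\restr[S_\gamma]{J} = \restr[S_\gamma]{J_0}$.

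It then remains to check the hypotheses of the reverse direction of Proposition \ref{prop:pmasplitting:mintupdate} for $J$: when $\gamma = \alpha$ the membership $\restr[S_\alpha]{J} = K \in \restr[S_\alpha]{M} \oplus \restr[S_\alpha]{N}$ holds by the choice of $K$, while for $\gamma \neq \alpha$ the membership $\restr[S_\gamma]{J} = \restr[S_\gamma]{J_0} \in \restr[S_\gamma]{M} \oplus \restr[S_\gamma]{N}$ holds because $J_0 \in M \oplus N$ (again via Proposition \ref{prop:pmasplitting:mintupdate}). Therefore $J \in M \oplus N$, and since $\restr[S_\alpha]{J} = K$ we conclude $K \in \restr[S_\alpha]{(M \oplus N)}$. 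The crucial point making this converse work is that nonemptiness of $M \oplus N$ furnishes the witness $J_0$ whose complementary coordinates can be grafted onto $K$; pairwise disjointness of the saturation sequence is precisely what guarantees that this grafting leaves the $S_\alpha$-coordinate equal to $K$ while keeping every other coordinate inside the respective restricted update.
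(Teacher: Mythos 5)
Your proposal is correct and follows essentially the same route as the paper's own proof: both directions rest on the pointwise characterization in Proposition \ref{prop:pmasplitting:mintupdate}, and your construction $J = K \cup \restrc[S_\alpha]{J_0}$ from a witness $J_0 \in M \oplus N$ is exactly the paper's $J'' = J \cup \restrc[S_\alpha]{J'}$. The only difference is cosmetic: you spell out why $\restr[S_\alpha]{J} = K$ (via $K \subseteq$ atoms over $S_\alpha$ and pairwise disjointness), which the paper leaves implicit.
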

\begin{proof}
	Suppose that $J$ belongs to $\restr[S_\alpha]{(M \oplus N)}$. Then $M \oplus
	N$ contains some interpretation $J'$ such that $\restr[S_\alpha]{J'} = J$.
	Also, by Proposition \ref{prop:pmasplitting:mintupdate} it follows that
	$\restr[S_\alpha]{J'}$ belongs to $\restr[S_\alpha]{M} \oplus
	\restr[S_\alpha]{N}$, so $J$ also belongs there.

	For the converse implication, suppose that $J$ belongs to
	$\restr[S_\alpha]{M} \oplus \restr[S_\alpha]{N}$, take some $J' \in M \oplus
	N$ and put $J'' = J \cup \restrc[S_\alpha]{J'}$. By Proposition
	\ref{prop:pmasplitting:mintupdate} it follows that for every $\beta < \mu$,
	$\restr[S_\beta]{J'}$ belongs to $\restr[S_\beta]{M} \oplus
	\restr[S_\beta]{N}$. Also, whenever $\beta \neq \alpha$,
	$\restr[S_\beta]{J''} = \restr[S_\beta]{J'}$, so $\restr[S_\beta]{J''}$
	belongs to $\restr[S_\beta]{M} \oplus \restr[S_\beta]{N}$. Moreover,
	$\restr[S_\alpha]{J''} = J$, so $\restr[S_\alpha]{J''}$ belongs to
	$\restr[S_\alpha]{M} \oplus \restr[S_\alpha]{N}$, and by using Proposition
	\ref{prop:pmasplitting:mintupdate} again we obtain that $J''$ belongs to $M
	\oplus N$. As a consequence, $J$ belongs to $\restr[S_\alpha]{(M \oplus N)}$
	because $J = \restr[S_\alpha]{J''}$.
\end{proof}

\begin{proposition} \label{prop:pmasplitting:equivalence2}
	Let $S = \an{S_\alpha}_{\alpha < \mu}$ be a saturation sequence, $M, N \in
	\mint$ be both nonempty and sequence-saturated relative to $S$ and $J$ be a
	Herbrand interpretation. Then
	\[
		\restr[S_\alpha]{J} \in \restr[S_\alpha]{M} \oplus \restr[S_\alpha]{N}
			\mlequiv J \in \sat[S_\alpha]{M} \oplus \sat[S_\alpha]{N} \enspace.
	\]
\end{proposition}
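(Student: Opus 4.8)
The plan is to reduce the statement to Proposition~\ref{prop:pmasplitting:mintupdate}, which already characterises membership in a minimal change update of two \emph{sequence}-saturated interpretations coordinate-by-coordinate along $S$. The key observation is that, although $\sat[S_\alpha]{M}$ and $\sat[S_\alpha]{N}$ are only saturated relative to the single set $S_\alpha$, they are in fact sequence-saturated relative to the whole sequence $S$, so Proposition~\ref{prop:pmasplitting:mintupdate} applies to them. Once applied, the resulting conjunction over all $\beta < \mu$ collapses: the $\beta = \alpha$ term is exactly the left-hand side of the claim, and every $\beta \neq \alpha$ term turns out to be vacuously true.

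First I would record that $\sat[S_\alpha]{M}$ and $\sat[S_\alpha]{N}$ are nonempty, which follows from $M, N$ being nonempty together with $M \subseteq \sat[S_\alpha]{M}$ (and likewise for $N$) from Proposition~\ref{prop:sigma}. Next I would show that $\sat[S_\alpha]{M}$ is sequence-saturated relative to $S$. The cleanest route is Proposition~\ref{prop:seqsat:equivalence}: writing $\sat[S_\alpha]{M} = \bigcap_{\beta < \mu} X_\beta$ with $X_\alpha = \sat[S_\alpha]{M}$ and $X_\beta = \foint$ for $\beta \neq \alpha$, each $X_\beta$ is saturated relative to $S_\beta$ (trivially so for $\foint$, and by Proposition~\ref{prop:sigma} for $X_\alpha$), so condition~3 of that proposition is met. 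The same argument works for $\sat[S_\alpha]{N}$.

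With both interpretations nonempty and sequence-saturated relative to $S$, Proposition~\ref{prop:pmasplitting:mintupdate} yields
\[
	J \in \sat[S_\alpha]{M} \oplus \sat[S_\alpha]{N}
		\mlequiv (\forall \beta < \mu)
		\left( \restr[S_\beta]{J} \in \restr[S_\beta]{\sat[S_\alpha]{M}}
			\oplus \restr[S_\beta]{\sat[S_\alpha]{N}} \right) \enspace.
\]
I would then evaluate the right-hand conjunct for each $\beta$. For $\beta = \alpha$, Proposition~\ref{prop:sigma:restr} (with $U_1 = U_2 = S_\alpha$) gives $\restr[S_\alpha]{\sat[S_\alpha]{M}} = \restr[S_\alpha]{M}$ and similarly for $N$, so the term becomes precisely $\restr[S_\alpha]{J} \in \restr[S_\alpha]{M} \oplus \restr[S_\alpha]{N}$. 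For $\beta \neq \alpha$, the sets $S_\beta$ and $S_\alpha$ are disjoint (as $S$ is a saturation sequence), so Lemma~\ref{lemma:pmasplitting:restr sat disjoint} gives $\restr[S_\beta]{\sat[S_\alpha]{M}} = \restr[S_\beta]{\foint} = \restr[S_\beta]{\sat[S_\alpha]{N}}$; by idempotence of $\oplus$ (Corollary~\ref{cor:pmasplitting:pma idempotent}) this update equals $\restr[S_\beta]{\foint}$, which contains $\restr[S_\beta]{J}$ trivially. Hence every $\beta \neq \alpha$ conjunct holds unconditionally, the conjunction reduces to its $\beta = \alpha$ instance, and the claimed equivalence follows.

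I do not expect a genuinely hard step here, since the proposition is essentially a specialisation of Proposition~\ref{prop:pmasplitting:mintupdate}. The one point requiring care is the recognition that $\sat[S_\alpha]{M}$ is sequence-saturated relative to the full sequence $S$ even though it is built from a single index $\alpha$ — this is what licenses the coordinate-wise update characterisation — together with the bookkeeping showing that the off-diagonal update factors collapse to $\restr[S_\beta]{\foint}$ through disjointness and idempotence.
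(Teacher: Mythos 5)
Your proposal is correct and takes essentially the same route as the paper's own proof: both apply Proposition~\ref{prop:pmasplitting:mintupdate} to $\sat[S_\alpha]{M}$ and $\sat[S_\alpha]{N}$, collapse the off-diagonal conjuncts ($\beta \neq \alpha$) using Lemma~\ref{lemma:pmasplitting:restr sat disjoint} together with Corollary~\ref{cor:pmasplitting:pma idempotent}, and identify the remaining $\beta = \alpha$ term with $\restr[S_\alpha]{M} \oplus \restr[S_\alpha]{N}$ via Proposition~\ref{prop:sigma} (you cite Proposition~\ref{prop:sigma:restr}, which amounts to the same identity). The only difference is cosmetic but to your credit: you explicitly verify, via condition~3 of Proposition~\ref{prop:seqsat:equivalence}, that $\sat[S_\alpha]{M}$ and $\sat[S_\alpha]{N}$ are sequence-saturated relative to $S$ --- the hypothesis needed to invoke Proposition~\ref{prop:pmasplitting:mintupdate} --- whereas the paper's proof applies that proposition without checking this.
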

\begin{proof}
	By applying Proposition \ref{prop:pmasplitting:mintupdate} on
	$\sat[S_\alpha]{M}$ and $\sat[S_\alpha]{N}$ it follows that for any Herbrand
	interpretation $J$,
	\begin{equation} \label{eq:proof:pmasplitting:equivalence2:1}
		J \in \sat[S_\alpha]{M} \oplus \sat[S_\alpha]{N} \mlequiv
			(\forall \beta < \mu) \left(
				\restr[S_\beta]{J} \in
				\restr[S_\beta]{\sat[S_\alpha]{M}} \oplus
				\restr[S_\beta]{\sat[S_\alpha]{N}}
			\right) \enspace.
	\end{equation}
	By Lemma \ref{lemma:pmasplitting:restr sat disjoint} it follows that
	whenever $\beta \neq \alpha$, $\restr[S_\beta]{\sat[S_\alpha]{M}} =
	\restr[S_\beta]{\foint}$ and $\restr[S_\beta]{\sat[S_\alpha]{M}} =
	\restr[S_\beta]{\foint}$, so that by Corollary \ref{cor:pmasplitting:pma
	idempotent} we obtain that $\restr[S_\beta]{\sat[S_\alpha]{M}} \oplus
	\restr[S_\beta]{\sat[S_\alpha]{N}} = \restr[S_\beta]{\foint}$. Thus,
	condition \eqref{eq:proof:pmasplitting:equivalence2:1} gets simplified to
	\[
		J \in \sat[S_\alpha]{M} \oplus \sat[S_\alpha]{N} \mlequiv
			\restr[S_\alpha]{J} \in
			\restr[S_\alpha]{\sat[S_\alpha]{M}} \oplus
			\restr[S_\alpha]{\sat[S_\alpha]{N}}
			\enspace.
	\]
	Furthermore, by Proposition \ref{prop:sigma},
	$\restr[S_\alpha]{\sat[S_\alpha]{M}} = \restr[S_\alpha]{M}$ and
	s$\restr[S_\alpha]{\sat[S_\alpha]{N}} = \restr[S_\alpha]{N}$, so we obtain 
	\[
		J \in \sat[S_\alpha]{M} \oplus \sat[S_\alpha]{N} \mlequiv
			\restr[S_\alpha]{J} \in
			\restr[S_\alpha]{M} \oplus
			\restr[S_\alpha]{N}
			\enspace.
	\]
	This completes our proof.
\end{proof}

\begin{corollary} \label{cor:pmasplitting:equivalence}
	Let $S = \an{S_\alpha}_{\alpha < \mu}$ be a saturation sequence, $M, N \in
	\mint$ be both sequence-saturated relative to $S$ and $J$ be a Herbrand
	interpretation. If $M \oplus N$ is nonempty, then 
	\[
		\sat[S_\alpha]{M \oplus N}
			= \sat[S_\alpha]{M} \oplus \sat[S_\alpha]{N} \enspace.
	\]
\end{corollary}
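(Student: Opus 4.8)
The plan is to prove the set equality pointwise: I will show that an arbitrary Herbrand interpretation $J$ lies in $\sat[S_\alpha]{M \oplus N}$ exactly when it lies in $\sat[S_\alpha]{M} \oplus \sat[S_\alpha]{N}$. All the substantive work has already been carried out in Propositions \ref{prop:pmasplitting:equivalence1} and \ref{prop:pmasplitting:equivalence2}, so the corollary should drop out by chaining those two equivalences together after a single definitional unfolding of $\satf$.

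Before assembling the chain I would discharge the hypotheses of the two propositions. Proposition \ref{prop:pmasplitting:equivalence2} requires $M$ and $N$ to be \emph{nonempty}, whereas the corollary only assumes that $M \oplus N$ is nonempty. This gap is closed by the contrapositive of Lemma \ref{lemma:pmasplitting:update emptiness}: were $M$ or $N$ empty, then $M \oplus N$ would be empty; hence nonemptiness of $M \oplus N$ forces both $M$ and $N$ to be nonempty. Proposition \ref{prop:pmasplitting:equivalence1} in turn needs $M \oplus N$ nonempty, which is precisely the standing assumption, and both $M$ and $N$ are sequence-saturated relative to $S$ by hypothesis, as both propositions demand.

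The main chain then runs as follows. By Definition \ref{def:sigma}, $J \in \sat[S_\alpha]{M \oplus N}$ iff $\restr[S_\alpha]{J} \in \restr[S_\alpha]{(M \oplus N)}$. Since $M \oplus N$ is nonempty, Proposition \ref{prop:pmasplitting:equivalence1} rewrites the right-hand restriction as $\restr[S_\alpha]{M} \oplus \restr[S_\alpha]{N}$, so the membership condition becomes $\restr[S_\alpha]{J} \in \restr[S_\alpha]{M} \oplus \restr[S_\alpha]{N}$. Finally, Proposition \ref{prop:pmasplitting:equivalence2} (applicable now that $M$ and $N$ are known to be nonempty and sequence-saturated) converts this into $J \in \sat[S_\alpha]{M} \oplus \sat[S_\alpha]{N}$. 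Reading these equivalences off in order yields $J \in \sat[S_\alpha]{M \oplus N} \mlequiv J \in \sat[S_\alpha]{M} \oplus \sat[S_\alpha]{N}$ for every $J$, which is the desired identity of sets.

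There is essentially no hard step here; the result is a genuine corollary of the two preceding propositions. The only point requiring care — and the only place the argument could silently break — is the nonemptiness bookkeeping of the previous paragraph: one must transfer nonemptiness of $M \oplus N$ to $M$ and $N$ before invoking Proposition \ref{prop:pmasplitting:equivalence2}. As a sanity check I would also note, via Proposition \ref{prop:sigma} (which gives $M \subseteq \sat[S_\alpha]{M}$), that $\satf$ applied to a nonempty argument is again nonempty, so the equation relates genuine MKNF interpretations rather than degenerate empty sets; this is reassuring but not strictly needed for the stated equality.
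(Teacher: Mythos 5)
Your proof is correct and follows essentially the same route as the paper's own: both first transfer nonemptiness of $M \oplus N$ to $M$ and $N$ via Lemma \ref{lemma:pmasplitting:update emptiness}, then chain the definitional unfolding of $\satf$ with Propositions \ref{prop:pmasplitting:equivalence1} and \ref{prop:pmasplitting:equivalence2} to obtain the pointwise equivalence. No gaps.
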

\begin{proof}
	Lemma \ref{lemma:pmasplitting:update emptiness} implies that since $M \oplus
	N$ is nonempty, both $M$ and $N$ must also be nonempty. Furthermore, an
	interpretation $J$ belongs to $\sat[S_\alpha]{M \oplus N}$ if and only if
	$\restr[S_\alpha]{J}$ belongs to $\restr[S_\alpha]{(M \oplus N)}$. By
	Proposition \ref{prop:pmasplitting:equivalence1} this holds if and only if
	$\restr[S_\alpha]{J}$ belongs to $\restr[S_\alpha]{M} \oplus
	\restr[S_\alpha]{N}$. Finally, by Proposition
	\ref{prop:pmasplitting:equivalence2}, this holds if and only if $J$ belongs
	to $\sat[S_\alpha]{M} \oplus \sat[S_\alpha]{N}$, which finishes our proof.
\end{proof}

\begin{definition}[Saturation Sequence Induced by a Splitting Sequence]
	Let $U = \an{U_\alpha}_{\alpha < \mu}$ be a splitting sequence. The
	\emph{saturation sequence induced by $U$} is the sequence
	$\an{S_\alpha}_{\alpha < \mu}$ where
	\begin{itemize}
		\item $S_0 = U_0$;
		\item for any ordinal $\alpha$ such that $\alpha + 1 < \mu$, $S_{\alpha +
			1} = U_{\alpha + 1} \setminus U_\alpha$;
		\item for any limit ordinal $\alpha$, $S_\alpha = \emptyset$.
	\end{itemize}
\end{definition}

\begin{proposition} \label{prop:pmasplitting:fo splitting}
	Let $\thr$ be a first order theory, $U$ be a splitting sequence for $\thr$
	and $S = \an{S_\alpha}_{\alpha < \mu}$ be the saturation sequence induced by
	$U$. If $M$ is the MKNF model of $\thr$, then $M$ is sequence-saturated
	relative to $S$ and for every ordinal $\alpha$ such that $\alpha < \mu$,
	$\sat[S_\alpha]{M}$ is the MKNF model of $b_{S_\alpha}(\thr)$.
\end{proposition}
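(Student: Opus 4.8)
The plan is to reduce everything to the splitting results already established by regarding $\thr$ as the hybrid knowledge base $\an{\thr, \emptyset}$; by Lemma~\ref{lemma:pmasplitting:fo theory model} the MKNF models of this hybrid knowledge base coincide with those of $\thr$, so $M$ is an MKNF model of it and the machinery of the previous sections applies. Before anything else I would record two structural facts about the induced sequence $S$. First, $S$ really is a saturation sequence: its members are pairwise disjoint and $\bigcup_{\alpha < \mu} S_\alpha = \lpre$, which follows from the monotonicity and continuity of $U$ together with $\bigcup_{\alpha < \mu} U_\alpha = \lpre$ (this is exactly the telescoping computation already carried out for the sequence $\an{V_\alpha}$ in the proof of Lemma~\ref{lemma:seq:sat_split}). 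Second, and more importantly, each $S_\alpha$ is itself a splitting set for $\thr$: for $\alpha = 0$ this is immediate since $S_0 = U_0$; for a successor $\alpha + 1$ one takes an axiom $\phi$ with $\preds{\phi} \cap S_{\alpha+1} \neq \emptyset$, notes that $\phi$ then has a predicate in $U_{\alpha+1}$ so $\preds{\phi} \subseteq U_{\alpha+1}$ since $U_{\alpha+1}$ is a splitting set, and also has a predicate outside $U_\alpha$ so $\preds{\phi} \cap U_\alpha = \emptyset$ since $U_\alpha$ is a splitting set, whence $\preds{\phi} \subseteq U_{\alpha+1}\setminus U_\alpha = S_{\alpha+1}$; for a limit $\alpha$ the set $S_\alpha = \emptyset$ is trivially a splitting set.

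For the second conclusion I would then simply invoke Corollary~\ref{cor:splitting:1} with the splitting set $S_\alpha$: since $M$ is an MKNF model of $\thr$, that corollary guarantees that $\sat[S_\alpha]{M}$ is an MKNF model of $b_{S_\alpha}(\thr)$. As $b_{S_\alpha}(\thr)$ is again a first-order theory, Lemma~\ref{lemma:pmasplitting:fo theory model} tells us it has at most one MKNF model, so $\sat[S_\alpha]{M}$ is \emph{the} MKNF model of $b_{S_\alpha}(\thr)$, as required.

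For the first conclusion I would use Corollary~\ref{cor:seq:1}, which, since $M$ is an MKNF model of $\thr$ and $U$ is a splitting sequence, produces the solution $\an{X_\alpha}_{\alpha < \mu}$ with $X_0 = \sat[U_0]{M}$, with $X_{\alpha+1} = \sat[U_{\alpha+1}\setminus U_\alpha]{M}$, with $X_\alpha = \foint$ at limits, and asserts $M = \bigcap_{\alpha < \mu} X_\alpha$. The key observation is that these are precisely the sets $\sat[S_\alpha]{M}$: the base and successor cases hold by the definition of $S$, and at a limit ordinal $\sat[S_\alpha]{M} = \sat[\emptyset]{M} = \foint$ because $\restr[\emptyset]{I} = \emptyset$ for every interpretation $I$ while $M$ is nonempty. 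Hence $M = \bigcap_{\alpha < \mu} \sat[S_\alpha]{M}$, and by the equivalence of conditions (1) and (2) in Proposition~\ref{prop:seqsat:equivalence} this is exactly the assertion that $M$ is sequence-saturated relative to $S$.

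The only step demanding genuine care is the verification that each $S_\alpha$ is a splitting set, since that is what licenses feeding $S_\alpha$ into Corollary~\ref{cor:splitting:1}; the crux there is using the splitting-set property of \emph{both} $U_{\alpha+1}$ and $U_\alpha$ simultaneously. Everything else is bookkeeping: matching the sets produced by Corollary~\ref{cor:seq:1} with the induced $\sat[S_\alpha]{M}$ (including the limit identity $\sat[\emptyset]{M} = \foint$) and quoting the uniqueness of MKNF models of first-order theories.
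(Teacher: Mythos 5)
Your proof is correct, but it takes a genuinely different route from the paper's. The paper never invokes the splitting theorem here: it argues directly from Lemma~\ref{lemma:pmasplitting:fo theory model}, writing $M = \smod(\thr) = \bigcap_{\phi \in \thr} \smod(\phi)$ and observing that, $U$ being a splitting sequence, every $\phi \in \thr$ lies in exactly one layer $b_{S_\alpha}(\thr)$; hence $M = \bigcap_{\alpha < \mu} \smod(b_{S_\alpha}(\thr))$, each $X_\alpha = \smod(b_{S_\alpha}(\thr))$ is the MKNF model of its layer, it is saturated relative to $S_\alpha$ by Proposition~\ref{prop:saturated:mknf}, and Propositions~\ref{prop:seqsat:int sigma} and~\ref{prop:seqsat:equivalence} then yield $X_\alpha = \sat[S_\alpha]{M}$ and the sequence-saturation of $M$. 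You instead reduce the statement to the already-established hybrid splitting machinery (Corollaries~\ref{cor:splitting:1} and~\ref{cor:seq:1}), and the step that licenses this reduction --- your verification that each $S_\alpha$ is itself a splitting set for $\thr$ --- is exactly the point where your argument uses that $\thr$ is a pure first-order theory: an axiom whose predicates meet $U_{\alpha+1} \setminus U_\alpha$ can never straddle the boundary, whereas a rule with head predicates in $S_{\alpha+1}$ may legitimately have body predicates in $U_\alpha$, so this reduction would fail for general hybrid knowledge bases (which is precisely why the paper needs the full Theorem~\ref{thm:seq} in the general case). Both arguments are sound; the paper's is the more elementary and self-contained one, exploiting the purely classical fact that $\smod$ of a first-order theory is the intersection of the $\smod$'s of its individual formulas, while yours buys uniformity: it reuses the general splitting results instead of the special structure of classical models, and it makes explicit some bookkeeping the paper leaves tacit, namely that the induced $S$ really is a saturation sequence and that at limit ordinals $\sat[\emptyset]{M} = \foint$ for nonempty $M$.
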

\begin{proof}
	By Lemma \ref{lemma:pmasplitting:fo theory model} we know that $M =
	\smod(\thr) = \bigcap_{\phi \in \thr} \smod(\phi)$. Also, since $U$ is a
	splitting sequence for $\thr$, for every formula $\phi \in \thr$ there
	exists a unique $\alpha < \mu$ such that $\phi$ belongs to
	$b_{S_\alpha}(\thr)$. Thus, we obtain
	\[
		M = \smod(\thr) = \bigcap_{\phi \in \thr} \smod(\phi)
			= \bigcap_{\alpha < \mu}
				\bigcap_{\phi \in b_{S_\alpha}(\thr)} \smod(\phi)
			= \bigcap_{\alpha < \mu} \smod(b_{S_\alpha}(\thr)) \enspace.
	\]
	Let $X_\alpha = \smod(b_{S_\alpha}(\thr))$, so that
	\[
		M = \bigcap_{\alpha < \mu} X_\alpha \enspace.
	\]
	We can use Lemma \ref{lemma:pmasplitting:fo theory model} to conclude that
	$X_\alpha$ is the MKNF model of $b_{S_\alpha}(\thr)$. Furthermore, since
	$S_\alpha$ includes $\preds{b_{S_\alpha}(\thr)}$, it follows by Proposition
	\ref{prop:saturated:mknf} that $X_\alpha$ is saturated relative to
	$S_\alpha$. Thus, we can now use Proposition \ref{prop:seqsat:int sigma} to
	conclude that $X_\alpha = \sat[S_\alpha]{M}$. It also follows by Proposition
	\ref{prop:seqsat:equivalence} that $M$ is sequence-saturated relative to
	$S$.
\end{proof}

\begin{proposition} \label{prop:pmasplitting:fo splitting 2}
	Let $\thr$ be a first-order theory, $U$ be a splitting sequence for $\thr$
	and $S$ be the saturation sequence induced by $U$. If for every ordinal
	$\alpha < \mu$, $X_\alpha$ is the MKNF model of $b_{S_\alpha}(\thr)$, then
	$\bigcap_{\alpha < \mu} X_\alpha$ is the MKNF model of $\thr$.
\end{proposition}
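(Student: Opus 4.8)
The plan is to mirror the proof of Proposition~\ref{prop:pmasplitting:fo splitting} in reverse, reducing the statement to Lemma~\ref{lemma:pmasplitting:fo theory model} together with the nonemptiness guarantee of Proposition~\ref{prop:seqsat:int sigma}. First I would fix notation, writing $M = \bigcap_{\alpha < \mu} X_\alpha$, and record the structural facts about $S$: since $S$ is the saturation sequence induced by a splitting sequence $U$, the sets $S_\alpha$ are pairwise disjoint with $\bigcup_{\alpha < \mu} S_\alpha = \lpre$, so $S$ is a genuine saturation sequence. Moreover, exactly as observed in Proposition~\ref{prop:pmasplitting:fo splitting}, because $U$ is a splitting sequence every axiom $\phi \in \thr$ belongs to precisely one $b_{S_\alpha}(\thr)$, so that $\bigcup_{\alpha < \mu} b_{S_\alpha}(\thr) = \thr$.

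Next I would extract the two properties of each $X_\alpha$ that drive the argument. Since $X_\alpha$ is by hypothesis the MKNF model of the first-order theory $b_{S_\alpha}(\thr)$, Lemma~\ref{lemma:pmasplitting:fo theory model} gives $X_\alpha = \smod(b_{S_\alpha}(\thr)) = \bigcap_{\phi \in b_{S_\alpha}(\thr)} \smod(\phi)$, and in particular $X_\alpha$ is a nonempty MKNF interpretation. Because $\preds{b_{S_\alpha}(\thr)} \subseteq S_\alpha$, Proposition~\ref{prop:saturated:mknf} shows that $X_\alpha$ is saturated relative to $S_\alpha$.

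With these in hand the principal identity is a routine chain of intersections:
\[
	M = \bigcap_{\alpha < \mu} X_\alpha
		= \bigcap_{\alpha < \mu} \bigcap_{\phi \in b_{S_\alpha}(\thr)} \smod(\phi)
		= \bigcap_{\phi \in \thr} \smod(\phi)
		= \smod(\thr) \enspace,
\]
where the third equality uses the partition $\bigcup_{\alpha < \mu} b_{S_\alpha}(\thr) = \thr$, and the last is the identity of Lemma~\ref{lemma:pmasplitting:fo theory model}. It then suffices to invoke that same lemma once more: $\thr$ has an MKNF model, necessarily equal to $\smod(\thr) = M$, exactly when $\smod(\thr)$ is nonempty.

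The main obstacle is therefore to establish that $M$ is nonempty, which does not come for free since an intersection of nonempty MKNF interpretations can itself be empty. This is precisely where the saturation machinery pays off: having shown that $S$ is a saturation sequence and that each $X_\alpha$ is an MKNF interpretation saturated relative to $S_\alpha$, I would apply Proposition~\ref{prop:seqsat:int sigma} directly to the sequence $\an{X_\alpha}_{\alpha < \mu}$, which yields at once that $M = \bigcap_{\alpha < \mu} X_\alpha$ is nonempty (and, incidentally, that $X_\alpha = \sat[S_\alpha]{M}$ for every $\alpha < \mu$). Combined with the identity $M = \smod(\thr)$ above, this closes the argument.
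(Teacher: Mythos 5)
Your proof is correct and takes essentially the same route as the paper's own: the identical chain of intersections $\bigcap_{\alpha<\mu} X_\alpha = \bigcap_{\phi\in\thr}\smod(\phi) = \smod(\thr)$ justified by Lemma~\ref{lemma:pmasplitting:fo theory model} and the partition of $\thr$ induced by the splitting sequence, with saturation of each $X_\alpha$ relative to $S_\alpha$ (Proposition~\ref{prop:saturated:mknf}) feeding into Proposition~\ref{prop:seqsat:int sigma} to secure nonemptiness. You correctly isolate nonemptiness of the intersection as the only nontrivial obstacle and discharge it exactly as the paper does.
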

\begin{proof*}
	By Lemma \ref{lemma:pmasplitting:fo theory model}, it holds for every
	$\alpha < \mu$ that $X_\alpha = \smod(b_{S_\alpha}(\thr))$. Also, since
	$U$ is a splitting sequence for $\thr$, for every formula $\phi \in \thr$
	there exists a unique $\alpha < \mu$ such that $\phi$ belongs to
	$b_{S_\alpha}(\thr)$. Hence,
	\[
		\bigcap_{\alpha < \mu} X_\alpha
			= \bigcap_{\alpha < \mu} \smod(b_{S_\alpha}(\thr))
			= \bigcap_{\alpha < \mu}
				\bigcap_{\phi \in b_{S_\alpha}(\thr)} \smod(\phi)
			= \bigcap_{\phi \in \thr} \smod(\phi)
			= \smod(\thr) \enspace.
	\]
	Furthermore, since $S_\alpha$ includes  $\preds{b_{S_\alpha}(\thr)}$, it
	follows by Proposition \ref{prop:saturated:mknf} that $X_\alpha$ is
	saturated relative to $S_\alpha$. By Proposition \ref{prop:seqsat:int sigma}
	we now obtain that $\bigcap_{\alpha < \mu} X_\alpha$ is nonempty.
	Consequently, by Lemma \ref{lemma:pmasplitting:fo theory model},
	$\bigcap_{\alpha < \mu} X_\alpha = \smod(\thr)$ is the unique MKNF model of
	$\thr$.
\end{proof*}

\begin{lemma} \label{lemma:pmasplitting:int sat}
	Let $U$ be a set of predicate symbols, $N \in \mint$ be saturated relative
	to $U$ and $I, J$ be Herbrand interpretations. If $J$ belongs to $I \oplus
	N$, then $I$ coincides with $J$ on $\lpre \setminus U$.
\end{lemma}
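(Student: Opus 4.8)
The plan is to argue by contradiction, exploiting saturation to build a witness that is closer to $I$ than $J$. Assume $J \in I \oplus N$ but that $I$ does not coincide with $J$ on $\lpre \setminus U$, i.e. $\restrc{I} \neq \restrc{J}$. Recall that $J \in I \oplus N$ unfolds to $J \in N$ together with the condition that no $J' \in N$ satisfies $J' <_I J$. My goal is to contradict the latter by producing such a $J'$.

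The natural candidate is the interpretation that copies $J$ on the predicate symbols of $U$ and copies $I$ on those of $\lpre \setminus U$; formally, I would set $J' = \restr{J} \cup \restrc{I}$. First I would establish $J' \in N$: since $\restr{J'} = \restr{J}$ and $J \in N$ gives $\restr{J} \in \restr{N}$, the hypothesis that $N$ is saturated relative to $U$ yields $J' \in N$. This is the one step that genuinely needs the saturation assumption --- in general, recombining the $U$-part of one member of $N$ with the complement-part of an arbitrary interpretation need not land back in $N$.

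Next I would verify $J' <_I J$ by examining $\mathit{diff}(P, I, \cdot)$ predicate symbol by predicate symbol. For $P \in U$ we have $\restr[P]{J'} = \restr[P]{J}$, hence $\mathit{diff}(P, I, J') = \mathit{diff}(P, I, J)$; for $P \in \lpre \setminus U$ we have $\restr[P]{J'} = \restr[P]{I}$, hence $\mathit{diff}(P, I, J') = \emptyset \subseteq \mathit{diff}(P, I, J)$. Thus $J' \leq_I J$. For strictness I would invoke the assumed disagreement: some ground atom $p$ with predicate symbol $P_0 \in \lpre \setminus U$ lies in exactly one of $I$ and $J$, so $p \in \mathit{diff}(P_0, I, J)$ while $\mathit{diff}(P_0, I, J') = \emptyset$, giving $\mathit{diff}(P_0, I, J) \not\subseteq \mathit{diff}(P_0, I, J')$ and therefore $J \not\leq_I J'$. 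Combining the two observations yields $J' <_I J$ with $J' \in N$, contradicting $J \in I \oplus N$. Hence $\restrc{I} = \restrc{J}$, as required. Beyond the saturation step, the argument is pure bookkeeping in the definitions of $\mathit{diff}$, $\leq_I$, $<_I$ and $\oplus$.
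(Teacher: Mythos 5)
Your proposal is correct and matches the paper's own proof in all essentials: both construct the same witness $J' = \restr{J} \cup \restrc{I}$, invoke saturation of $N$ relative to $U$ in exactly the same way to get $J' \in N$, and run the same $\mathit{diff}$ bookkeeping to conclude $J' <_I J$ whenever $I$ and $J$ disagree outside $U$. The only difference is presentational — you frame it as an explicit contradiction with a chosen witness atom, whereas the paper argues directly that every $\mathit{diff}(P, I, J)$ with $P \in \lpre \setminus U$ must be empty — which is the same argument.
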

\begin{proof}
	If $J$ belongs to $I \oplus N$, then $J$ also belongs to $N$. Put $J' =
	\restr{J} \cup \restrc{I}$. Then $\restr{J'} = \restr{J} \in N$, so since
	$N$ is saturated relative to $U$, $J'$ belongs to $N$. Furthermore, for any
	predicate symbol $P \in U$, $\mathit{diff}(P, I, J') = \mathit{diff}(P, I,
	J)$ and for any predicate symbol $P \in \lpre \setminus U$,
	$\mathit{diff}(P, I, J') = \emptyset \subseteq \mathit{diff}(P, I, J)$. If
	this inclusion was proper for some predicate symbol $P$, then we would
	obtain that $J' <_I J$ holds, contrary to the assumption that $J$ belongs to
	$I \oplus N$. Thus, for all predicate symbols $P \in \lpre \setminus U$,
	$\mathit{diff}(P, I, J)$ must be equal to $\emptyset$. It follows that
	$\restrc{J} = \restrc{J'} = \restrc{I}$, which is the desired result.
\end{proof}

\begin{lemma} \label{lemma:pmasplitting:pma sat}
	Let $U$ be a set of predicate symbols and $M, N \in \mint$ be both saturated
	relative to $U$. Then $M \oplus N$ is also saturated relative to $U$.
\end{lemma}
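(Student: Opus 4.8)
The plan is to verify saturation directly from its definition. Suppose $\restr{K} \in \restr{(M \oplus N)}$ for a Herbrand interpretation $K$; I must show $K \in M \oplus N$. Unfolding the definitions, there is some $J_0 \in M \oplus N$ with $\restr{J_0} = \restr{K}$, and since $M \oplus N = \bigcup_{I \in M}(I \oplus N)$, some $I \in M$ with $J_0 \in I \oplus N$. As $N$ is saturated relative to $U$, Lemma~\ref{lemma:pmasplitting:int sat} gives $\restrc{I} = \restrc{J_0}$; thus $I$ and $J_0$ agree on $\lpre \setminus U$, while $K$ and $J_0$ agree on $U$.

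The idea is now to exhibit a witness in $M$ whose minimal-change update contains $K$. I would take $W = \restr{I} \cup \restrc{K}$, which agrees with $I$ on $U$ and with $K$ on $\lpre \setminus U$. Since $\restr{W} = \restr{I} \in \restr{M}$ and $M$ is saturated relative to $U$, we obtain $W \in M$; likewise $\restr{K} = \restr{J_0} \in \restr{N}$ together with saturation of $N$ yields $K \in N$. It then suffices to prove that $K$ is $\leq_W$-minimal in $N$, i.e. that no $J \in N$ satisfies $J <_W K$, since this places $K$ in $W \oplus N \subseteq M \oplus N$.

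This minimality argument is the step I expect to be the main obstacle, and it rests on splitting the closeness relation predicate-by-predicate into its $U$-part and its $(\lpre \setminus U)$-part. By the construction of $W$, for every $P \in \lpre \setminus U$ one has $\mathit{diff}(P, W, K) = \emptyset$, whereas for every $P \in U$ one has $\mathit{diff}(P, W, K) = \mathit{diff}(P, I, J_0)$, using $\restr{K} = \restr{J_0}$ and that $W$ coincides with $I$ on $U$. Assuming $J <_W K$ for contradiction, the empty differences on $\lpre \setminus U$ force the witnessing strict predicate $P_0$ to lie in $U$ and give $\mathit{diff}(P, I, J) \subseteq \mathit{diff}(P, I, J_0)$ for all $P \in U$, with proper inclusion at $P_0$. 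I would then set $J_1 = \restr{J} \cup \restrc{J_0}$, which belongs to $N$ by saturation relative to $U$ (as $\restr{J_1} = \restr{J} \in \restr{N}$), and compute $\mathit{diff}(P, I, J_1) = \mathit{diff}(P, I, J)$ for $P \in U$ and $\mathit{diff}(P, I, J_1) = \mathit{diff}(P, I, J_0)$ for $P \in \lpre \setminus U$. Together these show $J_1 <_I J_0$, contradicting $J_0 \in I \oplus N$. Hence no such $J$ exists, so $K \in W \oplus N \subseteq M \oplus N$, and $M \oplus N$ is saturated relative to $U$.
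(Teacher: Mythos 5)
Your proof is correct and takes essentially the same route as the paper's: you build the same glued interpretations ($W = \restr{I} \cup \restrc{K}$ corresponds to the paper's $I' = \restr{I} \cup \restrc{J}$, and $J_1 = \restr{J} \cup \restrc{J_0}$ to its $J'''$), invoke saturation of $M$ and $N$ at the same points, and reach the same final contradiction with $J_0 \in I \oplus N$. The only difference is presentational: you establish $K \in W \oplus N \subseteq M \oplus N$ directly, with a local contradiction for the $\leq_W$-minimality of $K$, whereas the paper wraps the entire argument in a single proof by contradiction.
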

\begin{proof}
	Suppose that $J$ is a Herbrand interpretation such that $\restr{J}$ belongs
	to $\restr{(M \oplus N)}$ but $J$ does not belong to $M \oplus N$. Then
	there exists some interpretation $J'$ from $M \oplus N$ such that
	$\restr{J'} = \restr{J}$. This also implies that $J'$ belongs to $N$ and
	since $N$ is saturated relative to $U$, $J$ also belongs to $N$.
	Furthermore, there must exist some interpretation $I \in M$ such that $J'$
	belongs to $I \oplus N$. By Lemma \ref{lemma:pmasplitting:int sat} we obtain
	that $\restr{I} = \restr{J'} = \restr{J}$. Let $I' = \restr{I} \cup
	\restrc{J}$. Then $\restr{I'} = \restr{I}$ and $\restrc{I'} = \restrc{J}$
	and since $M$ is saturated relative to $U$, $I'$ belongs to $M$. Since $J$
	does not belong to $M \oplus N$, there must exist some interpretation $J''
	\in N$ such that $J'' <_{I'} J$.  This means that for any predicate symbol
	$P \in U$ we have
	\[
		\mathit{diff}(P, I', J'') \subseteq \mathit{diff}(P, I', J)
	\]
	and for every predicate symbol $P \in \lpre \setminus U$ we have
	\[
		\mathit{diff}(P, I', J'') = \mathit{diff}(P, I', J) = \emptyset
	\]
	because $I'$ coincides with $J$ on $\lpre \setminus U$. There must also
	exist some predicate symbol $P_0$ such that
	\[
		\mathit{diff}(P_0, I', J'') \subsetneq \mathit{diff}(P_0, I', J) \enspace.
	\]
	Since this is impossible if $P_0$ belongs to $\lpre \setminus U$, $P_0$ must
	belong to $U$. Now let $J''' = \restr{J''} \cup \restrc{J'}$. For predicate
	symbols $P \in U$ we have
	\[
		\mathit{diff}(P, I, J''') = \mathit{diff}(P, I', J'') \subseteq
		\mathit{diff}(P, I', J) = \mathit{diff}(P, I, J')
	\]
	and for predicate symbols $P \in \lpre \setminus U$ we have
	\[
		\mathit{diff}(P, I, J''') = \mathit{diff}(P, I, J') = \emptyset \enspace.
	\]
	Also, for $P_0$ we obtain
	\[
		\mathit{diff}(P_0, I, J''') = \mathit{diff}(P_0, I', J'') \subsetneq
		\mathit{diff}(P_0, I', J) = \mathit{diff}(P_0, I, J')
	\]
	As a consequence, $J''' <_I J'$, which is in conflict with the assumption
	that $J'$ belongs to $I \oplus N$.
\end{proof}

\begin{proposition} \label{prop:pmasplitting:seq pma sat}
	Let $U$ be a set of predicate symbols and $\upd = \an{\upd_i}_{i < n}$ be a
	finite sequence of first-order theories such that for every $i < n$,
	$\preds{\upd_i}$ is included in $U$. Then the minimal change update model of
	$\upd$ is saturated relative to $U$.
\end{proposition}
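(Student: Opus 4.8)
The plan is to prove the claim by induction on the length $n$ of the sequence $\upd$, using the inductive characterisation of $\smod(\upd)$ supplied by Def.~\ref{def:seq pma}. The crucial point is that saturation relative to $U$ is preserved by each application of the update operator $\oplus$, so at every stage I only need to check that the two arguments fed into $\oplus$ are themselves saturated relative to $U$.

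For the base case $n = 0$, Def.~\ref{def:seq pma} gives $\smod(\upd) = \bigoplus \an{} = \foint$. Since every Herbrand interpretation belongs to $\foint$, the defining implication of saturation holds trivially, so $\foint$ is saturated relative to $U$ (indeed relative to any set of predicate symbols).

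For the inductive step I would assume the claim for all sequences of length $k$ whose theories mention only predicates in $U$, and take $\upd = \an{\upd_i}_{i < k+1}$. By Def.~\ref{def:seq pma}, $\smod(\upd) = N_1 \oplus N_2$ where $N_1 = \bigoplus \an{\smod(\upd_i)}_{i < k}$ and $N_2 = \smod(\upd_k)$. The truncated sequence $\an{\upd_i}_{i < k}$ still satisfies $\preds{\upd_i} \subseteq U$ for each $i$, so the induction hypothesis gives that $N_1$ is saturated relative to $U$. For $N_2$, Lemma \ref{lemma:pmasplitting:fo theory model} tells me that $\smod(\upd_k)$ is either $\emptyset$, which is vacuously saturated relative to $U$, or the unique MKNF model of $\upd_k$; in the latter case, because $\preds{\upd_k} \subseteq U$, Proposition \ref{prop:saturated:mknf} yields that $N_2$ is saturated relative to $U$.

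With both $N_1$ and $N_2$ saturated relative to $U$, the final step is to invoke Lemma \ref{lemma:pmasplitting:pma sat}, which states exactly that $\oplus$ preserves saturation relative to $U$, and conclude that $\smod(\upd) = N_1 \oplus N_2$ is saturated relative to $U$. All the genuine difficulty lives in Lemma \ref{lemma:pmasplitting:pma sat}; once that closure property is in hand, this proposition is just a routine assembly of it together with the base facts about $\foint$ and $\emptyset$ and the single-theory result of Proposition \ref{prop:saturated:mknf}. The one place to be slightly careful is the treatment of the possibly-empty $\smod(\upd_k)$ (and of an empty overall result), which is why I explicitly route through Lemma \ref{lemma:pmasplitting:fo theory model} rather than assuming an MKNF model always exists.
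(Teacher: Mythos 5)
Your proof is correct and takes essentially the same route as the paper: the paper's own proof is a one-line appeal to ``induction, Proposition \ref{prop:saturated:mknf} and Lemma \ref{lemma:pmasplitting:pma sat}'', which is precisely the argument you spelled out in full. Your explicit handling of the empty cases (via Lemma \ref{lemma:pmasplitting:fo theory model} and the vacuous saturation of $\emptyset$ and $\foint$) is a detail the paper leaves implicit, but it is the same decomposition through Def.~\ref{def:seq pma}.
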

\begin{proof}
	Follows using induction on $i$ by Proposition \ref{prop:saturated:mknf} and by
	Lemma \ref{lemma:pmasplitting:pma sat}.
\end{proof}

\begin{proposition} \label{prop:pmasplitting:1}
	Let $\upd = \an{\upd_i}_{i < n}$ be a finite sequence of first-order
	theories, $U$ be a splitting sequence for $\upd$ and $S =
	\an{S_\alpha}_{\alpha < \mu}$ be the saturation sequence induced by $U$. If
	$M$ is the minimal change update model of $\upd$, then $M$ is
	sequence-saturated relative to $S$ and for every ordinal $\alpha < \mu$,
	$\sat[S_\alpha]{M}$ is the minimal change update model of
	$b_{S_\alpha}(\upd)$.
\end{proposition}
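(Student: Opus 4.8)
The plan is to prove both assertions simultaneously by induction on the length $n$ of the sequence $\upd = \an{\upd_i}_{i < n}$, leaning on the iterative characterisation of the minimal change update model (Proposition~\ref{prop:pmasplitting:model iteration}) and on the fact that the saturation operator commutes with $\oplus$ on sequence\nobreakdash-saturated arguments (Corollary~\ref{cor:pmasplitting:equivalence}). As a preliminary remark I would note that, since $U$ is a splitting sequence for every $\upd_i$, each member $S_\alpha$ of the induced saturation sequence is itself a splitting set for every $\upd_i$ (if $\preds{\phi}$ meets $S_{\alpha+1} = U_{\alpha+1}\setminus U_\alpha$, then $\preds{\phi}\subseteq U_{\alpha+1}$ while $\preds{\phi}\cap U_\alpha=\emptyset$, whence $\preds{\phi}\subseteq S_{\alpha+1}$; the cases $\alpha=0$ and limit $\alpha$ are immediate), so that $b_{S_\alpha}(\upd)$ is meaningful and the single\nobreakdash-theory analogue Proposition~\ref{prop:pmasplitting:fo splitting} applies componentwise. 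For the base case $n=0$ we have $M=\foint$ and $b_{S_\alpha}(\upd)$ is the empty sequence, whose minimal change update model is again $\foint = \sat[S_\alpha]{\foint}$, and $\foint$ is trivially sequence\nobreakdash-saturated.

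For the inductive step, let $\upd = \an{\upd_i}_{i < n+1}$ and let $M$ be its minimal change update model. By Proposition~\ref{prop:pmasplitting:model iteration}, $M = N_1 \oplus N_2$, where $N_1$ is the minimal change update model of the prefix $\upd' = \an{\upd_i}_{i < n}$ and $N_2$ is the MKNF model of $\upd_n$; both are nonempty by Lemma~\ref{lemma:pmasplitting:pma subset} applied to $M\neq\emptyset$. The induction hypothesis applied to $\upd'$ (for which $U$ is again a splitting sequence) gives that $N_1$ is sequence\nobreakdash-saturated relative to $S$ and that $\sat[S_\alpha]{N_1}$ is the minimal change update model of $b_{S_\alpha}(\upd')$; Proposition~\ref{prop:pmasplitting:fo splitting} applied to $\upd_n$ gives that $N_2$ is sequence\nobreakdash-saturated relative to $S$ and that $\sat[S_\alpha]{N_2}$ is the MKNF model of $b_{S_\alpha}(\upd_n)$. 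Since $M = N_1\oplus N_2$ is nonempty and both arguments are sequence\nobreakdash-saturated, Corollary~\ref{cor:pmasplitting:equivalence} yields, for each $\alpha < \mu$,
\[
	\sat[S_\alpha]{M} = \sat[S_\alpha]{N_1 \oplus N_2}
		= \sat[S_\alpha]{N_1} \oplus \sat[S_\alpha]{N_2}.
\]
As $b_{S_\alpha}(\upd)$ is exactly $b_{S_\alpha}(\upd')$ extended by $b_{S_\alpha}(\upd_n)$, a second (reverse) application of Proposition~\ref{prop:pmasplitting:model iteration} identifies this product as the minimal change update model of $b_{S_\alpha}(\upd)$, which is the second claim.

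It remains to show $M$ is sequence\nobreakdash-saturated relative to $S$. I would do this directly: take any $I \in \foint$ with $\restr[S_\alpha]{I} \in \restr[S_\alpha]{M}$ for all $\alpha < \mu$. Since $M = N_1\oplus N_2$ is nonempty and $N_1, N_2$ are sequence\nobreakdash-saturated, Proposition~\ref{prop:pmasplitting:equivalence1} gives $\restr[S_\alpha]{M} = \restr[S_\alpha]{N_1}\oplus\restr[S_\alpha]{N_2}$, so $\restr[S_\alpha]{I} \in \restr[S_\alpha]{N_1}\oplus\restr[S_\alpha]{N_2}$ for every $\alpha$; Proposition~\ref{prop:pmasplitting:mintupdate} (with $N_1, N_2$ in the roles of $M, N$) then forces $I \in N_1\oplus N_2 = M$, as required. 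All of the genuine semantic work — that the minimal change update operator respects the factorisation across the strata $S_\alpha$ — is already discharged in the preceding propositions about $\oplus$, $\restr[S_\alpha]{\cdot}$ and $\sat[S_\alpha]{\cdot}$; consequently the present argument is essentially a bookkeeping induction, and the only point demanding care is verifying the nonemptiness hypotheses needed to invoke Corollary~\ref{cor:pmasplitting:equivalence} and Proposition~\ref{prop:pmasplitting:equivalence1}, which I handle via Lemma~\ref{lemma:pmasplitting:pma subset} together with the standing assumption that $M$ is a (nonempty) minimal change update model.
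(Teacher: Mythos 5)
Your proposal is correct and follows essentially the same route as the paper's proof: induction on $n$ via Proposition~\ref{prop:pmasplitting:model iteration}, with the inductive hypothesis combined with Proposition~\ref{prop:pmasplitting:fo splitting} for $\upd_n$, the factorisation $\sat[S_\alpha]{M} = \sat[S_\alpha]{N_1}\oplus\sat[S_\alpha]{N_2}$ from Corollary~\ref{cor:pmasplitting:equivalence}, and sequence-saturation recovered via Propositions~\ref{prop:pmasplitting:equivalence1} and \ref{prop:pmasplitting:mintupdate}. The only differences are cosmetic (your preliminary check that each $S_\alpha$ is a splitting set, and citing Lemma~\ref{lemma:pmasplitting:pma subset} rather than Lemma~\ref{lemma:pmasplitting:update emptiness} for nonemptiness, which is harmless since that fact is already built into Proposition~\ref{prop:pmasplitting:model iteration}).
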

\begin{proof*}
	We prove by induction on $n$.
	\begin{enumerate}
		\renewcommand{\labelenumi}{\arabic{enumi}$^\circ$}
		\item If $n = 0$, then $\upd = \an{~}$, so $M = \foint$ by definition.
			Thus, $M$ is trivially sequence-saturated relative to $S$. Moreover, for
			every $\alpha < \mu$, $b_{S_\alpha}(\upd) = \an{~}$, and
			$\sat[S_\alpha]{M} = \sat[S_\alpha]{\foint} = \foint$, so
			$\sat[S_\alpha]{M}$ is the minimal change update model of
			$b_{S_\alpha}(\upd)$.
		
		\item We assume the claim holds for $n$ and prove it for $n + 1$.  Suppose
			$M'$ is the minimal change update model of $\upd' = \an{\upd_i}_{i < n +
			1}$. It follows from Proposition \ref{prop:pmasplitting:model iteration}
			that $M' = M \oplus N$ where $M$ is the minimal change update model of
			$\upd = \an{\upd_i}_{i < n}$ and $N$ is the MKNF model of $\upd_n$.  We
			need to prove that for every $\alpha < \mu$, $\sat[S_\alpha]{M'}$ is the
			minimal change update model of $b_{S_\alpha}(\upd')$.

			Take some arbitrary but fixed $\alpha < \mu$. By the inductive
			assumption we obtain that $M$ is sequence-saturated relative to $S$ and
			$\sat[S_\alpha]{M}$ is the minimal change update model of
			$b_{S_\alpha}(\upd)$. Also, by Proposition \ref{prop:pmasplitting:fo
			splitting}, $N$ is sequence-saturated relative to $S$ and
			$\sat[S_\alpha]{N}$ is the MKNF model of $b_{S_\alpha}(\upd_n)$.
			Furthermore, by Corollary \ref{cor:pmasplitting:equivalence},
			\[
				\sat[S_\alpha]{M'} = \sat[S_\alpha]{M \oplus N}
					= \sat[S_\alpha]{M} \oplus \sat[S_\alpha]{N} \enspace,
			\]
			and by another application of Proposition \ref{prop:pmasplitting:model
			iteration} we obtain that $\sat[S_\alpha]{M'}$ is the minimal change
			update model of $b_{S_\alpha}(\upd')$.

			It remains to show that $M'$ is sequence-saturated relative to $S$.
			Suppose that $J$ is a Herbrand interpretation such that
			$\restr[S_\alpha]{J}$ belongs to $\restr[S_\alpha]{M'}$. We conclude by
			Proposition \ref{prop:pmasplitting:equivalence1} that for any $\alpha <
			\mu$, $\restr[S_\alpha]{J}$ belongs to $\restr[S_\alpha]{M} \oplus
			\restr[S_\alpha]{N}$. Thus, by Proposition
			\ref{prop:pmasplitting:mintupdate}, $J$ belongs to $M \oplus N = M'$ as
			desired. \qedhere
	\end{enumerate}
\end{proof*}

\begin{proposition} \label{prop:pmasplitting:2}
	Let $\upd = \an{\upd_i}$ be a finite sequence of first-order theories, $U$
	be a splitting sequence for $\upd$ and $S = \an{S_\alpha}_{\alpha < \mu}$ be
	the saturation sequence induced by $U$. If for every ordinal $\alpha < \mu$,
	$X_\alpha$ is the minimal change update model of $b_{S_\alpha}(\upd)$, then
	$\bigcap_{\alpha < \mu} X_\alpha$ is the minimal change update model of
	$\upd$.
\end{proposition}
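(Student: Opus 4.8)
The plan is to establish this as the converse of Proposition~\ref{prop:pmasplitting:1} and to prove it by induction on the length $n$ of $\upd$, in parallel with the induction used there. Before the induction I would record two preliminary facts that hold for every $\alpha < \mu$. Since the predicate symbols relevant to $b_{S_\alpha}(\upd)$ are contained in $S_\alpha$, Proposition~\ref{prop:pmasplitting:seq pma sat} gives that each $X_\alpha$ is saturated relative to $S_\alpha$. Consequently, writing $M = \bigcap_{\alpha < \mu} X_\alpha$, Proposition~\ref{prop:seqsat:int sigma} yields that $M$ is nonempty, that $X_\alpha = \sat[S_\alpha]{M}$, and (via Proposition~\ref{prop:seqsat:equivalence}) that $M$ is sequence-saturated relative to $S$. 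The base case $n = 0$ is immediate: then $\upd = \an{~}$, so $\smod(\upd) = \foint$ by Definition~\ref{def:seq pma}, while $b_{S_\alpha}(\upd) = \an{~}$ forces $X_\alpha = \foint$ for all $\alpha$, whence $\bigcap_\alpha X_\alpha = \foint = \smod(\upd)$.

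For the inductive step, write $\upd' = \an{\upd_i}_{i < n+1}$ and observe that $b_{S_\alpha}(\upd')$ is $b_{S_\alpha}(\upd)$, with $\upd = \an{\upd_i}_{i < n}$, extended by the single theory $b_{S_\alpha}(\upd_n)$. Applying Proposition~\ref{prop:pmasplitting:model iteration} to the nonempty minimal change update model $X_\alpha$ of $b_{S_\alpha}(\upd')$, I obtain $X_\alpha = Y_\alpha \oplus Z_\alpha$, where $Y_\alpha$ is the minimal change update model of $b_{S_\alpha}(\upd)$ and $Z_\alpha$ is the MKNF model of $b_{S_\alpha}(\upd_n)$. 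By the induction hypothesis, $M_\upd = \bigcap_\alpha Y_\alpha$ is the minimal change update model of $\upd$, and by Proposition~\ref{prop:pmasplitting:fo splitting 2}, $N = \bigcap_\alpha Z_\alpha$ is the MKNF model of $\upd_n$. Both are nonempty, and by the same saturation argument as above (Proposition~\ref{prop:pmasplitting:seq pma sat} for $Y_\alpha$, Proposition~\ref{prop:saturated:mknf} for $Z_\alpha$, then Proposition~\ref{prop:seqsat:int sigma}) one gets $Y_\alpha = \sat[S_\alpha]{M_\upd}$ and $Z_\alpha = \sat[S_\alpha]{N}$, with $M_\upd$ and $N$ both sequence-saturated relative to $S$.

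The crux is then the reassembly identity $M_\upd \oplus N = \bigcap_\alpha X_\alpha$, which I would prove elementwise. For a Herbrand interpretation $J$, Proposition~\ref{prop:pmasplitting:mintupdate} reduces $J \in M_\upd \oplus N$ to the conjunction over $\alpha < \mu$ of $\restr[S_\alpha]{J} \in \restr[S_\alpha]{M_\upd} \oplus \restr[S_\alpha]{N}$, and Proposition~\ref{prop:pmasplitting:equivalence2} rewrites each conjunct as $J \in \sat[S_\alpha]{M_\upd} \oplus \sat[S_\alpha]{N} = Y_\alpha \oplus Z_\alpha = X_\alpha$. Hence $J \in M_\upd \oplus N$ if and only if $J \in \bigcap_\alpha X_\alpha$, giving the identity. (Corollary~\ref{cor:pmasplitting:equivalence} packages the same fact, but the elementwise route avoids presupposing that $M_\upd \oplus N$ is nonempty.) Finally, Definition~\ref{def:seq pma} together with Lemma~\ref{lemma:pmasplitting:fo theory model} gives $\smod(\upd') = \smod(\upd) \oplus \smod(\upd_n) = M_\upd \oplus N$, which we have just shown equals the nonempty set $\bigcap_\alpha X_\alpha$; therefore $\bigcap_\alpha X_\alpha$ is the minimal change update model of $\upd'$, closing the induction.

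I expect the reassembly identity $M_\upd \oplus N = \bigcap_\alpha X_\alpha$ to be the main obstacle. The operator $\oplus$ is defined through a minimality condition and does not commute with intersection in any naive way, so the argument must be routed entirely through the saturation machinery of Propositions~\ref{prop:pmasplitting:mintupdate} and~\ref{prop:pmasplitting:equivalence2}. The delicate bookkeeping is verifying that $M_\upd$ and $N$ are genuinely sequence-saturated relative to $S$ (so that those propositions apply) and that the nonemptiness hypotheses required by Proposition~\ref{prop:pmasplitting:equivalence2} are in hand, both of which the preliminary saturation observations and the induction hypothesis are designed to supply.
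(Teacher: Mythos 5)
Your proposal is correct and follows essentially the same route as the paper's own proof: induction on the length of the sequence, decomposition of each layer via Proposition~\ref{prop:pmasplitting:model iteration}, Proposition~\ref{prop:pmasplitting:fo splitting 2} for the last theory, saturation bookkeeping via Propositions~\ref{prop:pmasplitting:seq pma sat} and~\ref{prop:seqsat:int sigma}, and the reassembly identity obtained from Propositions~\ref{prop:pmasplitting:mintupdate} and~\ref{prop:pmasplitting:equivalence2}. The only cosmetic differences are your explicit elementwise phrasing of the reassembly step (the paper likewise invokes those two propositions directly rather than Corollary~\ref{cor:pmasplitting:equivalence}) and your citing Proposition~\ref{prop:saturated:mknf} for the saturation of the MKNF models of the last layer, neither of which changes the argument.
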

\begin{proof*}
	We prove by induction on $n$.
	\begin{enumerate}
		\renewcommand{\labelenumi}{\arabic{enumi}$^\circ$}
		\item If $n = 0$, then $\upd = \an{~}$ and for every $\alpha < \mu$,
			$b_{S_\alpha}(\upd) = \an{~}$, so $X_\alpha = \foint$ and $M = \bigcap_{\alpha <
			\mu} X_\alpha = \foint$, so $M$ is indeed the minimal change update
			model of $\upd$.

		\item We assume the claim holds for $n$ and prove it for $n + 1$. Suppose
			that for every ordinal $\alpha < \mu$, $X_\alpha'$ is the minimal change
			update model of $b_{S_\alpha}(\upd')$ where $\upd' = \an{\upd_i}_{i < n
			+ 1}$. We need to show that $\bigcap_{\alpha < \mu} X_\alpha'$ is the
			minimal change update model of $\upd'$.

			It follows from Proposition \ref{prop:pmasplitting:model iteration} that
			$X_\alpha' = X_\alpha \oplus Y_\alpha$ where $X_\alpha$ is the minimal
			change update model of $b_{S_\alpha}(\upd)$, where $\upd =
			\an{\upd_i}_{i < n}$, and $Y_\alpha$ is the MKNF model of
			$b_{S_\alpha}(\upd_n)$. Thus, by the inductive assumption, $M =
			\bigcap_{\alpha < \mu} X_\alpha$ is the minimal change update model of
			$\upd$. Also, by Proposition \ref{prop:pmasplitting:fo splitting 2},
			$N = \bigcap_{\alpha < \mu} Y_\alpha$ is the MKNF model of $\upd_n$.
			Moreover, we know from Proposition \ref{prop:pmasplitting:seq pma sat}
			that both $X_\alpha$ and $Y_\alpha$ are saturated relative to
			$S_\alpha$. Thus, we can use Proposition \ref{prop:seqsat:int sigma} to
			conclude that $X_\alpha = \sat[S_\alpha]{M}$ and $Y_\alpha =
			\sat[S_\alpha]{N}$ and that both $M$ and $N$ are nonempty and by
			Proposition \ref{prop:seqsat:equivalence} they are also
			sequence-saturated relative to $S$. We can thus apply Propositions
			\ref{prop:pmasplitting:mintupdate} and
			\ref{prop:pmasplitting:equivalence2} to obtain
			\[
				M \oplus N = \bigcap_{\alpha < \mu} \sat[S_\alpha]{M} \oplus
				\sat[S_\alpha]{N} = \bigcap_{\alpha < \mu} X_\alpha \oplus Y_\alpha =
				\bigcap_{\alpha < \mu} X_\alpha' \enspace.
			\]
			Thus, by Proposition \ref{prop:pmasplitting:model iteration} it follows
			that $\bigcap_{\alpha < \mu} X_\alpha'$ is the minimal change update
			model of $\upd'$. \qedhere
	\end{enumerate}
\end{proof*}

\section{Splitting Theorem for Dynamic Logic Programs}

\begin{proposition}[Positive Support] \label{prop:dlp support}
	Let $I$ be a dynamic stable model of a dynamic logic program $\prog$. Then
	for every $p \in I$ there exists a rule $r$ from $\dynall{\prog}$ such that
	$H(r) = p$ and $I \ent B(r)$.
\end{proposition}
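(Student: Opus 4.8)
The plan is to derive positive support directly from the fact that, by definition of a dynamic stable model, $I'$ is the least model of a \emph{definite} program. Write $P^* = [\rho(\prog) \setminus \dynrej{\prog, I}] \cup \dyndef{\prog, I}$, and let $\bar P$ be the definite program obtained from $P^*$ by replacing every default literal $\lpnot q$ with the fresh atom $\mathit{not}\!\_q$, so that $I' = \least{\bar P}$. I would first record the classical \emph{support property} of least models of definite programs: if an atom $a$ lies in $\least{\bar P}$, then $\bar P$ contains a rule with head $a$ whose body is a subset of $\least{\bar P}$. This is immediate from $\least{\bar P} = \bigcup_{n} T_{\bar P}^{\,n}(\emptyset)$, since an atom can enter the iteration only through an application of the immediate consequence operator, i.e. via a rule whose body is already satisfied.

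Next I would fix $p \in I$. Since $I \subseteq I'$ we have $p \in \least{\bar P}$, so the support property yields a rule $\bar r \in \bar P$ with $H(\bar r) = p$ and $B(\bar r) \subseteq I'$. I then trace $\bar r$ back to $P^*$. It cannot originate from $\dyndef{\prog, I}$, because each such fact is turned into a fact whose head is an atom of the form $\mathit{not}\!\_q$, whereas $p$ is a genuine atom, distinct from every fresh atom. Hence $\bar r$ is the transform of a rule $r \in \rho(\prog) \setminus \dynrej{\prog, I} \subseteq \rho(\prog)$; as the transformation leaves positive atoms untouched and $H(\bar r) = p$ is positive, the original head satisfies $H(r) = p$.

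It remains to show $I \ent B(r)$, for which I would use the explicit shape $I' = I \cup \set{\mathit{not}\!\_q | q \notin I}$ together with $B(\bar r) \subseteq I'$. Each positive body atom $a$ of $r$ also occurs in $B(\bar r)$, so $a \in I'$, and being a genuine atom it lies in $I$. Each default body literal $\lpnot b$ of $r$ contributes the fresh atom $\mathit{not}\!\_b$ to $B(\bar r)$, so $\mathit{not}\!\_b \in I'$, which by the description of $I'$ means $b \notin I$, i.e. $I \ent \lpnot b$. Every literal of $B(r)$ is therefore satisfied by $I$, so $I \ent B(r)$, and $r \in \rho(\prog)$ is the rule we sought.

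The only delicate point is the support property itself: although entirely standard, its use requires being explicit that the least model is built by iterating the immediate consequence operator, so that no atom is true without a justifying rule, and it relies on keeping the fresh atoms $\mathit{not}\!\_q$ strictly disjoint from the original atoms --- which is precisely what makes the case distinction on the origin of $\bar r$ legitimate.
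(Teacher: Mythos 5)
Your proposal is correct and takes essentially the same approach as the paper: the paper's own proof is a one-line appeal to ``the definition and the fact that the least model of a definite logic program satisfies support,'' which is exactly the argument you unfold in detail. The additional bookkeeping you supply---obtaining the supporting rule via iteration of the immediate consequence operator, and tracing it back to $\rho(\prog)$ rather than $\dyndef{\prog, I}$ by exploiting the disjointness of the fresh atoms $\mathit{not}\!\_q$ from the genuine atoms---is precisely what the paper leaves implicit, so there is no substantive difference.
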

\begin{proof}
	Follows by definition and the fact that the least model of a definite logic
	program satisfies support.
\end{proof}

\begin{proposition}[Generalisation of Stable Models]
	\label{prop:dlp generalisation}
	Let $\prog$ be a logic program. Then $I$ is a stable model of $\prog$ if and
	only if $I$ is a dynamic stable model of $\prog$.
\end{proposition}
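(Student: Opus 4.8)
The plan is to regard $\prog$ as the one-element dynamic logic program $\an{\prog}$, so that $\dynall{\prog} = \prog$ and, since the indices $i,j$ in the definition of $\dynrej{\prog,I}$ must both equal $0$, we have $\dynrej{\prog,I} = \Set{r \in \prog | (\exists r' \in \prog)(r \confl r' \land I \ent B(r'))}$. Writing $I' = I \cup \Set{\mathit{not}\_p | p \notin I}$ and folding the replacement of each $\lpnot q$ by a fresh atom $\mathit{not}\_q$ into $\least{\cdot}$ as in the definitions, the two notions compare two generating programs: $P_1 = \prog \cup \Set{\lpnot p | p \notin I}$ for stable models and $P_2 = (\prog \setminus \dynrej{\prog,I}) \cup \dyndef{\prog,I}$ for dynamic stable models. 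The proposition thus amounts to showing that $I' = \least{P_1}$ and $I' = \least{P_2}$ are equivalent. These least models are \emph{not} equal for arbitrary $I$, so the argument must exploit the fixpoint condition; I would prove each direction separately, in both cases collapsing $P_2$ onto $P_1$.

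The central step is the identity, claimed to hold whenever $I$ is a stable or a dynamic stable model of $\prog$, that $\dyndef{\prog,I} = \Set{\lpnot p | p \notin I}$. For $\subseteq$ I use support: if $\lpnot p \in \dyndef{\prog,I}$ then $p$ has no rule with satisfied body, so by Proposition~\ref{prop:dlp support} (dynamic case), or by reading off a rule that derives $p$ in $\least{P_1}$ (stable case), we get $p \notin I$. For $\supseteq$ I argue by contradiction: if $p \notin I$ but some rule $r$ with $H(r) = p$ has $I \ent B(r)$, then either a non-rejected such rule fires and forces $p \in I' = \least{P_j}$, contradicting $p \notin I$; or every supporting rule of $p$ is rejected, in which case the supporting rule also rejects every conflicting $\lpnot p$-rule, so $\mathit{not}\_p$ is not derivable in $P_2$ even though $p \notin I$ demands $\mathit{not}\_p \in I' = \least{P_2}$ — again a contradiction. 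This pins down $\dyndef{\prog,I}$ exactly, so $P_1$ and $P_2$ differ \emph{only} in the rejected rules $\dynrej{\prog,I}$.

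It then remains to show that the rejected rules are inert, i.e. $\least{P_1} = \least{P_2}$. Since $P_2 \subseteq P_1$, one inclusion of least models is immediate; for the other I take the first atom (if any) that $P_1$ derives outside $I'$, necessarily via a rejected rule whose body is already contained in $I'$, and reach a contradiction by cases on its head. A rejected positive-head rule that fires on $I'$ supports its head, which by the identity above lies in $I \subseteq I'$; a rejected negated-head rule that fires on $I'$ would, together with the conflicting rule rejecting it (whose body is satisfied), place both $p$ and $\mathit{not}\_p$ into $\least{P_1}$, violating the consistency inherent in $I'$, and this configuration is in any case already excluded since it would reject every rule supporting $p$. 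Hence no rejected rule ever fires on $I'$, the two least models coincide, and the two fixpoint conditions are equivalent.

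The main obstacle is exactly this interaction of default negation in rule heads with causal rejection. For a normal program $\dynrej{\prog,I}$ is empty (no two heads conflict), and the whole argument collapses to the support-based identity $\dyndef{\prog,I} = \Set{\lpnot p | p \notin I}$; but for generalised programs one must rule out that a rejected rule silently contributes to the least model. The delicate point is that a supported atom all of whose supporting rules are rejected cannot occur at a (dynamic) stable model, because the rejecting $\lpnot p$-rules then leave $\mathit{not}\_p$ underivable while $I'$ still requires it. It is this $\mathit{not}\_p$-consistency of $I'$, rather than any direct manipulation of the rejection set, that drives both directions.
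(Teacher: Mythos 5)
Your proof is correct, but it takes a genuinely different route from the paper, for the simple reason that the paper contains no self-contained argument for this proposition at all: its ``proof'' is a pointer to \cite{Leite2003}, which establishes the result for a slightly different definition of dynamic stable models, plus the assertion that the argument is analogical for the semantics used here. Your argument supplies exactly what that appeal to analogy hides. Under the adopted (refined) semantics, rejection allows $i \leq j$, so the rules of the single program can reject one another and $\dynrej{\an{\prog},I}$ need not be empty, and $\dyndef{\prog,I}$ is determined by the absence of firing rules rather than by $p \notin I$; these are precisely the two discrepancies between the generating programs $P_1$ and $P_2$ that your proof isolates. Your central lemma --- that at any stable or dynamic stable model $\dyndef{\prog,I} = \Set{\lpnot p | p \notin I}$ --- combined with the observation that no rejected rule can fire at such a fixpoint (two firing conflicting rules reject each other, leaving both $p$ and $\mathit{not}\_p$ underivable in $P_2$ while $I'$ must contain exactly one of them), is the right engine, and both directions of the equivalence then reduce to routine least-model reasoning. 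What the paper's citation buys is brevity; what your proof buys is a verification for the exact semantics in force, at the one point (same-program rejection) where ``analogical'' adaptation genuinely requires care. Two details would need tightening in a full write-up. First, the step \emph{violating the consistency inherent in $I'$} is only legitimate in the stable-to-dynamic direction, where $I' = \least{P_1}$ is the hypothesis; in the converse direction $\least{P_1}$ is not yet known to be consistent, so the contradiction must come from the backup argument you give --- the firing rule with head $\lpnot p$ rejects every rule with head $p$, so that $p$ itself (not $\mathit{not}\_p$) becomes underivable in $P_2$ although $p \in I \subseteq I' = \least{P_2}$. Second, the closing inference from \emph{no rejected rule ever fires on $I'$} to \emph{the two least models coincide} needs, in the stable-to-dynamic direction, the standard lemma that deleting rules whose bodies are never satisfied in $\least{P_1}$ leaves the least model unchanged (an induction over the immediate-consequence iteration). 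Both points are routine, so the proof stands.
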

\begin{proof}
	See \cite{Leite2003} for a proof using a slightly different definition of a
	dynamic stable model. The proof for the semantics we use is analogical.
\end{proof}

\begin{proposition} \label{prop:dlp primacy}
	Let $\prog = \an{\prog_i}_{i < n}$ be a dynamic logic program with $n > 0$
	and $I$ be a dynamic stable model of $\prog$. Then $I \ent \prog_{n-1}$.
\end{proposition}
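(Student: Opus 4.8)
The plan is to unfold the definition of a dynamic stable model and show that a rule in the \emph{last} program of the sequence can never be overridden, because it rejects every rule it conflicts with. Write $\prog = \an{\prog_i}_{i<n}$, let $I$ be a dynamic stable model, and put $I' = \least{[\rho(\prog) \setminus \dynrej{\prog, I}] \cup \dyndef{\prog, I}}$, so that $I' = I \cup \set{\mathit{not}\_q \mid q \text{ an atom}, q \notin I}$. I would fix a rule $r \in \prog_{n-1}$ with $I \ent B(r)$ and show $I \ent H(r)$. The single observation driving both cases is this: if $s$ is \emph{any} rule, lying in some $\prog_{i'}$ with necessarily $i' \le n-1$, such that $s \Join r$ and $I \ent B(s)$ --- equivalently $H(s) = \lpnot H(r)$ and $I \ent B(s)$ --- then $s$ is rejected by $r$, since $r \in \prog_{n-1}$, $i' \le n-1$, $s \Join r$ and $I \ent B(r)$; hence $s \in \dynrej{\prog, I}$. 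I would also repeatedly use the support property of the least model of a definite program (the same fact underlying Proposition~\ref{prop:dlp support}): every atom of $I'$ is the head of some rule of the transformed program whose body lies in $I'$.

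First I would treat the case $H(r) = p$ an atom, assuming for contradiction $p \notin I$. Then $\mathit{not}\_p \in I'$, so by least-model support there is a rule of the transformed reduct with head $\mathit{not}\_p$ and body contained in $I'$; this rule is either a fact coming from $\dyndef{\prog, I}$ or the transform of a surviving rule $s \in \rho(\prog) \setminus \dynrej{\prog, I}$ with $H(s) = \lpnot p$ and $I \ent B(s)$. The first option is impossible because $r$ itself witnesses a rule with head $p$ and $I \ent B(r)$, so $\lpnot p \notin \dyndef{\prog, I}$. The second is impossible because any such $s$ conflicts with $r$ and has a satisfied body, hence is rejected by the observation above, contradicting $s \notin \dynrej{\prog, I}$. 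Thus $p \in I$, i.e.\ $I \ent H(r)$.

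Symmetrically, I would treat $H(r) = \lpnot p$, assuming for contradiction $p \in I$. Then $p \in I'$, and least-model support yields a surviving rule $s \in \rho(\prog) \setminus \dynrej{\prog, I}$ with $H(s) = p$ and $I \ent B(s)$ (the facts in $\dyndef{\prog, I}$ have heads of the form $\mathit{not}\_q$, so they cannot supply the atom $p$). But $H(s) = p = \lpnot H(r)$, using double-negation absorption, means $s \Join r$, and $I \ent B(s)$, so again $s$ is rejected by $r$, contradicting $s \notin \dynrej{\prog, I}$. Hence $p \notin I$, i.e.\ $I \ent H(r)$.

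The delicate point, and the step I expect to require most care, is the atom-head case: because rules of a generalised program may carry default literals in their heads, I cannot simply invoke Positive Support (Proposition~\ref{prop:dlp support}), which is stated for members of $I$, but must reason directly about derivations of the auxiliary atom $\mathit{not}\_p$ inside the least model, separating the two possible sources of support ($\dyndef$ versus a surviving conflicting rule) and excluding each. Once the rejection observation is isolated, the remaining bookkeeping --- double-negation absorption in the conflict relation, and the fact that $\dyndef$ only ever supplies $\mathit{not}\_q$ facts --- is routine.
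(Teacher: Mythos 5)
Your proof is correct and takes essentially the same route as the paper's own (two-line sketch) proof: both rest on the observation that the $i \leq j$ condition in $\dynrej{\prog, I}$ gives any rule of $\prog_{n-1}$ with satisfied body rejection power over every conflicting rule, combined with the support property of the least model. Your write-up simply fills in the details the paper leaves implicit, including the situation the sketch phrases as ``in case they reject each other, no dynamic stable model exists,'' which your direct support argument on $p$ versus $\mathit{not}\_p$ handles without a separate case split.
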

\begin{proof}[Proof (sketch)]
	Follows from the fact that rules in $\prog_{n-1}$ cannot be rejected by
	rules in preceding programs and in case they reject each other, no dynamic
	stable model exists.
\end{proof}

We will use the terms ``splitting set'' and ``splitting sequence'' for
(dynamic) logic programs without defining them formally, assuming they are the
natural specializations of the notions defined for (dynamic) hybrid knowledge
bases.

\begin{definition}
	Let $U = \an{U_\alpha}_{\alpha < \mu}$ be a splitting sequence for a dynamic
	logic program $\prog$. A \emph{solution to $\prog$ with respect to $U$} is a
	sequence $\an{I_\alpha}_{\alpha < \mu}$ of Herbrand interpretations such
	that
	\begin{enumerate}
		\item $I_0$ is a dynamic stable model of $b_{U_0}(\prog)$;
		\item For any ordinal $\alpha$ such that $\alpha + 1 < \mu$, $I_{\alpha +
			1}$ is a dynamic stable model of
			\[
				e_{U_\alpha} \left(
					b_{U_{\alpha+1}}(\prog),
					\textstyle \bigcup_{\eta \leq \alpha} I_\eta
				\right) \enspace;
			\]
		\item For any limit ordinal $\alpha < \mu$, $I_\alpha = \emptyset$;
	\end{enumerate}
\end{definition}

\begin{proposition} \label{prop:dlp splitting}
	Let $U = \an{U_\alpha}_{\alpha < \mu}$ be a splitting sequence for a dynamic
	logic program $\prog$. Then $I$ is a dynamic stable model of $\prog$ if and
	only if $I = \bigcup_{\alpha < \mu} I_\alpha$ for some solution
	$\an{I_\alpha}_{\alpha < \mu}$ to $\prog$ with respect to $U$.
\end{proposition}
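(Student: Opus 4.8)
The plan is to mirror the two-stage strategy behind the MKNF splitting sequence theorem (Theorem~\ref{thm:seq}): first isolate a single-\emph{splitting-set} analogue for dynamic logic programs, and then lift it to splitting sequences by transfinite induction. The single-set lemma I would prove reads: if $U$ is a splitting set for a DLP $\prog$, then $I$ is a dynamic stable model of $\prog$ if and only if $\restr[U]{I}$ is a dynamic stable model of $b_U(\prog)$ and $\restrc[U]{I}$ is a dynamic stable model of $e_U(\prog, \restr[U]{I})$, with $I = \restr[U]{I} \cup \restrc[U]{I}$. The entire argument rests on one structural fact about causal rejection: since $r \confl r'$ forces $H(r)$ and $H(r')$ to share a predicate symbol, and since splitting guarantees that every bottom rule has a head using a symbol in $U$ while every top rule has a head using a symbol in $\lpre \setminus U$, no bottom rule can ever reject or be rejected by a top rule. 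Hence the conflict relation, and with it $\dynrej{\prog, I}$, decomposes cleanly along $U$.

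First I would unfold the fixpoint definition, writing $I' = \least{[\rho(\prog) \setminus \dynrej{\prog, I}] \cup \dyndef{\prog, I}}$, and show that each ingredient factorises with respect to $U$. For rejection this uses the observation above together with the fact that a bottom rule body mentions only predicates of $U$, so $I \ent B(r')$ is equivalent to $\restr[U]{I} \ent B(r')$; this yields that $\dynrej{\prog, I}$ restricted to bottom rules equals $\dynrej{b_U(\prog), \restr[U]{I}}$, while its restriction to top rules coincides with the rejections occurring in $e_U(\prog, \restr[U]{I})$ once the $U$-body literals have been evaluated under $\restr[U]{I}$. For the default assumptions $\dyndef{\prog, I}$ I would use that an atom whose predicate lies in $U$ can only be supported by a bottom rule (no top rule carries such an atom in its head), so the $U$-part of $\dyndef{\prog, I}$ depends solely on $b_U(\prog)$ and $\restr[U]{I}$; here Proposition~\ref{prop:dlp support} is the relevant support principle. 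Because the residual program is definite once default literals are read as fresh atoms, its least model splits: the $U$-part is the least model of the bottom residual and the complement is the least model of the reduced top residual with the $U$-atoms frozen to their bottom values. Matching this factored least model against the guess $I'$ yields exactly the two fixpoint conditions defining dynamic stable models of $b_U(\prog)$ and of $e_U(\prog, \restr[U]{I})$.

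With the splitting-set lemma in hand, the sequence version follows the template of Lemma~\ref{lemma:seq:prolonging} and Propositions~\ref{prop:seq:1} and \ref{prop:seq:2}. I would first record a prolongation lemma allowing a sequence to be extended by a final block $\lpre$ without changing its solutions, and then prove the two implications by transfinite induction on $\alpha$, setting $I_0 = \restr[U_0]{I}$, $I_{\alpha+1} = \restr[U_{\alpha+1} \setminus U_\alpha]{I}$, and $I_\alpha = \emptyset$ at limit ordinals. At each successor step the splitting-set lemma, applied to $U_\alpha$ viewed as a splitting set of $b_{U_{\alpha+1}}(\prog)$, supplies the factorisation, exactly as Proposition~\ref{prop:seq:2} uses Theorem~\ref{thm:splitting} in the MKNF case, so that $\bigcup_{\eta \le \alpha+1} I_\eta$ is shown to be a dynamic stable model of $b_{U_{\alpha+1}}(\prog)$.

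The main obstacle I expect is the limit-ordinal step. There one must show that $\bigcup_{\alpha<\mu} I_\alpha$ is itself a dynamic stable model, which requires that rejection and body satisfaction at a limit be fully determined by the finite stages below it. This is where the finiteness of rule bodies is essential: every $B(r)$ mentions only finitely many predicate symbols, all falling in some $U_\beta$ with $\beta<\alpha$, so membership of $r$ in a given layer, satisfaction of $B(r)$, and whether $r$ is rejected are all decided below the limit. This plays the role that the finiteness arguments play in the limit cases of Lemma~\ref{lemma:seq:sigma} and the proof of Proposition~\ref{prop:seq:2}. Extra care is needed to confirm that no conflict straddling the limit can spuriously reject a rule whose support becomes settled only at the limit, which again reduces to the finiteness of the heads and bodies involved.
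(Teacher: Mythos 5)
Your proposal is correct in its essentials, but it takes a genuinely different route from the paper. The paper never states a single-splitting-set lemma for DLPs and never runs its own transfinite induction: instead it reduces the whole statement to the classical splitting sequence theorem for logic programs of \cite{Lifschitz1994a}, applied to the residual program $\prog[Q] = [\rho(\prog) \setminus \dynrej{\prog, I}] \cup \dyndef{\prog, I, \lpre}$, which is definite once default literals are renamed. The new work there consists of (i) introducing the \emph{restricted} default assumptions $\dyndef{\prog, I, S}$ and showing they may replace $\dyndef{\prog, I}$ whenever $S \supseteq \preds{\prog}$, and (ii) verifying that the residual of each layer equals the corresponding layer of $\prog[Q]$, after which the identity $\least{\prog[Q]} = \bigcup_{\alpha < \mu} \least{\prog[Q]_\alpha}$ is simply quoted from the classical theorem. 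Both arguments hinge on the same structural fact you isolate --- conflicting rules share their head predicate, so rejection never crosses layers --- but they distribute the remaining labour differently. Your route is self-contained and yields a single-set DLP splitting theorem of independent interest, mirroring the Theorem~\ref{thm:splitting}/Theorem~\ref{thm:seq} pattern; the cost is that you must redo the transfinite bookkeeping yourself, and your limit-ordinal step, with its finiteness argument, is precisely what the citation to \cite{Lifschitz1994a} spares the paper. Conversely, you never need the restricted defaults, but you should say explicitly why: in your per-layer fixpoint equivalences the unrestricted $\dyndef{}$ floods both sides with default literals for out-of-layer atoms (e.g.\ the bottom's condition contains $\lpnot p$ for every atom $p$ with predicate outside $U$), and these agree on the two sides only because no rule of the layer can have such an atom in its head. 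This is exactly the issue the paper flags as ``troublesome'', and it is invisible in your sketch; it is not a gap --- the junk does cancel in your per-layer formulation --- but it is the one place where the details genuinely bite, and in the paper's union-of-least-models formulation it would \emph{not} cancel, which is why $\dyndef{\prog, I, S}$ had to be invented there.
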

\begin{proof}[Proof (sketch)]
	We need to prove that $I$ is a dynamic stable model of $\prog$ if and only
	if $I = \bigcup_{\alpha < \mu} I_\alpha$ where
	\begin{itemize}
		\item $I_0$ is a dynamic stable model of $\prog_0 = b_{U_0}(\prog)$;
		\item for any ordinal $\alpha$ such that $\alpha + 1 < \mu$, $I_{\alpha +
			1}$ is a dynamic stable model of $\prog_{\alpha + 1} =
			e_{U_\alpha}(b_{U_{\alpha + 1}}(\prog), \bigcup_{\eta \leq \alpha}
			I_\alpha)$;
		\item for any limit ordinal $\alpha < \mu$, $I_\alpha = \emptyset$.
	\end{itemize}
	This basically follows from the splitting sequence theorem for logic
	programs \cite{Lifschitz1994a}. However, the unconstrained set of default
	assumptions $\dyndef{\prog, I}$ is troublesome here. In order to overcome
	the problems associated with it, we need to introduce the set
	\[
		\dyndef{\prog, I, S} =
			\Set{ \lpnot p | \preds{p} \subseteq S \land
			(\lnot \exists r \in \dynall)(H(r) = p \land I \ent B(r)) } \enspace,
	\]
	and prove that as long as $S$ includes $\preds{\prog}$, $\dyndef{\prog, I}$
	can be replaced by $\dyndef{\prog, I, S}$ in the definition of a dynamic
	stable model, if accompanied by a suitable restriction in the definition of
	$I'$ as well, i.e. $I' = I \cup \set{\mathit{not}\_p | p \text{ is an atom,
	} \preds{p} \subseteq S \text{ and } p \notin I}$. Now let $S =
	\an{S_\alpha}_{\alpha < \mu}$ be the saturation sequence induced by $U$ and
	let
	\begin{align*}
		\prog[Q]
			&= [ \rho(\prog) \setminus \dynrej{\prog, I} ]
				\cup \dyndef{\prog, I, \lpre} \enspace, \\
		\prog[Q]_\alpha
			&= [ \rho(\prog_\alpha) \setminus \dynrej{\prog_\alpha, I_\alpha} ]
				\cup \dyndef{\prog_\alpha, I_\alpha, S_\alpha} \enspace,
	\end{align*}
	where, depending on the part of the equivalence we are proving, $I_\alpha$
	is either defined to be $\restr[S_\alpha]{I}$, or it is the dynamic stable
	model of $\prog_\alpha$ and $I$ is the union of all $I_\alpha$.

	It then needs to be shown that $\prog[Q]_0 = b_{U_0}(\prog[Q])$ and that for
	every ordinal $\alpha$ such that $\alpha + 1 < \mu$, $\prog[Q]_\alpha =
	e_{U_\alpha}(b_{U_{\alpha + 1}}(\prog[Q]), \bigcup_{\eta \leq \alpha}
	I_\eta)$. Note that the rejection happens the same way in $\prog[Q]$ and in
	$\prog[Q]_\alpha$ because rules with the same predicate symbol in the head
	are always together in the same $\prog[Q]_\alpha$. Once this is done,
	replacing every default literal $\lpnot p$ by a new atom $\mathit{not}\_p$
	will not change anything about it. Thus, by the splitting sequence theorem
	for logic programs,
	\[
		\least{\prog[Q]} = \bigcup_{\alpha < \mu} \least{\prog[Q]_\alpha}
	\]
	and the desired result follows from this by definition of a dynamic stable
	model.
\end{proof}

\section{Proofs of Properties of Hybrid Update Operator}

\label{app:hybrid update operator}

\begin{definition}
	Let $U$ be a splitting sequence for a basic dynamic hybrid knowledge base
	$\kb$. A \emph{solution to $\kb$ with respect to $U$} is a sequence of MKNF
	interpretations $\an{X_\alpha}_{\alpha < \mu}$ such that
	\begin{enumerate}
		\item $X_0$ is a dynamic MKNF model of $b_{U_0}(\kb)$;
		\item For any ordinal $\alpha$ such that $\alpha + 1 < \mu$, $X_{\alpha +
			1}$ is a dynamic MKNF model of
			\[
				e_{U_\alpha} \left(
					b_{U_{\alpha+1}}(\kb),
					\textstyle \bigcap_{\eta \leq \alpha} X_\eta
				\right) \enspace;
			\]
		\item For any limit ordinal $\alpha < \mu$, $X_\alpha = \foint$.
	\end{enumerate}
\end{definition}

\begin{proposition*}
	[Layers of an Updatable Dynamic Hybrid Knowledge Base are Basic]
	{prop:update-enabling implies basic}%
	Let $U$ be an update-enabling splitting sequence for a dynamic hybrid
	knowledge base $\kb$ and $X \in \mint$. Then $b_{U_0}(\kb)$ is a basic
	dynamic hybrid knowledge base and for any ordinal $\alpha$ such that $\alpha
	+ 1 < \mu$, $e_{U_\alpha}(b_{U_{\alpha+1}}(\kb), X)$ is also a basic dynamic
	hybrid knowledge base.
\end{proposition*}%
\vspace{-1em}
\begin{proof}[Proof of Proposition \ref{prop:update-enabling implies basic}]
	\label{proof:prop:update-enabling implies basic}%
	We know $b_{U_0}(\kb)$ is reducible relative to $\emptyset$, which means
	that it either contains no ontology axioms, or all rules inside it are
	facts. Thus, it is basic.

	Now pick some ordinal $\alpha$ such that $\alpha + 1 < \mu$. Since
	$t_{U_\alpha}(b_{U_{\alpha + 1}}(\kb))$ is reducible relative to $U_\alpha$,
	either it contains no ontology axioms and so is basic, or all rules in it
	are positive and their bodies contain only predicate symbols from
	$U_\alpha$. This implies that in $e_{U_\alpha}(b_{U_{\alpha+1}}(\kb), X)$,
	all rules are positive facts, so it is basic.
\end{proof}

\begin{proposition} \label{prop:basic splitting}
	Let $\kb$ be a basic dynamic hybrid knowledge base and $U$ be a splitting
	sequence for $\kb$. Then $M$ is a dynamic MKNF model of $\kb$ if and only if
	$M = \bigcap_{\alpha < \mu} X_\alpha$ for some solution to $\kb$ with
	respect to $U$.
\end{proposition}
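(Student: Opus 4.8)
The plan is to prove both directions simultaneously by a case split on the definition of \emph{basic}, reducing in each case to a splitting result already established in the excerpt: Propositions~\ref{prop:pmasplitting:1} and~\ref{prop:pmasplitting:2} for the minimal change update operator, and Proposition~\ref{prop:dlp splitting} for dynamic logic programs. Since $\kb$ is basic it is either $\ont$-based or $\prog$-based, and I would first check that this property is inherited by every layer appearing in a solution, so that ``dynamic MKNF model'' is defined for each layer and the notion of a solution to $\kb$ w.r.t.\ $U$ is meaningful. If $\kb$ is $\ont$-based, then $b_{U_0}(\kb)$ and each reduct $e_{U_\alpha}(b_{U_{\alpha+1}}(\kb), X)$ have a rule part containing only positive facts (the reduct of a positive fact is again a positive fact, and the ontology part of a reduct is simply $t_{U_\alpha}(\cdot)$, independent of $X$), hence are $\ont$-based; symmetrically, if $\kb$ is $\prog$-based every layer has an empty ontology. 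Throughout I would work with the saturation sequence $S = \an{S_\alpha}_{\alpha < \mu}$ induced by $U$, so that $S_0 = U_0$, $S_{\alpha+1} = U_{\alpha+1}\setminus U_\alpha$, and $S_\alpha = \emptyset$ at limit ordinals.

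For the $\ont$-based case the key observation is that the first-order theory sequence associated with a layer coincides with one block of $S$. Because $U_\alpha$ is a splitting set, any axiom or fact lying in $t_{U_\alpha}(b_{U_{\alpha+1}}(\kb))$ has all its predicates in $U_{\alpha+1}\setminus U_\alpha = S_{\alpha+1}$, and since $\dlfo{\cdot}$ preserves predicate symbols, the theory associated with $e_{U_\alpha}(b_{U_{\alpha+1}}(\kb), X)$ equals $b_{S_{\alpha+1}}(\thr)$ for $\thr = \an{\dlfo{\ont_i} \cup \prog_i}_{i<n}$, while $b_{U_0}(\kb)$ gives $b_{S_0}(\thr)$. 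I would then invoke Proposition~\ref{prop:pmasplitting:1}, stating that the minimal change update model $M$ of $\thr$ is sequence-saturated relative to $S$ with $\sat[S_\alpha]{M}$ the minimal change update model of $b_{S_\alpha}(\thr)$, together with Proposition~\ref{prop:seqsat:equivalence}, giving $M = \bigcap_{\alpha<\mu}\sat[S_\alpha]{M}$. Setting $X_\alpha = \sat[S_\alpha]{M}$ (with $\sat[\emptyset]{M} = \foint$ at limits) yields a solution whose intersection is $M$; uniqueness of first-order minimal change update models forces any solution's components to equal these $\sat[S_\alpha]{M}$, and the converse direction is exactly Proposition~\ref{prop:pmasplitting:2}.

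For the $\prog$-based case I would exploit the correspondence $X_\alpha \mapsto I_\alpha$ given by $X_\alpha = \Set{J \in \foint | I_\alpha \subseteq J}$, under which $\bigcap_{\eta\le\alpha} X_\eta = \Set{J | \bigcup_{\eta\le\alpha} I_\eta \subseteq J}$. The crucial lemma is that for $X = \Set{J | I \subseteq J}$ and any set of body literals $B$ one has $X \ent \pi(B)$ if and only if $I \ent B$: the subjective atom $\mk p$ holds in $X$ exactly when $p$ lies in the least interpretation $I$, and $\mnot p$ holds exactly when $p \notin I$. Hence the hybrid reduct $e_{U_\alpha}(b_{U_{\alpha+1}}(\kb), \bigcap_{\eta\le\alpha}X_\eta)$ retains precisely the rules retained by the DLP reduct $e_{U_\alpha}(b_{U_{\alpha+1}}(\prog), \bigcup_{\eta\le\alpha}I_\eta)$, so hybrid solutions and DLP solutions correspond term by term, with $\bigcap_{\alpha<\mu} X_\alpha = \Set{J | I \subseteq J}$ for $I = \bigcup_{\alpha<\mu} I_\alpha$. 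Proposition~\ref{prop:dlp splitting} then identifies dynamic stable models of $\prog$ with such unions over DLP solutions, which combined with Definition~\ref{def:basic dynamic hybrid knowledge base} yields the claim.

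The main obstacle I anticipate is the careful bookkeeping needed to identify the solution's layers with the blocks $b_{S_\alpha}$ of the induced saturation sequence: one must verify that the splitting-set reduct degenerates as claimed (trivial reduction of positive facts in the $\ont$-based case, and the equivalence $X \ent \pi(B) \Leftrightarrow I \ent B$ in the $\prog$-based case), and that the limit-ordinal conventions $X_\alpha = \foint$, $S_\alpha = \emptyset$, and $I_\alpha = \emptyset$ line up consistently across the three formalisms. The substantive mathematics is already delivered by the established splitting theorems; the remaining work is to confirm that the hybrid reduct notation collapses exactly onto the first-order and logic-programming notions.
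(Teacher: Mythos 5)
Your proposal is correct and follows essentially the same route as the paper's own proof: a case split on $\ont$-based versus $\prog$-based, with the $\ont$-based case reduced via the induced saturation sequence to Propositions~\ref{prop:pmasplitting:1} and~\ref{prop:pmasplitting:2} (using that the reducts of positive facts are trivial, so each layer's theory is exactly $b_{S_\alpha}(\upd)$), and the $\prog$-based case handled through the correspondence $I_\alpha \mapsto \Set{J \in \foint | I_\alpha \subseteq J}$ together with the satisfaction lemma $X \ent \pi(B) \Leftrightarrow I \ent B$ and Proposition~\ref{prop:dlp splitting}. The only difference is presentational: you make explicit the check that layers of a basic knowledge base are themselves basic, which the paper leaves implicit.
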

\begin{proof}
	Let $\kb = \an{\kb_i}_{i < n}$, where $\kb_i = \an{\ont_i, \prog_i}$.

	First suppose that $\kb$ is $\ont$-based and let $\upd = \an{\upd_i}_{i <
	n}$ be a sequence of first-order theories where $\upd_i = \dlfo{\ont_i} \cup
	\prog_i$. By definition, $M$ is a dynamic MKNF model of $\kb$ if and only if
	$M$ is the minimal change update model of $\upd$. Let $S =
	\an{S_\alpha}_{\alpha < \mu}$ be the saturation sequence induced by $U$.
	Then for any ordinal $\alpha$ such that $\alpha + 1 < \mu$ and any $X \in
	\mint$, the following holds:
	\begin{align*}
		b_{S_0}(\upd) &= b_{U_0}(\upd) \enspace, \\
		b_{S_{\alpha + 1}}(\upd)
			&= b_{U_{\alpha + 1} \setminus U_\alpha}(\upd)
			= t_{U_\alpha}(b_{U_\alpha + 1}(\upd))
			= e_{U_\alpha}(b_{U_{\alpha + 1}}(\upd), X) \enspace.
	\end{align*}
	The rest follows by Propositions \ref{prop:pmasplitting:1} and
	\ref{prop:pmasplitting:2}.

	Now suppose that $\kb$ is $\prog$-based and let $\prog = \an{\prog_i}_{i <
	n}$ be a dynamic logic program. If $M$ is a dynamic MKNF model
	of $\kb$, then $M = \set{J \in \foint | I \subseteq J}$ for some
	dynamic stable model $I$ of $\prog$. By Proposition \ref{prop:dlp
	splitting}, this implies that $I = \bigcup_{\alpha < \mu} I_\alpha$, where
	\begin{itemize}
		\item $I_0$ is a dynamic stable model of $b_{U_0}(\prog)$;
		\item for every ordinal $\alpha$ such that $\alpha + 1 < \mu$, $I_{\alpha
			+ 1}$ is a dynamic stable model of $e_{U_\alpha}(b_{U_{\alpha +
			1}}(\prog), \bigcup_{\eta \leq \alpha} I_\eta)$;
		\item for every limit ordinal $\alpha$, $I_\alpha = \emptyset$.
	\end{itemize}
	Put $M_\alpha = \set{J \in \foint | I_\alpha \subseteq J}$ for every $\alpha
	< \mu$. Also, for any $\alpha < \mu$,
	\[
		\bigcap_{\eta \leq \alpha} M_\eta = \bigcap_{\eta \leq \alpha} \Set{J \in
		\foint | I_\eta \subseteq J} = \Set{J \in \foint | \bigcup_{\eta \leq
		\alpha} I_\eta \subseteq J}
	\]
	and it can be verified easily that for every literal $L$,
	\[
		\bigcup_{\eta \leq \alpha} I_\eta \ent \pi(L)
		\enspace \text{if and only if} \enspace
		\Set{J \in \foint | \bigcup_{\eta \leq \alpha} I_\eta \subseteq J} \ent \pi(L)
		\enspace.
	\]
	Thus, the following now follows by the definition of a dynamic MKNF model of
	a basic dynamic hybrid knowledge base:
	\begin{itemize}
		\item $M_0$ is a dynamic MKNF model of $b_{U_0}(\kb)$;
		\item for every ordinal $\alpha$ such that $\alpha + 1 < \mu$, $M_{\alpha
			+ 1}$ is a dynamic MKNF model of $e_{U_\alpha}(b_{U_{\alpha +
			1}}(\kb), \bigcap_{\eta \leq \alpha} M_\eta)$;
		\item for every limit ordinal $\alpha$, $M_\alpha = \foint$.
	\end{itemize}
	In other words, $\an{M_\alpha}_{\alpha < \mu}$ is a solution to $\kb$ with
	respect to $U$ and
	\begin{align*}
		M &= \Set{J \in \foint | I \subseteq J}
			= \Set{J \in \foint | \bigcup_{\alpha < \mu} I_\alpha \subseteq J} \\
			&= \bigcap_{\alpha < \mu} \set{J \in \foint | I_\alpha \subseteq J}
			= \bigcap_{\alpha < \mu} M_\alpha \enspace.
	\end{align*}

	The converse implication can be proved analogically by reversing the steps
	in the above proof.
\end{proof}

\begin{corollary} \label{cor:basic seqsat}
	Let $\kb$ be a basic dynamic hybrid knowledge base, $U$ be a splitting
	sequence for $\kb$ and $S$ be the saturation sequence induced by $U$. If $M$
	is a dynamic MKNF model of $\kb$, then $M$ is sequence-saturated relative to
	$S$.
\end{corollary}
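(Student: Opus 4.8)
The plan is to reduce the statement, via the splitting characterisation of Proposition~\ref{prop:basic splitting} and the characterisation of sequence-saturation in Proposition~\ref{prop:seqsat:equivalence}, to a single claim about the individual layers. First I would apply Proposition~\ref{prop:basic splitting}: since $M$ is a dynamic MKNF model of the basic dynamic hybrid knowledge base $\kb$ and $U = \an{U_\alpha}_{\alpha < \mu}$ is a splitting sequence for $\kb$, there is a solution $\an{X_\alpha}_{\alpha < \mu}$ to $\kb$ with respect to $U$ with $M = \bigcap_{\alpha < \mu} X_\alpha$. As $S = \an{S_\alpha}_{\alpha < \mu}$ is the saturation sequence induced by $U$, Proposition~\ref{prop:seqsat:equivalence} (third condition implies first) reduces the goal to showing that each $X_\alpha$ is saturated relative to $S_\alpha$.

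For limit ordinals $\alpha$ this is trivial, since $X_\alpha = \foint$ and $\foint$ is saturated relative to any set of predicate symbols. For $\alpha = 0$ we have $S_0 = U_0$ and $X_0$ a dynamic MKNF model of $b_{U_0}(\kb)$, and for a successor $\alpha + 1 < \mu$ we have $S_{\alpha+1} = U_{\alpha+1} \setminus U_\alpha$ and $X_{\alpha+1}$ a dynamic MKNF model of $e_{U_\alpha}(b_{U_{\alpha+1}}(\kb), \bigcap_{\eta \le \alpha} X_\eta)$. In both cases the knowledge base in question is again basic (basicness is inherited by bottoms and by splitting-set reducts) and, as is readily checked from the definitions of bottom and reduct, its relevant predicate symbols satisfy $\preds{b_{U_0}(\kb)} \subseteq U_0 = S_0$ and $\preds{e_{U_\alpha}(b_{U_{\alpha+1}}(\kb), \cdot)} \subseteq U_{\alpha+1} \setminus U_\alpha = S_{\alpha+1}$. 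Thus the whole corollary follows once I prove the general claim that a dynamic MKNF model $X$ of a basic dynamic hybrid knowledge base $\kb'$ is saturated relative to every set $V \supseteq \preds{\kb'}$.

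To prove this claim I split on the form of $\kb'$. If $\kb' = \an{\an{\ont_i, \prog_i}}_{i<n}$ is $\ont$-based, then $X$ is by definition the minimal change update model of $\an{\dlfo{\ont_i} \cup \prog_i}_{i<n}$; each theory $\dlfo{\ont_i} \cup \prog_i$ has its relevant predicates inside $V$, so Proposition~\ref{prop:pmasplitting:seq pma sat} gives immediately that $X$ is saturated relative to $V$. If $\kb'$ is $\prog$-based, then $X = \Set{J \in \foint | I \subseteq J}$ for some dynamic stable model $I$ of $\an{\prog_i}_{i<n}$. The key observation here is that, by positive support (Proposition~\ref{prop:dlp support}), every $p \in I$ is the head of some rule in $\dynall{\prog}$, so $\preds{I} \subseteq \preds{\kb'} \subseteq V$ and hence $\restr[V]{I} = I$. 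Then, given any $K \in \foint$ with $\restr[V]{K} \in \restr[V]{X}$, there is $J \supseteq I$ with $\restr[V]{J} = \restr[V]{K}$, so $I = \restr[V]{I} \subseteq \restr[V]{J} = \restr[V]{K} \subseteq K$, whence $I \subseteq K$ and $K \in X$; this is exactly saturation of $X$ relative to $V$.

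The only genuinely delicate step is the $\prog$-based case: saturation does not follow from the mere shape $\Set{J \in \foint | I \subseteq J}$, but depends on the dynamic stable model $I$ mentioning no predicate symbol outside $\preds{\kb'}$, which is precisely what positive support provides. Everything else is routine bookkeeping on the predicate symbols each layer can mention, read off from the bounds on $\preds{b_{U_0}(\kb)}$ and $\preds{e_{U_\alpha}(b_{U_{\alpha+1}}(\kb), \cdot)}$ already used in the splitting sequence theorem.
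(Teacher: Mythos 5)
Your proposal is correct and takes essentially the same route as the paper: Proposition~\ref{prop:basic splitting} plus the definition of a solution to get $M = \bigcap_{\alpha<\mu} X_\alpha$, then Proposition~\ref{prop:seqsat:equivalence} to reduce everything to saturation of each $X_\alpha$ relative to $S_\alpha$. The ``general claim'' you prove inline --- that a dynamic MKNF model of a basic dynamic hybrid knowledge base whose predicate symbols lie in $V$ is saturated relative to $V$ --- is precisely the paper's Proposition~\ref{prop:basic saturated}, which the paper establishes in exactly the same two-case way (Proposition~\ref{prop:pmasplitting:seq pma sat} for the $\ont$-based case, positive support via Proposition~\ref{prop:dlp support} for the $\prog$-based case), so your write-up just makes explicit a dependency the paper's one-line proof leaves implicit.
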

\begin{proof}
	Follows by Proposition \ref{prop:basic splitting}, definition of a solution
	and by Propositions \ref{prop:seqsat:int sigma} and
	\ref{prop:seqsat:equivalence}.
\end{proof}

\begin{proposition} \label{prop:basic saturated}
	Let $U$ be a set of predicate symbols and $\kb$ be a basic dynamic hybrid
	knowledge base such that $\preds{\kb}$ is included in $U$. Then every
	dynamic MKNF model of $\kb$ is saturated relative to $U$.
\end{proposition}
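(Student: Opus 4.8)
The plan is to argue separately for the two ways in which $\kb = \an{\kb_i}_{i<n}$, with $\kb_i = \an{\ont_i,\prog_i}$, can be basic, invoking in each case the saturation result already available for the relevant component semantics. Throughout I would use the hypothesis $\preds{\kb} \subseteq U$, which gives $\preds{\kb_i} \subseteq U$ for every $i<n$.

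First I would dispose of the $\ont$-based case, which is essentially immediate. Here a dynamic MKNF model $M$ of $\kb$ is, by Definition \ref{def:basic dynamic hybrid knowledge base}, the minimal change update model of the sequence $\upd = \an{\upd_i}_{i<n}$ of first-order theories $\upd_i = \dlfo{\ont_i} \cup \prog_i$ (the facts in $\prog_i$ being ground atoms). Since $\preds{\upd_i} = \preds{\ont_i} \cup \preds{\prog_i} = \preds{\kb_i} \subseteq U$, the sequence $\upd$ satisfies the hypothesis of Proposition \ref{prop:pmasplitting:seq pma sat}, which directly yields that $M$ is saturated relative to $U$.

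The $\prog$-based case needs a short direct argument. Now each $\ont_i$ is empty, so $\preds{\prog} \subseteq U$ for $\prog = \an{\prog_i}_{i<n}$, and $M = \Set{J \in \foint | I \subseteq J}$ for some dynamic stable model $I$ of $\prog$. The crucial first step is to observe that $I$ contains only atoms whose predicate lies in $U$: by positive support (Proposition \ref{prop:dlp support}) every $p \in I$ is the head of a rule of $\prog$, so $\preds{p} \subseteq \preds{\prog} \subseteq U$; hence $\restr{I} = I$. To check saturation, take any $I_0 \in \foint$ with $\restr{I_0} \in \restr{M}$, so that $\restr{J} = \restr{I_0}$ for some $J \in M$. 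From $J \in M$ we have $I \subseteq J$, and monotonicity of the restriction operator on interpretations (immediate from its definition) gives $I = \restr{I} \subseteq \restr{J} = \restr{I_0} \subseteq I_0$. Thus $I \subseteq I_0$, i.e.\ $I_0 \in M$, establishing saturation.

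The only real content lies in the $\prog$-based case: one must notice that support confines the dynamic stable model to $U$-predicates, after which saturation collapses to the trivial monotonicity of restriction. The $\ont$-based case is a verbatim appeal to Proposition \ref{prop:pmasplitting:seq pma sat}, so I expect no genuine obstacle there.
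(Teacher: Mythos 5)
Your proof is correct and follows essentially the same route as the paper's own (sketched) proof: the $\ont$-based case is discharged by Proposition \ref{prop:pmasplitting:seq pma sat}, and the $\prog$-based case uses positive support (Proposition \ref{prop:dlp support}) to confine the dynamic stable model $I$ to $U$-predicates, after which saturation follows from $I = \restr{I} \subseteq \restr{J} \subseteq J$. Your write-up merely makes explicit the monotonicity step that the paper leaves implicit.
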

\begin{proof}[Proof (sketch)]
	If $\kb$ is $\ont$-based, then this follows from Proposition
	\ref{prop:pmasplitting:seq pma sat}. If $\kb$ is $\prog$-based, then $M$ is
	a dynamic MKNF model of $\prog$ only if $M = \set{J \in \in | I \subseteq
	J}$ where $I$, by Proposition \ref{prop:dlp support}, contains only atoms
	with predicate symbols from $U$. This implies that whenever some $\restr{J}$
	belongs to $\restr{M}$, $I$ is a subset of $\restr{J}$ which is a subset of
	$J$, and so $J$ belongs to $M$. Thus, $M$ is saturated relative to $U$.
\end{proof}

\begin{lemma}
	\label{lemma:bottom repeated}
	Let $\kb$ be a (dynamic) hybrid knowledge base and $U, V$ be sets of
	predicate symbols. Then,
	\[
		b_U(b_V(\kb)) = b_{U \cap V}(\kb) \enspace.
	\]
\end{lemma}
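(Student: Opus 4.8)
The plan is to prove the identity by unfolding the definition of the bottom operator and reducing everything to the elementary set-theoretic observation that, for any set of predicate symbols $W$, one has $W \subseteq U$ and $W \subseteq V$ simultaneously if and only if $W \subseteq U \cap V$. Since the bottom operator on a dynamic hybrid knowledge base $\kb = \an{\kb_i}_{i < n}$ is defined componentwise as $b_U(\kb) = \an{b_U(\kb_i)}_{i < n}$, it suffices to establish the equality $b_U(b_V(\kb_i)) = b_{U \cap V}(\kb_i)$ for a single (static) hybrid knowledge base $\kb_i = \an{\ont_i, \prog_i}$; the dynamic case then follows immediately by applying the static result to each member of the sequence. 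Note that neither $U$ nor $V$ needs to be a splitting set here, since the bottom operator is well-defined for an arbitrary set of predicate symbols.

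For the static case, I would treat the ontology and the program parts separately, using that $b_U(\an{\ont, \prog}) = \an{b_U(\ont), b_U(\prog)}$. For the ontology, unfolding the definition gives $b_U(b_V(\ont)) = \Set{ \phi \in b_V(\ont) | \preds{\phi} \subseteq U } = \Set{ \phi \in \ont | \preds{\phi} \subseteq V \land \preds{\phi} \subseteq U }$, and the set-theoretic observation with $W = \preds{\phi}$ rewrites the condition as $\preds{\phi} \subseteq U \cap V$, which is exactly $b_{U \cap V}(\ont)$. The argument for the program part is identical, with $\preds{r}$ in place of $\preds{\phi}$ and rules in place of axioms: $b_U(b_V(\prog)) = \Set{ r \in \prog | \preds{r} \subseteq V \land \preds{r} \subseteq U } = b_{U \cap V}(\prog)$. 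Combining the two components then yields $b_U(b_V(\kb_i)) = b_{U \cap V}(\kb_i)$.

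This lemma is essentially bookkeeping, so there is no substantial obstacle; the only point requiring any care is to keep the two layers of the definition cleanly separated (the dynamic sequence versus the static pair, and the pair versus its ontology and program constituents), and to observe that the membership condition $\phi \in b_V(\ont)$ already carries the constraint $\preds{\phi} \subseteq V$, so that intersecting it with the further constraint $\preds{\phi} \subseteq U$ collapses to the single constraint with respect to $U \cap V$.
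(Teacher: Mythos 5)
Your proof is correct and matches the paper's approach: the paper disposes of this lemma with ``follows directly by definition,'' and your argument is exactly that definition-unfolding, reducing both the ontology and program components to the observation that $\preds{\phi} \subseteq U$ and $\preds{\phi} \subseteq V$ hold jointly iff $\preds{\phi} \subseteq U \cap V$, then lifting componentwise to the dynamic case. Your additional remark that neither $U$ nor $V$ needs to be a splitting set is accurate and consistent with how the lemma is stated.
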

\begin{proof}
	Follows directly by definition.
\end{proof}

\begin{lemma}
	\label{lemma:bottom reduct}
	Let $\kb$ be a (dynamic) hybrid knowledge base, $X \in \mint$, $U$ be a set
	of predicate symbls and $V$ be a splitting set for $\kb$. Then,
	\[
		e_U(b_V(\kb), X) = b_V(e_U(\kb, X)) \enspace.
	\]
\end{lemma}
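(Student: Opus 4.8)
The plan is to reduce the statement to a purely set-theoretic computation on the ontology and program components, after first disposing of the dynamic case. Since $b_V$, $t_U$ and $e_U(\cdot, X)$ all act componentwise on a dynamic hybrid knowledge base $\kb = \an{\kb_i}_{i<n}$, it suffices to establish $e_U(b_V(\kb), X) = b_V(e_U(\kb, X))$ for a single static hybrid knowledge base $\kb = \an{\ont, \prog}$; the sequence case then follows by applying the static identity at each index $i < n$. I would also remark at the outset that, for the reducts to be defined, $U$ is a splitting set for $\kb$, and since $b_V(\kb)$ is obtained from $\kb$ by deleting axioms and rules, $U$ is automatically a splitting set for $b_V(\kb)$ as well, so the left-hand side is meaningful. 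As both sides are pairs $\an{\text{ontology}, \text{program}}$, I treat the two parts separately.

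For the ontology part I would simply unfold the definitions. On the left, $t_U(b_V(\ont)) = \Set{ \phi \in \ont | \preds{\phi} \subseteq V \text{ and } \preds{\phi} \not\subseteq U }$, while on the right, $b_V(t_U(\ont)) = \Set{ \phi \in \ont | \preds{\phi} \not\subseteq U \text{ and } \preds{\phi} \subseteq V }$. These two filters commute trivially, so the ontology components coincide, and no splitting property is needed here.

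The substance lies in the program part, where I would prove $e_U(b_V(\prog), X) = b_V(e_U(\prog, X))$ by double inclusion, recalling that each reduced rule $r'$ carries $H(r') = H(r)$ and $B(r') = \Set{ L \in B(r) | \preds{L} \subseteq \lpre \setminus U }$ for some witness $r$ satisfying $X \ent \pi(\Set{ L \in B(r) | \preds{L} \subseteq U })$. The inclusion $e_U(b_V(\prog), X) \subseteq b_V(e_U(\prog, X))$ is easy: a witness $r$ lies in $t_U(b_V(\prog)) \subseteq t_U(\prog)$, so the identical construction places $r'$ in $e_U(\prog, X)$, and since $\preds{r'} = \preds{H(r)} \cup \preds{B(r')} \subseteq \preds{r} \subseteq V$, we get $r' \in b_V(e_U(\prog, X))$. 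The converse inclusion is the main obstacle: given $r' \in b_V(e_U(\prog, X))$ with witness $r \in t_U(\prog)$, I must show that this witness actually lies in $t_U(b_V(\prog))$, i.e. that $\preds{r} \subseteq V$ — and this cannot follow from $\preds{r'} \subseteq V$ alone, since $r'$ has discarded precisely those body literals whose predicates lie in $U$.

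To overcome this I would invoke the splitting property of $V$. Because $H(r) = H(r')$, we have $\preds{H(r)} = \preds{H(r')} \subseteq \preds{r'} \subseteq V$; as a rule head is a single literal, $\preds{H(r)}$ is nonempty, so $\preds{H(r)} \cap V \neq \emptyset$. The rule clause (condition 2) of the splitting-set definition, applied to $V$, then yields $\preds{r} \subseteq V$, hence $r \in b_V(\prog)$, and combined with $\preds{r} \not\subseteq U$ (from $r \in t_U(\prog)$) this gives $r \in t_U(b_V(\prog))$. Consequently $r'$, built from this same $r$ by the identical head/body/reduct-condition recipe, belongs to $e_U(b_V(\prog), X)$, closing the converse inclusion. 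Combining the ontology and program identities proves the static case, and the componentwise remark settles the dynamic case.
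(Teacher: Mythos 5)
Your proof is correct and follows essentially the same route as the paper's (sketched) argument: the crux in both is that, because $V$ is a splitting set, a witness rule $r \in t_U(\prog)$ whose head predicate lies in $V$ must have $\preds{r} \subseteq V$, so the body literals discarded by $e_U$ can never be the reason a rule fails to survive $b_V(\cdot)$. Your write-up merely fills in what the paper leaves implicit (the componentwise reduction to the static case, the trivially commuting filters on the ontology part, and the easy inclusion); the only cosmetic blemish is the side remark that $U$ must be a splitting set for the reduct to be defined, an assumption the lemma deliberately omits and which your argument never actually uses.
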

\begin{proof}[Proof (sketch)]
	Since $V$ is a splitting set for $\kb$, all rules from $\kb$ whose head atom
	has a predicate symbol from $V$ must also have all body literals with
	predicate symbols from $V$. Thus, body atoms discarded in $e_U(\kb, X)$ for
	rules with a head predicate symbol from $V$ cannot be a reason for the rule
	being thrown away by application of $b_V(\cdot)$.
\end{proof}

\begin{lemma} \label{lemma:coinciding intersection}
	Let $U, V$ be sets of predicate symbols and $X, Y$ be MKNF interpretations
	such that $X$ is saturated relative to $U$, $Y$ is saturated relative to
	$V$ and $X$ coincides with $Y$ on $U \cap V$. Then,
	\[
		\restr{(X \cap Y)} = \restr{X}
		\enspace \text{and} \enspace
		\restr[V]{(X \cap Y)} = \restr[V]{Y}
		\enspace.
	\]
\end{lemma}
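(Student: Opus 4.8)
The plan is to prove each of the two equalities by establishing both inclusions, with the easy direction in each coming from monotonicity of restriction and the hard direction handled by a single gluing construction. For the first equality, the inclusion $\restr{(X \cap Y)} \subseteq \restr{X}$ is immediate from $X \cap Y \subseteq X$ together with Proposition~\ref{prop:restr:basic}\eqref{eq:prop:restr:basic:1}, and symmetrically $\restr[V]{(X \cap Y)} \subseteq \restr[V]{Y}$ follows from $X \cap Y \subseteq Y$. So all the content is in the reverse inclusions.

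For the reverse inclusion of the first equality I would fix an arbitrary $I \in X$ and aim to produce some $J \in X \cap Y$ with $\restr{J} = \restr{I}$, which suffices since every element of $\restr{X}$ is of the form $\restr{I}$ for some $I \in X$. Because $X$ coincides with $Y$ on $U \cap V$, we have $\restr[U \cap V]{I} \in \restr[U \cap V]{X} = \restr[U \cap V]{Y}$, so there exists some $I' \in Y$ with $\restr[U \cap V]{I'} = \restr[U \cap V]{I}$. I would then set $J = \restr{I} \cup \restr[V]{I'}$, gluing the $U$-part taken from $I$ to the $V$-part taken from $I'$; the agreement of $I$ and $I'$ on $U \cap V$ is exactly what makes these two pieces mutually consistent on the overlap.

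The core of the argument is then the verification that $\restr{J} = \restr{I}$ and $\restr[V]{J} = \restr[V]{I'}$. Both are short set computations: when restricting $J$ to $U$, the summand $\restr[V]{I'}$ contributes only atoms with predicates in $U \cap V$, and these equal $\restr[U \cap V]{I'} = \restr[U \cap V]{I} \subseteq \restr{I}$, so nothing is added beyond $\restr{I}$; the computation of $\restr[V]{J}$ is dual. Once these two identities are available, saturation finishes the job: from $\restr{J} = \restr{I} \in \restr{X}$ and saturation of $X$ relative to $U$ we get $J \in X$, and from $\restr[V]{J} = \restr[V]{I'} \in \restr[V]{Y}$ and saturation of $Y$ relative to $V$ we get $J \in Y$; hence $J \in X \cap Y$ with $\restr{J} = \restr{I}$, giving $\restr{X} \subseteq \restr{(X \cap Y)}$.

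The second equality follows by the symmetric construction: starting from an arbitrary $I' \in Y$, the coincidence on $U \cap V$ yields a matching $I \in X$, and the very same $J = \restr{I} \cup \restr[V]{I'}$ witnesses $\restr[V]{J} = \restr[V]{I'}$, establishing $\restr[V]{Y} \subseteq \restr[V]{(X \cap Y)}$. The only step demanding care — and the main potential pitfall — is the bookkeeping on the overlap $U \cap V$ when computing $\restr{J}$ and $\restr[V]{J}$; this is precisely where the coincidence hypothesis is consumed, and the whole construction is the natural analogue, for overlapping $U$ and $V$, of the disjoint-set gluing already carried out in the proof of Proposition~\ref{prop:sat_semisat}.
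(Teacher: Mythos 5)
Your proof is correct and takes essentially the same approach as the paper: the paper's own (sketch) proof likewise treats the inclusions $\restr{(X \cap Y)} \subseteq \restr{X}$ and $\restr[V]{(X \cap Y)} \subseteq \restr[V]{Y}$ as obvious and, for the converse, glues the $U$-part of an interpretation matching $X$ to the $V$-part of an interpretation from $Y$ that agrees with it on $U \cap V$, then invokes saturation of $X$ relative to $U$ and of $Y$ relative to $V$ to place the glued interpretation in $X \cap Y$. Your version merely spells out the overlap bookkeeping on $U \cap V$ that the paper leaves implicit.
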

\begin{proof}[Proof (sketch)]
	The left to right inclusions are obvious. If $\restr{I}$ belongs to
	$\restr{X}$, then we can construct an interpretation $I' = \restr{I} \cup
	\restr{J}$ where $J$ is some interpretation from $Y$ that coincides with $I$
	on $U \cap V$. Because of the assumptions, $I'$ will belong to both $X$ and
	$Y$. Thus, $\restr{I} = \restr{I'} \in \restr{(X \cap Y)}$. The case with
	$V$ is symmetric.
\end{proof}

\begin{lemma} \label{lemma:reduct repeated}
	Let $U, V$ be splitting sets for a (dynamic) hybrid knowledge base $\kb$ and
	$X, Y$ be MKNF interpretations such that $X$ is saturated relative to $U$,
	$Y$ is saturated relative to $V$ and $X$ coincides with $Y$ on $U \cap V$.
	Then,
	\[
		e_U(e_V(\kb, Y), X) = e_{U \cup V}(\kb, X \cap Y) \enspace.
	\]
\end{lemma}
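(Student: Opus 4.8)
The plan is to reduce to the static case and then compare the two sides componentwise, separating ontology axioms from rules. For a dynamic $\kb = \an{\kb_i}_{i<n}$ every bottom/top operator and every reduct acts slot-by-slot, so it suffices to prove $e_U(e_V(\kb, Y), X) = e_{U\cup V}(\kb, X\cap Y)$ for a single static $\kb = \an{\ont, \prog}$. Beforehand I would record the two well-definedness facts that make both sides meaningful: first, that $U\cup V$ is a splitting set for $\kb$ (any axiom or rule touching $U\cup V$ already lies wholly in $U$ or wholly in $V$, hence wholly in $U\cup V$); second, that $U$ is a splitting set for $e_V(\kb, Y)$ (its axioms and its rule heads all have predicate symbols outside $V$, and the splitting condition of $U$ on $\kb$ transfers because each rule of $e_V(\prog,Y)$ uses only predicate symbols already present in its originating rule of $\prog$).

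The ontology halves are the easy part. Since $e_U(\cdot, X)$ leaves the ontology as $t_U(\cdot)$, independently of $X$, the left-hand ontology is $t_U(t_V(\ont))$ and the right-hand one is $t_{U\cup V}(\ont)$. Using that $U$, $V$, and $U\cup V$ are all splitting sets (so each either contains all predicate symbols of an axiom or none of them), both reduce to $\Set{\phi \in \ont | \preds{\phi} \cap (U\cup V) = \emptyset}$, and so they coincide.

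For the rule halves I would exploit that every literal $L$ carries a single predicate symbol, so $\preds{L}\subseteq W$ simply says that symbol lies in $W$. Tracing a rule $r\in\prog$ through the left composition, it survives iff (i) its head predicate avoids $V$, (ii) $Y \ent \pi(\Set{L\in B(r) | \preds{L}\subseteq V})$, (iii) its head predicate avoids $U$, and (iv) $X \ent \pi(\Set{L\in B(r) | \preds{L}\subseteq U\setminus V})$ --- the $U$-literals of the intermediate rule are exactly the $(U\setminus V)$-literals of $r$, since its $V$-literals were already discarded --- and the resulting rule has head $H(r)$ and body $\Set{L\in B(r) | \preds{L}\subseteq \lpre\setminus(U\cup V)}$. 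On the right side $r$ survives iff its head predicate avoids $U\cup V$ and $X\cap Y \ent \pi(\Set{L\in B(r) | \preds{L}\subseteq U\cup V})$, producing a rule with the same head and body. The head conditions match at once: (i) and (iii) together say the head predicate lies outside both $U$ and $V$, i.e.\ outside $U\cup V$.

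The crux, and the only step I expect to give real trouble, is matching the body-satisfaction conditions, i.e.\ proving
\[
	Y \ent \pi(B_V) \text{ and } X \ent \pi(B_{U\setminus V})
	\mlequiv
	X\cap Y \ent \pi(B_V \cup B_{U\setminus V}),
\]
where $B_V$ and $B_{U\setminus V}$ collect the body literals of $r$ whose predicate lies in $V$ and in $U\setminus V$ respectively (together, exactly the $(U\cup V)$-literals). Here I would invoke Lemma \ref{lemma:coinciding intersection}: under the hypotheses that $X$ is saturated relative to $U$, $Y$ relative to $V$, and $X$, $Y$ coincide on $U\cap V$, the interpretation $X\cap Y$ coincides with $Y$ on $V$ and with $X$ on $U$. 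Since $\preds{\pi(B_V)}\subseteq V$ and $\preds{\pi(B_{U\setminus V})}\subseteq U$, two applications of Corollary \ref{cor:ent_relevant} yield $X\cap Y \ent \pi(B_V) \mlequiv Y \ent \pi(B_V)$ and $X\cap Y \ent \pi(B_{U\setminus V}) \mlequiv X \ent \pi(B_{U\setminus V})$, from which the displayed equivalence is immediate. Having matched heads, bodies, and survival conditions rule-by-rule, together with the ontology halves, the two reducts are identical, and the dynamic case then follows componentwise.
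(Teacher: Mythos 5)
Your proposal is correct and takes essentially the same route as the paper's (sketched) proof: the ontology part via the identity $e_U(e_V(\ont, Y), X) = t_U(t_V(\ont)) = t_{U \cup V}(\ont)$, and the rule part via Lemma~\ref{lemma:coinciding intersection}, which guarantees that $X \cap Y$ coincides with $X$ on $U$ and with $Y$ on $V$, so that body literals are evaluated identically on both sides. Your write-up simply makes explicit what the paper leaves implicit — the well-definedness of both reducts and the rule-by-rule matching of survival conditions via Corollary~\ref{cor:ent_relevant}.
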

\begin{proof}[Proof (sketch)]
	For the ontology part of the hybrid knowledge base this holds because
	\begin{align*}
		e_U(e_V(\ont, Y), X)
			&= t_U(t_V(\ont))
			= b_{\lpre \setminus U}(b_{\lpre \setminus V}(\ont))
			= b_{(\lpre \setminus U) \cap (\lpre \setminus V)}(\ont) \\
			&= b_{\lpre \setminus (U \cup V)}(\ont)
			= t_{U \cup V}(\ont)
			= e_{U \cup V}(\ont, X \cap Y) \enspace.
	\end{align*}
	For the rule part, we additionally need to observe that on the right hand side, all
	body atoms with predicate symbol from $U$ are interpreted under $X$ and all
	body atoms with predicate symbol from $V$ are interpreted under $Y$, and
	Lemma \ref{lemma:coinciding intersection} guarantees that $X \cap Y$
	coincides with $X$ on $U$ and with $Y$ on $V$.
\end{proof}

\begin{lemma} \label{lemma:reduct context}
	Let $U, V$ be sets of predicate symbols, $X \in \mint$ and let $\kb$ be a
	dynamic hybrid knowledge base such that $\preds{\kb}$ is a a subset of $V$.
	Then,
	\[
		e_U(\kb, X) = e_U(\kb, \sat[V]{X})
	\]
\end{lemma}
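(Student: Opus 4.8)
Lemma~\ref{lemma:reduct context} claims that if $\preds{\kb} \subseteq V$, then $e_U(\kb, X) = e_U(\kb, \sat[V]{X})$.

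The statement is almost immediate once one isolates the single place where the second argument of the reduct is used. The plan is to reduce to the static (single knowledge base) case, since $e_U(\cdot, \cdot)$ on a dynamic hybrid knowledge base is applied component-wise, and then to show that this argument enters only through a satisfaction condition whose formulae have all their predicate symbols in $V$.

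First I would fix a component $\kb_i = \an{\ont_i, \prog_i}$ of $\kb$ and observe that the ontology part $t_U(\ont_i)$ of the reduct does not depend on the second argument at all; hence it is literally the same in $e_U(\kb_i, X)$ and in $e_U(\kb_i, \sat[V]{X})$. It therefore suffices to prove $e_U(\prog_i, X) = e_U(\prog_i, \sat[V]{X})$. Recall that a reduced rule $r'$ is placed in $e_U(\prog_i, Z)$, for a given $Z \in \mint$, exactly when there is $r \in t_U(\prog_i)$ with $H(r') = H(r)$, $B(r') = \Set{L \in B(r) | \preds{L} \subseteq \lpre \setminus U}$, and $Z \ent \pi(B)$, where $B = \Set{L \in B(r) | \preds{L} \subseteq U}$. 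Since $H(r')$ and $B(r')$ are independent of $Z$, the two rule sets coincide precisely when $X \ent \pi(B)$ and $\sat[V]{X} \ent \pi(B)$ agree for every $r \in t_U(\prog_i)$.

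To obtain this agreement I would invoke the relevance results for restricted and saturated interpretations. Since $\pi$ preserves relevant predicate symbols, $\preds{\pi(B)} = \preds{B} \subseteq \preds{r} \subseteq \preds{\kb} \subseteq V$, using the hypothesis $\preds{\kb} \subseteq V$. By Proposition~\ref{prop:sigma}, $\sat[V]{X}$ coincides with $X$ on $V$, i.e.\ $\restr[V]{\sat[V]{X}} = \restr[V]{X}$. Corollary~\ref{cor:ent_relevant} then gives $X \ent \pi(B) \iff \sat[V]{X} \ent \pi(B)$, which is exactly what is needed. Assembling these equivalences over all rules $r$ and all components $i < n$ yields $e_U(\kb, X) = e_U(\kb, \sat[V]{X})$.

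There is no genuine obstacle here; the only care required is the bookkeeping that guarantees $\preds{B} \subseteq V$ for every body segment $B$ arising from any component $\prog_i$, which is precisely the content of the hypothesis $\preds{\kb} \subseteq V$.
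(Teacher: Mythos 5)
Your proof is correct and follows essentially the same route as the paper's own (sketched) proof: both rest on the observation that the second argument of $e_U(\cdot)$ enters only through the satisfaction test on the $U$-parts of rule bodies, whose predicate symbols lie in $V$, together with the fact that $X$ and $\sat[V]{X}$ coincide on $V$ (the paper cites Proposition~\ref{prop:sigma:restr}, you cite Proposition~\ref{prop:sigma} plus Corollary~\ref{cor:ent_relevant} — an equivalent justification). Your version merely spells out the componentwise and rule-by-rule bookkeeping that the paper leaves implicit.
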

\begin{proof}[Proof (sketch)]
	The second argument of $e_U(\cdot)$ is used only to interpret body atoms of
	rules from $\kb$, which by the assumption contain only predicate symbols
	from $V$, and by Proposition \ref{prop:sigma:restr}, $\restr[V]{\sat[V]{X}}
	= \restr[V]{X}$.
\end{proof}

\begin{lemma} \label{lemma:sigma seq sat}
	Let $S$ be a saturation sequence, $M \in \mint$ be sequence-saturated
	relative to $S$ and $U$ be a set of predicate symbols. Then $\sat{M}$ is
	also sequence-saturated relative to $S$.
\end{lemma}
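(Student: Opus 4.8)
The plan is to reduce the whole lemma to a single set identity by invoking the characterisation of sequence-saturation in Proposition~\ref{prop:seqsat:equivalence}. By that proposition, $\sat{M}$ is sequence-saturated relative to $S$ precisely when
\[
	\sat{M} = \bigcap_{\alpha < \mu} \sat[S_\alpha]{\sat{M}} \enspace.
\]
The nested $\sigma$-operators on the right can be collapsed using Proposition~\ref{prop:sigma:repeated}, which gives $\sat[S_\alpha]{\sat{M}} = \sat[S_\alpha \cap U]{M}$ for every $\alpha < \mu$. Hence the entire claim comes down to establishing
\[
	\sat{M} = \bigcap_{\alpha < \mu} \sat[S_\alpha \cap U]{M} \enspace.
\]

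For the inclusion from left to right I expect to need only the elementary monotonicity of $\sigma(\cdot, V)$ in its subscript. If $I \in \sat{M}$, then some $K \in M$ satisfies $\restr{K} = \restr{I}$; restricting this witness from $U$ down to the smaller set $S_\alpha \cap U$ shows $\restr[S_\alpha \cap U]{I} \in \restr[S_\alpha \cap U]{M}$, so $I$ lies in every $\sat[S_\alpha \cap U]{M}$. This direction is routine.

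The converse inclusion is where the real content lies, and it is the step I expect to be the main obstacle. Given $I$ in every $\sat[S_\alpha \cap U]{M}$, I would pick for each $\alpha$ a witness $K_\alpha \in M$ agreeing with $I$ on $S_\alpha \cap U$, and then glue the pieces into a single interpretation $J = \bigcup_{\alpha < \mu} \restr[S_\alpha]{K_\alpha}$. Because $S$ is a saturation sequence (the $S_\alpha$ are pairwise disjoint and their union is $\lpre$), this union is well defined and satisfies $\restr[S_\beta]{J} = \restr[S_\beta]{K_\beta} \in \restr[S_\beta]{M}$ for every $\beta < \mu$. The hypothesis is then used exactly once but decisively: sequence-saturation of $M$ forces $J \in M$. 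A final bookkeeping check, using that each ground atom over $U$ has its predicate symbol in a unique $S_\alpha$ where $K_\alpha$ and $I$ coincide, shows $\restr{J} = \restr{I}$, whence $\restr{I} = \restr{J} \in \restr{M}$ and $I \in \sat{M}$. The delicate points are simply that the gluing must respect the partition $\an{S_\alpha}_{\alpha < \mu}$ and that agreement on all of $U$ has to be reassembled correctly from the separate agreements on the $S_\alpha \cap U$; with those handled, everything else is direct verification.
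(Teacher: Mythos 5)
Your proof is correct and is essentially the paper's own argument: the paper's (sketched) proof likewise extracts witnesses $K_\alpha \in M$ agreeing with $I$ on $S_\alpha \cap U$, glues them along the partition $\an{S_\alpha}_{\alpha < \mu}$ into a single interpretation, invokes sequence-saturation of $M$ exactly once to conclude that this glued interpretation lies in $M$, and then checks agreement with $I$ on $U$ atom by atom. Your preliminary reduction via Propositions~\ref{prop:seqsat:equivalence} and~\ref{prop:sigma:repeated} is a sound but inessential repackaging (the extra inclusion it introduces is the trivial one), so the substantive step coincides with the paper's.
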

\begin{proof}[Proof (sketch)]
	Let $\restr[S_\alpha]{I} \in \restr[S_\alpha]{\sat{M}}$ for all $\alpha$.
	Then there is some $J_\alpha \in \sat{M}$ such that $\restr[S_\alpha]{I} =
	\restr[S_\alpha]{J_\alpha}$ and some $K_\alpha \in M$ such that
	$\restr{J_\alpha} = \restr{K_\alpha}$. From these $K_\alpha$'s we can
	construct a $K \in M$ such that $\restr[S_\alpha \cap U]{K} =
	\restr[S_\alpha \cap U]{I}$, from which it follows that $\restr{K} =
	\restr{I}$. Hence, $I \in \sat{M}$.
\end{proof}

\begin{lemma} \label{lemma:basic ambiguity}
	Let $\kb = \an{\kb_i}_{i < n}$, where $\kb_i = \an{\ont_i, \prog_i}$, be a
	dynamic hybrid knowledge base that is both $\ont$-based and $\prog$-based.
	Then the minimal change update model of $\an{\dlfo{\ont_i} \cup \prog_i}_{i
	< n}$ with the unique dynamic stable model of $\an{\prog_i}_{i < n}$.
\end{lemma}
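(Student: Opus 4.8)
The goal is to show that the two clauses of Def.~\ref{def:basic dynamic hybrid knowledge base} agree on $\kb$, i.e.\ that the minimal change update model of $\an{\dlfo{\ont_i} \cup \prog_i}_{i < n}$ coincides with $\Set{J \in \foint | I \subseteq J}$ for the unique dynamic stable model $I$ of $\an{\prog_i}_{i < n}$. First I would unfold the two hypotheses: being $\prog$-based forces every $\ont_i$ to be empty, so $\dlfo{\ont_i} = \emptyset$, while being $\ont$-based forces every $\prog_i$ to consist only of positive facts. Writing $F_i$ for the set of atoms occurring as facts in $\prog_i$, the sequence to be updated is therefore just $\an{F_i}_{i<n}$, each $F_i$ read as a first-order theory of ground atoms.

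Second I would compute the unique dynamic stable model of $\an{\prog_i}_{i<n}$. Since every rule head is an atom, the conflict relation $\Join$ (which requires $H(r) = \lpnot H(r')$) is never satisfied, so $\dynrej{\an{\prog_i}_{i<n}, I} = \emptyset$ for all $I$. Moreover every fact has empty, hence trivially satisfied, body, so $\dyndef{\an{\prog_i}_{i<n}, I} = \Set{\lpnot p | p \notin \bigcup_{i<n} F_i}$, again independent of $I$. As the whole reduct $[\rho(\an{\prog_i}_{i<n}) \setminus \dynrej{\cdots}] \cup \dyndef{\cdots}$ is then independent of $I$, the fixpoint condition has a single solution, and evaluating $\least{\cdot}$ shows it to be $I = \bigcup_{i<n} F_i$.

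Third I would handle the minimal change side. As a first-order theory of positive facts, $\smod(\prog_i) = \Set{J \in \foint | F_i \subseteq J}$. The crux is the claim that for any $I \in \foint$ and any set of atoms $F$, $I \oplus \Set{J | F \subseteq J} = \Set{I \cup F}$: since $I \cup F$ adds to $I$ only the forced atoms of $F \setminus I$ and removes none, a direct computation gives $\mathit{diff}(P, I, I \cup F) \subseteq \mathit{diff}(P, I, J)$ for every $J \supseteq F$ and every predicate symbol $P$, with strict inclusion for some $P$ whenever $J \neq I \cup F$; hence $I \cup F$ is the unique $\leq_I$-minimal model. A short induction on $n$, using $M \oplus N = \bigcup_{I \in M}(I \oplus N)$ and the fact that, writing $G = \bigcup_{i<n-1} F_i$, the set $\Set{I \cup F_{n-1} | G \subseteq I}$ equals $\Set{J | G \cup F_{n-1} \subseteq J}$, then yields $\smod(\an{F_i}_{i<n}) = \Set{J \in \foint | \bigcup_{i<n} F_i \subseteq J}$.

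Finally, combining the two computations, the minimal change update model equals $\Set{J \in \foint | \bigcup_{i<n} F_i \subseteq J}$, which is exactly $\Set{J \in \foint | I \subseteq J}$ for $I = \bigcup_{i<n} F_i$, the unique dynamic stable model found above; this is the desired coincidence. The main obstacle is the minimal change step of the third paragraph, namely making the singleton identity $I \oplus \Set{J | F \subseteq J} = \Set{I \cup F}$ and its iteration rigorous through the $\mathit{diff}$ and $<_I$ machinery, since both the dynamic stable model computation and the reduction of the hypotheses are straightforward unfoldings of the definitions.
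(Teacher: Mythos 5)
Your proposal is correct and follows essentially the same route as the paper: the paper's own proof is a one-sentence assertion that, since each $\ont_i$ is empty and each $\prog_i$ consists of positive facts, the dynamic stable model is the set of all fact heads and this also yields the minimal change update model. Your sketch simply fills in the details the paper leaves implicit (emptiness of $\dynrej{\cdot}$, the $I$-independence of $\dyndef{\cdot}$, and the identity $I \oplus \Set{J \mid F \subseteq J} = \Set{I \cup F}$ iterated over the sequence), all of which check out.
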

\begin{proof}
	This can be seen easily since in this case $\ont_i$ is empty and $\prog_i$
	contains only positive facts, so that the dynamic stable model of
	$\an{\prog_i}_{i < n}$ coincides with the set of all atoms appearing as
	heads of rules in the programs, and this also coincides with its minimal
	change update model.
\end{proof}

\begin{lemma} \label{lemma:splitting saturation sigma}
	Let $U = \an{U_\alpha}_{\alpha < \mu}$ be a splitting sequence, $S$ be the
	saturation sequence induced by $U$ and $M \in \mint$ be sequence-saturated
	relative to $S$.  Then for any ordinal $\alpha < \mu$ the following holds:
	\[
		\bigcap_{\eta \leq \alpha} \sat[S_\eta]{M} = \satpar[\bigcup_{\eta \leq
		\alpha} S_\eta]{M} = \sat[U_\alpha]{M} \enspace.
	\]
\end{lemma}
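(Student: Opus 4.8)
The plan is to prove the two equalities in turn, treating the rightmost one as a purely set-theoretic identity about the induced saturation sequence and reserving the sequence-saturation hypothesis on $M$ for the leftmost one. Throughout I would work from the definition $\sat{M} = \Set{I \in \foint | \restr{I} \in \restr{M}}$ and lean on the three already-available results relating $\satf$, intersections, and sequence-saturation, rather than gluing witnesses by hand.

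First I would record the set identity $\bigcup_{\eta \le \alpha} S_\eta = U_\alpha$. Since $S$ is the saturation sequence induced by $U$, we have $S_0 = U_0$, $S_{\eta+1} = U_{\eta+1} \setminus U_\eta$, and $S_\eta = \emptyset$ at limits, so, using that $U$ is monotone and continuous, the union telescopes exactly as in the computation inside the proof of Lemma~\ref{lemma:seq:sat_split}:
\[
	\bigcup_{\eta \le \alpha} S_\eta = U_0 \cup \bigcup_{\eta < \alpha} (U_{\eta + 1} \setminus U_\eta) = U_\alpha \enspace.
\]
Because $\satpar[\bigcup_{\eta \le \alpha} S_\eta]{M}$ and $\sat[U_\alpha]{M}$ then carry literally the same set as their second argument, the rightmost equality follows at once.

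For the leftmost equality I would set $N = \sat[U_\alpha]{M}$ and argue as follows. By Lemma~\ref{lemma:sigma seq sat}, $N$ is again sequence-saturated relative to $S$, so Proposition~\ref{prop:seqsat:equivalence} (condition~1 implies condition~2, applied to $N$) yields $N = \bigcap_{\beta < \mu} \sat[S_\beta]{N}$. Applying Proposition~\ref{prop:sigma:repeated} to each factor gives $\sat[S_\beta]{N} = \sat[S_\beta]{\sat[U_\alpha]{M}} = \sat[S_\beta \cap U_\alpha]{M}$. I would then split the index range using the disjointness of the $S_\beta$ together with the identity just proved: for $\beta \le \alpha$ we have $S_\beta \subseteq U_\alpha$, so $S_\beta \cap U_\alpha = S_\beta$; for $\beta > \alpha$, $S_\beta$ is disjoint from every $S_\eta$ with $\eta \le \alpha$, hence from $U_\alpha$, so $S_\beta \cap U_\alpha = \emptyset$. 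Since $\restr[\emptyset]{I} = \emptyset$ for every interpretation $I$, for nonempty $M$ we get $\sat[\emptyset]{M} = \foint$, so the factors with $\beta > \alpha$ drop out of the intersection, leaving
\[
	\sat[U_\alpha]{M} = \bigcap_{\beta \le \alpha} \sat[S_\beta]{M} \enspace,
\]
which is the leftmost equality.

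The main obstacle is modest: the only delicate points are the handling of the indices $\beta > \alpha$ (the identity $\sat[\emptyset]{M} = \foint$, which is valid only when $M$ is nonempty) and the degenerate case $M = \emptyset$, which I would dispatch separately by noting that then each $\sat[S_\eta]{M}$, $\satpar[\bigcup_{\eta \le \alpha} S_\eta]{M}$, and $\sat[U_\alpha]{M}$ equals $\emptyset$, so all three expressions coincide trivially. Everything else is bookkeeping over the disjoint cover $S$ and direct appeals to Lemma~\ref{lemma:sigma seq sat} and Propositions~\ref{prop:seqsat:equivalence} and~\ref{prop:sigma:repeated}.
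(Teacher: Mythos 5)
Your proof is correct, but it takes a genuinely different route from the paper's. The paper proves the nontrivial inclusion $\bigcap_{\eta \leq \alpha} \sat[S_\eta]{M} \subseteq \sat[U_\alpha]{M}$ by explicit witness-gluing: given $I$ in the intersection, it fixes a reference interpretation $J_0 \in M$, forms $J = \bigcup_{\eta \leq \alpha} \restr[S_\eta]{I} \cup \restrc[U_\alpha]{J_0}$, and invokes sequence-saturation of $M$ directly to conclude $J \in M$ and hence $I \in \sat[U_\alpha]{M}$; the reverse inclusion is immediate. You instead argue algebraically: $N = \sat[U_\alpha]{M}$ is sequence-saturated relative to $S$ by Lemma~\ref{lemma:sigma seq sat}, hence is a fixed point $N = \bigcap_{\beta < \mu} \sat[S_\beta]{N}$ by Proposition~\ref{prop:seqsat:equivalence}, and each factor collapses via the composition law of Proposition~\ref{prop:sigma:repeated} together with the dichotomy that $S_\beta \cap U_\alpha$ equals $S_\beta$ for $\beta \leq \alpha$ and $\emptyset$ for $\beta > \alpha$, where $\sat[\emptyset]{M} = \foint$ for nonempty $M$. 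All three results you cite are established before this lemma in the paper, so there is no circularity. Your route buys modularity: no interpretation is ever constructed by hand (the gluing is delegated to the proof of Lemma~\ref{lemma:sigma seq sat}), and it isolates exactly where nonemptiness of $M$ is needed, a subtlety the paper dispatches with a one-line remark. Conversely, the paper's argument is more self-contained, resting only on the definition of sequence-saturation and the telescoping identity $U_\alpha = \bigcup_{\eta \leq \alpha} S_\eta$ (which both proofs share), and in particular it does not lean on Lemma~\ref{lemma:sigma seq sat}, whose own proof in the paper is only sketched. Your handling of the degenerate case $M = \emptyset$ and of the rightmost, purely definitional equality matches the paper's.
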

\begin{proof}
	It can be shown by induction that $U_\alpha = \bigcup_{\eta \leq \alpha}
	S_\eta$. If $M$ is empty, then the lemma trivially follows. Suppose that $M$
	contains some interpretation $J_0$. Let $I \in \bigcap_{\eta \leq \alpha}
	\sat[S_\eta]{M}$. Then for every ordinal $\eta \leq \alpha$ there must exist
	some interpretation $I_\eta$ from $M$ such that $\restr[S_\eta]{I_\eta} =
	\restr[S_\eta]{I}$. Let $J = \bigcup_{\eta \leq \alpha} \restr[S_\eta]{I}
	\cup \restrc[U_\alpha]{J_0}$. It is not difficult to see that $J$ belongs to
	$M$ due to the fact that $M$ is sequence-saturated relative to $U$. Also,
	$J$ coincides with $I$ on $U_\alpha$. Thus, $I$ belongs to
	$\restr[U_\alpha]{M}$.

	As for the other inclusion, if $I$ belongs to $\sat[U_\alpha]{M}$, then
	there is some $J \in M$ such that $J$ coincides with $I$ on $U_\alpha$. But
	then $J$ also coincides with $I$ on $S_\eta$ for every $\eta \leq \alpha$.
	Thus, $I$ belongs to $\bigcap_{\eta \leq \alpha} \sat[S_\eta]{M}$.
\end{proof}

\begin{proposition*}
	[Solution Independence]
	{prop:solution independence}%
	Let $U, V$ be update-enabling splitting sequences for a dynamic hybrid knowledge base
	$\kb$. Then $M$ is a dynamic MKNF model of $\kb$ with respect to $U$ if and only if
	$M$ is a dynamic MKNF model of $\kb$ with respect to $V$.
\end{proposition*}
\begin{proof}[Proof of Proposition \ref{prop:solution independence}]
	\label{proof:prop:solution independence}%
	Suppose $M$ is a dynamic MKNF model of $\kb$ with respect to $U =
	\an{U_\alpha}_{\alpha < \mu}$. Then $M = \bigcap_{\alpha < \mu} X_\alpha$
	for some solution $\an{X_\alpha}_{\alpha < \mu}$ to $\kb$ with respect to
	$U$. This means that:
	\begin{itemize}
		\item $X_0$ is a dynamic MKNF model of $\kb_0 = b_{U_0}(\kb)$;
		\item for any ordinal $\alpha$ such that $\alpha + 1 < \mu$, $X_{\alpha +
			1}$ is a dynamic MKNF model of $\kb_{\alpha + 1} =
			e_{U_\alpha}(b_{U_{\alpha + 1}}(\kb), \bigcap_{\eta \leq \alpha}
			X_\eta)$;
		\item for any limit ordinal $\alpha < \mu$, $X_\alpha = \foint$ and thus it is
			a dynamic MKNF model of $\kb_\alpha = \an{~}$.
	\end{itemize}
	We also know from Proposition \ref{prop:update-enabling implies basic} that
	$\kb_\alpha$ is a basic dynamic hybrid knowledge base for every ordinal
	$\alpha < \mu$. Let $S = \an{S_\alpha}_{\alpha < \mu}$ be the saturation
	sequence induced by $U$. We know that for every $\alpha < \mu$, $\kb_\alpha$
	contains only predicate symbols from $S_\alpha$, so by Proposition
	\ref{prop:basic saturated}, $X_\alpha$ is saturated relative to
	$S_\alpha$. Thus, by Proposition \ref{prop:seqsat:int sigma},
	\[
		X_\alpha = \sat[S_\alpha]{M} \enspace.
	\]
	Moreover, by Lemma \ref{lemma:splitting saturation sigma}, $\bigcap_{\eta
	\leq \alpha} X_\eta = \bigcap_{\eta \leq \alpha} \sat[S_\alpha]{M} =
	\sat[U_\alpha]{M}$, and so
	\[
		\kb_{\alpha + 1} =
			e_{U_\alpha}(b_{U_{\alpha + 1}}(\kb), \sat[U_\alpha]{M}) \enspace.
	\]

	Now pick some arbitrary but fixed $\alpha < \mu$ and suppose that $V =
	\an{V_\beta}_{\beta < \nu}$. Since $V$ is a splitting sequence for $\kb$, it
	is also a splitting sequence for $\kb_\alpha$. Thus, by Proposition
	\ref{prop:basic splitting} we know that $X_\alpha = \bigcap_{\beta < \nu}
	Y_{\alpha, \beta}$ for some solution $\an{Y_{\alpha, \beta}}_{\beta < \nu}$
	to $\kb_\alpha$ with respect to $V$. This means that:
	\begin{itemize}
		\item $Y_{\alpha, 0}$ is a dynamic MKNF model of  $\kb_{\alpha, 0} =
			b_{V_0}(\kb_\alpha)$;
		\item for any ordinal $\beta$ such that $\beta + 1 < \nu$, $Y_{\alpha,
			\beta + 1}$ is a dynamic MKNF model of $\kb_{\alpha, \beta + 1} =
			e_{V_\beta}(b_{V_{\beta + 1}}(\kb_\alpha), \bigcap_{\eta \leq \beta}
			Y_{\alpha, \eta})$;
		\item for any limit ordinal $\beta < \nu$, $Y_{\alpha, \beta} = \foint$ and
			thus it is a dynamic MKNF model of $\kb_{\alpha, \beta} = \an{~}$.
	\end{itemize}
	Since $\kb_\alpha$ is a basic dynamic hybrid knowledge base, $\kb_{\alpha,
	\beta}$ must also be a basic dynamic hybrid knowledge base. Let $T =
	\an{T_\beta}_{\beta < \nu}$ be the saturation sequence induced by $V$. We
	know that for every $\beta < \nu$, $\kb_{\alpha, \beta}$ contains only
	predicate symbols from $T_\beta$, so by Proposition \ref{prop:basic
	saturated}, $Y_{\alpha, \beta}$ is saturated relative to $T_\beta$.  Thus,
	by Propositions \ref{prop:seqsat:int sigma} and \ref{prop:sigma:repeated},
	\[
		Y_{\alpha, \beta}
			= \sat[T_\beta]{X_\alpha}
			= \sat[T_\beta]{\sat[S_\alpha]{M}} = \sat[S_\alpha \cap T_\beta]{M}
			\enspace.
	\]

	Let the sequence of knowledge bases $\kb' = \an{\kb_\beta'}_{\beta < \nu}$
	be defined as follows:
	\begin{itemize}
		\item $\kb_0' = b_{V_0}(\kb)$;
		\item for any ordinal $\beta$ such that $\beta + 1 < \nu$, $\kb_{\beta
			+ 1}' = e_{V_\beta}(b_{V_{\beta + 1}}(\kb), \sat[V_\beta]{M})$;
		\item for any limit ordinal $\beta < \nu$, $\kb_\beta' = \an{~}$.
	\end{itemize}
	In the following we prove that for any ordinal $\beta < \nu$ and any ordinal
	$\alpha$ such that $\alpha + 1 < \mu$,
	\begin{align}
		\kb_{0, \beta} &= b_{V_0}(\kb_\beta')
			\label{eq:proof:solution independence:1} \\
		\kb_{\alpha + 1, \beta}
			&= e_{U_\alpha}(b_{U_{\alpha + 1}}(\kb_{\beta}'), \sat[U_\alpha]{M})
			\label{eq:proof:solution independence:2} 
	\end{align}
	Suppose first that $\beta = 0$. Then we can use Lemma \ref{lemma:bottom
	repeated} to obtain
	\[
		\kb_{0, 0} = b_{U_0}(\kb_0) = b_{V_0}(b_{U_0}(\kb))
			= b_{U_0 \cap V_0}(\kb) = b_{U_0}(b_{V_0}(\kb)) = b_{U_0}(\kb_0')
	\]
	and for any ordinal $\alpha$ such that $\alpha + 1 < \mu$ we can apply
	Lemmas \ref{lemma:bottom repeated} and \ref{lemma:bottom reduct}, achieving
	the following result:
	\begin{align*}
		\kb_{\alpha + 1, 0} &= b_{V_0}(\kb_{\alpha + 1}) \\
			&= b_{V_0}(e_{U_\alpha}(b_{U_{\alpha + 1}}(\kb), \sat[U_\alpha]{M})) \\
			&= e_{U_\alpha}(b_{V_0}(b_{U_{\alpha + 1}}(\kb)), \sat[U_\alpha]{M}) \\
			&= e_{U_\alpha}(b_{U_{\alpha + 1} \cap V_0}(\kb), \sat[U_\alpha]{M}) \\
			&= e_{U_\alpha}(b_{U_{\alpha + 1}}(b_{V_0}(\kb)), \sat[U_\alpha]{M}) \\
			&= e_{U_\alpha}(b_{U_{\alpha + 1}}(\kb_0'), \sat[U_\alpha]{M})
			\enspace.
	\end{align*}
	Now suppose that $\beta$ is an ordinal such that $\beta + 1 < \nu$. Using
	Lemmas \ref{lemma:bottom repeated} and \ref{lemma:bottom reduct} we obtain:
	\begin{align*}
		\kb_{0, \beta + 1}
			&= e_{V_\beta}(b_{V_{\beta + 1}}(\kb_0), \sat[V_\beta]{M}) \\
			&= e_{V_\beta}(b_{V_{\beta + 1}}(b_{U_0}(\kb)), \sat[V_\beta]{M}) \\
			&= e_{V_\beta}(b_{U_0 \cap V_{\beta + 1}}(\kb), \sat[V_\beta]{M}) \\
			&= e_{V_\beta}(b_{U_0}(b_{V_{\beta + 1}}(\kb)), \sat[V_\beta]{M}) \\
			&= b_{U_0}(e_{V_\beta}(b_{V_{\beta + 1}}(\kb), \sat[V_\beta]{M})) \\
			&= b_{U_0}(\kb_{\beta + 1}') \enspace.
	\end{align*}
	Finally, for any ordinal $\alpha$ such that $\alpha + 1 < \mu$, Lemmas
	\ref{lemma:bottom repeated}, \ref{lemma:bottom reduct} and \ref{lemma:reduct
	repeated} imply the following:
	\begin{align*}
		\kb_{\alpha + 1, \beta + 1}
			&= e_{V_\beta}(b_{V_{\beta + 1}}(\kb_{\alpha + 1}), \sat[V_\beta]{M}) \\
			&= e_{V_\beta}(
				b_{V_{\beta + 1}}(
					e_{U_\alpha}(b_{U_{\alpha + 1}}(\kb), \sat[U_\alpha]{M})
				),
				\sat[V_\beta]{M}
			) \\
			&= e_{V_\beta}(
				e_{U_\alpha}(
					b_{V_{\beta + 1}}(b_{U_{\alpha + 1}}(\kb)),
					\sat[U_\alpha]{M}
				),
				\sat[V_\beta]{M}
			) \\
			&= e_{U_\alpha \cup V_\beta}(
				b_{U_{\alpha + 1} \cap V_{\beta + 1}}(\kb),
				\sat[U_\alpha]{M} \cap \sat[V_\beta]{M}
			) \\
			&= e_{U_\alpha}(
				e_{V_\beta}(
					b_{U_{\alpha + 1}}(b_{V_{\beta + 1}}(\kb)),
					\sat[V_\beta]{M}
				),
				\sat[U_\alpha]{M}
			) \\
			&= e_{U_\alpha}(
				b_{U_{\alpha + 1}}(
					e_{V_\beta}(b_{V_{\beta + 1}}(\kb), \sat[V_\beta]{M})
				),
				\sat[U_\alpha]{M}
			) \\
			&= e_{U_\alpha}(
				b_{U_{\alpha + 1}}(
					\kb_{\beta + 1}'
				),
				\sat[U_\alpha]{M}
			) \enspace.
	\end{align*}

	Now since $\kb_{\beta}'$ is saturated relative to $T_\beta$, we can use
	Lemma \ref{lemma:reduct context} to replace $\sat[U_\alpha]{M}$ in
	\eqref{eq:proof:solution independence:2} by $\sat[U_\alpha \cap
	T_\beta]{M}$. Furthermore, by consecutively using Proposition
	\ref{prop:sigma:repeated}, Lemmas \ref{lemma:sigma seq sat} and
	\ref{lemma:splitting saturation sigma} and Proposition
	\ref{prop:sigma:repeated} again, we can see that
	\begin{align*}
		\sat[U_\alpha \cap T_\beta]{M}
			&= \sat[U_\alpha]{\sat[T_\beta]{M}}
			= \bigcap_{\eta \leq \alpha} \sat[S_\eta]{\sat[T_\beta]{M}} \\
			&= \bigcap_{\eta \leq \alpha} \sat[S_\eta \cap T_\beta]{M}
			= \bigcap_{\eta \leq \alpha} Y_{\eta, \beta} \enspace.
	\end{align*}
	Hence, \eqref{eq:proof:solution independence:2} can be rewritten as:
	\[
		\kb_{\alpha + 1, \beta}
			= e_{U_\alpha} \left(
				b_{U_{\alpha + 1}} (\kb_{\beta}'),
				\bigcap_{\eta \leq \alpha} Y_{\eta, \beta}
			\right)
	\]
	We can now use Proposition \ref{prop:basic splitting} and conclude that
	\[
		\bigcap_{\alpha < \mu} Y_{\alpha, \beta}
			= \bigcap_{\alpha < \mu} \sat[S_\alpha \cap T_\beta]{M}
			= \bigcap_{\alpha < \mu} \sat[S_\alpha]{\sat[T_\beta]{M}}
			= \sat[T_\beta]{M}
	\]
	is a dynamic MKNF model of $\kb_\beta'$. One of the last steps in the
	proof is to show that $M$ is sequence-saturated relative to $T$. We know
	from Corrolary \ref{cor:basic seqsat} that $X_\alpha$ is sequence-saturated
	relative to $T$, so we obtain the following:
	\begin{align*}
		\bigcap_{\beta < \nu} \sat[T_\beta]{M}
			&= \bigcap_{\beta < \nu} \bigcap_{\alpha < \mu}
				\sat[S_\alpha]{\sat[T_\beta]{M}} \\
			&= \bigcap_{\alpha < \mu} \bigcap_{\beta < \nu}
				\sat[T_\beta]{\sat[S_\alpha]{M}} \\
			&= \bigcap_{\alpha < \mu} \bigcap_{\beta < \nu}
				\sat[T_\beta]{X_\alpha} \\
			&= \bigcap_{\alpha < \mu} X_\alpha
			= M \enspace,
	\end{align*}
	which implies that $M$ is sequence-saturated relative to $T$. Thus, for any
	$\beta < \nu$, Lemma \ref{lemma:splitting saturation sigma} implies that
	\[
		\sat[V_\beta]{M} = \bigcap_{\eta \leq \beta} \sat[T_\eta]{M} \enspace.
	\]
	To sum up, define the sequence of interpretations $Z = \an{Z_\beta}_{\beta <
	\nu}$ by $Z_\beta = \sat[T_\beta]{M}$. We know the following:
	\begin{itemize}
		\item $Z_0 = \sat[T_0]{M}$ is a dynamic MKNF model of $\kb_0' = b_{V_0}(\kb)$;
		
		\item for any ordinal $\beta$ such that $\beta + 1 < \nu$, $Z_{\beta + 1}
			= \sat[T_{\beta + 1}]{M}$ is a dynamic MKNF model of $\kb_{\beta + 1}' =
			e_{V_\beta}(b_{V_{\beta + 1}}(\kb), \bigcap_{\eta \leq \beta}
			\sat[T_\eta]{M}) = e_{V_\beta}(b_{V_{\beta + 1}}(\kb), \bigcap_{\eta
			\leq \beta} Z_\eta)$;

		\item for any limit ordinal $\beta < \nu$, put $Z_\beta = \sat[T_\beta]{M}
			= \sat[\emptyset]{M} = \foint$.
	\end{itemize}
	Thus, $Z$ is a solution to $\kb$ with respect to $V$. Moreover, since $M$ is
	sequence-saturated relative to $T$, it follows by Proposition
	\ref{prop:seqsat:equivalence} that
	\[
		M = \bigcap_{\beta < \nu} \sat[T_\beta]{M}
			= \bigcap_{\beta < \nu} Z_\beta \enspace.
	\]
	So $M$ is a dynamic MKNF model of $\kb$ with respect to $V$.
	
	Proof of the converse implication is symmetric.
\end{proof}

\begin{corollary*}
	[Compatibility with Def.~\ref{def:basic dynamic hybrid knowledge base}]
	{cor:compatibility with basic}%
	Let $\kb$ be a basic dynamic hybrid knowledge base and $U$ be a splitting
	sequence for $\kb$. Then $M$ is a dynamic	MKNF model of $\kb$ if and only if
	$M$ is a dynamic MKNF model of $\kb$ with respect to $U$.
\end{corollary*}
\begin{proof}[Proof of Corollary \ref{cor:compatibility with basic}]
	\label{proof:cor:compatibility with basic}%
	Since $\kb$ is basic, $\an{\lpre}$ is an update-enabling sequence for $\kb$.
	Also, by the definition, $M$ is a dynamic MKNF model of $\kb$ if and only if
	$\kb$ is a dynamic MKNF model of $\kb$ with respect to $\an{\lpre}$.
	Finally, by Proposition \ref{prop:solution independence} this holds if and
	only if $M$ is a dynamic MKNF model of $\kb$ with respect to $U$.
\end{proof}

\begin{theorem*}
	[Generalisation of MKNF Models]
	{thm:generalisation of mknf}%
	Let $\kb$ be an updatable hybrid knowledge base and $M$ be an MKNF
	interpretation. Then $M$ is a dynamic MKNF model of $\an{\kb}$ if and only
	if $M$ is an MKNF model of $\kb$.
\end{theorem*}
\begin{proof}[Proof of Theorem \ref{thm:generalisation of mknf}]
	\label{proof:thm:generalisation of mknf}%
	This follows by Theorem \ref{thm:seq} and Propositions \ref{prop:dlp
	generalisation} and \ref{prop:pmasplitting:generalisation}.
\end{proof}

\begin{theorem*}
	[Generalisation of Minimal Change Update Semantics]
	{thm:generalisation of pma}%
	Let $\an{\kb_i}_{i < n}$, where $\kb_i = \an{\ont_i, \prog_i}$, be a dynamic
	hybrid knowledge base such that $\prog_i$ is empty for all $i < n$. Then $M$
	is a dynamic MKNF model of $\an{\kb_i}_{i < n}$ if and only if $M$ is the
	minimal change update model of $\an{\dlfo{\ont_i}}_{i < n}$.
\end{theorem*}
\begin{proof}[Proof of Theorem \ref{thm:generalisation of pma}]
	\label{proof:thm:generalisation of pma}%
	This follows by Corollary \ref{cor:compatibility with basic} and Lemma
	\ref{lemma:basic ambiguity}.
\end{proof}

\begin{theorem*}
	[Generalisation of Dynamic Stable Model Semantics]
	{thm:generalisation of dlp}%
	Let $\kb = \an{\kb_i}_{i < n}$, where $\kb_i = \an{\ont_i, \prog_i}$, be a
	dynamic hybrid knowledge base such that $\ont_i$ is empty for all $i < n$.
	Then $M$ is a dynamic MKNF model of $\kb$ if and only if $M = \set{J \in
	\foint | I \subseteq J}$ for some dynamic stable model $I$ of
	$\an{\prog_i}_{i < n}$.
\end{theorem*}
\begin{proof}[Proof of Theorem \ref{thm:generalisation of dlp}]
	\label{proof:thm:generalisation of dlp}%
	This follows by Corollary \ref{cor:compatibility with basic} and Lemma
	\ref{lemma:basic ambiguity}.
\end{proof}

\begin{theorem*}
	[Principle of Primacy of New Information]
	{thm:primacy of new information}%
	Let $\kb = \an{\kb_i}_{i < n}$ be an updatable dynamic hybrid knowledge base
	with $n > 0$ and $M$ be a dynamic MKNF model of $\kb$. Then $M \ent
	\pi(\kb_{n-1})$.
\end{theorem*}
\begin{proof}[Sketch of proof of Theorem \ref{thm:primacy of new information}]
	\label{proof:thm:primacy of new information}%
	If $M$ is a dynamic MKNF model of $\kb$, then for some update-enabling
	sequence $U = \an{U_\alpha}_{\alpha < \mu}$, $M = \bigcap_{\alpha < \mu}
	X_\alpha$ for some solution to $\kb$ with respect to $U$. This means that
	\begin{itemize}
		\item $X_0$ is a dynamic MKNF model of $b_{U_0}(\kb)$;
		\item for any ordinal $\alpha$ such that $\alpha + 1 < \mu$, $X_{\alpha +
			1}$ a dynamic MKNF model of $e_{U_\alpha}(b_{U_{\alpha + 1}}(\kb),
			\bigcap_{\eta \leq \alpha} X_\eta)$;
		\item for any limit ordinal $\alpha$, $X_\alpha = \foint$.
	\end{itemize}
	Let $S = \an{S_\alpha}_{\alpha < \mu}$ be the saturation sequence induced by
	$U$. It follows from Propositions \ref{prop:pmasplitting:primacy},
	\ref{prop:dlp primacy} and \ref{prop:basic saturated} that
	\begin{itemize}
		\item $X_0$ is saturated relative to $S_0$ and $X_0 \ent b_{U_0}(\kb_{n-1})$;
		\item for any ordinal $\alpha$ such that $\alpha + 1 < \mu$, $X_{\alpha +
			1}$ is saturated relative to $S_{\alpha + 1}$ and $X_{\alpha + 1} \ent
			e_{U_\alpha}(b_{U_{\alpha + 1}}(\kb_{n-1}), \bigcap_{\eta \leq \alpha}
			X_\eta)$;
		\item for any limit ordinal $\alpha$, $X_\alpha = \foint$ is saturated
			relative to $S_\alpha = \emptyset$.
	\end{itemize}

	Thus, by Proposition \ref{prop:seqsat:equivalence}, $M$ is
	sequence-saturated relative to $S$, by Proposition \ref{prop:seqsat:int
	sigma}, $X_\alpha = \sat[S_\alpha]{M}$, and by Lemma \ref{lemma:splitting
	saturation sigma}, $\bigcap_{\eta \leq \alpha} X_\eta = \sat[U_\alpha]{M}$.

	Now let $\phi$ be some formula from $\pi(\kb_{n-1})$. If $\phi$ is of the
	form $\mk \psi$ where $\psi$ is a first-order formula, then there must exist
	a unique set $S_\alpha$ that includes $\preds{\phi}$. Due to the above
	considerations, we can then conclude that $X_\alpha \ent \phi$. Furthermore,
	\[
		X_\alpha \ent \phi \mlequiv \sat[S_\alpha]{M} \ent \phi \mlequiv
		\restr[S_\alpha]{\sat[S_\alpha]{M}} \ent \phi \mlequiv \restr[S_\alpha]{M} \ent \phi
		\mlequiv M \ent \phi \enspace.
	\]
	On the other hand, if $\phi = \pi(r)$ for some rule $r$, then the there
	exists a unique nonlimit ordinal $\alpha$ such that $\preds{H(r)} \subseteq
	S_\alpha$ and the body of the rule can be divided in two parts, $B_1$ and
	$B_2$, such that $\preds{B_1} \subseteq S_\alpha$ and $B_2 = \emptyset$ if
	$\alpha = 0$ and $\preds{B_2} \subseteq U_{\alpha - 1}$ if $\alpha > 0$. The
	case when $\alpha = 0$ is can be derived from the case when $\alpha > 0$, so
	in the following we only consider the latter case. We have:
	\begin{align*}
		M \ent B_2 &\mlequiv \sat[U_{\alpha - 1}]{M} \ent B_2 \\
		M \ent B_1 &\mlequiv \sat[S_\alpha]{M} \ent B_1 \\
		M \ent H(r) &\mlequiv \sat[S_\alpha]{M} \ent H(r)
	\end{align*}
	Hence, if $M \nent B_2$, then $M \ent r$ and we are finished. On the other
	hand, if $M \ent B_2$, then there is a rule $r'$ in
	$e_{U_\alpha - 1}(b_{U_{\alpha}}(\kb), \sat[U_\alpha - 1]{M})$ such that $H(r')
	= H(r)$ and $B(r') = B_1$. If $M \ent B_1$, then since $X_\alpha =
	\sat[U_\alpha - 1]{M} \ent B_1$ and since $X_\alpha \ent e_{U_\alpha -
	1}(b_{U_{\alpha}}(\kb), \sat[U_\alpha - 1]{M})$, we obtain that $X_\alpha
	\ent H(r')$. As a consequence, $M \ent H(r)$, so that $M \ent r$.
\end{proof}

\end{extended}

\end{document}